\DeclareMathOperator*{\argmin}{arg\,min}
\DeclareMathOperator*{\argmax}{arg\,max}
\newcommand{\bbE}{\mathbb{E}}
\newcommand{\bbR}{\mathbb{R}}
\newcommand{\bbG}{\mathbb{G}}
\newcommand{\bbN}{\mathbb{N}}
\newcommand{\calF}{\mathcal{F}}
\newcommand{\calM}{\mathcal{M}}
\newcommand{\calK}{\mathcal{K}}
\newcommand{\calS}{\mathcal{S}}
\newcommand{\calA}{\mathcal{A}}
\newcommand{\wt}{\widetilde}
\newcommand{\ind}{\mathbb{I}}
\newcommand{\indevent}[1]{\ind \{ #1 \}}
\newcommand{\VAR}{\mathrm{Var}}
\newcommand{\numagents}{m}
\newcommand{\sinit}{s_{\text{init}}}
\newcommand{\logterm}{\tau}
\newcommand{\sumhsak}{\sum_{h=1}^H\sum_{s\in\calS}\sum_{a\in\calA}\sum_{k=1}^K}
\newcommand{\sumhsa}{\sum_{h=1}^H\sum_{s\in\calS}\sum_{a\in\calA}}
\newcommand{\regret}{R_K}
\newcommand{\filt}{\calF}
\newcommand{\bbEk}{\bbE^k}
\newcommand{\ocset}{\Delta(\calM)}
\newcommand{\ocsetk}{\Delta(\calM,k)}
\newcommand{\KL}[2]{\text{KL}(#1 \;\|\; #2)}
\newcommand{\overl}[1]{\overline{ #1}}
\newcommand{\underl}[1]{\underline{ #1}}
\theoremstyle{plain}
\newtheorem{theorem}{Theorem}[section]
\newtheorem{lemma}[theorem]{Lemma}
\theoremstyle{definition}
\theoremstyle{remark}
\icmltitlerunning{Cooperative Online Learning in Stochastic and Adversarial MDPs}
\begin{document}

\twocolumn[
\icmltitle{Cooperative Online Learning in Stochastic and Adversarial MDPs}

% It is OKAY to include author information, even for blind
% submissions: the style file will automatically remove it for you
% unless you've provided the [accepted] option to the icml2022
% package.

% List of affiliations: The first argument should be a (short)
% identifier you will use later to specify author affiliations
% Academic affiliations should list Department, University, City, Region, Country
% Industry affiliations should list Company, City, Region, Country

% You can specify symbols, otherwise they are numbered in order.
% Ideally, you should not use this facility. Affiliations will be numbered
% in order of appearance and this is the preferred way.
\icmlsetsymbol{equal}{*}

\begin{icmlauthorlist}
\icmlauthor{Tal Lancewicki}{equal,tau}
\icmlauthor{Aviv Rosenberg}{equal,tau}
\icmlauthor{Yishay Mansour}{tau,goo}
\end{icmlauthorlist}

\icmlaffiliation{tau}{Tel Aviv University}
\icmlaffiliation{goo}{Google Research, Tel Aviv}

\icmlcorrespondingauthor{Tal Lancewicki}{lancewicki@mail.tau.ac.il}

% You may provide any keywords that you
% find helpful for describing your paper; these are used to populate
% the "keywords" metadata in the PDF but will not be shown in the document
\icmlkeywords{Machine Learning, ICML}

\vskip 0.3in
]

% this must go after the closing bracket ] following \twocolumn[ ...

% This command actually creates the footnote in the first column
% listing the affiliations and the copyright notice.
% The command takes one argument, which is text to display at the start of the footnote.
% The \icmlEqualContribution command is standard text for equal contribution.
% Remove it (just {}) if you do not need this facility.

%\printAffiliationsAndNotice{}  % leave blank if no need to mention equal contribution
\printAffiliationsAndNotice{\icmlEqualContribution} % otherwise use the standard text.
\begin{abstract}
    We study cooperative online learning in stochastic and adversarial Markov decision process (MDP).
    That is, in each episode, $m$ agents interact with an MDP simultaneously and share information in order to minimize their individual regret.
    We consider environments with two types of randomness: \emph{fresh} -- where each agent's trajectory is sampled i.i.d, and \emph{non-fresh} -- where the realization is shared by all agents (but each  agent's trajectory is also affected by its own actions).
    More precisely, with non-fresh randomness the realization of every cost and transition is fixed at the start of each episode, and agents that take the same action in the same state at the same time observe the same cost and next state.
    We thoroughly analyze all relevant settings, highlight the challenges and differences between the models, and prove nearly-matching regret lower and upper bounds.
    To our knowledge, we are the first to consider cooperative reinforcement learning (RL) with either non-fresh randomness or in adversarial MDPs.
\end{abstract}

\section{Introduction}

\begin{table*}
    \caption{Summary of our regret upper and lower bounds for $\numagents$ agents facing $K$ episode interaction with an MDP that has $S$ states, $A$ actions and horizon $H$. The bounds ignore poly-logarithmic factors and lower order terms. (*) The algorithm requires $m = \sqrt{K}$ agents.}
    \begin{center}
        \begin{tabular}[c]{|c|c|c|c|c|c|c|c|c|}
            \hline
            Algorithm & Regret & Lower Bound & Randomness & Cost & Transition
            \\ 
            \hline \hline
            \texttt{coop-ULCVI} & $\sqrt{\frac{H^3 S A K}{\numagents}}$ & $\sqrt{\frac{H^3 S A K}{\numagents}}$ & fresh & stochastic & unknown
            \\
            \hline
            \texttt{coop-O-REPS} & $\sqrt{H^2 K} + \sqrt{\frac{H^2 S A K}{\numagents}}$ & $\sqrt{H^2 K} + \sqrt{\frac{H^2 S A K}{\numagents}}$ & fresh & adversarial & known
            \\
            \hline
            \texttt{coop-UOB-REPS} & $\sqrt{H^2 K} + \sqrt{\frac{H^4 S^2 A K}{\numagents}}$ & $\sqrt{H^2 K} + \sqrt{\frac{H^3 S A K}{\numagents}}$ & fresh & adversarial & unknown
            \\
            \hline
            \texttt{coop-ULCAE} & $\sqrt{H^5 S K} + \sqrt{\frac{H^7 S A K}{\sqrt{\numagents}}}$ & $\sqrt{H^2 S K} + \sqrt{\frac{H^3 S A K}{\numagents}}$ & non-fresh & stochastic & unknown
            \\
            \hline
            \texttt{coop-nf-O-REPS} & $\sqrt{H^2 S K} + \sqrt{\frac{H^2 S A K}{\numagents}}$ & $\sqrt{H^2 S K} + \sqrt{\frac{H^2 S A K}{\numagents}}$ & non-fresh & adversarial & known
            \\
            \hline
            \texttt{coop-nf-UOB-REPS} & $\sqrt{H^4 S^2 K}$ (*) & $\sqrt{H^2 S K} + \sqrt{\frac{H^3 S A K}{\numagents}}$ & non-fresh & adversarial & unknown
            \\
            \hline
        \end{tabular}
        \label{table: comparison}
    \end{center}
\end{table*}

Cooperative multi-agent reinforcement learning (MARL; see \citet{zhang2021multi}) achieved impressive empirical success in many applications such as cyber-physical systems \citep{adler2002cooperative, wang2016towards}, finance \citep{lee2002stock, lee2007multiagent} and sensor/communication networks \citep{cortes2004coverage,choi2009distributed}.
Many of the theoretical work on MARL focus on Markov Games (MGs) \citep{jin2021v}, where agents have a shared state, the transition and cost is affected by all agents' actions, and usually the goal is to converge to an equilibrium. On the other hand, in cooperative learning \citep{lidard2021provably}, agents do not affect each other and the notion of equilibrium becomes irrelevant. In this model, agents share information with each other and the goal here is to utilize the shared information in order to obtain a significant improvement upon single-agent performance. This model is a generalization of the extensively studied Multi-agent multi-armed bandit (MAB) model \citep{cesa2016delay} and reveals many new challenges that are unique to MDPs. 

In this paper we initiate the study of two topics not addressed before in the Cooperative MARL literature.
First, we differentiate between two types of randomness: \emph{fresh} -- where each agent's trajectory is sampled i.i.d, and \emph{non-fresh} -- where at any time the cost and transition kernel's randomness is shared by all agents. More precisely, if at the same time two different agents perform the same action in the same state, they observe the same cost and the same next state.
Second, we consider cooperation in the challenging adversarial MDP setting that generalizes stochastic MDPs and allows to model temporal changes in the environment through costs that change arbitrarily and are chosen by an adversary.

While previous works focus mostly on fresh randomness, non-fresh randomness models are just as well-motivated since different agents might experience the same dynamics and rewards when visiting the same state simultaneously. 
Conceptually, non-fresh randomness models cases where the randomness is more a function of the time than the agent.
For example, drones that fly together experience similar weather conditions and autonomous vehicles that drive on the same roads on the same time encounter the same traffic congestion. 
Moreover, the non-fresh randomness model is theoretically highly challenging, as we show in this paper.
We indicate a gap in the lower bounds between fresh and non-fresh randomness and identify the weaknesses of current optimistic approaches in handling non-fresh randomness, thus requiring us to develop new algorithmic techniques.

Our main contributions can be summarized as follows.
First, we derive multi-agent versions of known regret minimization algorithms in stochastic and adversarial MDPs, and thoroughly analyze their regret in the fresh randomness model.
To complement our bounds, we formally prove matching lower bounds (the adversarial MDP with unknown transitions lower bound nearly-matches, as optimal regret has not been achieved even for a single agent).
Second, we point to the failure of optimistic methods under non-fresh randomness and prove lower bounds that reveal a significant gap from the fresh randomness case.
Our novel constructions for these lower bounds carefully take advantage of the agents' shared random seed to make sure that they cannot explore different areas of the environment simultaneously.
Third, we develop a novel multi-agent action-elimination based algorithm for stochastic MDP with non-fresh randomness that forces the agents to scatter at a carefully chosen time so that exploration is maximized.
Through novel analysis of the relations between the policies of different agents and the error propagation, we prove near-optimal regret for the algorithm.
Finally, for adversarial MDP (where action-elimination is not possible) with non-fresh randomness, we design a novel exploration mechanism to replace optimism and show that it can achieve near-optimal regret for a large number of agents.
\cref{table: comparison} summarizes all our regret lower and upper bounds. 

\subsection{Related Work}

\textbf{Multi-agent multi-armed bandit.}
Cooperation was previously studied in both stochastic MAB \citep{dubey2020cooperative,wang2020optimal,madhushani2021one,landgren2021distributed} and adversarial MAB \citep{cesa2016delay,cesa2019delay,bar2019individual,ito2020delay}.
While we extend some ideas from the MAB literature to RL, many of the challenges that this paper faces do not arise in MAB.
Notably, fresh vs. non-fresh randomness, which is the main focus of the paper, is unique to MDPs as it involves dynamics (i.e., state transitions) that do not exist in MAB.

A different line of work studies \textit{collisions} in cooperative MAB \cite{liu2010distributed,bubeck2021cooperative}, where agents are penalized by choosing the same action. In these works, the goal of the cooperation is to minimize the number of collisions rather than improve the agents' performance. Thus, this line of work which has different motivation is only marginally related to our work. See  \citet{bubeck2021cooperative} for a more thorough literature review on collisions in MAB.

\textbf{MARL.}
There is a long line of research on the theoretical aspects of MARL, mainly focusing on MGs \citep{littman1994markov,bai2020provable,bai2020near,xie2020learning,zhang2020model,liu2021sharp,jin2021v}.
As mentioned before, this literature is only partially related to our model since it aims to converge to an equilibrium rather than minimize individual regret. 
% and MGs assume the agents share the current state while in our model different agents traverse different trajectories (i.e., modelling our setting in a MG requires exponentially large state space).
More related is the literature on decentralized MARL that considers stochastic MDPs with fresh randomness.
However, the theoretical guarantees provided by these works are either asymptotic or less tight than our bounds \citep{zhang2018networked,zhang2018fully,zhang2021finite,lidard2021provably}.

\textbf{Single-agent RL.}
There is a rich literature on regret minimization in both stochastic \citep{jaksch2010near,azar2017minimax,jin2018q,jin2020learning,jin2020provably,yang2019sample,zanette2020frequentist,zanette2020learning} and adversarial \citep{zimin2013online,rosenberg2019online,rosenberg2019onlineb,rosenberg2021stochastic,jin2020simultaneously,cai2020provably,luo2021policy} MDPs.
Note that for a single agent, fresh and non-fresh randomness are identical.

\section{Preliminaries}

A finite-horizon episodic MDP $\calM$ is defined by the tuple $(\calS , \calA , H , p, \{ c^{k} \}_{k=1}^K)$. 
$\calS$ (of size $S$) and $\calA$ (of size $A$) are finite state and action spaces, $H$ is the horizon and $K$ is the number of episodes. 
$p$ is a transition function such that the probability to move to state $s'$ when taking action $a$ in state $s$ at time $h$ is $p_h(s' | s,a)$.
$c^k \in [0,1]^{H S A}$ is the cost function for episode $k$.
In the \textit{adversarial} setting the sequence of cost functions $\{ c^k \}_{k=1}^K$ is chosen by an oblivious adversary before the interaction starts, while in the \textit{stochastic} setting the costs are sampled i.i.d from a stationary distribution (that does not depend on $k$) with mean $c^k_h(s,a) = c_h(s,a)$.
The adversarial MDP model generalizes stochastic MDPs.

A policy $\pi$ is a function such that $\pi_h(a | s)$ gives the probability to take action $a$ in state $s$ at time $h$.  
If $\pi$ is deterministic we often abuse notation and use $\pi_h(s)$ for the action chosen by the policy.
Given a cost function $c$, the value $V^\pi_h(s)$ of $\pi$ is the expected cost when starting from state $s$ at time $h$, i.e., $V_{h}^{\pi}(s) = \bbE^{p,\pi} [ \sum_{h'=h}^{H}c_{h'}(s_{h'},a_{h'}) | s_{h}=s]$ where the notation $\bbE^{p,\pi} [\cdot]$ means that actions are chosen by $\pi$ and transitions are determined by $p$. 
%We denote the value function from the beginning of the episode $V^\pi_1$ simply by $V^\pi$.
We also define the $Q$-function $Q_{h}^{\pi}(s,a) = \bbE^{p,\pi} [ \sum_{h'=h}^{H}c_{h'}(s_{h'},a_{h'}) \mid s_h = s,a_h = a]$ that satisfies the Bellman equations \citep{sutton2018reinforcement}:
\begin{align}
    \nonumber
    Q_{h}^{\pi}(s,a) 
    & = 
    c_{h}(s,a)+\bbE_{p_h(\cdot\mid s,a)} 
    \big[V_{h+1}^{\pi}\big]
    \\
    V_{h}^{\pi}(s)  
    & =
    \langle \pi_{h}(\cdot\mid s),Q_h^\pi(s,\cdot)\rangle,
    \nonumber
\end{align}
where $\bbE_{r(\cdot)} [f]$ denotes the expectation of $f(x)$ where $x$ is sampled from the distribution $r$, and $\langle \cdot, \cdot \rangle$ is the dot product.

\textbf{Multi-agent interaction.}
A team of $\numagents$ agents interacts with the MDP $\calM$.
At the beginning of episode $k$, every agent $v \in [\numagents]$ picks a policy $\pi^{k,v}$ and starts in the initial state $s_1^{k,v} = s_{\text{init}}$. 
At time $h=1,\dots,H$, each agent observes its current state $s^{k,v}_h$ and samples an action $a^{k,v}_h \sim \pi^{k,v}_{h} (\cdot | s^{k,v}_h)$.
In the \textit{fresh randomness} model, the next state is sampled independently for each agent, i.e., $s^{k,v}_{h+1} \sim p_{h}(\cdot | s^{k,v}_h,a^{k,v}_h)$. 
For \textit{non-fresh randomness}, the next state is sampled once for each state-action pair $S^k_h(s,a) \sim p_{h}(\cdot | s, a)$ ahead of the episode, and then every agent $v$ that takes action $a$ in $s$ at time $h$ transitions to the same state $S^k_h(s,a)$, i.e., $s^{k,v}_{h+1} = S^k_h(s^{k,v}_h,a^{k,v}_h)$.
Similarly, the cost $C^{k,v}_h$ suffered by the agent is either  sampled independently when randomness is fresh, or sampled once for each state-action pair $(s,a)$ ahead of the episode when randomness is non-fresh.
Note that for adversarial MDPs the costs are not stochastic so it is always the case that $C^{k,v}_h = c^k_h(s^{k,v}_h,a^{k,v}_h)$.
At the end of the episode, the team observes the trajectories and costs of all agents $\{ s^{k,v}_h,a^{k,v}_h,C^{k,v}_h \}_{h=1,v=1}^{H \quad ,m}$ (i.e., bandit feedback).

\textbf{Regret.}
Let $V^{k,\pi}$ be the value function of $\pi$ with respect to $c^k$.
The pseudo-regret of an agent $v$ is the cumulative difference between the values of its policies and the values of the best fixed policy in hindsight. 
The performance of the team is measured by the maximal individual pseudo-regret (note that this criterion is stronger than the average regret):
\[
    \regret
    =
    \max_{v\in[m]}
    \sum_{k=1}^{K} V_{1}^{k,\pi^{k,v}}(s_\text{init}) - \min_{\pi} \sum_{k=1}^{K}  V_{1}^{k,\pi}(s_\text{init}).
\]
For stochastic MDP, we use the more common definition of the regret in which, for $k \in [K]$, $V^{k,\pi}_h(s) = V^{\pi}_h(s)$ for the cost function $c$ (the mean of the costs distribution).

\textbf{Occupancy measures.}
For policy $\pi$, let $q^\pi$ be its occupancy measure such that $q^\pi_h(s)$ is the probability to visit $s$ at time $h$ playing $\pi$ and $q^\pi_h(s,a) = q^\pi_h(s) \pi_h(a \mid s)$.
By definition, $V^{k,\pi}_1(s_\text{init}) = \langle q^\pi , c^k \rangle$, so we can write the regret in terms of occupancy measures as follows:
\[
    \regret
    =
    \max_{v \in [m]} \sum_{k=1}^K \langle q^{\pi^{k,v}} , c^k \rangle - \min_{q \in \ocset} \sum_{k=1}^K \langle q , c^k \rangle,
\]
where $\ocset$ is the set of valid occupancy measures which corresponds to the set of stochastic policies and is a convex polytope in $\bbR^{HSA}$ defined by $O(HSA)$ linear inequalities.

\textbf{Additional notations.}
The notation $\wt O(\cdot)$ hides constants, lower order terms and poly-logarithmic factors, including $\log (K/\delta)$ for some confidence parameter $\delta$. 
For $n \in \bbN$ we denote $[n] = \{1,2,\dots,n\}$, the indicator of event $E$ is denoted by $\indevent{E}$, and $x \vee y = \max \{x,y\}$.
$\pi^\star$ denotes the optimal policy (best in hindsight for the adversarial case).

\section{Fresh Randomness}

The main principle that guides us in the design of algorithms for fresh randomness is the following:
even if all agents play the same policy, the team still gathers ``$\numagents$ times more data''.
Thus, we take a single-agent regret minimization algorithm $\texttt{ALG}$ and let all the agents play the policy that it outputs.
$\texttt{ALG}$ is then updated based on the observations of all agents.

For the stochastic setting we propose an optimistic algorithm we call \texttt{coop-ULCVI} based on the single-agent algorithms of \citet{azar2017minimax,dann2019policy}. 
The algorithm maintains empirical estimates of the transition probabilities and costs, based on samples from all agents. 
At the beginning of episode $k$ it constructs an optimistic estimate $\underl{Q}^k$ of the optimal $Q$-function $Q^*$ so that with high probability (w.h.p) $\underl{Q}^k_h(s,a) \leq Q^*_h(s,a)$. 
The agents all play the same deterministic policy which is greedy with respect to $\underl{Q}^k$: $\pi^{k,v}_h(s) = \argmax_a \underl{Q}_h^k(s,a)$.
Even if agents arrive together at the same state and take the same action, we still get multiple i.i.d samples. 
Hence, the empirical estimates are based on $m$ times more samples compared to the non-cooperative (single-agent) setting.
This is the key property that allows us to prove the following improved regret bound.
Detailed description of the \texttt{coop-ULCVI} algorithm and the proof of \cref{thm-paper:reg-coop-ulcvi} appear in  \Cref{appendix:fresh-stochastic}.

\begin{theorem}
    \label{thm-paper:reg-coop-ulcvi}
    For stochastic MDP with fresh randomness, \verb|coop-ULCVI| ensures w.h.p,
    $
        \regret
        =
        \wt O \big( \sqrt{\frac{H^3 S A K}{\numagents}} \big).
    $
\end{theorem}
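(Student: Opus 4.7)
The plan is to mimic the single-agent UCBVI/ULCVI analysis of \citet{azar2017minimax,dann2019policy} but to exploit that \texttt{coop-ULCVI} pools the observations of all $m$ agents into a single empirical model. Concretely, at the start of episode $k$ I would define $N_h^k(s,a)=\sum_{k'<k}\sum_{v\in[m]}\indevent{(s_h^{k',v},a_h^{k',v})=(s,a)}$, the empirical transition $\widehat{p}_h^{\,k}$ and cost $\widehat{c}_h^{\,k}$ as the usual averages over these $N_h^k(s,a)$ samples, and set a Bernstein-type bonus $b_h^k(s,a)=\wt O\bigl(\sqrt{\VAR_{\widehat{p}_h^{\,k}}(\overl{V}^{k}_{h+1})/N_h^k(s,a)}+H/N_h^k(s,a)\bigr)$. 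The optimistic $Q$-function is then built by backward induction $\underl{Q}_h^k(s,a)=\max\{0,\widehat{c}_h^{\,k}(s,a)+\bbE_{\widehat{p}_h^{\,k}}[\underl{V}^k_{h+1}]-b_h^k(s,a)\}$, and all agents play the common deterministic greedy policy $\pi^k$.

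The first step is to prove optimism, $\underl{Q}_h^k(s,a)\le Q_h^\star(s,a)$ w.h.p., by the standard inductive argument that combines Bernstein's inequality (valid with $N_h^k$ pooled across agents, since given the history the $m$ samples in episode $k'$ are conditionally i.i.d.\ under fresh randomness) with the empirical-Bernstein slack that handles the variance term. The second step is the regret decomposition. Since every agent follows the same $\pi^k$, the per-agent regret is identical across $v$ and equals $\sum_{k}(V_1^{\pi^k}-V_1^\star)(\sinit)$; by optimism this is at most $\sum_k(V_1^{\pi^k}-\underl{V}_1^k)(\sinit)$. Applying the extended value-difference (simulation) lemma to $V^{\pi^k}-\underl{V}^k$ under the true model, I unfold this telescopically into
\[
    \regret\le \sum_{k=1}^{K}\bbE^{p,\pi^k}\Bigl[\sum_{h=1}^{H} b_h^k(s_h,a_h)+\text{model-error terms}\Bigr],
\]
where the model-error terms come from replacing $\widehat{p}_h^{\,k}$ by $p_h$ inside the Bellman update.

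The third step is the concentration/pigeonhole calculation, which is where the gain from $m$ agents materialises. For any sequence of visit indicators across all agents, the standard inequality
\[
    \sum_{k=1}^{K}\frac{n_h^k(s,a)}{\sqrt{N_h^k(s,a)\vee 1}}\le 2\sqrt{N_h^{K+1}(s,a)},
    \qquad N_h^{K+1}(s,a)\le mK,
\]
combined with Cauchy–Schwarz over $(s,a)$ and a Freedman-type concentration that exchanges the expectation $\bbE^{p,\pi^k}[\cdot]$ for the on-trajectory sum (paying a $\sqrt{HK/m}$ lower-order term because each agent contributes $K$ rollouts), yields $\sum_{k,h}\bbE^{p,\pi^k}[b_h^k]\le \wt O\bigl(\sqrt{H^2 SA\cdot mK}/m\bigr)\cdot\sqrt{\sum_{k,h}\bbE^{p,\pi^k}\VAR_{p_h}(V_{h+1}^\star)}$. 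The law of total variance then bounds the inner variance sum by $O(H^2 K)$, producing the $\sqrt{H^3 SAK/m}$ leading term.

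The main obstacle is the Bernstein bookkeeping: as in the single-agent proof, I must be careful that the bonus uses $\VAR_{\widehat{p}_h^{\,k}}(\overl{V}_{h+1}^k)$ rather than $\VAR_{p_h}(V_{h+1}^\star)$, and the gap between these two involves $\|\overl{V}_{h+1}^k-V_{h+1}^\star\|_\infty$, which is itself controlled by regret. Closing this recursion self-consistently, so that the extra terms absorb into $\wt O(\cdot)$ lower order and do not inflate the $H$ dependence from $H^3$ to $H^4$, is the technically delicate part — but because the $m$-dependence enters only through $N_h^k$ in the bonus, the same self-bounding argument as in the single-agent case goes through verbatim once the pigeonhole step above is in place, giving the claimed $\wt O\bigl(\sqrt{H^3 SAK/m}\bigr)$ bound.
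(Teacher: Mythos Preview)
Your proposal is correct and follows essentially the same approach as the paper: pool the $m$ agents' samples into a single count, establish optimism/pessimism via Bernstein, decompose the regret through the optimistic--pessimistic gap, apply the pigeonhole argument on the pooled counts to gain the $1/\sqrt{m}$ factor, and close with the law of total variance and the self-bounding trick for the variance term. The only cosmetic difference is that the paper, rather than swapping $\bbE^{p,\pi^k}[\cdot]$ for on-trajectory sums via Freedman, first uses the trivial identity $\sum_k(V_1^{\pi^k}-V_1^\star)=\frac{1}{m}\sum_{v}\sum_k(V_1^{\pi^k}-V_1^\star)\le\frac{1}{m}\sum_{v}\sum_k(\overl{V}_1^k-\underl{V}_1^k)$ and then unfolds directly on all agents' realized trajectories; this makes the $\frac{1}{m}$ prefactor appear immediately and avoids a separate concentration step, but is equivalent to your route.
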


This bound improves upon \citet{lidard2021provably} by a factor of $\sqrt{H}$, and is in fact optimal up to logarithmic factors as shown by our lower bound in \cref{appendix: lower bounds}. 
The lower bound is built on a simple observation \citep{ito2020delay}: minimizing the sum of regrets of $m$ agents in $K$ episodes is harder than minimizing the regret of a single agent in $mK$ episodes. 
By single-agent lower bound \citep{domingues2021episodic}, the sum of regrets is $\Omega(\sqrt{H^3 S A K \numagents})$, so that the lower bound on the average regret matches our regret bound in \cref{thm-paper:reg-coop-ulcvi}.

For the adversarial setting we propose \texttt{coop-O-REPS} which is based on the single-agent O-REPS algorithm \cite{zimin2013online}.
Essentially, this is Online Mirror Descent \citep{shalev2011online} on the set of occupancy measures with entropy regularization.
More specifically, in episode $k$ all agents play policy $\pi^k$ computed as follows:
\begin{align}
    \label{eq:o-reps-update-rule}
    q^{\pi^k} = \argmin_{q \in \ocset} \eta \langle q , \hat c^{k-1} \rangle + \KL{q}{q^{\pi^{k-1}}},
\end{align}
where $\KL{\cdot}{\cdot}$ is the KL-divergence, $\eta$ is a learning rate and $\hat c^k$ is an importance sampling estimator.
The main difference in our algorithm is the new estimator that incorporates the observations from all the different agents as follows:
\begin{align}
    \hat c^k_h(s,a) = \frac{c^k_h(s,a) \indevent{\exists v: \  s^{k,v}_h = s,a^{k,v}_h = a}}{W^k_h(s,a) + \gamma},
    \label{eq-paper:importance-sampling estimator}
\end{align}
where $\gamma$ is a bias added for high-probability regret \citep{neu2015explore}, and $W_h^k(s,a)$ is the probability that \textit{some} agent visits state $s$ and takes action $a$ at time $h$ -- this quantity will play a major role in the analysis of all our algorithms.
Conveniently, ${W_h^k(s,a) = 1 - (1 - q_h^{\pi^k}(s,a))^m}$ as it is the complement of the event that all agents do not visit $(s,a)$ at time $h$.
Thus, the algorithm can be implemented efficiently similarly to the single-agent algorithm \citep{zimin2013online}.

For unknown transitions we propose \verb|coop-UOB-REPS| based on single-agent UOB-REPS \citep{jin2020learning}, which is similar to \verb|coop-O-REPS| but uses an estimate $\Delta(\calM,k)$ of the set of occupancy measures which contains the true set $\Delta(\calM)$ with high probability. 
Note that without knowing $p$, we cannot compute $W_h^k$. Instead, we use an optimistic estimate $U_h^k$ which bounds $W_h^k$ from above with high probability.
The full algorithms and analysis for the adversarial setting with fresh randomness appear in \cref{appendix:adversarial-fresh-known,appendix:adversarial-fresh-unknown}.

\begin{theorem}
    \label{thm-paper:reg-coop-o-reps}
    For adversarial MDP with fresh randomness, \verb|coop-O-REPS| ensures w.h.p,
    $$
        \regret
        =
        \wt O \left( H \sqrt{K} + \sqrt{\frac{H^2 S A K}{\numagents}} \right),
    $$
    for known dynamics, and \verb|coop-UOB-REPS| ensures w.h.p,
    $$
        \regret
        =
        \wt O \left( H \sqrt{K} + \sqrt{\frac{H^4 S^2 A K}{\numagents}} \right),
    $$
    for unknown dynamics.
\end{theorem}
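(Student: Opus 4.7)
The plan is to follow the occupancy-measure template of single-agent O-REPS \citep{zimin2013online}, exploiting the new multi-agent estimator \eqref{eq-paper:importance-sampling estimator} to drive the analysis. Since \texttt{coop-O-REPS} has every agent play the common policy $\pi^k$, the outer $\max_{v}$ in $R_K$ is trivial and it suffices to bound $\sum_{k=1}^K\langle q^{\pi^k}-q^{\pi^\star},c^k\rangle$. I would split this into (i) a bias term $\langle q^{\pi^k}-q^{\pi^\star},c^k-\bbE[\hat c^k\mid\calF^{k-1}]\rangle$, (ii) a martingale term $\langle q^{\pi^k}-q^{\pi^\star},\bbE[\hat c^k\mid\calF^{k-1}]-\hat c^k\rangle$, and (iii) the entropic OMD regret on the estimated costs, bounded by $\tfrac{\log|\ocset|}{\eta}+\eta\sum_k\langle q^{\pi^k},(\hat c^k)^2\rangle$ up to lower-order terms.

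The heart of the argument is the stability term. Because the indicator ``some agent visits $(s,a)$ at time $h$'' has mean $W^k_h(s,a)$, direct calculation gives $\bbE[\langle q^{\pi^k},(\hat c^k)^2\rangle\mid\calF^{k-1}]\le\sum_{h,s,a}q^{\pi^k}_h(s,a)/(W^k_h(s,a)+\gamma)$, so everything reduces to controlling $q/(1-(1-q)^m)$. I would prove the two-regime inequality $q/(1-(1-q)^m)\le 2/m$ when $mq\le 1$ (from $1-(1-q)^m\ge 1-e^{-mq}\ge mq/2$) and $q/(1-(1-q)^m)\le 2q$ otherwise (since then $(1-q)^m\le e^{-1}$, so $W\ge 1/2$). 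Splitting the sum over $(s,a)$ at the threshold $q=1/m$ gives $\sum_{h,s,a}q^{\pi^k}_h(s,a)/W^k_h(s,a)\le H(2SA/m+2)$, so the OMD bound reads $\tfrac{H\log(SA)}{\eta}+\eta KH(SA/m+1)$; tuning $\eta$ yields exactly $\wt O(H\sqrt K+\sqrt{H^2SAK/m})$.

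For the high-probability conversion of (ii) and the bias (i), the plan is to apply the biased-estimator trick of \citet{neu2015explore}: a Freedman/Bernstein-type martingale bound together with the $\gamma$-underestimate property of $\hat c^k$ gives deviations of order $\log(1/\delta)/\gamma$, which are absorbed once $\gamma\sim\eta$. For \texttt{coop-UOB-REPS} I would reuse the same template on top of the UOB-REPS scaffolding of \citet{jin2020learning}: replace $\ocset$ by the confidence set $\ocsetk$ and $W^k_h$ by its optimistic upper estimate $U^k_h$, exploit that transitions are estimated from $m$ times more samples (shrinking the transition-bonus terms by $1/\sqrt m$), and invoke the same $q/W$-type inequality after verifying that $U^k_h(s,a)\lesssim W^k_h(s,a)$ with high probability. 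The resulting transition-error contribution is $\wt O(\sqrt{H^4S^2AK/m})$, matching the stated bound.

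The main obstacle I expect is the nonlinearity of $W^k_h(s,a)=1-(1-q^{\pi^k}_h(s,a))^m$: handling both regimes of $mq$ uniformly and making sure the case split does not leak a spurious $\sqrt m$ factor into the final rate is delicate, and it is also what makes the multi-agent gain over the single-agent $\sqrt{H^2SAK}$ rate possible. A secondary difficulty, specific to \texttt{coop-UOB-REPS}, is that $W^k_h$ is highly curved near $q\approx 1/m$, so the confidence surrogate $U^k_h$ must be tight enough that $\sum q/(U+\gamma)$ still behaves like $\sum q/(W+\gamma)$; otherwise the bound degrades by unwanted factors of $m$ or $S$ that would destroy the $1/m$ scaling.
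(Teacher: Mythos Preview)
Your plan for \texttt{coop-O-REPS} is correct and matches the paper's argument almost exactly. The only difference is cosmetic: you bound $q/(1-(1-q)^m)$ via a two-regime case split at $mq=1$, while the paper proves the single inequality $q/(1-(1-q)^m)\le 1/m+q$ in one shot by AM-GM (Lemma~\ref{lemma:linear-approx}); both give $\sum_{h,s,a}q^{\pi^k}_h/W^k_h\lesssim H(1+SA/m)$ and hence the same rate after tuning $\eta$.

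For \texttt{coop-UOB-REPS} your outline is also on the right track, but one remark is inverted. You write that one must verify $U^k_h\lesssim W^k_h$; in fact $U^k_h\ge W^k_h$ by construction (it is the optimistic surrogate built from $u^k_h\ge q^{\pi^k}_h$). That direction is exactly what Neu's implicit-exploration argument needs, and it also makes $q/(U+\gamma)\le q/(W+\gamma)$ automatic, so the stability term is not where the difficulty lies. The genuine extra work is in the \emph{bias} term $1-W^k_h/(U^k_h+\gamma)$, which now carries a contribution proportional to $U^k_h-W^k_h$. The paper handles this by the convexity estimate $(1-q)^m-(1-u)^m\le m(1-q)^{m-1}(u-q)$ together with $mq(1-q)^m\le\log m$, reducing the bias to $\sum_{k,h,s,a}(u^k_h(s,a)-q^{\pi^k}_h(s,a))$, which is then bounded by the transition-confidence machinery of \citet{jin2020learning} using the $m$-fold sample amplification. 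This is the step that produces the $\sqrt{H^4S^2AK/m}$ term, not the stability analysis.
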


In \cref{appendix: lower bounds} we show these bounds are optimal, except for an extra $\sqrt{H S}$ factor in the second term of our unknown dynamics bound which we cannot hope to remove here since it is still an open problem even for single-agent.
Notice the additional $H \sqrt{K}$ term that does not appear in the stochastic setting.
It follows from the lower bound for single-agent adversarial MDP with full-information feedback (not bandit feedback), which is equivalent to our setting in the best case scenario where the agents manage to visit all state-actions.

\begin{proof}[Proof sketch for \cref{thm-paper:reg-coop-o-reps}]
    By standard analysis, the regret scales with two terms: \emph{penalty} of order $H/\eta$, and \emph{stability} of order $\eta \sum_{k,h,s,a} q^{\pi^k}_h(s,a) \hat c^k_h(s,a)^2$.
    We show that the stability (which accounts for the estimator's variance) decreases as the number of agents increases.
    In particular we prove that, 
    \begin{align}
        \label{eq:q/W leq 1/m + q}
        q_h^{\pi^k}(s,a) 
        \leq 
        \Bigl( \frac{1}{m} + q_h^{\pi^k}(s,a) \Bigr) W_h^k(s,a). 
    \end{align}
    This implies that either the probability to observe cost $c_h^k(s,a)$ is $m$ times the probability of a single agent to observe it , or that this probability is at least a constant. 
    Hence, $q^{\pi^k}_h(s,a) \hat c^k_h(s,a)^2 \leq (1/m + q^{\pi^k}_h(s,a)) \hat c^k_h(s,a)$. 
    Then, by concentration, the stability amounts to $\eta H K (1 + S A / m)$, and optimizing over $\eta$ gives the desired bound.
    
    With unknown dynamics, there is an additional error term that comes from the estimation of the occupancy measures set and the bias of the estimator (in particular $U^k_h$).
    This error is handled similarly to the stochastic case.
\end{proof}

\section{The Challenges of Non-fresh Randomness}
\label{sec:challenges}

Unlike fresh randomness, in the non-fresh randomness setting the total amount of feedback is not necessarily $m$ times the feedback of a single agent. 
In fact, any algorithm that uses deterministic policies (e.g., optimistic algorithms) simply fails in this setting.
The reason is that all agents follow the exact same trajectory since the policy does not introduce any randomness and the transitions are fixed ahead of the episode.
Thus, the total amount of feedback is exactly the same as a single-agent would have gathered, which means $\Omega(\sqrt{H^3 S A K})$ regret with no benefit from multiple agents.

The next theorem shows that the \textit{non-fresh randomness} setting is significantly harder than fresh randomness even in terms of the statistical lower bound.
While the regret for stochastic MDP with \textit{fresh randomness} scales only logarithmically with $K$ for large enough $m$, $H\sqrt{S K}$ regret is unavoidable under \textit{non-fresh randomness} even if $m \to \infty$.

\begin{figure}[t]
    \centering
    \begin{tikzpicture}[shorten >=0pt,node distance=0.75cm,on grid,auto,scale=0.7,every node/.style={scale=0.7}] 
    	\node[state] (s_0) at (0,0)  {$s_0$}; 
    	\node[state] (s_b) at (4.5,0) {$s_{b}$}; 
    	\node[state] (s_1) at (-4,1) {$s_1$}; 
    	\node[state] (s_2) at (-2,2.5) {$s_2$}; 
    	\node[state] (s_S) at (1,2.5) {$s_S$}; 
    	\node at (-0.5,2.5) {...};
    	\node [above of = s_2]{MAB};
    	\node [above of = s_1]{MAB};
    	\node [above of = s_S]{MAB};
    	\path[-{>[scale=1.5,width=2.5]}]
    	(s_0) edge  node {$a\ne a_1$} (s_b)
    	(s_b) edge [loop above] node {$c_h^k(s_{b},\cdot) = 1$} ()
    	(s_1) edge [bend left=10]    node {$ $} (s_0)
    	(s_0) edge		[bend left=10] 		node [below,sloped,inner sep=3pt] {w.p $1/S$} (s_1)
    	(s_2) edge [bend left=10]   node [pos = 0.4,sloped] {w.p 1} (s_0)
    	(s_0) edge [bend left=10] 		node  {$ $} (s_2)
    	(s_S) edge [bend left=10]   node {$ $} (s_0)
    	(s_0) edge [bend left=10] 		node  {$ $} (s_S);
    \end{tikzpicture}
    \caption{Lower bound construction for non-fresh randomness.}
    \label{fig:lower bound}
\end{figure}
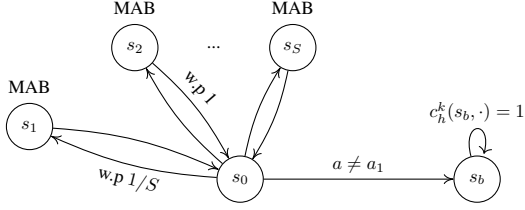

\begin{theorem}
    \label{thm-paper:lower-bound-stochastic-non-fresh}
    For any $S, A, H, \numagents \in \bbN$ and $K \ge S A H$, and for any algorithm $\texttt{ALG}$,
    there exists a stochastic MDP with non-fresh randomness such that $\texttt{ALG}$ suffers expected average regret of at least $\Omega \big(H \sqrt{S K} + \sqrt{\frac{H^3 S A K}{\numagents}}\big)$.
\end{theorem}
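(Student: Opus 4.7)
The plan is to establish the two additive terms separately, since their construction requires quite different arguments, and to take their maximum (which is within a factor of~$2$ of the sum). The term $\sqrt{H^3 S A K / \numagents}$ is inherited from the fresh case and requires only the single-agent reduction sketched earlier in the paper: the \emph{sum} of the $m$ individual regrets is at least the regret of one virtual agent running on the same stochastic MDP for $\numagents K$ episodes in series, since the team's joint information set is no larger than what this serial agent sees. Invoking the tight single-agent lower bound of \citet{domingues2021episodic} then gives $\sum_v R^{(v)}_K \ge \Omega\big(\sqrt{H^3 S A \cdot \numagents K}\big)$, and dividing by $\numagents$ yields the claimed bound on the average individual regret. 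Crucially, this serial reduction is insensitive to whether the randomness is fresh or non-fresh.

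The novel $H\sqrt{SK}$ term I would establish on the construction sketched in \cref{fig:lower bound}: at $s_0$ the single non-catastrophic action $a_1$ transitions uniformly at random to one of $s_1,\ldots,s_S$, while every other action at $s_0$ sends the agent into the absorbing bad state $s_b$ with per-step cost $1$; each $s_i$ then hosts a small multi-armed bandit whose arms return to $s_0$, and the pattern repeats throughout the horizon~$H$. The sole step at which the non-fresh assumption enters is the routing out of $s_0$: since all agents that play $a_1$ at the same time step are routed by the \emph{same} realization $S_h^k(s_0,a_1)$ to the \emph{same} $s_i$, exactly one of the $S$ sub-bandits receives feedback at each time step, and the expected number of team visits to any fixed $s_i$ is only $T_i = \Theta(HK/S)$, independent of $\numagents$.

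Inside each sub-bandit I would then run a two-hypothesis Le~Cam / KL-change-of-measure argument against a hidden $\Delta$-gap in the arm means to lower-bound the average individual regret accumulated at $s_i$ by $\Omega\big(H\sqrt{K/S}\big)$: the $\sqrt{K/S}$ factor follows from the count of shared visits $T_i$, and the extra $H$ factor is obtained by tuning the sub-bandit so that a wrong arm choice propagates a cost of order $H$ (for example by forcing a deterministic trip through a long block of high-cost states before returning to $s_0$, so that the Bellman gap of the sub-bandit scales as $H$ rather than $1$). Summing over the $S$ sub-bandits yields $\Omega\big(H\sqrt{SK}\big)$, completing the second term; combining the two bounds gives the theorem.

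The main obstacle is the inner sub-bandit step, because a naive count allows the $m$ agents to probe $\min\{m,A\}$ distinct arms during a single visit, suggesting a $\sqrt{m}$ speedup. The key observation that rules this out is that under non-fresh \emph{cost} randomness any two agents that take the same action at the same time step observe the \emph{identical} cost realization, so the amount of new statistical information gathered by the team per visit is at most $A$ samples rather than $m$, capping the effective parallelism. Combined with the fact that the regret criterion is individual---exploration paid by one agent cannot be discounted from the others' regrets in expectation---this pins the per-agent regret at the single-agent MAB rate on $\Theta(HK/S)$ rounds independently of $\numagents$, and ultimately yields the $\numagents$-free $H\sqrt{SK}$ contribution. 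Making the information-theoretic coupling rigorous inside an MDP (as opposed to a bare bandit), while tracking the horizon-long propagation used to produce the $H$ factor, is the principal technical difficulty.
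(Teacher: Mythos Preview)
Your treatment of the $H\sqrt{SK}$ term follows the paper's \cref{fig:lower bound} construction and is correct in spirit, though the mechanism you propose for the leading $H$ is not the one the paper uses and is internally inconsistent: you state that the routing pattern ``repeats throughout the horizon~$H$'' (so $T_i=\Theta(HK/S)$) \emph{and} that a wrong pull ``propagates a cost of order~$H$'' via a long detour---but an $\Theta(H)$-step detour precludes $\Theta(H)$ returns to the hub within a single episode, so the visit count and the cost scale cannot both contribute a factor of $H$. The paper instead lets the identity of the best arm at $s_i$ depend on the step $h$, so each of the $\Theta(H)$ visits is a \emph{separate} bandit instance, and the $H$ comes from summing $\sum_h\sqrt{T_{i,h}}$.

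The substantive gap is in the $\sqrt{H^3SAK/\numagents}$ term. The serial reduction is \emph{not} insensitive to fresh versus non-fresh randomness. When you serialize the $m$ per-episode policies of a non-fresh cooperative learner onto $m$ consecutive single-agent episodes, the feedback you hand back to the learner is $m$ i.i.d.\ trajectories, not $m$ trajectories coupled through the shared realization $\{S^k_h(s,a)\}$. The policies $\pi^{k,v}$ it then emits are functions of this altered history and are therefore different random variables from those it would emit in the actual non-fresh interaction; consequently the single-agent regret of the serialized learner is not the sum of the agents' non-fresh regrets, and the Domingues et al.\ bound does not transfer. Your information-set remark does not close this: it compares the \emph{amount} of information but the reduction needs the regret random variables to coincide (or to be ordered in the right direction for every fixed MDP), which it does not establish. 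For a pure bandit one can repair the simulation because there are no transitions to propagate the discrepancy, but in an MDP the agents' paths diverge after the first step at which the fresh and simulated realizations differ. The paper accordingly does not use the serial reduction here: for this term it builds a second, dedicated construction (\cref{thm:lower-bound-stochastic-non-fresh}) with wait states $s^w_1,\ldots,s^w_S$ that allow each agent to choose in which of $\Theta(H)$ steps to enter the MAB state $s_i$, thereby encoding a cooperative bandit with $\Theta(HA)$ arms and $\Theta(H)$-scale payoffs; the cooperative-MAB lower bound on that enlarged instance yields $\Omega\bigl(S\cdot H\sqrt{(HA/\numagents)\cdot K/S}\bigr)=\Omega\bigl(\sqrt{H^3SAK/\numagents}\bigr)$ directly.
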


\begin{proof}[Proof sketch]
    We construct the following MDP illustrated in \cref{fig:lower bound}. 
    All agents start in $s_0$. 
    Taking action $a_1$ transitions to one of the MAB states $s_1,\dots,s_S$ with probability $1/S$ to each. 
    Taking any other action $a\neq a_1$ transitions to a bad state $s_b$ which is a sink with maximal cost $1$.
    Each MAB state encodes a hard MAB instance: 
    one action gives cost $0$ with probability $1/2 + \epsilon$ and cost $1$ otherwise, while the rest of the actions give cost $0$ or $1$ with probability $1/2$.
    From the MAB states all actions transition back to $s_0$ with probability $1$.
    
    Since the bad state has higher cost than every MAB state and does not contribute to exploration at all, we can assume that all agents choose action $a_1$ every time they arrive to $s_0$. 
    Recall that transitions are non-fresh, so all agents visit exactly the same states. 
    This is the critical point in our construction, as it means that exploration is limited to the $A$ actions in the states that all agents visit (and cannot remove $S$ from the regret).
    
    Choosing $\epsilon$ as in standard MAB lower bounds, we get that the regret from each of MAB state is $\Omega(\sqrt{(1+A/m)X})$, where $X$ is the total number of visits to that state. For last, $X\approx K/S$ with high probability which implies that the regret from each MAB state is $\Omega(\sqrt{(1+A/m) K / S})$. 
    Summing over all states and time steps, we get the desired lower bound.
    We note that this is a simplified version, missing a factor of $\sqrt{H}$ in the second term.
    For the full construction and other lower bounds, see \cref{appendix: lower bounds}.
\end{proof}

In \cref{sec:non-fresh-stochastic} we face non-fresh randomness in stochastic MDP and present the \texttt{coop-ULCAE} algorithm based jointly on optimism and action-elimination. 
It is important to note that, much like optimistic algorithms, existing RL action elimination algorithms (e.g., \citet{xu2021fine}) are deterministic and thus fail in the cooperative non-fresh randomness setting, even though they achieve optimal regret for single-agent.
Moreover, naive ways to make these algorithms use stochastic policies that succeed in cooperative MAB, such as uniform exploration of non-eliminated arms, lead to sub-optimal regret in RL because exploration must be controlled more carefully to ensure important states are reached with large enough probability.
Hence, we develop a novel exploration method for our algorithm which guarantees that agents can deviate from the optimistic policy and explore potentially optimal actions with minimal effect on the regret.

For adversarial MDP, non-fresh randomness introduces an additional challenge.
Due to correlations between the trajectories of different agents, there is no clear and simple relation between $W^k_h(s,a)$ and $q^{\pi^k}_h(s,a)$ as in the fresh randomness setting. 
In fact, \cref{eq:q/W leq 1/m + q} does not hold anymore. 
In \cref{sec:non-fresh-adversarial-known} we present a sophisticated technique for bounding the ratio $q_h^{\pi^k}(s,a)/W_h^k(s,a)$ through a Linear Programming formulation. 
This allows us to prove optimal regret bounds for adversarial cost and known dynamics. 

Existing algorithms for adversarial cost and \textit{unknown} dynamics are optimistic in essence, and as mentioned before, such algorithms fail to utilize cooperation under non-fresh randomness.
In \cref{sec:non-fresh-adversarial-unknown} we overcome this challenge with a novel exploration mechanism and prove regret that does not depend on $A$ (up to logarithmic factors) if there are at least $\sqrt{K}$ agents.
However, finding the optimal regret for general $m$ still remains an important open question.

\section{Non-fresh Randomness - Stochastic MDP}
\label{sec:non-fresh-stochastic}

As outlined in \cref{sec:challenges}, under non-fresh randomness, we cannot let all agents play the optimistic policy as in the fresh randomness case. 
Hence, we want agents to occasionally deviate from the optimistic policy for the purpose of exploration. 
A naive approach would be to let agents explore a random action with probability $\epsilon$ and to follow the optimistic policy with probability $1-\epsilon$. 
% That way, we get approximately $m \epsilon$ more feedback and the regret of the optimistic policy would scale as $\min\{\sqrt{SAK},\sqrt{SAK/(m\epsilon)}\}$ (ignoring dependency in $H$). 
% On the other hand, deviating with an arbitrary action can lead to cost of order $H$, which happens for approximately $K \epsilon$ episodes and leads to total regret of $\min\{(SA/m)^{1/3} K^{2/3},\sqrt{SAK}\}$  (with properly tuned $\epsilon$). Thus, requiring at least $\sqrt{K/(SA)}$ agents to obtain improvement over single-agent regret.  
That way, we get $m \epsilon $ more feedback for $m\geq 1/\epsilon$ and the regret of playing the optimistic policy would scale as $\sqrt{SAK/(m\epsilon)}$ (ignoring dependency in $H$). On the other hand, deviating with an arbitrary action can lead to cost of order of $H$, which happens for approximately $\epsilon K$ episodes, so one would have to set $\epsilon \leq \sqrt{SA/K}$ in order to obtain improvement over single agent regret. Thus, the number of agents must be at least $\numagents \geq 1/\epsilon \geq \sqrt{K/(SA)}$ for an improvement. 
In this section we significantly reduce the number of agents required for a gain in the regret, and show that it can depend on $A$ alone.

Another natural approach, which leads to optimal regret in cooperative MAB, is \emph{action-elimination}, i.e., eliminate all actions that are clearly sub-optimal and explore uniformly at random over non-eliminated actions.
However, this approach would also fail in RL because it does not explore efficiently enough.
More precisely, agents deviate too much from the optimistic policy so we cannot guarantee that they visit important states.
A closer look at action-elimination algorithms for RL \citep{xu2021fine} reveals that they use deterministic policies for this very reason.

\begin{algorithm}[t]
    \caption{\textsc{coop-ULCAE}} 
    \label{alg:paper-coop-ulcae}
    \begin{algorithmic}[1]

        \STATE {\bf initialize:} $\calA^0_h(s) = \calA$ for every $s \in \calS$ and $h \in [H]$.

        \FOR{$k=1,\dots,K$}
            
            \STATE \textbf{Compute} $ \underl{Q}^k,\overl{Q}^k $ based on empirical estimates.
            
            \STATE \textbf{Set} optimistic policy $\underl{\pi}^k_h(s) \in \argmin_{a \in \calA} \underl{Q}^k_h(s,a)$.
            
            \STATE \textbf{Eliminate} sub-optimal actions:  remove $a$ from $\calA^k_h(s)$ if $\exists a' \in \calA^k_h(s)$ s.t. $\underl{Q}_{h}^{k}(s,a)>\overl{Q}_{h}^{k}(s,a')$.

            \STATE \textbf{Set policies for agents}: for every $v \in [m]$ sample $h_v \in [H]$ uniformly at random
            and set:
            $$
                \pi^{k,v}
                =
                \begin{cases}
                        \underl{\pi}^{k}  & \text{with probability } 1 - \epsilon
                        \\
                        \pi^{k,h_v}        & \text{with probability } \epsilon,
                \end{cases}
            $$
                where $\pi^{k,h'}_h = \underl{\pi}_{h}^{k}$ for any $h\ne h'$ and uniform over $\calA^k_h(s)$ at $(h,s)$.
                
            \STATE Play episode $k$, observe feedback and update empirical estimates.
        \ENDFOR
    \end{algorithmic}
\end{algorithm}

Our algorithm, cooperative upper lower confidence action-elimination (\verb|coop-ULCAE|), is presented in \cref{alg:paper-coop-ulcae} and in its full version (together with the full analysis) in \cref{appendix:non-fresh stochastic}. 
It takes inspiration from the two previous approaches but utilizes multi-agent exploration in a nearly optimal way.
It explores only over non-eliminated actions, but also makes sure that deviation from the optimistic policy is minimal, thus avoiding ``non-important'' states.
This is achieved by playing a random non-eliminated action only at one step during the episode, selected uniformly at random.

Formally, the algorithm maintains a set of \textit{active actions} in each state $\calA^k_h(s)$, consisting of only potentially-optimal actions. 
In episode $k$, it computes optimistic and pessimistic estimates of $Q^*$, $\underl{Q}^k $ and $\overl{Q}^k$, respectively, such that w.h.p $\underl{Q}^k_h(s,a) \leq Q^*_h(s,a) \leq \overl{Q}^k_h(s,a)$.
Hence, if for actions $a$ and $a'$, $\underl{Q}_{h}^{k}(s,a)>\overl{Q}_{h}^{k}(s,a')$, then $a$ is clearly sub-optimal and we can eliminate it. 
The policies of the agents are determined as follows: agent $v$ plays the optimistic policy (greedy with respect to $\underl{Q}^k$) with probability $1-\epsilon$, and with probability $\epsilon$ she plays the optimistic policy except for one random time step $h_v$ where she takes a uniformly random active action. 
% The key idea is to explore an active action with only minor affect on the regret compared to the optimistic policy, so we can set epsilon to be much larger (independent of $K$) compared to the naive $\epsilon$-exploration.
The key idea is that deviating on a single time step  with an active arm would have only minor affect on the regret, so we can set $\epsilon$ much larger compared to the naive $\epsilon$-exploration approach described in the beginning of this section.

\begin{theorem}
    \label{thm-paper:reg-coop-ulcae}
    For stochastic MDP with non-fresh randomness, \verb|coop-ULCAE| ensures with high probability,
    $$
        {\regret 
        = 
        \wt O \biggl(\sqrt{H^{5}SK}  + \sqrt{\frac{H^{7}SAK}{\sqrt{m}}} \biggr)}.
    $$
\end{theorem}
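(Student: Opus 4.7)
The plan is to adapt the standard optimistic regret analysis to the multi-agent exploration structure of \verb|coop-ULCAE|, exploiting action-elimination to avoid paying $A$ in the leading, non-speedup term, and using the one-step-at-random-time deviation scheme to generate $m$-scaled samples at active-but-non-optimistic actions.

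First I would establish a high-probability good event $\calE$ on which (i) the optimistic and pessimistic $Q$-functions sandwich the truth, $\underl{Q}^k_h(s,a) \le Q^{\star}_h(s,a) \le \overl{Q}^k_h(s,a)$, so that $\pi^{\star}_h(s) \in \calA^k_h(s)$ for every $k,h,s$ (the optimal action is never eliminated); and (ii) the confidence width $w^k_h(s,a) := \overl{Q}^k_h(s,a) - \underl{Q}^k_h(s,a)$ satisfies $w^k_h(s,a) = \wt O\bigl(\sqrt{H^3/N^k_h(s,a)}\bigr)$, where $N^k_h(s,a)$ counts the number of \emph{distinct episodes} in which \emph{some} agent visited $(h,s,a)$. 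The extra $H$ relative to standard Bernstein reflects the coupling across agents' non-fresh trajectories within an episode, which forces one to count episode-level events rather than sample-level independent observations.

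Next I would decompose the per-agent regret in episode $k$ into $(1-\epsilon)(V^{\underl{\pi}^k}_1 - V^{\star}_1) + \epsilon\,\bbE_{h_v}[V^{\pi^{k,h_v}}_1 - V^{\star}_1]$. The elimination rule gives, for any two active actions $a,a' \in \calA^k_h(s)$, $\underl{Q}^k_h(s,a) \le \overl{Q}^k_h(s,a')$; together with $\pi^\star_h(s)$ remaining active, this yields the gap bound $Q^{\star}_h(s,a) - V^{\star}_h(s) \le w^k_h(s,a) + w^k_h(s,\pi^{\star}_h(s))$ for every active $a$. Plugging into a performance-difference expansion shows that \emph{both} the fully-optimistic run and the deviating run incur regret bounded by a sum of widths along the realized trajectory, with the deviating run picking up one extra width $w^k_{h_v}(s_{h_v},a)$ at the random active action taken. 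The crucial step is then to lower-bound $N^k_h(s,a)$ using the multi-agent deviations. The non-deviating agents trace the single (non-fresh) optimistic trajectory, contributing only one independent sample per episode per visited $(h,s^k_h,a^k_h)$. However, conditional on the optimistic trajectory reaching $s$ at step $h$, the probability that \emph{at least one} of the $m$ agents deviates with $h_v = h$ and picks a given active $a$ is $1 - (1 - \epsilon/(H|\calA^k_h(s)|))^m \asymp \min\{1,\,\epsilon m/(H A)\}$, so the visit count of any active action is inflated by a factor of roughly $1 + \epsilon m/(HA)$ over the single-agent count.

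Summing widths over $K$ episodes via the standard potential bound $\sum_k x_k/\sqrt{\sum_{k' \le k} x_{k'}} \lesssim \sqrt{\sum_k x_k}$ together with a Cauchy--Schwarz over $s$ then splits the total regret into two pieces: a ``single-trajectory'' piece from the width of the currently-optimistic action (whose count grows at a rate independent of $m$), contributing $\wt{O}(\sqrt{H^5 S K})$; and a ``multi-agent-explored'' piece from the widths of the other active actions (whose counts scale like $\epsilon m k/(HA)$), contributing $\wt{O}(\sqrt{H^7 S A K/(\epsilon m)})$, plus a crude deviation-penalty $\wt{O}(\epsilon K H)$ for the off-policy step. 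Balancing by choosing $\epsilon \asymp 1/\sqrt{m}$ yields the stated bound. The hard part will be executing the second part of the decomposition rigorously: because the $m$ trajectories are coupled within an episode, within-episode independence is unavailable and the visit-count bookkeeping must be done exclusively at the ``some agent visits'' level; and the elimination-based gap bound has to be propagated across the off-policy step at the random time $h_v$ without destabilizing the width-vs-visit-count relation for the currently-optimistic action, which is where the bulk of the technical work lies.
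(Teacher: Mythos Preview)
Your high-level plan is sound and close to the paper's, but the accounting for the exploration episodes has a genuine gap that breaks the bound.

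The crude deviation penalty $\wt{O}(\epsilon K H)$ is fatal. With $\epsilon \asymp 1/\sqrt{m}$ this term is $KH/\sqrt{m}$, which is \emph{linear} in $K$ and dominates both $\sqrt{H^5 SK}$ and $\sqrt{H^7 SAK/\sqrt{m}}$ in the interesting regime $K \gg H^5 SA\sqrt{m}$; so ``balancing by $\epsilon \asymp 1/\sqrt{m}$ yields the stated bound'' is simply false with this term present. The paper never pays $\epsilon K H$. Instead it bounds the regret of an exploration episode entirely by confidence widths along the trajectory of $\pi^{k,h_v}$ (the elimination rule guarantees this also at the random step $h_v$, as you note), and then handles the widths \emph{after} the deviation differently: once the trajectory leaves the optimistic path, your lower bound $N^k_h(s,a) \gtrsim (\epsilon m/HA)\sum_j q^{\underl{\pi}^j}_h(s)$ is useless because $q^{\underl{\pi}^j}_h(s)$ may vanish there. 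The fix is to lower-bound $N^k_h(s,a)$ for post-deviation $(h,s,a)$ using \emph{only agent $v$'s own} exploration visits, $N^k_h(s,a) \gtrsim \sum_{j<k,\, j\in\calK^v_{\texttt{EXP}}} q^{\pi^{j,h_v}}_h(s,a)$, and then a standard potential argument over the $\approx \epsilon K$ exploration episodes gives $\sqrt{H^{?} S A K \epsilon}$ rather than $\epsilon K H$. Balancing $\sqrt{SAK/(m\epsilon)}$ against $\sqrt{SAK\epsilon}$ (not against $\epsilon K$) is what produces $\epsilon = \min\{HA/m,\,1/\sqrt{m}\}$ and the stated bound.

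A secondary issue: your ``single-trajectory piece'' cannot give $\sqrt{H^5 S K}$ directly from counting visits at the \emph{currently} optimistic action, because $\underl{\pi}^k_h(s)$ can switch across episodes; grouping by action and applying Cauchy--Schwarz re-introduces a $\sqrt{A}$. In the paper there is no separate single-trajectory analysis: the multi-agent lower bound $N^k_h(s,a) \gtrsim (m\epsilon/HA)\sum_j q^{\underl{\pi}^j}_h(s)$ is applied uniformly to \emph{all} active actions (including the optimistic one), yielding $R_{\texttt{OP}} \lesssim \sqrt{H^{?} SAK/(m\epsilon)}$; the $A$-free term $\sqrt{H^5 SK}$ only appears after plugging in $\epsilon = HA/m$ in the large-$m$ regime.
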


If $\numagents \geq H^4 A^2$, the first term is dominant, in which case the regret is nearly optimal and matches our lower bound (\cref{thm-paper:lower-bound-stochastic-non-fresh}) up to $H^{3/2}$. 
Otherwise, we have optimal dependence in $S,A,K$ but there is still a gap of $H^2$ and more importantly $1/\sqrt[4]{m}$. 
Determining the optimal dependency in $\numagents$ for this setting is an important open question.

\begin{proof}[Proof sketch for \cref{thm-paper:reg-coop-ulcae}]
    To simplify presentation, we ignore $poly(H)$ factors in the proof sketch and use the notation $V^\pi = V^{\pi}_1(\sinit)$. 
    For agent $v$, we first break the regret into episodes in which she plays the optimistic policy $\calK_{\texttt{OP}}^v$ and episodes in which she plays an exploration policy $\calK_{\texttt{EXP}}^v$:
    \[
        \underbrace{
        \sum_{k\in \calK_{\texttt{OP}}^v} V^{\underl{\pi}^k} - V^{\pi^\star}
        }
        _{R_{\texttt{OP}}^v}
        +
        \underbrace{
        \sum_{k\in \calK_{\texttt{EXP}}^v} V^{\pi^{k,h_v}} - V^{\pi^\star}}
        _{R_{\texttt{EXP}}^v}.
    \]
    Then, we show that the regret of playing $\underl{\pi}^k$ is bounded by the difference between the optimistic and pessimistic estimates of $Q^\star$ over the trajectory of $\underl{\pi}^k$. 
    This difference shrinks with the confidence radius and mainly scales as,
    \begin{align}
    \label{eq-paper:stochastic non-fresh optimistic radius}
        R_{\texttt{OP}}^v
        \lesssim
        \sum_{k=1}^K \sum_{h=1}^H \sum_{s \in \calS} \sum_{a \in \calA}\frac{q_{h}^{\underl{\pi}^{k}}(s,a) }{\sqrt{n_{h}^{k}(s,a)\vee1}},
    \end{align}
    where $n^k_h(s,a) = \sum_{j=1}^{k-1} \indevent{\exists v: \  s^{j,v}_h = s,a^{j,v}_h=a}$ is the number of times some agent visited $(s,a)$ at time $h$ before episode $k$.
    Now, one can show that $n_h^k(s,a)$ is approximately the sum of probabilities that \textit{some} agent visits $(h,s,a)$, i.e., $n_h^k(s,a) \approx \sum_{j=1}^{k-1} W_h^j(s,a)$. 
    Trivially $W_h^j(s,a) \geq q_{h}^{\underl{\pi}^{j}}(s,a)$, but we can further utilize the exploration of all the agents to bound $W_h^j(s,a)$ in terms of $q_{h}^{\underl{\pi}^{j}}(s)$ and not $q_{h}^{\underl{\pi}^{j}}(s,a)$, as follows:
    With probability ${1-(1-\nicefrac{\epsilon}{HA})^m}$ some agent plays the policy ${\pi}^{j,h}$ and takes action $a$ at time $h$. 
    In that case, she would arrive to $s$ in time $h$ with probability $q_{h}^{{\pi}^{j,h}}(s)$. Recall that ${\pi}^{j,h}$ and $\underl{\pi}^j$ are identical up to time $h$, and so $q_{h}^{{\pi}^{j,h}}(s) = q_{h}^{\underl{\pi}^{j}}(s)$. Also, it is possible to show that $1-(1-\epsilon/A)^m \approx m\epsilon/A$ whenever $\epsilon \leq A/m$. 
    Thus, we get the better bound:
    \[
        n_h^k(s,a) 
        \approx \sum_{j=1}^{k-1} W_h^j(s,a) 
        \gtrsim
        \frac{m\epsilon}{A} \sum_{j=1}^{k-1} q_{h}^{\underl{\pi}^{j}}(s).
    \]
    Combining this with \cref{eq-paper:stochastic non-fresh optimistic radius}, we obtain:
    \begin{align}
        \label{eq-paper:OP.R bound}
        R_{\texttt{OP}}^v
        \lesssim
        \sqrt{\frac{A}{m\epsilon}} \sum_{k,h,s}\frac{\sum_a q_{h}^{\underl{\pi}^{k}}(s,a) }{\sqrt{\sum_{j=1}^{k-1} q_{h}^{\underl{\pi}^{j}}(s)}} 
        \lesssim 
        \sqrt{\frac{S A K}{m\epsilon}},
    \end{align}
    where the last relation uses $\sum_{a \in \calA} q_{h}^{\underl{\pi}^{k}}(s,a) = q_{h}^{\underl{\pi}^{k}}(s)$, the Cauchy–Schwarz inequality, and standard arguments \citep[Lemma B.18]{rosenberg2020near} to bound ${\sum_k\nicefrac{q_{h}^{\underl{\pi}^{k}}(s) }{\sum_{j=1}^{k-1} q_{h}^{\underl{\pi}^{j}}(s)} \lesssim \log K}$.
    
    For $R_{\texttt{EXP}}^v$ we utilize the fact that when the agent plays an exploration policy, she deviates from the optimistic policy using an active action. 
    Particularly, we show that similar to the regret of the optimistic policy, the regret of the exploration episodes scales with the difference between the optimistic and pessimistic estimates of $Q^\star$ over the trajectory of ${\pi}^{k,h_v}$, but with additional penalty due to the deviation which is overall bounded recursively by $R_{\texttt{OP}}^v$, i.e., 
    \begin{align}
    \label{eq-paper:stochastic non-fresh explore radius}
        R_{\texttt{EXP}}^v
        \lesssim
        R_{\texttt{OP}}^v
        +
        \sum_{k\in \calK_\texttt{EXP}^v}\sum_{h,s,a}\frac{q_{h}^{{\pi}^{k,h_v}}(s,a) }{\sqrt{n_{h}^{k}(s,a)\vee1}}.
    \end{align}
    While we can bound $n_{h}^{k}(s,a)$ as before in terms of $q_{h}^{\underl{\pi}^{j}}(s)$, it can be very different than $q_{h}^{{\pi}^{k,h_v}}(s)$.
    Intuitively, once the agent deviated from the optimal policy, we have a much weaker guarantee on the quality of our confidence sets in the states that she reaches since the cooperative exploration is done over the trajectory of the optimistic policy.
    Thus, we cannot use similar arguments to the ones we used to bound $R_{\texttt{OP}}^v$, and in particular \cref{eq-paper:stochastic non-fresh optimistic radius}.
    Instead, we only utilize samples gathered by $v$ in $\calK_\texttt{EXP}^v$, and bound $n_h^k(s,a) \gtrsim \sum_{j \in \calK_\texttt{EXP}^v,j<k} q_{h}^{{\pi}^{j,h_v}}(s,a)$. 
    Using the fact that the number of exploration episodes for $v$ is approximately $|\calK_\texttt{EXP}^v| \approx \epsilon K$ and standard arguments, the second term in \cref{eq-paper:stochastic non-fresh explore radius} is bounded by $\sqrt{SAK\epsilon}$. 
    To finish, combine the bounds and set $\epsilon = \min \{ \frac{ A}{\numagents},\frac{1}{\sqrt{m}}\}$.
\end{proof}

\section{Non-fresh Randomness - Adversarial MDP}
\label{sec:non-fresh-adversarial}

\subsection{Known Transitions}
\label{sec:non-fresh-adversarial-known}

Before tackling the most challenging model -- adversarial MDP with non-fresh randomness and unknown transitions, we first study the case of known transitions.
While some of the challenges that we tackled in \cref{sec:non-fresh-stochastic} are alleviated when transitions are known, in this section we face additional challenges that stem from the fact that now an adversary is choosing the sequence of cost functions instead of them being sampled from a fixed distribution.

More precisely, under non-fresh randomness there are strong correlations between the trajectories of different agents.
This is in stark contrast to the fresh randomness setting where, by playing the same policy for all agents, we obtained different i.i.d samples which enabled us to prove that our estimator has reduced variance and therefore get an improved regret bound (\cref{thm-paper:reg-coop-o-reps}).
The correlations between the agents' trajectories introduce two main challenges: a statistical challenge and a computational challenge.

The statistical challenge resembles the ones we faced in the stochastic case (\cref{sec:non-fresh-stochastic}) -- whenever the policy is close to being deterministic, the trajectories of the agents are almost identical.
However, since costs are adversarial, here we need different techniques that are compatible with adversarial online learning.
More formally, for a near-deterministic policy $\pi^k$, we have $W_h^k(s,a) \approx q_h^{\pi^k}(s,a)$ which means that there is almost no benefit from the cooperation between the agents.
Even though algorithms for adversarial environments inherently choose stochastic policies, it is still unclear a-priori how to bound the ratio $q_h^{\pi^k}(s,a)/W_h^k(s,a)$, except for the trivial bound of $1$ which leads to single-agent regret guarantees.
Recall that for fresh randomness we bounded this ratio in \cref{eq:q/W leq 1/m + q} which relies on the independence of the agents' trajectories given the policy $\pi^k$.
However, this bound no longer holds and we develop a new bound that is suitable to the 
non-fresh randomness setting and builds on a novel LP formulation.

The computational challenge follows because we no longer have a closed-form expression to compute $W^k_h(s,a)$ like we had under fresh randomness.
We propose to solve this challenge by Monte Carlo estimation of $W^k_h(s,a)$ which we show does not damage the final regret guarantees.

For this setting we propose \texttt{coop-nf-O-REPS}, presented (together with its analysis) in \cref{appendix:adversarial non-fresh adversarial}.
It follows the same update rule (\cref{eq:o-reps-update-rule}) of \texttt{coop-O-REPS}, but instead of $W_h^k(s,a)$ (which is now hard to compute) in the definition of the importance sampling estimator (\cref{eq-paper:importance-sampling estimator}), it uses a Monte Carlo estimate $\wt W_h^k(s,a)$.
The estimate is computed by simulating the run of multiple agents playing policy $\pi^k$ over the MDP for $\wt O(K)$ times and taking the fraction of times where some agent visited the state-action pair $(s,a)$ at time $h$. 
The approximation error adds a small bias of order $O(1/\sqrt{K})$ which affects the total regret by only a constant factor.
We note that the computational complexity of this algorithm is similar to standard O-REPS-based algorithms which are known to have $poly(K,H,S,A)$ per-episode computational complexity \citep{dick2014online}.

\begin{theorem}
    \label{thm-paper:reg-coop-nf-o-reps}
    For adversarial MDP with non-fresh randomness and known dynamics, \verb|coop-nf-O-REPS| ensures w.h.p,
    $$
        \regret
        =
        \wt O \left( H \sqrt{S K} + \sqrt{\frac{H^2 S A K}{\numagents}}  \right).
    $$
\end{theorem}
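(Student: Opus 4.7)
My plan is to follow the standard Online Mirror Descent framework used in the fresh-randomness case (\cref{thm-paper:reg-coop-o-reps}), but to replace the key stability bound -- which relied on independence of agents' trajectories -- with a new combinatorial ratio bound suited to non-fresh randomness. As in the proof sketch of \cref{thm-paper:reg-coop-o-reps}, the regret splits into (i) an OMD penalty $\wt O(H/\eta)$; (ii) a stability term of order $\eta\sum_{k,h,s,a} q^{\pi^k}_h(s,a)(\hat c^k_h(s,a))^2$, whose conditional expectation given $\pi^k$ is at most $\eta\sum_{k,h,s,a} q^{\pi^k}_h(s,a)/(W^k_h(s,a)+\gamma)$; (iii) a Neu-style bias from the $\gamma$ shift, contributing a $\gamma$-scaled lower-order term; and (iv) an extra bias from substituting the Monte Carlo estimate $\wt W^k_h$ for $W^k_h$. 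For (iv), running $\wt O(K)$ simulated trajectories per episode gives $|\wt W - W| = \wt O(1/\sqrt K)$ uniformly via Hoeffding and a union bound, and propagating this through the estimator alters it only by a multiplicative constant plus $\wt O(\sqrt K)$ additive lower-order terms, so the remaining work is on the stability.

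The core of the proof is the following per-state inequality, which replaces \cref{eq:q/W leq 1/m + q} in the non-fresh setting: for every $(k,h,s)$,
\begin{equation*}
\sum_{a\in\calA}\frac{q^{\pi^k}_h(s,a)}{W^k_h(s,a)} \;\leq\; 1+\frac{A}{m}.
\end{equation*}
To prove it, let $N=N^k_h(s)$ denote the random number of agents at $s$ at time $h$. Since agents' actions are sampled independently from $\pi^k_h(\cdot\mid s)$ conditional on their state, writing $p_a=\pi^k_h(a\mid s)$ yields $W^k_h(s,a)=\bbE[1-(1-p_a)^N]$ and $q^{\pi^k}_h(s,a) = q^{\pi^k}_h(s)\,p_a$ with $\bbE[N]=m\,q^{\pi^k}_h(s)$. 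The map $f(n)=1-(1-p_a)^n$ is concave on $[0,m]$ with $f(0)=0$, so the chord-from-zero lower bound gives $f(N)\geq (N/m)\,f(m)$, and taking expectation produces $W^k_h(s,a)\geq q^{\pi^k}_h(s)\bigl(1-(1-p_a)^m\bigr)$. Dividing and using $1-(1-p)^m\geq mp/(1+mp)$ yields $q^{\pi^k}_h(s,a)/W^k_h(s,a)\leq p_a/(1-(1-p_a)^m)\leq 1/m + p_a$, and summing over $a$ gives the claim. This is precisely the LP relaxation alluded to in the text: among distributions of $N\in\{0,\dots,m\}$ with fixed mean, $\bbE[1-(1-p)^N]$ is minimized by placing mass on the extreme points $\{0,m\}$, which is exactly what the chord captures.

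Summing the lemma over $(k,h,s)$ yields $\sum_{k,h,s,a}q/W \leq HK(S+SA/m)$, so the stability becomes $\wt O(\eta HK(S+SA/m))$. Balancing this against the $\wt O(H/\eta)$ penalty by choosing $\eta\approx 1/\sqrt{K(S+SA/m)}$, and setting $\gamma$ of the same order so that the Neu bias is absorbed, produces the claimed regret $\wt O(H\sqrt{SK}+\sqrt{H^2 SAK/m})$. The main obstacle is the per-state lemma itself: under non-fresh randomness the joint distribution of agents' positions can be arbitrarily correlated and $N^k_h(s)$ is not binomial, so standard Jensen arguments go the wrong way since concavity of $f$ only upper-bounds $W$. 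The saving observation is that $f(0)=0$ plus concavity makes the chord from $(0,0)$ to $(m,f(m))$ a valid lower bound on $f$, and this bound is tight exactly for the adversarial $\{0,m\}$-supported distribution that also drives the matching lower bound (\cref{thm-paper:lower-bound-stochastic-non-fresh}) and explains the extra $S$ factor unique to the non-fresh regime.
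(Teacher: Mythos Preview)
Your proposal is correct and follows the same overall OMD decomposition as the paper, arriving at the same per-state ratio bound $q^{\pi^k}_h(s)\pi^k_h(a\mid s)/W^k_h(s,a)\le 1/m+\pi^k_h(a\mid s)$. The one genuine difference is in how you prove this key inequality. The paper first applies the pointwise bound $1-(1-x)^n\ge nx/(1+nx)$ inside the expectation over $N=M^k_h(s)$, then formulates a linear program over the distribution of $N$ (with fixed mean $mq^{\pi^k}_h(s)$) minimizing $\bbE[N\pi/(1+N\pi)]$, and solves it via LP duality to obtain $W^k_h(s,a)\ge mq^{\pi^k}_h(s)\pi^k_h(a\mid s)/(1+m\pi^k_h(a\mid s))$. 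You instead observe directly that $f(n)=1-(1-p_a)^n$ is concave with $f(0)=0$, so $f(N)\ge (N/m)f(m)$ on $[0,m]$, giving $W^k_h(s,a)\ge q^{\pi^k}_h(s)\bigl(1-(1-p_a)^m\bigr)$ after taking expectations, and then apply \cref{lemma:linear-approx} once. Your chord argument is more elementary and slightly tighter at the intermediate step (since $1-(1-p_a)^m\ge mp_a/(1+mp_a)$); it is in fact exactly the closed-form solution of the paper's LP, as you note, because for a concave $f$ with $f(0)=0$ the minimum of $\bbE[f(N)]$ over distributions supported on $\{0,\dots,m\}$ with fixed mean is attained at the two-point distribution on $\{0,m\}$. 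The paper's LP route makes the ``worst-case distribution'' structure explicit and generalizes more readily if one wanted to impose additional moment constraints, but for the present statement your argument is cleaner.
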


The above regret bound is optimal up to logarithmic factors.
The lower bound can be found in \cref{appendix: lower bounds}, and features a similar construction to the one in the proof of \cref{thm-paper:lower-bound-stochastic-non-fresh}. Note that in \cref{thm-paper:lower-bound-stochastic-non-fresh} there is an extra $\sqrt{H}$ factor in the second term, which only appears for unknown dynamics.

\begin{proof}[Proof sketch for \cref{thm-paper:reg-coop-nf-o-reps}]
    Similarly to the proof of \cref{thm-paper:reg-coop-o-reps}, the regret scale with the penalty term $H / \eta$, and the stability term $\eta  \sum_{k,h,s,a} q^{\pi^k}_h(s,a) \hat c^k_h(s,a)^2$.
    Bounding the approximation error of $\wt W_h^k(s,a)$ by $\gamma/2$
    gives us:
    \begin{align}
    % \sum_{k,h,s,a} q^{\pi^k}_h(s,a) \hat c^k_h(s,a)^2
    (stability)
        \lesssim
        \eta \sum_{k,h,s,a} \frac{q^{\pi^k}_h(s,a) }{W_h^k(s,a)} \hat c^k_h(s,a) 
        \label{eq-paper:nf basic reg bound}
    \end{align}
    To further bound the right-hand-side we use a standard concentration bound of $\hat c^k_h(s,a)$ around $c^k_h(s,a) \leq 1$. It remains to bound the ratio $q^{\pi^k}_h(s,a)/{W_h^k(s,a)}$ which is our main technical novelty in this proof. 
    Let $M^k_{h}(s)$ be the random variable that represents the number of agents that arrive at state $s$ in time $h$ and denote $p_i = \Pr[M^k_h(s) = i]$. 
    Let $\bbEk[ \cdot]$ denote an expectation conditioned on everything that occurred before the start of episode $k$.
    By definition,
    \begin{align}
    \nonumber
        W_h^k(s,a) 
        = 
        \bbE^k \Big[  1-(1-\pi^k_{h}(a\mid s))^{M^k_{h}(s)}\Big] 
        \quad\qquad 
        \\
        \geq \bbE^k \left[\frac{M^k_{h}(s)\pi^k_{h}(a\mid s)}{1+M^k_{h}(s)\pi^k_{h}(a\mid s)} \right] 
        = \sum_{i=0}^{m} \frac{p_{i} i \pi^k_{h}(a\mid s)}{1+i\pi^k_{h}(a\mid s)},
        \label{eq-paper:W bound}
    \end{align}
    where the inequality holds deterministically for every realization of $M^k_{h}(s)$ (see \cref{lemma:linear-approx}). 
    Note that the expected value of $M^k_{h}(s)$ is  $m q^{\pi^k}_h(s)$, so the right-hand-side of Eq. \eqref{eq-paper:W bound} is bounded by the value of the following linear program:
        \begin{align*}
    	   \min_{p_{0},...,p_{m}} & \sum_{i=0}^{m} p_{i}\frac{i\pi^k_{h}(a\mid s)}{1+i\pi^k_{h}(a\mid s)}
    	  \\
    	  s.t. \quad 
    	  & \sum_{i=0}^{m} p_{i}i = m q_{h}^{k}(s) \quad;\quad \sum_{i=0}^{m} p_{i} = 1.
    \end{align*}
    Now, we can solve the LP by considering the dual problem (see \cref{lemma:linear-approx-non-fresh}), and get that $W_h^k(s,a) \geq \frac{m q_{h}^{\pi^k}(s) \pi^k_{h}(a \mid  s)} {1+m\pi^k_{h}(a\mid s)} $. 
    Hence, Eq. \eqref{eq-paper:nf basic reg bound} is bounded by $\eta H S K (1 + A / m)$, and we obtain the claim by optimizing over $\eta$.
\end{proof}

\subsection{Unknown Transitions}
\label{sec:non-fresh-adversarial-unknown}

When facing adversarial MDPs with unknown transitions and non-fresh randomness, we encounter all the challenges presented in previous sections.
The combination of these challenges makes this model especially hard from an algorithmic perspective.
Specifically, the only way (currently) to obtain regret bounds in adversarial MDPs with unknown dynamics is via optimism. Unfortunately, as discussed in \cref{sec:challenges,sec:non-fresh-stochastic}, optimistic methods fail under non-fresh randomness. 
Moreover, our solution for the stochastic case is based on action-elimination so it cannot be extended to adversarial costs. 
Instead, in this section we present a novel exploration mechanism which guarantees near-optimal regret for large enough number of agents.
Importantly, if we used optimism, regret would not improve even for $m \to \infty$.

We propose the \texttt{coop-nf-UOB-REPS} algorithm, presented in \cref{alg:paper-coop-nf-uob-reps} and in full version (together with analysis) in \cref{appendix:adversarial-non-fresh-unknown}. 
Similarly to \texttt{coop-O-REPS} and \texttt{coop-nf-O-REPS}, it maintains a policy $\pi^k$ through the O-REPS update rule (\cref{eq:coop-nf-uob-reps-update-rule}),
however unlike the previous algorithms, some agents play a different policy than $\pi^k$ for the purpose of exploration.
% \footnote{YM: Maybe say that except for $HA$ all the agents run $\pi^k$}
We now present the two key features that allow our algorithm to perform efficient exploration in this challenging setting.

First, we equip the algorithm with a novel exploration mechanism: for every $(h,a,k)$ we assign an agent $\sigma(h,a,k)$ to follow  $\pi^k$ up to time $h$, and then take action $a$. 
The rest of the agents follow the policy $\pi^k$. 
This exploration mechanism is motivated by \texttt{coop-ULCAE}, but since costs are adversarial, we cannot eliminate actions and thus have much weaker guarantees on the regret of the exploration policies. 
As a result, we require many agents so that each agent would explore less often. 
In particular, there are $HAK$ targets to explore and whenever $\numagents \geq \sqrt{K}$ we can choose $\sigma$ so that each agent performs exploration for at most $HA\sqrt{K}$ episodes.
Second, to avoid the complex dependencies between the agents' trajectories, we use a new importance sampling estimator that ignores all agents except for $\sigma(h,a,k)$, i.e.,
$$
    \hat c^k_h(s,a) = \frac{c^k_h(s,a) \indevent{s^{k,\sigma(h,a,k)}_h = s}}{u^k_h(s) + \gamma},
$$
where $u^k_h(s) \approx q^{\pi^k}_h(s)$.
Notice that this estimator is approximately unbiased (up to $\gamma$ and approximation errors) since $\indevent{a^{k,\sigma(h,a,k)}_h = a} = 1$.
Finally, since we do not know the set occupancy measures under unknown dynamics, we use an approximation $\Delta(\calM,k)$ based on empirical estimates.

\begin{algorithm}[t]
    \caption{\textsc{coop-nf-UOB-REPS}} 
    \label{alg:paper-coop-nf-uob-reps}
    \begin{algorithmic}[1]
        \STATE {\bf initialize:} define a mapping $\sigma: [H] \times \calA \times [K] \to [\numagents]$.
        \FOR{$k=1,\dots,K$}
            
            \STATE \textbf{Compute} $\pi^{k}_h(a\mid s) = q^{k}_h(s,a) / q^{k}_h(s)$ for:
            \begin{align}
                \label{eq:coop-nf-uob-reps-update-rule}
                q^{k} = \argmin_{q \in \Delta(\calM,k)} \eta \langle q , \hat c^{k-1} \rangle + \KL{q}{q^{k-1}}.
            \end{align}
    
            \STATE \textbf{Set policies for agents}: For every $(h,a,k)$ set the policy of agent $v = \sigma(h,a,k)$ to be:
                $$
                \pi^{k,v}_{h'}(a'\mid s)
                =
                \begin{cases}
                        \indevent{a' = a}      &  h' = h
                        \\
                        {\pi}^{k}_{h'}(a' \mid s)  & h' \ne h,
                \end{cases}
                $$
                and for the rest of the agents set $\pi^{k,v} = \pi^k$.
                
            \STATE \textbf{Play} episode $k$, observe feedback, update transition empirical estimates, and compute cost estimator $\hat c^k$.
        \ENDFOR
    \end{algorithmic}
\end{algorithm}

\begin{theorem}
    \label{thm-paper:reg-coop-nf-uob-reps}
    Assume $S \geq A$ and $m \ge \sqrt{K}$. 
    For adversarial MDP with non-fresh randomness and unknown dynamics, \verb|coop-nf-O-REPS| ensures w.h.p,
    $
        \regret
        =
        \wt O ( H^2 S \sqrt{K} ).
    $
\end{theorem}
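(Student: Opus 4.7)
The plan is to bound the regret of an arbitrary agent $v$ by splitting it into two parts: a ``main'' regret that every agent would accrue if it simply played $\pi^k$ throughout, and an ``exploration overhead'' incurred only in episodes where $v=\sigma(h,a,k)$ for some $(h,a)$. Concretely, I write
\[
    R_K^v
    \;\le\;
    \underbrace{\sum_{k=1}^K \langle q^{\pi^k}-q^{\pi^\star},c^k\rangle}_{R_{\text{main}}}
    \;+\;
    \underbrace{\sum_{k:\,\exists (h,a),\,\sigma(h,a,k)=v}\big(V^{k,\pi^{k,v}}_1(\sinit)-V^{k,\pi^k}_1(\sinit)\big)}_{R_{\text{explore}}^v},
\]
where each summand in $R_{\text{explore}}^v$ is at most $H$. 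Since there are $HAK$ exploration targets and $m\ge\sqrt{K}$ agents, $\sigma$ can be chosen so that each agent is assigned at most $\lceil HAK/m\rceil\le HA\sqrt{K}+1$ targets; combined with $A\le S$, this gives $R_{\text{explore}}^v=\wt O(H^2 S\sqrt{K})$ immediately.

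For $R_{\text{main}}$ I would follow the UOB-REPS template adapted to the new estimator. The crucial observation is that the dedicated explorer $\sigma(h,a,k)$ plays $\pi^k$ strictly before time $h$ and then deterministically chooses $a$, so $\bbEk[\indevent{s^{k,\sigma(h,a,k)}_h=s}]=q^{\pi^k}_h(s)$ exactly. Conditional on $u^k$, the estimator $\hat c^k_h(s,a)$ therefore has expectation $c^k_h(s,a)\,q^{\pi^k}_h(s)/(u^k_h(s)+\gamma)$, i.e.\ it is biased only by $\gamma$ and by the slack $u^k_h(s)-q^{\pi^k}_h(s)$. I would then decompose $R_{\text{main}}$ into (i) the mirror-descent penalty $O(H\log(SA)/\eta)$ on $\Delta(\calM,k)$; (ii) the stability $\eta\sum_{k,h,s,a}q^k_h(s,a)\hat c^k_h(s,a)^2$; (iii) a $\gamma$-bias term; and (iv) a confidence-set error $\sum_k\langle q^{\pi^k}-q^k,c^k\rangle$ arising because $q^k\in\Delta(\calM,k)$ rather than $\Delta(\calM)$, together with a symmetric term for the comparator.

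For the stability in (ii), taking conditional expectation yields $\bbEk[\hat c^k_h(s,a)^2]\le q^{\pi^k}_h(s)/(u^k_h(s)+\gamma)^2$, and the w.h.p.\ optimism $u^k_h(s)\ge q^{\pi^k}_h(s)$ together with $q^k_h(s,a)\approx q^{\pi^k}_h(s,a)$ collapses the summand to $\pi^k_h(a\mid s)$, summing to $\eta HSK$. This is precisely where the per-$(h,a)$ dedicated-explorer design removes the familiar $A$ factor from the cost-estimator variance, since $\hat c^k$ no longer carries an inverse propensity $1/\pi^k_h(a\mid s)$. The confidence-set term (iv) is controlled via standard UOB-REPS $L_1$-type bounds: using $\|\hat p_h(\cdot\mid s,a)-p_h(\cdot\mid s,a)\|_1\lesssim \sqrt{S/N^k_h(s,a)}$ together with the targeted-exploration property $N^k_h(s,a)\gtrsim\sum_{j<k}q^{\pi^j}_h(s)$ (the explorer for $(h,a,j)$ always chooses $a$ at $(h,s)$ upon reaching $s$, so the $a$-dependence is washed out), Cauchy--Schwarz and the standard pigeon-style bound $\sum_k q^{\pi^k}_h(s)/\sqrt{\sum_{j<k}q^{\pi^j}_h(s)}\lesssim \sqrt{K}$ yield a contribution of order $\wt O(H^2 S\sqrt{K})$. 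Tuning $\eta\asymp 1/\sqrt{HSK}$ and $\gamma\asymp 1/\sqrt{K}$ balances (i)--(iii), after which (iv) and $R_{\text{explore}}^v$ dominate at the claimed $\wt O(H^2 S\sqrt{K})$ rate.

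The main obstacle I expect is the triple role of the denominator $u^k_h(s)+\gamma$: it must be (a) a high-probability upper bound on $q^{\pi^k}_h(s)$ so that the stability calculation goes through and the estimator stays essentially downward-biased; (b) close enough to $q^{\pi^k}_h(s)$ that the resulting bias is absorbed into the transition-error term rather than blowing up; and (c) computable from $\Delta(\calM,k)$ in a way consistent with the mirror-descent update on occupancy measures. On top of this, showing that the counts $N^k_h(s,a)$ really do grow as $\sum_{j<k}q^{\pi^j}_h(s)$ despite the non-fresh coupling requires a careful martingale/Freedman argument analogous to the one behind \texttt{coop-ULCAE}, now adapted to stochastic main policies and to the fact that for each $(h,a,k)$ it is only the single explorer $\sigma(h,a,k)$ whose trajectory yields genuinely new information about $p_h(\cdot\mid s,a)$.
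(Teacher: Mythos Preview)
Your proposal is correct and follows essentially the same route as the paper: split off the at-most-$HA\sqrt{K}$ exploration episodes per agent, then run the UOB-REPS decomposition on $\{\pi^k\}$ with the dedicated-explorer estimator, using $u^k_h(s)\ge q^{\pi^k}_h(s)$ to tame the stability and the key lower bound $n^k_h(s,a)\gtrsim\sum_{j<k}q^{\pi^j}_h(s)$ to handle the transition-error terms. One small simplification you are missing: for the stability term you do not need $q^k\approx q^{\pi^k}$ at all, since $q^k\in\Delta(\calM,k)$ already implies $q^k_h(s)\le u^k_h(s)$ deterministically, which gives $q^k_h(s,a)/(u^k_h(s)+\gamma)\le\pi^k_h(a\mid s)$ directly; with this, your ``obstacle (a)--(c)'' concern about $u^k$ largely evaporates, and the paper's tuning $\eta=\gamma\asymp\sqrt{\log/(SK)}$ balances the OMD terms at $H\sqrt{SK}$, well below the dominant $H^2S\sqrt{K}$.
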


The above result shows that optimal regret is attainable up to factor of $\sqrt{HS}$. Recall that the extra factor $\sqrt{HS}$ compared to the lower bound of \cref{thm-paper:lower-bound-stochastic-non-fresh} also appear in the state-of-the-art upper bound for single-agent. There is still a significant gap on the number of agents required for optimal regret, and finding the minimal number of agents to ensure such regret still remains an important open problem.

\begin{proof}[Proof sketch for \cref{thm-paper:reg-coop-nf-uob-reps}]
    The proof focuses on bounding the regret of the O-REPS policies $\{ \pi^k \}_{k=1}^K$, since the number of episodes that each agent does not play these policies is at most $H A \sqrt{K}$, resulting in extra regret of at most $H^2 S \sqrt{K}$.
    Similarly to the proof of \cref{thm-paper:reg-coop-nf-o-reps} we need to bound the stability term, but here the unknown dynamics also introduce additional approximation errors.
    The analysis of the stability term resembles the proof of \cref{thm-paper:reg-coop-nf-o-reps} but utilizes the fact that $q^{\pi^k}_h(s,a)/u_h^k(s) \lesssim \pi^k_h(a\mid s)$.
    The analysis of the approximation errors takes inspiration from the proof of \cref{thm-paper:reg-coop-ulcae}, and shows that it scales with the sum of confidence radius over the trajectory of $\pi^k$:
    \[
       (*) =  \sum_{h,s,a,k}\frac{ H\sqrt{S} q^{\pi^k}_h(s,a)}{\sqrt{n_h^k(s,a)}} 
        \approx 
        \sum_{h,s,a,k}\frac{H\sqrt{S} q^{\pi^k}_h(s,a)}{\sqrt{ \sum_{j=1}^{k-1} W_h^j(s,a)}}.
    \]
    We then utilize agents' exploration to lower bound $W_h^k(s,a)$. 
    Recall that agent $\sigma(h,s,k)$ follows $\pi^k$ until time $h$, so she arrives at $s$ with probability $q^{\pi^k}_h(s)$ and then takes action $a$ deterministically. Hence, $ W_h^k(s,a) \geq q^{\pi^k}_h(s)$, yielding:
    \begin{align*}
        (*) & \lesssim  \sum_{h,s,k}\frac{H\sqrt{S} \sum_a q^{\pi^k}_h(s,a)}{\sqrt{\sum_{j=1}^{k-1}q^{\pi^j}_h(s)}}
        \lesssim
        H^2 S \sqrt{K}. \qedhere
    \end{align*}
\end{proof}

\section{Conclusions and Future Work}

In this paper we studied cooperation in multi-agent RL.
We introduced the non-fresh randomness model and characterized its challenges compared to standard fresh randomness.
We provided nearly-matching regret lower and upper bounds in all relevant settings, and developed novel techniques for handling different types of randomness in various models.

Our work leaves two important directions for future work.
First, our regret bounds for non-fresh randomness with unknown transitions are not tight for both stochastic and adversarial MDPs.
Second, we assume agents communicate through a fully-connected graph.
Extending our results to general communication graphs (as in MAB) is an interesting future direction that would also require analyzing delayed feedback \citep{lancewicki2020learning,howson2021delayed}.

\section*{Acknowledgements}

This project has received funding from the European Research Council (ERC) under the European Union’s Horizon 2020 research and innovation program (grant agreement No. 882396), by the Israel Science Foundation (grant number 993/17), Tel Aviv University Center for AI and Data Science (TAD), and the Yandex Initiative for Machine Learning at Tel Aviv University.

\newpage
\bibliography{example_paper}
\bibliographystyle{icml2022}

\onecolumn
\newpage
\clearpage

\appendix

\paragraph{Additional notations.}
While in the main paper the notation $\lesssim$ hides lower order terms and logarithmic factors, in the appendix it only hides constant factors, i.e., $x \lesssim y$ if and only if $x = O(y)$.
We use the notation $\bbEk[ \cdot]$ to denote an expectation conditioned on everything that occurred before the beginning of episode $k$.
Furthermore, $\bbEk[ \cdot \mid \pi]$ denotes an expectation conditioned on everything that occurred before the beginning of episode $k$, and when playing episode $k$ using the policy $\pi$.
$n^k_h(s,a)$ denotes the number of samples we have from $(s,a,h)$ in the beginning of episode $k$.
More precisely, for fresh randomness $n^k_h(s,a) = \sum_{j=1}^{k-1} \sum_{v=1}^{\numagents} \indevent{s^{j,v}_h=s,a^{j,v}_h=a}$, and for non-fresh randomness $n^k_h(s,a) = \sum_{j=1}^{k-1} \indevent{\exists v: \  s^{j,v}_h=s,a^{j,v}_h=a}$.
Finally, $\VAR_{r(\cdot)} [f]$ denotes the variance of $f(x)$ where $x$ is sampled from the distribution $r$.
For transition function $p'$ and policy $\pi$, $q^{p',\pi}_h(s,a,s') = \Pr [s_h=s,a_h=a,s_{h+1}=s' \mid p',\pi]$ denotes its occupancy measure, $q^{p',\pi}_h(s,a) = \sum_{s'} q^{p',\pi}_h(s,a,s')$ and $q^{p',\pi}_h(s) = \sum_{a} q^{p',\pi}_h(s,a)$.
When the transition function is $p$, we often use the shorter notation $q^{\pi} = q^{p,\pi}$.
$\pi^\star$ denotes the optimal policy (or best in hindsight), $V^\star$ denotes its values function and $q^\star$ its occupancy measure.

\section{Lower bounds}
\label{appendix: lower bounds}

In this section we provide proofs for the lower bounds that appear in \cref{table: comparison}.

\subsection{Fresh randomness}

\begin{theorem}[Lower bound for stochastic MDP with fresh randomness]
    \label{thm:lower-bound-stochastic-fresh}
    Let $S, A, H, \numagents \in \bbN$ and $K \ge S A H$.
    For any algorithm $\texttt{ALG}$ there exists a stochastic MDP $\calM$ with fresh randomness such that: (i) $\calM$ has $\Theta (S)$ states, $\Theta (A)$ actions and horizon $\Theta (H)$; (ii) Running $\texttt{ALG}$ with $\numagents$ agents for $K$ episodes suffers expected average regret of at least $\Omega (\sqrt{\frac{H^3 S A K}{\numagents}})$.
\end{theorem}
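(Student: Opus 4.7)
The plan is to reduce the multi-agent lower bound to a single-agent lower bound via a simulation argument, following the observation of Ito et al.\ (2020). Concretely, I would show that any multi-agent algorithm $\texttt{ALG}$ for $m$ agents running $K$ episodes can be converted into a single-agent algorithm $\texttt{ALG}'$ running $mK$ episodes whose cumulative regret equals the sum of the $m$ individual regrets of $\texttt{ALG}$. Combined with the single-agent stochastic MDP lower bound of Domingues et al.\ (2021), namely $\Omega(\sqrt{H^3 S A \cdot mK})$ on the expected regret of any single-agent algorithm over $mK$ episodes, this would imply $\bbE[\sum_{v=1}^{m} R_K^v] \ge \Omega(\sqrt{H^3 S A K m})$, and dividing by $m$ yields the claimed bound on the expected average (and hence maximum) individual regret.

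The core step is the reduction itself, which exploits crucially that randomness is fresh. Given $\texttt{ALG}$, I would define $\texttt{ALG}'$ as follows: enumerate the single-agent episodes as $t = (k-1)m + v$ for $k \in [K]$ and $v \in [m]$, and let $\texttt{ALG}'$ maintain an internal simulation of $\texttt{ALG}$. At episode $t$, $\texttt{ALG}'$ computes the policy $\pi^{k,v}$ that $\texttt{ALG}$ would assign to agent $v$ in episode $k$, using only the multi-agent feedback from episodes $1,\dots,k-1$ (it simply ignores any information from simulated episodes $(k-1)m+1,\dots,t-1$, which represent other agents in the same multi-agent episode $k$). The single-agent trajectory generated at episode $t$ is then fed back to $\texttt{ALG}$ as the observation for agent $v$ in episode $k$. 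Because randomness is fresh, the $m$ per-episode trajectories in the multi-agent interaction are i.i.d.\ conditioned on the chosen policies, so the joint distribution over $(\pi^{k,v}, \text{trajectory}^{k,v})$ in the simulation matches that of the true multi-agent execution. Consequently the expected regret of $\texttt{ALG}'$ over $mK$ episodes equals $\bbE[\sum_{v=1}^m R_K^v]$.

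Applying Domingues et al.\ (2021) to $\texttt{ALG}'$ on the hard instance they construct (with $\Theta(S)$ states, $\Theta(A)$ actions and horizon $\Theta(H)$, valid whenever $mK \ge SAH$, which holds since $K\ge SAH$) gives $\bbE[\sum_v R_K^v] \ge \Omega(\sqrt{H^3 S A K m})$. Since the maximum individual regret is at least the average, we obtain $\bbE[R_K] \ge \frac{1}{m}\bbE[\sum_v R_K^v] \ge \Omega(\sqrt{H^3 S A K/m})$, which is precisely the claimed bound.

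The main conceptual subtlety, rather than an obstacle, is verifying that the reduction is valid with respect to how $\texttt{ALG}$ uses shared information: in the multi-agent setting all $m$ agents pick their episode-$k$ policies based solely on feedback through episode $k-1$, so the simulation's policy for position $(k,v)$ can indeed be computed from the first $(k-1)m$ simulated episodes' feedback. The fresh randomness assumption is what makes the trajectories conditionally independent and hence reproducible in the sequential single-agent simulation; without it (i.e.\ under non-fresh randomness) the reduction would break, consistent with the larger lower bound proved for that model in Theorem~\ref{thm-paper:lower-bound-stochastic-non-fresh}.
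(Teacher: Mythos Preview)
Your proposal is correct and takes essentially the same approach as the paper: reduce the multi-agent problem to a single-agent problem over $mK$ episodes by sequentially simulating the $m$ agents (valid precisely because fresh randomness makes the agents' trajectories conditionally i.i.d.), then invoke the single-agent lower bound of Domingues et al.\ (2021) on the sum of regrets and divide by $m$. Your write-up is in fact more detailed than the paper's, which states the reduction in one sentence and cites Ito et al.\ (2020).
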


\begin{proof}
    The proof is similar to the proof of \citet[Theorem 4]{ito2020delay}.
    Notice that cooperative regret minimization with $\numagents$ agents for $K$ episodes is harder than single agent regret minimization for $K \numagents$ episodes, because we can solve the second problem using an algorithm for the first problem.
    Simply let the agents play the first $\numagents$ episodes one by one, feed the feedback to the algorithm and then again let the agents play sequentially.
    This implies that the cumulative expected regret of the agents is at least $\Omega (\sqrt{H^3 S A K \numagents})$ by standard lower bounds for MDPs \citep{domingues2021episodic}.
    Thus, the average expected regret is at least $\Omega (\sqrt{\frac{H^3 S A K}{\numagents}})$.
\end{proof}

\begin{theorem}[Lower bound for adversarial MDP with fresh randomness and known transition]
    \label{thm:lower-bound-adversarial-fresh-known-p}
    Let $S, A, H, \numagents \in \bbN$ and $K \ge S A H$.
    For any algorithm $\texttt{ALG}$ there exists an adversarial MDP $\calM$ with fresh randomness such that: (i) $\calM$ has $\Theta (S)$ states, $\Theta (A)$ actions and horizon $\Theta (H)$; (ii) Running $\texttt{ALG}$ with $\numagents$ agents for $K$ episodes, when the transition function is known, suffers expected average regret of at least $\Omega (\sqrt{H^2 K} + \sqrt{\frac{H^2 S A K}{\numagents}})$.
\end{theorem}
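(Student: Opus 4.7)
The plan is to obtain the bound as the maximum of two independent lower bounds, each handled by a separate reduction, and to conclude via $\max\{a,b\}\ge (a+b)/2$.

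For the term $\sqrt{H^2 SAK/\numagents}$, I would reuse the sequential-simulation reduction employed in the proof of \cref{thm:lower-bound-stochastic-fresh}: any cooperative algorithm $\texttt{ALG}$ for $\numagents$ agents over $K$ episodes can be turned into a single-agent algorithm for $\numagents K$ episodes by running the agents' episodes one after the other and feeding the earlier observations to the later simulated agents. Hence the cumulative regret of the $\numagents$ agents lower bounds the single-agent regret over $\numagents K$ episodes on the same MDP. Invoking the standard single-agent lower bound for adversarial MDP with known transitions and bandit feedback, which is $\Omega(\sqrt{H^2 SAK'})$ for $K'$ episodes (matching O-REPS up to logarithmic factors, \citet{zimin2013online}), and substituting $K'=\numagents K$, yields cumulative regret $\Omega(\sqrt{H^2 SA\numagents K})$. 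Dividing by $\numagents$ gives the average (and hence a lower bound on the maximum) individual regret $\Omega(\sqrt{H^2 SAK/\numagents})$.

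For the $\numagents$-independent term $\sqrt{H^2 K}$, I would reduce the cooperative bandit setting to \emph{single-agent full-information} adversarial MDP, for which the lower bound trivially holds irrespective of $\numagents$. Specifically, given an oracle revealing $c^k$ after each episode, a single-agent learner can internally simulate the $\numagents$ agents of $\texttt{ALG}$ by sampling their policies and synthesizing the bandit feedback they need directly from $c^k$, and then output the policy of agent $1$. Her full-information regret equals the regret of agent $1$, which is at most $\regret$. Thus any full-information single-agent lower bound transfers to the cooperative bandit setting. A standard instance is an initial state with two actions leading to two disjoint deterministic chains of length $H$; the adversary assigns per-episode cheap/expensive labels to the two chains so that the episode loss lies in $\{0,H\}$, reducing the problem to online learning with two experts and losses in $[0,H]$, for which the classical minimax lower bound is $\Omega(H\sqrt{K})$. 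The state and action counts can be padded to $\Theta(S)$ and $\Theta(A)$ by attaching extra actions and states that transition deterministically to expensive sinks, without affecting the argument.

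Combining, $\regret \ge c\max\{\sqrt{H^2 K},\sqrt{H^2 SAK/\numagents}\} \ge (c/2)(\sqrt{H^2 K}+\sqrt{H^2 SAK/\numagents})$ for an absolute constant $c>0$. The main delicate point will be the full-information-to-bandit reduction in the second step: one must ensure the internally simulated agents receive only the costs consistent with the trajectories they actually traverse (by first sampling their actions from their policies and then restricting $c^k$ to the observed state-action pairs), which is pure bookkeeping but is the place where the interaction protocol must be preserved carefully. The first step is essentially immediate given the cited single-agent lower bound, so most of the novelty of the argument lies in recognizing that the $H\sqrt{K}$ term is a fundamental full-information barrier that no amount of cooperation can eliminate.
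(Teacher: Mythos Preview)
Your proposal is correct and follows essentially the same approach as the paper: the same sequential-simulation reduction for the $\sqrt{H^2SAK/\numagents}$ term (invoking the single-agent bandit lower bound of \citet{zimin2013online}), and the same observation that cooperative bandit feedback with known transitions is no easier than single-agent full-information for the $\sqrt{H^2K}$ term. The only cosmetic difference is in combining the two bounds: the paper constructs a single random MDP that is each of the two hard instances with probability $1/2$, whereas you take the worse of the two instances and use $\max\{a,b\}\ge(a+b)/2$; both are standard and equivalent here.
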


\begin{proof}
    The lower bound is obtained by a combination of the following two constructions, i.e., with probability $1/2$ the MDP has the structure of the first construction and with probability $1/2$ of the other one:
    \begin{enumerate}
        \item Similarly to the proof of \cref{thm:lower-bound-stochastic-fresh}, cooperative regret minimization with $\numagents$ for $K$ episodes is harder than single agent regret minimization for $K \numagents$ episodes.
        Invoking the $\Omega (\sqrt{H^2 S A K \numagents})$ lower bound of \citet{zimin2013online} for adversarial MDP with bandit feedback and known transition, this gives us the $\Omega ( \sqrt{\frac{H^2 S A K}{\numagents}})$ lower bound.
        
        \item Cooperative regret minimization with bandit feedback is harder than single agent regret minimization with full-information feedback because the transition function is known so the agents share information only about the cost function (which is fully revealed under full-information feedback).
        Thus, for the second construction we can simply use the construction of \citet{zimin2013online} for the $\Omega(\sqrt{H^2 K})$ lower bound of single agent adversarial MDP with known transition and full-information feedback. \qedhere
    \end{enumerate}
\end{proof}

\begin{theorem}[Lower bound for adversarial MDP with fresh randomness and unknown transition]
    \label{thm:lower-bound-adversarial-fresh-unknown-p}
    Let $S, A, H, \numagents \in \bbN$ and $K \ge S A H$.
    For any algorithm $\texttt{ALG}$ there exists an adversarial MDP $\calM$ with fresh randomness such that: (i) $\calM$ has $\Theta (S)$ states, $\Theta (A)$ actions and horizon $\Theta (H)$; (ii) Running $\texttt{ALG}$ with $\numagents$ agents for $K$ episodes, when the transition function is unknown, suffers expected average regret of at least $\Omega (\sqrt{H^2 K} + \sqrt{\frac{H^3 S A K}{\numagents}})$.
\end{theorem}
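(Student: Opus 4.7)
The plan is to follow the same two-construction strategy used for \cref{thm:lower-bound-adversarial-fresh-known-p}, selecting between two MDPs with probability $1/2$ each. Only the first construction needs to be strengthened to account for unknown transitions.

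\textbf{First construction, yielding $\Omega(\sqrt{H^3 S A K / \numagents})$.} Here I mimic the reduction argument of \cref{thm:lower-bound-stochastic-fresh}: a cooperative bandit algorithm for $\numagents$ agents on $K$ episodes can be converted into a single-agent bandit algorithm for $K\numagents$ episodes by feeding the agents' episodes sequentially into the cooperative protocol (treating the $\numagents$ trajectories within one cooperative episode as $\numagents$ consecutive single-agent episodes, with fresh independent randomness, which is what fresh randomness provides). The expected cumulative single-agent regret lower bounds the sum of expected individual regrets of the $\numagents$ agents, which in turn upper bounds $\numagents$ times the expected average individual regret. Invoking the single-agent stochastic MDP lower bound $\Omega(\sqrt{H^3 S A (K\numagents)})$ of \citet{domingues2021episodic} for unknown transitions, and using that stochastic MDPs are a strict subclass of adversarial MDPs so the same lower bound applies, dividing by $\numagents$ gives $\Omega(\sqrt{H^3 S A K / \numagents})$ on the average regret. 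The maximum individual regret is at least the average, so we obtain the first term.

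\textbf{Second construction, yielding $\Omega(\sqrt{H^2 K})$.} I would reuse exactly the same construction as in \cref{thm:lower-bound-adversarial-fresh-known-p} from \citet{zimin2013online}, where the transition function is trivial (for example, deterministic chains) so that ``unknown transitions'' convey no extra difficulty nor any extra help to the cooperating agents. The argument that cooperative bandit feedback is at least as hard as single-agent full-information feedback still works: when the transitions are deterministic and known after one episode, the only information shared across agents is the costs, and the full-information setting reveals all of those to every agent. Since this $\Omega(\sqrt{H^2 K})$ bound is independent of $\numagents$ (the adversarial cost vector in episode $k$ is identical across agents, so extra trajectories give no extra statistical information about it), every individual agent still incurs $\Omega(\sqrt{H^2 K})$ regret, hence so does $\regret = \max_v$ of individual regret.

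Combining the two cases with probability $1/2$ each gives the claimed $\Omega(\sqrt{H^2 K} + \sqrt{H^3 S A K / \numagents})$ lower bound on the expected average regret.

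The main obstacle is a careful justification of the sequential-reduction step for the first construction under fresh randomness: one has to argue that a single agent playing $K\numagents$ episodes can indeed simulate $K$ cooperative episodes of $\numagents$ agents without leaking extra information, and that the resulting expected regret relation is tight up to constant factors. This is straightforward under fresh randomness because the trajectories within a cooperative episode are mutually independent conditional on the policies, exactly like a sequence of $\numagents$ single-agent episodes. The second construction is essentially identical to the known-transition case, so it requires no new work beyond checking that the trivial-transition MDP can be used with unknown transitions without loss.
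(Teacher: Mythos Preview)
Your proposal is correct and matches the paper's own proof essentially line for line: the paper also combines (with probability $1/2$ each) the construction from \cref{thm:lower-bound-stochastic-fresh} for the $\Omega(\sqrt{H^3 S A K/\numagents})$ term and the construction from \cref{thm:lower-bound-adversarial-fresh-known-p} for the $\Omega(\sqrt{H^2 K})$ term. Your additional justification of the sequential-reduction step and the trivial-transition argument are sound elaborations the paper leaves implicit.
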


\begin{proof}
    Similarly to the proof of \cref{thm:lower-bound-adversarial-fresh-known-p} we use a combination of two constructions.
    The first one is the construction from \cref{thm:lower-bound-stochastic-fresh} which gives the $\Omega(\sqrt{\frac{H^3 S A K}{\numagents}})$ lower bound, and the second one is the construction from \cref{thm:lower-bound-adversarial-fresh-known-p} which gives the $\Omega(\sqrt{H^2 K})$ lower bound.
\end{proof}

\subsection{Non-fresh randomness}

\begin{theorem}[Lower bound for adversarial MDP with non-fresh randomness and known transition]
    \label{thm:lower-bound-adversarial-non-fresh-known-p}
    Let $S, A, H, \numagents \in \bbN$ and $K \ge S A H$.
    For any algorithm $\texttt{ALG}$ there exists an adversarial MDP $\calM$ with non-fresh randomness such that: (i) $\calM$ has $\Theta (S)$ states, $\Theta (A)$ actions and horizon $\Theta (H)$; (ii) Running $\texttt{ALG}$ with $\numagents$ agents for $K$ episodes, when the transition function is known, suffers expected average regret of at least $\Omega (\sqrt{H^2 S K} + \sqrt{\frac{H^2 S A K}{\numagents}})$.
\end{theorem}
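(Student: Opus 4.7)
The plan is to establish each of the two terms via a separate construction and then combine them by taking the larger regret of the two; the elementary inequality $\max\{a,b\} \geq (a+b)/2$ then yields the claimed additive form.

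For the term $\sqrt{H^2 SAK/m}$, the key observation is that in any MDP with deterministic transitions, the non-fresh and fresh randomness models coincide: all randomness in the model comes from transitions (an oblivious adversary's costs are a deterministic function of $(k,s,a,h)$), and with deterministic transitions there is nothing to ``freshen.'' I would therefore take the hard instance used in the proof of \cref{thm:lower-bound-adversarial-fresh-known-p}, realized with deterministic transitions (for example, a tree-like MDP with adversarial bandit feedback at the leaves), and directly transfer the $\Omega(\sqrt{H^2 SAK/m})$ lower bound to the non-fresh setting. This approach is cleaner than a simulation-style reduction because it sidesteps the need to restrict the single-agent adversary to repeat costs in blocks of $m$.

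For the term $\sqrt{H^2 SK}$, which is the hallmark of non-fresh randomness, I would adapt the construction from \cref{thm-paper:lower-bound-stochastic-non-fresh} and \cref{fig:lower bound} to adversarial costs. Use a layered MDP with $\Theta(H)$ identical layers; each layer has a source state from which a ``good'' action transitions uniformly at random to one of $S$ sub-states $s_1,\dots,s_S$ (the known $1/S$ distribution), while every other action leads to a cost-$1$ sink that feeds the next source. Each sub-state hosts a small adversarial MAB whose costs are selected by an oblivious adversary using the standard hard distribution: in each episode one arm receives cost $1/2 - \epsilon$ and the others $1/2$, with the identity of the distinguished arm chosen uniformly at random per episode. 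The critical consequence of non-fresh randomness is that in each layer of each episode all $m$ agents are forced to realize the same sub-state; therefore each sub-state is visited only $\Theta(K/S)$ times in total across all $K$ episodes (w.h.p., by Hoeffding and a union bound over the $SH$ sub-states), irrespective of $m$.

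To finish, I would invoke a cooperative adversarial MAB lower bound: for $m$ agents playing $T$ rounds against an oblivious adversary with $A$ arms, the max-individual regret is $\Omega(\sqrt{(1+A/m)T})$, which recovers the $\Omega(\sqrt{T})$ full-information rate when $m \geq A$ and the bandit rate $\Omega(\sqrt{AT/m})$ otherwise. Substituting $T = \Theta(K/S)$ per sub-state, summing over the $S$ sub-states in a layer and over the $\Theta(H)$ layers yields $\Omega(H\sqrt{SK(1+A/m)})$, which dominates $\sqrt{H^2 SK} + \sqrt{H^2 SAK/m}$ as desired. The main obstacle is to lift the cooperative MAB lower bound rigorously to the MDP: one must argue that information the agents collect in one sub-bandit is useless for any other, which I expect to follow from the independence of the good-arm identities across the $SH$ sub-bandits together with a standard KL/Pinsker-style calculation applied to the joint distribution of all agents' observations, and that the max-individual (rather than cumulative) regret is preserved through a per-agent version of this argument as in \cref{thm-paper:lower-bound-stochastic-non-fresh}.
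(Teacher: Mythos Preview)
Your proposal is correct and your second construction is essentially the paper's own proof: a source state transitioning uniformly to $S$ MAB sub-states, with non-fresh randomness forcing all agents to the same sub-state, then invoking the cooperative MAB lower bound $\Omega(\sqrt{(1+A/m)T})$ with $T\approx K/S$ and summing over states and layers. Your first construction (deterministic transitions to inherit the fresh-randomness bound) is valid but redundant, since---as you yourself note---the second construction already yields both terms; the paper simply uses the single construction.
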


\begin{proof}
    Consider the following MDP with horizon $2H$.
    There are $A$ actions $a_1,a_2,\dots,a_A$ and $S+2$ states: the initial state $s_0$, a bad state $s_b$ and the MAB states $s_1,s_2,\dots,s_S$.
    The agent starts in the initial state $s_0$ where action $a_1$ transitions to each of the MAB states $s_1,\dots,s_S$ with probability $1/S$, and all the other actions $a_2,\dots,a_A$ transition to the bad state $s_b$.
    In the bad state the cost is always $1$ and all the actions just stay in it with probability $1$.
    Each MAB state $s_i$ encodes a hard multi-arm bandit problem for each horizon step $h$.
    That is, all the actions transition back to the initial state $s_0$, but one action (sampled uniformly at random) suffers cost $0$ with probability $1/2 + \epsilon$ (and otherwise $1$) while the other actions suffer cost $0$ with probability $1/2$, where $\epsilon \approx \sqrt{S A / K}$ which is standard for MAB$\slash$RL lower bounds.
    
    Without loss of generality we can assume that all of the agents always choose action $a_1$ in the initial state because otherwise they transition to the bad state and suffer maximal cost.
    Critically, this means that all of the agents visit the same state in every time step (because of non-fresh randomness).
    
    Denote by $T_{i,h}$ the number of visits to MAB state $s_i$ in step $h$.
    We utilize the lower bound for cooperation in multi-arm bandit \citep{seldin2014prediction,ito2020delay} in order to bound the average expected regret from below by
    \begin{align*}
        \Omega \left( \bbE \left[ \sum_{i=1}^S \sum_{h=1}^H \sqrt{\left( 1 + \frac{A }{\numagents} \right) T_{i,2h}}\right] \right) = \Omega \left( SH \sqrt{1 + \frac{A}{\numagents}} \bbE [ \sqrt{X} ] \right),
    \end{align*}
    for $X \sim Bin(n = K , p = 1/S)$ because in each even step size $2h$ one of the MAB states is sampled uniformly at random.
    By \cref{lem:bin-rv-expected-square-root}, we have $\bbE [ \sqrt{X} ] \ge \Omega (\sqrt{n p})$ for $n \ge 1/p^2$ which proves the lower bound $\Omega ( \sqrt{H^2 S K} + \sqrt{\frac{H^2 S A K}{\numagents}})$ for $K \ge S^2$.
    We note that a more involved proof of the lower bound in each state reveals that the standard assumption $K \ge H S A$ is sufficient.
    For more details see the proof of \citet[Theorem 2.7]{rosenberg2020near}.
\end{proof}

\begin{theorem}[Lower bound for stochastic MDP with non-fresh randomness]
    \label{thm:lower-bound-stochastic-non-fresh}
    Let $S, A, H, \numagents \in \bbN$ and $K \ge S A H$.
    For any algorithm $\texttt{ALG}$ there exists a stochastic MDP $\calM$ with non-fresh randomness such that: (i) $\calM$ has $\Theta (S)$ states, $\Theta (A)$ actions and horizon $\Theta (H)$; (ii) Running $\texttt{ALG}$ with $\numagents$ agents for $K$ episodes suffers expected average regret of at least $\Omega (\sqrt{H^2 S K} + \sqrt{\frac{H^3 S A K}{\numagents}})$.
\end{theorem}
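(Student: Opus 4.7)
The plan is to establish the two terms $H\sqrt{SK}$ and $\sqrt{H^3 SAK/\numagents}$ separately, via two distinct constructions, and then combine them: since for any algorithm we exhibit one MDP with expected average regret $\Omega(H\sqrt{SK})$ and another with expected average regret $\Omega(\sqrt{H^3 SAK/\numagents})$, the worst-case regret over all MDPs is at least the maximum of the two, which is $\Omega(H\sqrt{SK}+\sqrt{H^3 SAK/\numagents})$ up to constants.

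For the $\sqrt{H^3 SAK/\numagents}$ term I would reuse the reduction argument from the fresh-randomness lower bound (\cref{thm:lower-bound-stochastic-fresh}): cooperative learning with $\numagents$ agents over $K$ episodes is at least as hard as single-agent learning over $\numagents K$ episodes, because a single agent can simulate the cooperative algorithm by playing episodes one by one and feeding each observation back in. Crucially this reduction is insensitive to fresh vs.\ non-fresh randomness since at any instant only one agent is active, so the two noise models are indistinguishable. Plugging in the single-agent stochastic MDP lower bound $\Omega(\sqrt{H^3 SA\cdot\numagents K})$ of Domingues et al.\ and dividing by $\numagents$ yields the desired $\Omega(\sqrt{H^3 SAK/\numagents})$ bound on the maximal (equivalently, average) individual regret.

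For the $H\sqrt{SK}$ term I would adapt the construction of \cref{fig:lower bound} (formalised in the adversarial analog \cref{thm:lower-bound-adversarial-non-fresh-known-p}) to the stochastic regime. Take horizon $2H$, an initial state $s_0$, an absorbing bad state $s_b$ with cost $1$, and $S$ MAB-like states $s_1,\dots,s_S$. From $s_0$, action $a_1$ transitions uniformly to one of the MAB states while every other action sends the agent to $s_b$; each $s_i$ encodes a hard Bernoulli MAB (one secret optimal arm with mean $1/2-\epsilon$, the rest with mean $1/2$) and then deterministically returns to $s_0$. Since any reasonable policy must play $a_1$ in $s_0$, under non-fresh randomness all $\numagents$ agents visit the same MAB state at each time step and cooperation yields no speed-up for this component. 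Summing the cooperative-MAB individual-regret lower bound $\Omega(\sqrt{(1+A/\numagents)T_{i,h}})$ over all $(i,h)$ and using $\bbE[\sqrt{X}]=\Omega(\sqrt{K/S})$ for $X\sim\mathrm{Bin}(K,1/S)$ (via the binomial tail argument of \cref{lem:bin-rv-expected-square-root}) delivers $\Omega(H\sqrt{SK})$; the $\sqrt{A/\numagents}$ piece of this term is dominated by the first construction.

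The main obstacle is capturing the extra $\sqrt{H}$ in the second term compared to the adversarial-known analog \cref{thm:lower-bound-adversarial-non-fresh-known-p}. The cleanest route is precisely the reduction to single-agent described above, which inherits the $H^{3/2}$ dependence ``for free'' from the known single-agent MDP bound. Embedding the extra $\sqrt{H}$ directly into the \cref{fig:lower bound} template---for instance by replacing each MAB state with a short chain whose unknown transitions encode the hard parameter, in the spirit of the Jaksch--Domingues single-agent construction---is also possible but considerably more delicate, and is unnecessary once both constructions are combined as above.
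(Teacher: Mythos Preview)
Your treatment of the $H\sqrt{SK}$ term (reusing the stochastic construction behind \cref{thm:lower-bound-adversarial-non-fresh-known-p}) is correct and matches the paper. The gap is in the reduction you propose for the $\sqrt{H^3SAK/\numagents}$ term. That reduction builds a single-agent algorithm $\texttt{ALG}'$ that plays the $\numagents$ policies of the cooperative $\texttt{ALG}$ over $\numagents$ consecutive episodes and feeds the resulting trajectories back to $\texttt{ALG}$; a hard MDP for $\texttt{ALG}'$ then forces the sum of individual regrets to be large. But the $\numagents$ trajectories produced by this simulation come from $\numagents$ \emph{independent} single-agent episodes, so they are distributed exactly as under \emph{fresh} randomness. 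The reduction therefore only shows that $\texttt{ALG}$ has high average regret when fed fresh-randomness inputs on this MDP; it says nothing about $\texttt{ALG}$'s regret under non-fresh randomness on the same (or any) MDP. Your remark that ``only one agent is active at a time'' concerns the single-agent environment, where the fresh/non-fresh distinction is indeed vacuous, but not the inputs seen by $\texttt{ALG}$, which is where the distinction matters. There is no general inequality guaranteeing that non-fresh regret dominates fresh regret instance-by-instance, so the argument does not transfer.

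The paper instead builds a dedicated non-fresh construction for this term: each MAB state $s_i$ is paired with a ``wait'' state $s^w_i$ in which an agent may delay (up to $H$ steps) before transitioning to $s_i$, and the hidden optimal arm is a (time, action) pair. This encodes, per state, a cooperative MAB with $HA$ arms and $\Theta(H)$-scale losses (via absorbing good/bad states); non-fresh randomness forces all agents into the same $(s_i,s^w_i)$ pair each episode, and the cooperative-MAB lower bound then yields $\Omega\bigl(H\sqrt{(HA/\numagents)\cdot K/S}\bigr)$ per state, summing to $\Omega(\sqrt{H^3 SAK/\numagents})$. This is precisely the ``more delicate'' route you sketch and dismiss at the end---it is not optional here.
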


\begin{proof}
    Similarly to the proof of \cref{thm:lower-bound-adversarial-fresh-known-p} we use a combination of two constructions.
    The first one is presented in the rest of the proof and gives the $\Omega(\sqrt{\frac{H^3 S A K}{\numagents}})$ lower bound, and the second one is the construction from \cref{thm:lower-bound-adversarial-non-fresh-known-p} which gives the $\Omega(\sqrt{H^2 S K})$ lower bound.
    
    Consider the following MDP with horizon $2H + 1$.
    There are $A$ actions $a_1,a_2,\dots,a_A$ and $2S+3$ states: the initial state $s_0$, a bad state $s_b$, a good state $s_g$, the MAB states $s_1,s_2,\dots,s_S$, and the wait states $s^w_1,\dots,s^w_S$.
    The agent starts in the initial state $s_0$ where action $a_1$ transitions to each of the wait states $s^w_1,\dots,s^w_S$ with probability $1/S$, and all the other actions $a_2,\dots,a_A$ transition to the bad state $s_b$.
    In the bad state the cost is always $1$ and all the actions just stay in it with probability $1$, while in the good state the cost is always $0$ and all the actions just stay in it with probability $1$.
    
    For each $i \in [S]$, the pair of states $(s_i,s^w_i)$ encodes a hard multi-arm bandit problem with $H A$ actions and costs either $0$ or $\Omega(H)$.
    In the next paragraph we describe how the MAB problem is encoded, but first notice that this achieves the desired lower bound.
    In each episode all of the agents visit the same MAB problem and do not obtain any information about the other ones.
    Thus, similarly to the proof of \cref{thm:lower-bound-adversarial-non-fresh-known-p}, we can utilize the lower bound for cooperation in MAB \citep{seldin2014prediction} in order to prove the lower bound:
    \[
        \Omega \left( \sum_{i=1}^S H \sqrt{\frac{HA}{\numagents} \cdot \frac{K}{S}} \right) = \Omega \left(\sqrt{\frac{H^3 S A K}{\numagents}} \right).
    \]
    
    Finally, we describe how to encode a hard MAB instance through the pair of state $(s_i,s^w_i)$.
    In the wait state $s^w_i$ the action $a_1$ transitions to $s_i$ with probability $1$, and the action $a_2$ stays in state $s^w_i$ if the step $h$ is at most $H+2$, otherwise it transitions to the bad state $s_b$.
    All the other actions $a_3,\dots,a_A$ always transition to the bad state.
    In state $s_i$ all the actions transition to the good state $s_g$ with probability $1/2$ and to the bad state $s_b$ with probability $1/2$, except for one action in a specific time step (both sampled uniformly at random) that transition to the good state with probability $1/2 + \epsilon$ (and to the bad state with probability $1/2 - \epsilon$) for some $\epsilon \approx \sqrt{S A / K}$ which is standard for MAB$\slash$RL lower bounds.
    
    Notice that this is in fact MAB with $H A$ actions since the learner needs to pick both the right action and the right horizon step.
    Moreover, the cost is either $0$ if the agents is successful in transitioning to the good state, or $\Theta (H)$ if the learner transitions to the bad state (in any case that the good state is not reached, the bad state will be reached before time step $H+2$).
\end{proof}

\begin{theorem}[Lower bound for adversarial MDP with non-fresh randomness and unknown transition]
    \label{thm:lower-bound-adversarial-non-fresh-unknown-p}
    Let $S, A, H, \numagents \in \bbN$ and $K \ge S A H$.
    For any algorithm $\texttt{ALG}$ there exists an adversarial MDP $\calM$ with non-fresh randomness such that: (i) $\calM$ has $\Theta (S)$ states, $\Theta (A)$ actions and horizon $\Theta (H)$; (ii) Running $\texttt{ALG}$ with $\numagents$ agents for $K$ episodes, when the transition function is unknown, suffers expected average regret of at least $\Omega (\sqrt{H^2 S K} + \sqrt{\frac{H^3 S A K}{\numagents}})$.
\end{theorem}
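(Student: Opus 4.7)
The plan is to obtain the lower bound by combining two already-established constructions, in the same ``with probability $1/2$ pick one construction, with probability $1/2$ pick the other'' style used in Theorems~\ref{thm:lower-bound-adversarial-fresh-known-p}, \ref{thm:lower-bound-adversarial-fresh-unknown-p} and \ref{thm:lower-bound-stochastic-non-fresh}. Each term in the bound $\Omega(\sqrt{H^2 S K} + \sqrt{H^3 S A K / m})$ matches a term proved elsewhere, and the main task is to verify that both constructions remain valid under the weaker assumptions (adversarial costs \emph{and} unknown transitions).

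For the $\Omega(\sqrt{H^2 S K})$ term, I would reuse the construction from \cref{thm:lower-bound-adversarial-non-fresh-known-p}: the MDP with an initial state $s_0$, a bad sink $s_b$, and $S$ independent MAB sub-instances encoded at states $s_1,\dots,s_S$, with adversarial Bernoulli-type costs at the MAB states. That lower bound was proved under \emph{known} transitions, and a learner facing unknown transitions only has strictly less information, so the same bound continues to hold verbatim. In particular, the non-fresh randomness forces all agents into the same uniformly-sampled MAB state at each episode, and the $\Omega(\sqrt{H^2 S K})$ term arises exactly as before from the $S$-way split of $K$ episodes across the MAB states combined with the full-information-MAB-style $\sqrt{K/S}$ per state and a factor $H$ from the horizon.

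For the $\Omega(\sqrt{H^3 S A K / m})$ term, I would reuse the first (stochastic, non-fresh) construction from the proof of \cref{thm:lower-bound-stochastic-non-fresh}: the horizon-$(2H+1)$ MDP with initial state $s_0$, wait states $s^w_i$, MAB states $s_i$, and absorbing good/bad states, where each pair $(s_i,s^w_i)$ encodes an $HA$-action hard MAB instance with costs in $\{0,\Theta(H)\}$ produced stochastically. Since adversarial MDPs generalize stochastic MDPs (the paper remarks this in \Cref{sec:challenges} and in the preliminaries), any lower bound that holds for stochastic non-fresh MDPs immediately holds for adversarial non-fresh MDPs; and again, the fact that transitions are unknown to the learner only makes the problem harder than the setting in which the bound was originally proved. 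Thus the same per-state MAB cooperation lower bound of \citet{seldin2014prediction} yields the $\Omega(\sqrt{H^3 S A K / m})$ term.

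Finally, to combine the two into a single adversarial, non-fresh, unknown-transitions instance, I would define the random MDP that equals the first construction with probability $1/2$ and the second with probability $1/2$. Since the learner cannot tell the two apart without paying, in expectation, at least half of each term, the resulting lower bound is $\Omega(\sqrt{H^2 S K} + \sqrt{H^3 S A K / m})$ as required. There is no real obstacle here: the novelty is in \cref{thm:lower-bound-adversarial-non-fresh-known-p} and \cref{thm:lower-bound-stochastic-non-fresh}, and this theorem is essentially a bookkeeping step that puts the two matching constructions together under the most general setting, using the trivial monotonicity ``known transitions $\Rightarrow$ unknown transitions'' and ``stochastic costs $\Rightarrow$ adversarial costs''.
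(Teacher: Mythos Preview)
Your proposal is correct and reaches the same conclusion by the same underlying monotonicity arguments, but the paper's proof is far terser: it simply observes that \cref{thm:lower-bound-stochastic-non-fresh} already gives the full bound $\Omega(\sqrt{H^2 S K} + \sqrt{H^3 S A K / m})$ for stochastic non-fresh MDPs, and since adversarial MDPs generalize stochastic MDPs the result follows immediately. In other words, there is no need to re-split into the two constructions and re-combine them with the ``probability $1/2$'' trick --- that work was already done inside \cref{thm:lower-bound-stochastic-non-fresh}, and a single invocation of ``stochastic $\Rightarrow$ adversarial'' suffices.
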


\begin{proof}
    Follows immediately from \cref{thm:lower-bound-stochastic-non-fresh} since adversarial MDPs generalize stochastic MDPs.
\end{proof}

\subsection{Auxiliary lemmas}

\begin{lemma}
    \label{lem:bin-rv-expected-square-root}
    Let $X \sim Bin(n,p)$ and assume that $n \ge 1/p^2$.
    Then, $\bbE [\sqrt{X}] \ge 0.01 \sqrt{np}$.
\end{lemma}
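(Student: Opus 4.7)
The plan is to apply a Paley--Zygmund-type argument on $X$ itself and then pay a $\sqrt{\cdot}$ to deduce the bound on $\mathbb{E}[\sqrt{X}]$. The key observation is that the assumption $n \geq 1/p^2$ is equivalent to $np \geq 1/p \geq 1$, which is exactly what is needed to make the second-moment control tight.

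First, I would compute the first two moments of $X$: $\mathbb{E}[X] = np$ and $\mathbb{E}[X^2] = np(1-p) + (np)^2 \leq np + (np)^2$. Since $np \geq 1$ by the hypothesis, we have $np \leq (np)^2$ and hence $\mathbb{E}[X^2] \leq 2(np)^2$. This yields
\[
\frac{\mathbb{E}[X]^2}{\mathbb{E}[X^2]} \;\geq\; \frac{(np)^2}{2(np)^2} \;=\; \frac{1}{2}.
\]

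Next, I would invoke the Paley--Zygmund inequality with $\theta = 1/2$: for any non-negative random variable $X$,
\[
\Pr\!\left[X \geq \tfrac{1}{2}\mathbb{E}[X]\right] \;\geq\; \left(1-\tfrac{1}{2}\right)^2 \frac{\mathbb{E}[X]^2}{\mathbb{E}[X^2]} \;\geq\; \tfrac{1}{4} \cdot \tfrac{1}{2} \;=\; \tfrac{1}{8}.
\]
Therefore $\Pr[X \geq np/2] \geq 1/8$.

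Finally, I would bound $\mathbb{E}[\sqrt{X}]$ from below by restricting to this event:
\[
\mathbb{E}[\sqrt{X}] \;\geq\; \mathbb{E}\bigl[\sqrt{X}\,\mathbb{1}\{X \geq np/2\}\bigr] \;\geq\; \sqrt{np/2}\cdot \Pr[X \geq np/2] \;\geq\; \frac{\sqrt{np}}{8\sqrt{2}} \;\geq\; 0.01\sqrt{np},
\]
which is the stated bound. No step here is a genuine obstacle; the only thing one must not forget is to use the hypothesis $np \geq 1$ (equivalent to $n \geq 1/p^2$) precisely when bounding $\mathbb{E}[X^2]$, since without it the second-moment ratio could be as small as $1/(1+np)$ and the Paley--Zygmund constant would degenerate.
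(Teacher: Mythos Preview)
Your proof is correct. Both your argument and the paper's follow the same high-level template: lower-bound $\mathbb{E}[\sqrt{X}]$ by $c\sqrt{np}\cdot\Pr[X \ge c'\,np]$ for some constants, then show this tail probability is bounded below by a constant. The difference is in how the tail is controlled. The paper applies Hoeffding's inequality to show $\Pr[X < np/100] \le 9/10$, using the hypothesis in the form $np^2 \ge 1$ directly in the Hoeffding exponent. You instead use the Paley--Zygmund inequality, invoking the hypothesis only via $np \ge 1$ to get $\mathbb{E}[X^2] \le 2(np)^2$. Your route is arguably more elementary (no exponential concentration needed, just a second-moment computation) and even yields a better constant ($1/(8\sqrt{2}) \approx 0.088$ versus the paper's $0.01$). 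One minor wording point: $n \ge 1/p^2$ is not literally \emph{equivalent} to $np \ge 1$ (the latter is weaker), but your argument only needs the implication $n \ge 1/p^2 \Rightarrow np \ge 1$, which is what you actually use, so this does not affect correctness.
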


\begin{proof}
    By Markov inequality we have:
    \[
        \bbE[\sqrt{X}]
        \ge
        \frac{\sqrt{np}}{10} \Pr \left[ \sqrt{X} \ge \frac{\sqrt{np}}{10} \right]
        =
        \frac{\sqrt{np}}{10} \Pr \left[ X \ge \frac{np}{100} \right]
        =
        \frac{\sqrt{np}}{10} \left( 1 - \Pr \left[ X < \frac{np}{100} \right] \right).
    \]
    Thus, it suffices to show that $\Pr \left[ X < \frac{np}{100} \right] \le 9/10$ which follows immediately from Hoeffding inequality and the assumption that $n \ge 1/p^2$.
\end{proof}

\newpage

\section{The \texttt{coop-ULCVI} algorithm for stochastic MDPs with fresh randomness}
\label{appendix:fresh-stochastic}

\begin{algorithm}[t]
    \caption{\textsc{Cooperative Upper Lower Confidence Value Iteration (coop-ULCVI)}} 
    \label{alg:coop-ulcvi}
    \begin{algorithmic}[1]
        \STATE {\bf input:} state space $\calS$, action space $\calA$, horizon $H$, confidence parameter $\delta$, number of episodes $K$, number of agents $\numagents$.

        \STATE {\bf initialize:} $n^1_h(s,a)=0,n^1_h(s,a,s')=0,C^1_h(s,a)=0 \  \forall (s,a,s',h) \in \calS \times \calA \times \calS \times [H]$.

        \FOR{$k=1,\dots,K$}
                
               \STATE set $\hat p^{k}_h(s' \mid s,a) \gets \frac{n^{k}_h(s,a,s')}{n^{k}_h(s,a)\vee 1} , \hat c^{k}_h(s,a) \gets \frac{C^{k}_h(s,a)}{n^{k}_h(s,a)\vee 1} \ \forall (s,a,s',h) \in \calS \times \calA \times \calS \times [H]$.
            
            \STATE compute $\{ \pi^k_h(s) \}_{s,h}$ via \textsc{Optimistic-Pessimistic Value Iteration} (\cref{alg: optimistic pessimistic value iteration}).
    
            \STATE set $n^{k+1}_h(s,a) \gets n^{k}_h(s,a),n^{k+1}_h(s,a,s') \gets n^{k}_h(s,a,s'),C^{k+1}_h(s,a) \gets C^{k}_h(s,a) \ \forall (s,a,s',h) \in \calS \times \calA \times \calS \times [H]$.
    
            \FOR{$v=1,\dots,\numagents$}
            
                \STATE observe initial state $s^{k,v}_1$.
            
                \FOR{$h=1,\dots,H$}
        
                    \STATE pick action $a^{k,v}_h = \pi^k_h(s^{k,v}_h)$, suffer cost $C^{k,v}_h \sim c_h(s^{k,v}_h,a^{k,v}_h)$ and observe next state $s^{k,v}_{h+1} \sim p_h(\cdot \mid s^{k,v}_h,a^{k,v}_h)$.
                    
                    \STATE update $n^{k+1}_h(s^{k,v}_h,a^{k,v}_h) \gets n^{k+1}_h(s^{k,v}_h,a^{k,v}_h) + 1 , n^{k+1}_h(s^{k,v}_h,a^{k,v}_h,s^{k,v}_{h+1})\gets n^{k+1}_h(s^{k,v}_h,a^{k,v}_h,s^{k,v}_{h+1}) + 1$.
                    
                    \STATE update $C^{k+1}_h(s^{k,v}_h,a^{k,v}_h) \gets C^{k+1}_h(s^{k,v}_h,a^{k,v}_h) + C^{k,v}_h$.
        
                \ENDFOR
            \ENDFOR
        \ENDFOR
    \end{algorithmic}
\end{algorithm}

For the setting of stochastic MDPs with fresh randomness we propose the Cooperative Upper Lower Confidence Value Iteration algorithm (\verb|coop-ULCVI|; see \cref{alg:coop-ulcvi}).
The idea is simple: all the agents run the same optimistic policy, but the estimated costs and transition models are updated based on the trajectories of all of them.
Since the randomness is fresh in this setting, we expect the agents to observe $\numagents$ times more information.
Next, we prove the following optimal regret bound for \verb|coop-ULCVI|.

\begin{theorem}
    \label{thm:reg-coop-ulcvi}
    With probability $1 - \delta$, the individual regret of each agent of \verb|coop-ULCVI| is
    \[
        \regret
        =
        O \left( \sqrt{\frac{H^3 S A K}{\numagents}} \log \frac{\numagents K H S A}{\delta} + H^3 S^2 A \log^2 \frac{\numagents K H S A}{\delta} \right).
    \]
\end{theorem}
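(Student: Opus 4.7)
The plan is to follow the single-agent ULCVI analysis of \citet{azar2017minimax,dann2019policy}, tracking carefully how the multi-agent structure replaces the ``per-episode'' count by a count aggregated across all $m$ agents. Since all agents play the same deterministic policy $\pi^k$ greedy with respect to $\underl{Q}^k$, and since under fresh randomness their trajectories within episode $k$ are conditionally i.i.d.\ given $\pi^k$, the effective number of samples in each state-action-step triple after episode $k$ is $n^k_h(s,a) = \sum_{j<k}\sum_v \ind\{s^{j,v}_h = s, a^{j,v}_h = a\}$, which concentrates around $m\sum_{j<k} q^{\pi^j}_h(s,a)$. This is the only place where $m$ enters, and it is precisely what drives the $1/\sqrt{m}$ improvement.

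First I would establish the good event. Using Bernstein's inequality for martingales on the sequence of $mK$ samples (indexed by $(k,v)$ in lexicographic order), I would show that with probability $1-\delta$ the empirical cost $\hat c^k_h(s,a)$ and transition $\hat p^k_h(\cdot\mid s,a)$ concentrate around their true values with radius $\tilde O(\sqrt{\VAR/n^k_h(s,a)} + 1/n^k_h(s,a))$, and simultaneously that $n^k_h(s,a) \ge \tfrac12 m\sum_{j<k} q^{\pi^j}_h(s,a) - \tilde O(1)$ via a standard Bernoulli martingale bound. On this event, I would verify by backward induction over $h$ that the Optimistic-Pessimistic Value Iteration produces $\underl Q^k_h(s,a)\le Q^\star_h(s,a)\le \overl Q^k_h(s,a)$, using Bernstein bonuses built from the current empirical variance of the value function.

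Next, since every agent plays the same deterministic $\pi^k$, the individual regret equals $\sum_k V^{\pi^k}_1(\sinit) - V^\star_1(\sinit)$, and by optimism/pessimism this is at most $\sum_k (\overl V^k_1(\sinit) - \underl V^k_1(\sinit))$. Unrolling the Bellman-style recursion for the surplus $\overl V^k_h - \underl V^k_h$ along the trajectory generated by $\pi^k$, the dominant contribution becomes
\begin{equation*}
    \regret \lesssim \sum_{k=1}^K\sum_{h=1}^H\sum_{s,a} q^{\pi^k}_h(s,a)\sqrt{\frac{\VAR_{p_h(\cdot\mid s,a)}[V^\star_{h+1}]+1}{n^k_h(s,a)\vee 1}} + \text{l.o.t.},
\end{equation*}
plus lower-order $H^3S^2A$ terms from the transition-induced cross terms, exactly as in the single-agent Bernstein analysis.

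For the final step, I would substitute $n^k_h(s,a) \gtrsim m\sum_{j<k} q^{\pi^j}_h(s,a)$, apply Cauchy--Schwarz in $(s,a,k)$ to obtain
\begin{equation*}
    \sum_{k,h,s,a}\frac{q^{\pi^k}_h(s,a)}{\sqrt{n^k_h(s,a)\vee 1}} \lesssim \sqrt{\frac{1}{m}\sum_{k,h,s,a} q^{\pi^k}_h(s,a)} \cdot \sqrt{\sum_{k,h,s,a}\frac{q^{\pi^k}_h(s,a)}{\sum_{j<k} q^{\pi^j}_h(s,a)}} \lesssim \sqrt{\frac{HSAK\log K}{m}},
\end{equation*}
using the standard pigeonhole bound of \citet[Lemma B.18]{rosenberg2020near}. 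Combined with the law of total variance over a trajectory, which converts $\sum_h \VAR_{p_h}[V^\star_{h+1}]$ into an $O(H^2)$ factor, this produces the target leading term $\sqrt{H^3SAK/m}$, while the transition cross terms yield the additive $H^3S^2A$ lower-order term. I expect the main obstacle to be the bookkeeping in the Bernstein step: one has to ensure that the variance bonus is computed from $\hat p^k$ (not the unknown $p$) and that the resulting extra terms telescope cleanly in the multi-agent setting, where the per-step concentration error accumulates across all $m$ trajectories simultaneously.
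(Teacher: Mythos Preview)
Your plan is correct and would yield the stated bound, but it takes a genuinely different route from the paper's proof. The paper does \emph{not} pass to occupancy measures: it first averages the optimism gap over all $m$ agents' \emph{realized} trajectories, writing $\regret \le \frac{1}{m}\sum_{v=1}^{m}\sum_{k=1}^K(\overl V^k_1-\underl V^k_1)(s^{k,v}_1)$, and then unrolls the recursion along each realized path (their Lemma~B.5 together with martingale events $E^{OP},E^{\VAR}$). The resulting sums are of the form $\frac{1}{m}\sum_{v,k,h}\ind\{s^{k,v}_h=s,a^{k,v}_h=a\}/\sqrt{n^k_h(s,a)}$ and are controlled by splitting on whether $n^k_h(s,a)\ge m$ (Lemma~B.6); no separate concentration for $n^k_h$ around $m\sum_{j<k}q^{\pi^j}_h$ is ever invoked.

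Your approach instead stays in expectation with $q^{\pi^k}_h(s,a)$ and pays for that with the extra concentration event $n^k_h(s,a)\gtrsim m\sum_{j<k}q^{\pi^j}_h(s,a)$. This is in fact the style of argument the paper uses later for the \emph{non-fresh} algorithm (\texttt{coop-ULCAE}, \cref{lem:sum-variance-under-opitmistic-policy}). Both routes land on the same leading term via Cauchy--Schwarz and the law of total variance; the paper's trajectory-based version avoids one concentration step and makes the $\frac{1}{m}$ factor appear transparently from the averaging, while yours is slightly cleaner in the recursion (no per-step martingale needed) but requires care with the additive $-\tilde O(1)$ in the count lower bound---you must split the sum into episodes where $m\sum_{j<k}q^{\pi^j}_h(s,a)$ dominates the log term versus not, otherwise the lower-order $H^3S^2A$ contribution is not correctly captured.
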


\begin{algorithm}[t]
    \caption{\textsc{Optimistic-Pessimistic Value Iteration}} 
    \label{alg: optimistic pessimistic value iteration}
    \begin{algorithmic}[1]
        \STATE {\bf input:} state space $\calS$, action space $\calA$, horizon $H$, confidence parameter $\delta$, number of episodes $K$, number of agents $\numagents$, visit counters $n^{k}$, empirical transition function $\hat p^{k}$, empirical cost function $\hat c^{k}$.

        \STATE {\bf initialize:} $\underl{V}^k_{H+1}(s)=\overl{V}^k_{H+1}(s)= 0$ for all $s \in \calS$.

        \FOR{$h=H,H-1,\ldots,1$}
            \FOR{$s \in \calS$}
                \FOR{$a \in \calA$}
                    \STATE set the bonus $b^k_h(s,a) = b^k_h(s,a ; c) + b^k_h(s,a ; p)$ defined as follows (for $\logterm = 3 \log \frac{6 S A H K \numagents}{\delta}$),
                    \begin{alignat*}{2}
                        &b^k_h(s,a ; c)
                        &&=
                        \sqrt{ \frac{2 \logterm }{n^{k}_h(s,a)\vee 1}}
                        \\
                        &b^k_h(s,a ; p) 
                        &&= \sqrt{\frac{2\VAR_{\hat p^{k}_h(\cdot \mid s,a)}(\underl{V}^k_{h+1}) \logterm}{n^{k}_h(s,a)\vee 1}} + \frac{44 H^2 S \logterm}{n^{k}_h(s,a)\vee 1} + \frac{1}{16 H}\bbE_{\hat p^{k}_h(\cdot \mid s,a)}\left[ \overl{V}^k_{h+1} - \underl{V}^k_{h+1} \right].
                    \end{alignat*}
                    
                    \STATE compute optimistic and pessimistic Q-functions:
                    \begin{alignat*}{2}
                        &\underl{Q}^k_h(s,a)
                        &&=
                        \hat c^{k}_h(s,a) - b^k_h(s,a) + \bbE_{\hat p^{k}_h(\cdot \mid s,a)}[\underl{V}^k_{h+1}]
                        \\ 
                        &\overl{Q}^k_h(s,a) 
                        &&=
                        \hat c^{k}_h(s,a) + b^k_h(s,a) +\bbE_{\hat p^{k}_h(\cdot \mid s,a)}[\overl{V}^k_{h+1}].
                    \end{alignat*}
                \ENDFOR
    
                \STATE set $\pi^k_h(s) \in \argmin_{a \in \calA} \underl{Q}^k_h(s,a)$.
        
                \STATE set $\underl{V}^k_h(s) = \max \{ \underl{Q}^k_h(s,\pi^k_h(s)) , 0 \}$,  $\overl{V}^k_h(s) = \min \{ \overl{Q}^k_h(s,\pi^k_h(s)) , H \}$.
            \ENDFOR
        \ENDFOR
    \end{algorithmic}
\end{algorithm}

\subsection{The good event, optimism and pessimism}

Define the following events (for $\logterm = 3 \log \frac{6 S A H K \numagents}{\delta}$): 
\begin{align*}
    E^c(k) 
    & = 
    \left\{ \forall (s,a,h):\ |\hat{c}^{k}_h(s,a) -c_h(s,a)|  \leq \sqrt{ \frac{2 \logterm }{n^{k}_h(s,a)\vee 1}} \right\}
    \\
    E^p(k) 
    & = 
    \left\{ \forall (s,a,s',h):\ |p_h (s'|s,a) - \hat{p}^{k}_h (s'|s,a)| \le \sqrt{\frac{2 p_h(s'|s,a)\logterm}{n^{k}_{h}(s,a)\vee 1}} + \frac{2 \logterm}{n^{k}_h(s,a) \vee 1} \right\} 
    \\
    E^{pv1}(k)
    & =
    \left\{ \forall (s,a,h):\ | \left((\hat{p}^{k}_h(\cdot | s,a)-p_h(\cdot | s,a) \right) \cdot V_{h+1}^\star | \leq \sqrt{\frac{2\VAR_{p_h(\cdot \mid  s,a)}(V^\star_{h+1}) \logterm }{n^{k}_{h}(s,a)\vee 1}} + \frac{5 H \logterm }{n^{k}_h(s,a)\vee 1} \right\}
    \\
    E^{pv2}(k)
    & =
    \left\{ \forall (s,a,h):\ | \sqrt{\VAR_{p_h(\cdot \mid  s,a )}(V_{h+1}^\star)} -  \sqrt{\VAR_{\hat{p}_h^{k}(\cdot \mid  s,a )}(V_{h+1}^\star)} | \leq \sqrt{\frac{12 H^2 \logterm}{n^{k}_h(s,a)\vee 1}} \right\}
\end{align*}

The basic good event, which is the intersection of the above events, is the one used in \citet{efroni2021confidence}. 
The following lemma establishes that the good event holds with high probability. 
The proof is supplied in \citet[Lemma 13]{efroni2021confidence} by applying standard concentration results.

\begin{lemma}[The First Good Event]
    \label{lemma: the first good event RL UL}
    Let $\bbG_1 =\cap_{k = 1}^K E^c(k) \cap_{k = 1}^K E^p(k) \cap_{k = 1}^K E^{pv1}(k) \cap_{k = 1}^K E^{pv2}(k)$ be the basic good event. 
    It holds that $\Pr(\bbG_1)\geq 1-\delta/2$.
\end{lemma}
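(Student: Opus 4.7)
The plan is to establish each of the four events individually by standard concentration arguments and then combine them with a union bound. The crucial enabling observation is that under fresh randomness, the trajectories of distinct agents are mutually independent given the executed policy $\pi^k$. Consequently, if we order the samples collected at a fixed triple $(s,a,h)$ across all agents and all episodes by time of acquisition, they form an i.i.d.\ sequence: the next-state samples are distributed according to $p_h(\cdot\mid s,a)$, and the cost samples have mean $c_h(s,a)$ and lie in $[0,1]$. The counter $n_h^k(s,a)$ is precisely the (random) length of this sequence at the start of episode $k$.

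First, for $E^c(k)$, fix $(s,a,h)$ and observe that $\hat c^k_h(s,a)$ is the empirical mean of $n_h^k(s,a)$ i.i.d.\ samples in $[0,1]$. Hoeffding's inequality, combined with a union bound over the possible counter values $n\in\{1,\ldots,K\numagents\}$, yields the stated deviation bound $\sqrt{2\tau/(n_h^k(s,a)\vee 1)}$ with probability at least $1-\delta/(8SAHK\numagents)$. A further union bound over $(s,a,h,k)$ absorbs into the constant $\tau=3\log(6SAHK\numagents/\delta)$.

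Next, for $E^p(k)$, we argue coordinate-wise: $\hat p_h^k(s'\mid s,a)$ is the empirical mean of the Bernoulli indicator $\ind\{s_{h+1}=s'\}$, whose variance is $p_h(s'\mid s,a)(1-p_h(s'\mid s,a))\le p_h(s'\mid s,a)$. Applying Bernstein's inequality (with the anytime union bound over $n$) gives exactly the stated bound, after absorbing a constant into $\tau$. For $E^{pv1}(k)$, we treat $V^\star_{h+1}(s')$ with $s'\sim p_h(\cdot\mid s,a)$ as the underlying random variable; it is bounded in $[0,H]$ with variance $\VAR_{p_h(\cdot\mid s,a)}(V^\star_{h+1})$, so Bernstein's inequality directly yields the claimed bound with the extra additive $5H\tau/n$ term. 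For $E^{pv2}(k)$, we use the elementary inequality $|\sqrt{a}-\sqrt{b}|\le \sqrt{|a-b|}$ together with a concentration bound on the second moment $\bbE_{p_h}[(V^\star_{h+1})^2]$ and first moment $\bbE_{p_h}[V^\star_{h+1}]$ (each bounded in $[0,H^2]$ and $[0,H]$ respectively), which together control $|\VAR_{p_h}(V^\star_{h+1})-\VAR_{\hat p_h^k}(V^\star_{h+1})|$ by $O(H^2\tau/n)$; taking square roots gives the stated $\sqrt{12H^2\tau/(n_h^k(s,a)\vee 1)}$ bound.

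Finally, a union bound over the four events, over all $(s,a,s',h,k)\in \calS\times\calA\times\calS\times[H]\times[K]$, and over the anytime index $n\in\{1,\ldots,K\numagents\}$ collects at most $O(S^2AHK^2\numagents)$ failure events, each of probability $O(e^{-\tau})$. With $\tau=3\log(6SAHK\numagents/\delta)$, the total failure probability is at most $\delta/2$, establishing $\Pr(\bbG_1)\ge 1-\delta/2$. The main delicate point is the anytime aspect: $n_h^k(s,a)$ is a stopping-time-like random variable, so one must either employ an anytime (peeling or stitching) bound or union bound over all possible values of the counter; this is what forces the logarithmic $\log(K\numagents)$ factor inside $\tau$ and is the only place where the multi-agent count $K\numagents$ (rather than $K$) enters the analysis.
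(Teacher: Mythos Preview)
Your overall plan---Hoeffding for $E^c$, Bernstein for $E^p$ and $E^{pv1}$, and a union bound over $(s,a,s',h,k)$ together with the anytime union over counter values---matches the paper's approach (which simply defers to \citet[Lemma 13]{efroni2021confidence} as ``standard concentration results''). The i.i.d.\ structure under fresh randomness and the stopping-time remark are exactly the right observations.

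However, your argument for $E^{pv2}$ has a genuine gap. Controlling the first and second empirical moments by Hoeffding (or even Bernstein) only gives
\[
\bigl|\VAR_{p_h}(V^\star_{h+1})-\VAR_{\hat p_h^k}(V^\star_{h+1})\bigr|=O\!\left(H^2\sqrt{\tfrac{\tau}{n}}\right),
\]
not the $O(H^2\tau/n)$ you claim. Feeding this into $|\sqrt a-\sqrt b|\le\sqrt{|a-b|}$ then yields only an $n^{-1/4}$ rate for the standard-deviation difference, whereas $E^{pv2}$ requires the $n^{-1/2}$ rate $\sqrt{12H^2\tau/n}$. The route through the variance difference is too lossy here.

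The fix is to bound the difference of \emph{standard deviations} directly, as in the empirical-Bernstein literature (e.g., Maurer and Pontil, 2009, Theorem 10): for i.i.d.\ samples of a random variable bounded in $[0,H]$, one has $\bigl|\sqrt{\VAR_p}-\sqrt{\VAR_{\hat p}}\bigr|\le cH\sqrt{\tau/n}$ with high probability. This is the argument used in the reference the paper cites, and it recovers exactly the form of $E^{pv2}$.
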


Under the first good event, we can prove that the value is optimistic using standard techniques (similar to \citet[Lemma 14]{efroni2021confidence}).

\begin{lemma}[Upper Value Function is Pessimistic, Lower Value Function is Optimistic] 
    \label{lemma: optimism ucbvi-UL}
    Conditioned on the first good event $\bbG_1$, it holds that $\underl{V}^k_h(s) \leq  V^\star_h(s) \leq V^{\pi^k}_h(s) \leq \overl{V}^k_h(s)$ for every $k=1,\dots,K$, $s \in \calS$ and $h=1,\dots,H$.
\end{lemma}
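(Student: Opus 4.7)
The plan is the standard UCBVI-style backward induction on $h$, carried out under the good event $\bbG_1$. The base case $h=H+1$ is immediate since $\underl{V}^k_{H+1} = V^\star_{H+1} = V^{\pi^k}_{H+1} = \overl{V}^k_{H+1} = 0$. For the inductive step I assume the sandwich $\underl{V}^k_{h+1}(s) \le V^\star_{h+1}(s) \le V^{\pi^k}_{h+1}(s) \le \overl{V}^k_{h+1}(s)$ for all $s$, and show the analogous statement at level $h$ by proving two $Q$-level inequalities: $\underl{Q}^k_h(s,a) \le Q^\star_h(s,a)$ for every $(s,a)$ (which, combined with the greedy choice $\pi^k_h(s) \in \argmin_a \underl{Q}^k_h(s,a)$, gives $\underl{V}^k_h(s) \le V^\star_h(s)$), and $\overl{Q}^k_h(s,\pi^k_h(s)) \ge Q^{\pi^k}_h(s,\pi^k_h(s))$ (which gives $V^{\pi^k}_h(s) \le \overl{V}^k_h(s)$ after the clip at $H$). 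The truncation at $0$ and $H$ in the definition of $\underl{V}^k_h, \overl{V}^k_h$ is harmless because $0 \le V^\star_h(s) \le V^{\pi^k}_h(s) \le H$.

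For the optimistic side I decompose $\underl{Q}^k_h(s,a) - Q^\star_h(s,a) = (\hat c^k_h(s,a) - c_h(s,a)) - b^k_h(s,a) + \bbE_{\hat p^k_h(\cdot\mid s,a)}[\underl{V}^k_{h+1}-V^\star_{h+1}] + (\hat p^k_h(\cdot\mid s,a) - p_h(\cdot\mid s,a))\cdot V^\star_{h+1}$. The cost error is absorbed by $b^k_h(s,a;c)$ via $E^c(k)$, the third term is non-positive by the inductive hypothesis, and the fourth term is controlled by the Bernstein event $E^{pv1}(k)$ as $\sqrt{2\VAR_{p_h(\cdot\mid s,a)}(V^\star_{h+1})\logterm /(n^k_h(s,a)\vee 1)} + 5H\logterm/(n^k_h(s,a)\vee 1)$. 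The remaining task is to show that the transition part of $b^k_h(s,a)$ dominates this quantity, for which I use $E^{pv2}(k)$ to pass from $\VAR_{p_h}$ to $\VAR_{\hat p^k_h}$ evaluated at $V^\star_{h+1}$, and then a Lipschitz-in-$V$ bound on the standard deviation (using $|a-b|\le |a|+|b|$ componentwise) to replace $V^\star_{h+1}$ by $\underl{V}^k_{h+1}$, picking up an additive error of order $\sqrt{\bbE_{\hat p^k_h}[(\overl{V}^k_{h+1}-\underl{V}^k_{h+1})^2]\logterm /n}$. By AM-GM that last term is swallowed by the $\frac{1}{16H}\bbE_{\hat p^k_h(\cdot\mid s,a)}[\overl{V}^k_{h+1}-\underl{V}^k_{h+1}]$ piece of the bonus together with the lower-order $44 H^2 S\logterm/n$ slack.

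For the pessimistic side, the argument is symmetric but requires one extra step because I need to compare $\hat p^k_h\cdot \overl{V}^k_{h+1}$ against $p_h\cdot V^{\pi^k}_{h+1}$ and the events $E^{pv1},E^{pv2}$ are stated in terms of $V^\star_{h+1}$. I split $(\hat p^k_h - p_h)\cdot V^{\pi^k}_{h+1} = (\hat p^k_h - p_h)\cdot V^\star_{h+1} + (\hat p^k_h - p_h)\cdot (V^{\pi^k}_{h+1}-V^\star_{h+1})$; the first summand is handled exactly as above, and for the second I use the elementwise bound from $E^p(k)$ together with $0 \le V^{\pi^k}_{h+1}-V^\star_{h+1} \le \overl{V}^k_{h+1}-\underl{V}^k_{h+1}$ (by induction) to obtain a bound of the form $\sqrt{\logterm /n}\cdot \sqrt{\bbE_{p_h}[(\overl{V}^k_{h+1}-\underl{V}^k_{h+1})^2]} + \tfrac{H S \logterm}{n}$, which is again absorbed by the $\frac{1}{16H}\bbE_{\hat p^k_h}[\overl{V}^k_{h+1}-\underl{V}^k_{h+1}]$ bonus and the $44H^2S\logterm/n$ slack (after using $|(\hat p^k_h-p_h)(s'\mid s,a)|\le \sqrt{2p_h(s'\mid s,a)\logterm /n}+2\logterm/n$ and Cauchy--Schwarz on $\sum_{s'}\sqrt{p_h(s'\mid s,a)}\cdot(\overl{V}^k_{h+1}-\underl{V}^k_{h+1})(s')$).

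The main obstacle is the bookkeeping in the inductive step: the bonus must be shown to dominate all of the cost error, the $V^\star$-variance Bernstein error, the passage from $\VAR_{p_h}(V^\star_{h+1})$ to $\VAR_{\hat p^k_h}(\underl{V}^k_{h+1})$, and (on the pessimistic side) the extra cross term $(\hat p^k_h-p_h)\cdot(V^{\pi^k}_{h+1}-V^\star_{h+1})$. The key design feature that makes all of this close is precisely the inclusion of the error-propagation term $\tfrac{1}{16H}\bbE_{\hat p^k_h}[\overl{V}^k_{h+1}-\underl{V}^k_{h+1}]$ in $b^k_h(s,a;p)$, together with the lower-order $44H^2S\logterm/n$ term; choosing the constants $44$ and $\tfrac{1}{16H}$ correctly is what drives the induction through, and this is the computation that deserves the most care.
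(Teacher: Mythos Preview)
Your proposal is correct and follows exactly the standard backward-induction argument the paper invokes (it cites \citet[Lemma 14]{efroni2021confidence} without spelling out the details). You have correctly identified all the ingredients: the cost bonus absorbs the $E^c(k)$ error, the Bernstein event $E^{pv1}(k)$ controls $(\hat p^k_h - p_h)\cdot V^\star_{h+1}$, the variance-transfer event $E^{pv2}(k)$ together with a Lipschitz bound moves from $\VAR_{p_h}(V^\star_{h+1})$ to $\VAR_{\hat p^k_h}(\underl V^k_{h+1})$, and the $\tfrac{1}{16H}\bbE_{\hat p^k_h}[\overl V^k_{h+1}-\underl V^k_{h+1}]$ and $44H^2S\logterm/n$ pieces of the bonus are precisely what closes the induction on both the optimistic and pessimistic sides; the middle inequality $V^\star_h \le V^{\pi^k}_h$ is immediate from optimality of $\pi^\star$.
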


Finally, using similar techniques to \citet[Lemma 21]{efroni2021confidence}, we can prove an additional high probability event which hold alongside the basic good event $\bbG_1$.
To that end, we define the filtration $\{ \filt^k \}_{k \ge 1}$ as the $\sigma$-algebra that contains the information on all observed data until the beginning of episode $k$ (including the initial state of episode $k$).
In addition, we define the filtration $\{ \filt^k_h \}_{k \ge 1,h \ge 1}$ as the $\sigma$-algebra that contains the information on all observed data until step $h$ of episode $k$ (including the $h$-th state of episode $k$).

\begin{lemma}[The Good Event]
    \label{lem:ULCVI-good-event}
    Let $\bbG_1$ be the event defined in \cref{lemma: the first good event RL UL}.
    The second good event is the intersection of two events $\bbG_2 = E^{OP} \cap  E^{\VAR}$ defined as follows:
    \begin{align*}
        & E^{OP}
        =
        \left\{ \forall h \in [H], v \in [\numagents]:\ \sum_{k=1}^K \bbE[\overl{V}^k_h(s^{k,v}_h) - \underl{V}^k_h(s^{k,v}_h) \mid \filt^k_h] \leq 18 H^2 \logterm + \left( 1+\frac{1}{2H} \right) \sum_{k=1}^K \overl{V}^k_h(s^{k,v}_h) - \underl{V}^k_h(s^{k,v}_h) \right\}
        \\
        & E^{\VAR}
        = 
        \left\{ \forall v \in [\numagents]:\  \sum_{k=1}^K \sum_{h=1}^H \VAR_{p_h(\cdot \mid s^{k,v}_h,a^{k,v}_h)}( V^{\pi^k}_{h+1}) \leq  4 H^3 \logterm + 2 \sum_{k=1}^K \sum_{h=1}^H \bbE [\VAR_{p_h(\cdot \mid s^{k,v}_h,a^{k,v}_h)}( V^{\pi^k}_{h+1}) \mid \filt^k ] \right\}.
    \end{align*}
    Then, the good event $\bbG = \bbG_1 \cap \bbG_2$ holds with probability at least $1-\delta$.
\end{lemma}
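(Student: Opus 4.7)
The plan is to establish the claim via a union bound over $\bbG_1$ (which holds with probability at least $1-\delta/2$ by \cref{lemma: the first good event RL UL}) and $\bbG_2 = E^{\mathrm{OP}} \cap E^{\VAR}$, and to show separately that each of $E^{\mathrm{OP}}$ and $E^{\VAR}$ fails with probability at most $\delta/4$ by applying a Bernstein-type concentration for martingales (such as Freedman's inequality). Both events have the form ``sum of conditional expectations $\le$ logarithmic term $+ (1+\alpha)\cdot$ sum of realizations'', which is exactly the sort of self-bounded inequality one gets from a variance-aware martingale bound when the summands are nonnegative and bounded, since then the conditional variance is controlled by the conditional mean times the uniform bound.

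For $E^{\mathrm{OP}}$, fix $h \in [H]$ and $v \in [\numagents]$ and let $Y^k := \overl{V}^k_h(s^{k,v}_h) - \underl{V}^k_h(s^{k,v}_h)$. By the clipping steps in \cref{alg: optimistic pessimistic value iteration} we have $Y^k \in [0,H]$ almost surely. Writing $\mu^k := \bbE[Y^k \mid \filt^k_h]$, the sequence $\{Y^k - \mu^k\}_k$ is a bounded martingale difference whose conditional variance is at most $\bbE[(Y^k)^2 \mid \filt^k_h] \le H \mu^k$. Freedman's inequality then gives, with probability at least $1 - \delta/(4H\numagents)$ and for any $\lambda \in (0,1/H]$,
\[
    \sum_{k=1}^K \mu^k - \sum_{k=1}^K Y^k \;\le\; \lambda H \sum_{k=1}^K \mu^k + \frac{\log(4H\numagents/\delta)}{\lambda}.
\]
Choosing $\lambda = 1/(H(2H{+}1))$ so that $1/(1-\lambda H) = 1 + 1/(2H)$ and rearranging yields the claimed inequality with constant $18H^2\logterm$ once we absorb $\log(4H\numagents/\delta)$ into $\logterm$. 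A union bound over the $H\numagents$ pairs $(h,v)$ gives $\Pr(E^{\mathrm{OP}}) \ge 1 - \delta/4$.

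For $E^{\VAR}$, fix $v \in [\numagents]$ and let $W^k := \sum_{h=1}^H \VAR_{p_h(\cdot\mid s^{k,v}_h,a^{k,v}_h)}(V^{\pi^k}_{h+1})$; since $V^{\pi^k}_{h+1}\in[0,H]$ each per-step variance is bounded by $H^2/4$, hence $W^k \in [0,H^3/4]$. Apply the same Freedman-type argument to the martingale differences $\bbE[W^k \mid \filt^k] - W^k$, whose conditional variance is at most $H^3 \bbE[W^k\mid\filt^k]/4$. Optimizing $\lambda$ so that $1/(1-\lambda H^3/4) = 2$ gives the factor $2$ on the right-hand side and leading deviation $O(H^3 \logterm)$, yielding the stated $4H^3\logterm$ after tracking constants. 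A final union bound over $v \in [\numagents]$ gives $\Pr(E^{\VAR}) \ge 1-\delta/4$, and combining with $\bbG_1$ completes the proof.

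The hardest part is not any conceptual step but the bookkeeping of constants: verifying that the choice of $\lambda$ in Freedman's inequality produces \emph{exactly} the stated coefficients $(1+1/(2H))$ and $2$, and that the $\numagents$ factor in the union bound can be absorbed into the existing $\logterm = 3\log(6SAHK\numagents/\delta)$ without inflating the deviation terms. This follows the same template as \citet[Lemma 21]{efroni2021confidence}, with the only difference being that the outer union bound is over $\numagents$ agents instead of a single one, which our definition of $\logterm$ already accommodates.
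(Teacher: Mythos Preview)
Your proposal is correct and follows exactly the template the paper invokes (it cites \citet[Lemma 21]{efroni2021confidence} in lieu of a proof): a Freedman-type self-bounding martingale inequality applied separately to $E^{\mathrm{OP}}$ and $E^{\VAR}$, followed by a union bound over $(h,v)$ (respectively $v$), with the extra $\numagents$ factor already absorbed into $\logterm$.

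One minor slip worth fixing: for $E^{\VAR}$ the inequality runs in the \emph{opposite} direction from $E^{\mathrm{OP}}$ --- here the realization $\sum_k W^k$ is bounded by (a multiple of) the conditional expectation, not vice versa. So the relevant martingale differences are $W^k - \bbE[W^k\mid\filt^k]$, and the rearrangement yields a multiplicative factor $(1+\lambda H^3/4)$ on $\sum_k \bbE[W^k\mid\filt^k]$ rather than $(1-\lambda H^3/4)^{-1}$; setting $\lambda = 4/H^3$ then gives the factor $2$ and the additive $4H^3\logterm$. This is purely a sign/direction issue and does not affect the argument's validity or the constants.
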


\subsection{Proof of Theorem~\ref{thm:reg-coop-ulcvi}}

\begin{lemma}[Key Recursion Bound]
    \label{lemma: key recursion bound UL}
    Conditioning on the good event $\bbG$, the following bound holds for all $h\in [H]$ and $v \in [\numagents]$. 
    \begin{align*}
        \sum_{k=1}^K \overl{V}^k_h(s^{k,v}_h)-\underl{V}^k_h(s^{k,v}_h)
        & \leq  
        18 H^2 \logterm +\sum_{k=1}^K \frac{226 H^2 S \logterm}{ n^{k}_h(s^{k,v}_h,a^{k,v}_h)\vee 1} 
        + \sum_{k=1}^K \frac{2 \sqrt{2 \logterm}}{\sqrt{n^{k}_h(s^{k,v}_h,a^{k,v}_h)\vee 1}} 
        \\
        & \quad + 
        \sum_{k=1}^K 2\sqrt{2 \logterm}\frac{\sqrt{\VAR_{p_h(\cdot \mid s^{k,v}_h,a^{k,v}_h)}(V^{\pi^k}_{h+1})}}{\sqrt{n^{k}_h(s^{k,v}_h,a^{k,v}_h)\vee 1}} + \left( 1+ \frac{1}{2H} \right)^2 \sum_{k=1}^K \overl{V}^k_{h+1}(s^{k,v}_{h+1}) -\underl{V}^k_{h+1}(s^{k,v}_{h+1}).
    \end{align*}
\end{lemma}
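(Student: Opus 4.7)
The plan is to derive the recursion pointwise (for each fixed $k$ and $h$) and then sum over $k=1,\dots,K$ along the trajectory of agent $v$. The starting observation is that, since $\pi^k_h(s)=\argmin_a \underl{Q}^k_h(s,a)$ and since the truncations in the definitions of $\overl{V}^k_h$ and $\underl{V}^k_h$ only shrink the gap, one has $\overl{V}^k_h(s)-\underl{V}^k_h(s) \le \overl{Q}^k_h(s,\pi^k_h(s)) - \underl{Q}^k_h(s,\pi^k_h(s))$. The empirical cost $\hat c^k_h$ cancels between $\overl{Q}^k_h$ and $\underl{Q}^k_h$, leaving $2 b^k_h(s,\pi^k_h(s)) + \bbE_{\hat p^k_h(\cdot\mid s,\pi^k_h(s))}[\overl V^k_{h+1} - \underl V^k_{h+1}]$.

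Next I would expand $b^k_h = b^k_h(\cdot\,;c)+b^k_h(\cdot\,;p)$. Twice the cost part gives directly the $\sqrt{\logterm/(n^k_h\vee 1)}$ summand in the claim. The $\frac{1}{16H}\bbE_{\hat p^k_h}[\overl V^k_{h+1}-\underl V^k_{h+1}]$ piece of $b^k_h(\cdot\,;p)$, after multiplication by $2$ and combination with the already-present $\bbE_{\hat p^k_h}[\overl V^k_{h+1}-\underl V^k_{h+1}]$, yields a factor $1+\frac{1}{8H}$ in front of that expectation. I then swap $\hat p^k_h$ for $p_h$ via $E^p(k)$: using $|\hat p^k_h(s'|s,a)-p_h(s'|s,a)|\le \sqrt{2 p_h(s'|s,a)\logterm/(n^k_h\vee 1)}+2\logterm/(n^k_h\vee 1)$, Cauchy--Schwarz, and $0\le \overl V^k-\underl V^k\le H$, the correction $(\hat p^k_h-p_h)\cdot(\overl V^k_{h+1}-\underl V^k_{h+1})$ is bounded by an AM-GM split into $\frac{1}{8H}\bbE_{p_h}[\overl V^k_{h+1}-\underl V^k_{h+1}] + O(H^2 S\logterm/(n^k_h\vee 1))$. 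This raises the prefactor to $1+\frac{1}{2H}$ and, together with the deterministic $\frac{88 H^2 S\logterm}{n^k_h\vee 1}$ from $2 b^k_h(\cdot\,;p)$, contributes to the $226 H^2 S\logterm/(n^k_h\vee 1)$ constant.

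For the variance inside $b^k_h(\cdot\,;p)$, the plan is to pass from $\sqrt{\VAR_{\hat p^k_h}(\underl V^k_{h+1})}$ to $\sqrt{\VAR_{p_h}(V^{\pi^k}_{h+1})}$ in three strokes. First, replace $\underl V^k_{h+1}$ by $V^\star_{h+1}$ and then by $V^{\pi^k}_{h+1}$ inside the variance, using the identity $|\sqrt{\VAR_q(f)}-\sqrt{\VAR_q(g)}|\le \sqrt{\bbE_q[(f-g)^2]}$ together with the sandwich $\underl V^k \le V^\star \le V^{\pi^k}\le\overl V^k$ from \cref{lemma: optimism ucbvi-UL} and $|f-g|\le H$, so that $\bbE_q[(f-g)^2]\le H\cdot\bbE_q[\overl V^k-\underl V^k]$. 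Second, switch $\hat p^k_h$ to $p_h$ via $E^{pv2}(k)$, which contributes a $\sqrt{12H^2\logterm/(n^k_h\vee 1)}$ piece that folds into the $H^2 S\logterm/(n^k_h\vee 1)$ bucket after multiplication by $2\sqrt{2\logterm/(n^k_h\vee 1)}$. Third, the cross terms of the form $\sqrt{2\logterm/(n^k_h\vee 1)}\cdot\sqrt{H\cdot\bbE_{p_h}[\overl V^k-\underl V^k]}$ are again handled by AM-GM: each contributes a $\frac{1}{CH}$-scale piece absorbed into the running prefactor (keeping it at $1+\frac{1}{2H}$) plus an $O(H^2\logterm/(n^k_h\vee 1))$ residual absorbed into the $S\logterm$ bucket.

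Summing the resulting pointwise inequality over $k=1,\dots,K$ along agent $v$'s trajectory produces on the right-hand side the sum $\sum_k \bbE[\overl V^k_{h+1}(s^{k,v}_{h+1})-\underl V^k_{h+1}(s^{k,v}_{h+1})\mid \filt^k_h]$ in front of the $1+\frac{1}{2H}$ factor, because $\bbE_{p_h(\cdot\mid s^{k,v}_h,a^{k,v}_h)}[f]=\bbE[f(s^{k,v}_{h+1})\mid \filt^k_h]$. The last step is to invoke the event $E^{OP}$ from \cref{lem:ULCVI-good-event} to swap this conditional expectation for the realized value $\overl V^k_{h+1}(s^{k,v}_{h+1})-\underl V^k_{h+1}(s^{k,v}_{h+1})$, at the cost of an additive $18 H^2\logterm$ and another $1+\frac{1}{2H}$ factor; the two multiplicative factors compose into the claimed $(1+\frac{1}{2H})^2$. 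The main obstacle, though conceptually routine, is bookkeeping the AM-GM constants so that all the cross-terms get absorbed exactly into the $\frac{1}{2H}\bbE_{p_h}[\overl V^k-\underl V^k]$ budget and the residuals aggregate neatly into the $226 H^2 S\logterm/(n^k_h\vee 1)$ bucket without over-counting, rather than growing into $H^3\logterm/(n^k_h\vee 1)$ terms.
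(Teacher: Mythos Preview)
Your proposal is correct and follows essentially the same route as the paper: the paper simply outsources the transition swap $(\hat p^k_h - p_h)\cdot(\overl V^k_{h+1} - \underl V^k_{h+1})$ to \citet[Lemma B.13]{cohen2021minimax} (producing the $(1+\tfrac{1}{4H})$ factor and an $8H^2S\logterm/(n^k_h\vee 1)$ term) and the bonus bound $\sum_k b^k_h(\cdot;p)$ to \citet[Lemma 24]{efroni2021confidence} (producing the $\sqrt{\VAR_{p_h}(V^{\pi^k}_{h+1})}$ term, an extra $\tfrac{1}{8H}\bbE_{p_h}[\overl V^k_{h+1}-\underl V^k_{h+1}]$, and $109H^2S\logterm/(n^k_h\vee 1)$), so the AM--GM constant bookkeeping you flag as the main obstacle is entirely absorbed into those two citations. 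After combining, the prefactor is exactly $(1+\tfrac{1}{2H})$ before applying $E^{OP}$, which then contributes the additive $18H^2\logterm$ and the second $(1+\tfrac{1}{2H})$ factor, just as you describe.
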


\begin{proof}
    We bound each of the terms in the sum as follows:
    \begin{align}
        \nonumber
        \overl{V}^k_h(s^{k,v}_h)-\underl{V}^k_h(s^{k,v}_h)
        & \le   
        2b^k_h(s^{k,v}_h,a^{k,v}_h ; c) + 2 b^k_h(s^{k,v}_h,a^{k,v}_h ; p) + \bbE_{\hat{p}^{k}_h(\cdot \mid  s^{k,v}_h,a^{k,v}_h)}[  \overl{V}^k_{h+1} - \underl{V}^k_{h+1}]
        \\
        \nonumber
        & = 
        2b^k_h(s^{k,v}_h,a^{k,v}_h ; c) + 2 b^k_h(s^{k,v}_h,a^{k,v}_h ; p)
        \\
        \nonumber
        & \qquad + 
        \bbE_{p_h(\cdot \mid  s^{k,v}_h,a^{k,v}_h)} [ \overl{V}^k_{h+1} - \underl{V}^k_{h+1}] + (\hat{p}^{k}_h-p_h)(\cdot |s^{k,v}_h,a^{k,v}_h) \cdot ( \overl{V}^k_{h+1} - \underl{V}^k_{h+1})
        \\
        \nonumber
        & \leq  
       2b^k_h(s^{k,v}_h,a^{k,v}_h ; c) + 2 b^k_h(s^{k,v}_h,a^{k,v}_h ; p)
       \\
       \label{eq: central theorem UL RL relation 1}
       & \qquad +
       \frac{8H^2 S \logterm}{n^{k}_h(s^{k,v}_h,a^{k,v}_h)\vee 1}+ \left( 1+\frac{1}{4H} \right) \bbE_{p_h(\cdot \mid  s^{k,v}_h,a^{k,v}_h)} [ \overl{V}^k_{h+1} - \underl{V}^k_{h+1}],
    \end{align}
    where the last relation holds by \citet[Lemma B.13]{cohen2021minimax} which upper bounds
    \[
        (\hat{p}^{k}_h-p_h)(\cdot |s^{k,v}_h,a^{k,v}_h) \cdot \left( \overl{V}^k_{h+1} - \underl{V}^k_{h+1} \right) 
        \leq 
        \frac{8H^2 S \logterm}{n^{k}_h(s^{k,v}_h,a^{k,v}_h)\vee 1}+ \frac{1}{4H} \bbE_{p_h(\cdot \mid  s^{k,v}_h,a^{k,v}_h)} [ \overl{V}^k_{h+1} - \underl{V}^k_{h+1}]
    \]
    by setting $\alpha=4H,C_1=C_2=2$ and bounding $H \logterm ( 2C_2+ \alpha S C_1/2)\leq 8H^2 S \logterm$ (the assumption of the lemma holds since the event $\cap_k E^p(k)$ holds). 
    Taking the sum over $k \in [K]$ we get that
    \begin{align}
        \nonumber
        \sum_{k=1}^K \overl{V}^k_h(s^{k,v}_h)-\underl{V}^k_h(s^{k,v}_h)
        & \leq 
        \sum_{k=1}^K 2b^k_h(s^{k,v}_h,a^{k,v}_h ; c) + \sum_{k=1}^K 2 b^k_h(s^{k,v}_h,a^{k,v}_h ; p) + \sum_{k=1}^K \frac{8H^2 S \logterm}{n^{k}_h(s^{k,v}_h,a^{k,v}_h)\vee 1}
        \\
        \label{eq: central theorem UL RL relation 12}
        & \qquad + 
        \left( 1+\frac{1}{4H} \right) \sum_{k=1}^K \bbE_{p_h(\cdot \mid  s^{k,v}_h,a^{k,v}_h)} [ \overl{V}^k_{h+1} - \underl{V}^k_{h+1}].
    \end{align}
    The first sum is bounded by definition by
    \[
        \sum_{k=1}^K b^k_h(s^{k,v}_h,a^{k,v}_h ; c)
        \leq 
        \sum_{k=1}^K \sqrt{ \frac{2 \logterm }{n^{k}_h(s^{k,v}_h,a^{k,v}_h)\vee 1}},
    \]
    and the second sum is bounded in \citet[Lemma 24]{efroni2021confidence} by
    \begin{align*}
        \sum_{k=1}^K b^k_h(s^{k,v}_h,a^{k,v}_h)
        & \leq 
        \sum_{k=1}^K \frac{109 H^2 S \logterm}{ n^{k}_h(s^{k,v}_h,a^{k,v}_h)\vee 1} + \sqrt{2\logterm} \sum_{k=1}^K \frac{\sqrt{\VAR_{p_h(\cdot \mid s^{k,v}_h,a^{k,v}_h)}(V^{\pi^k}_{h+1})}}{\sqrt{n^{k}_h(s^{k,v}_h,a^{k,v}_h)\vee 1}}
        \\
        & \qquad + 
        \frac{1}{8H}\sum_{k=1}^K \bbE_{p_h(\cdot \mid  s^{k,v}_h,a^{k,v}_h)} [ \overl{V}^k_{h+1} - \underl{V}^k_{h+1}].
    \end{align*}
    Plugging this into~\eqref{eq: central theorem UL RL relation 12} and rearranging the terms we get
    \begin{align*}
        \sum_{k=1}^K \overl{V}^k_h(s^{k,v}_h)-\underl{V}^k_h(s^{k,v}_h)
        & \leq 
        \sum_{k=1}^K \frac{2 \sqrt{2 \logterm}}{\sqrt{n^{k}_h(s^{k,v}_h,a^{k,v}_h)\vee 1}} + 2\sqrt{2 \logterm} \sum_{k=1}^K \frac{\sqrt{\VAR_{p_h(\cdot \mid s^{k,v}_h,a^{k,v}_h)}(V^{\pi^k}_{h+1})}}{\sqrt{n^{k}_h(s^{k,v}_h,a^{k,v}_h)\vee 1}}
        \\
        & \qquad + 
        \sum_{k=1}^k \frac{226 H^2 S \logterm}{ n^{k}_h(s^{k,v}_h,a^{k,v}_h)\vee 1} + \left( 1+ \frac{1}{2H} \right) \sum_{k=1}^K \bbE_{p_h(\cdot \mid  s^{k,v}_h,a^{k,v}_h)} [ \overl{V}^k_{h+1} - \underl{V}^k_{h+1}]
        \\
        &\leq  
        18H^2 \logterm + \sum_{k=1}^K \frac{2 \sqrt{2 \logterm}}{\sqrt{n^{k}_h(s^{k,v}_h,a^{k,v}_h)\vee 1}} + \sum_{k=1}^k \frac{226 H^2 S \logterm}{ n^{k}_h(s^{k,v}_h,a^{k,v}_h)\vee 1}
        \\
        & \qquad 
        + \sum_{k=1}^K 2\sqrt{2 \logterm}\frac{\sqrt{\VAR_{p_h(\cdot \mid s^{k,v}_h,a^{k,v}_h)}(V^{\pi^k}_{h+1})}}{\sqrt{n^{k}_h(s^{k,v}_h,a^{k,v}_h)\vee 1}} +
        \left( 1+ \frac{1}{2H} \right)^2\sum_{k=1}^K \overl{V}^k_{h+1}(s^{k,v}_{h+1}) -  \underl{V}^k_{h+1}(s^{k,v}_{h+1}),
    \end{align*}
    where the last inequality follows since the second good event holds.
\end{proof}

\begin{proof}[Proof of \cref{thm:reg-coop-ulcvi}]
    Start by conditioning on the good event which holds with probability greater than $1-\delta$. 
    Applying the optimism-pessimism of the upper and lower value function we get
    \begin{align}
        \label{eq: central thm UL RL 1 relation}
        \sum_{k=1}^K V_{1}^{\pi^k}(s^{k,v}_1) - V_{1}^\star(s^{k,v}_1)   
        \leq 
        \frac{1}{\numagents} \sum_{v=1}^\numagents \sum_{k=1}^K \overl{V}^k_1(s^{k,v}_1) - \underl{V}^k_1(s^{k,v}_1).
    \end{align}
    Iteratively applying \cref{lemma: key recursion bound UL} and bounding the exponential growth by $(1+\frac{1}{2H})^{2H}\leq e\leq 3$, the following upper bound on the cumulative regret is obtained.
    \begin{align}
        \nonumber
        \eqref{eq: central thm UL RL 1 relation} 
        & \leq 
        54 H^2 \logterm + \frac{1}{\numagents} \sum_{v=1}^\numagents \sum_{k=1}^K \sum_{h=1}^H \frac{ 678 H^2 S \logterm}{n^{k}_h( s^{k,v}_h,a^{k,v}_h)\vee 1} 
        \\
        \label{eq: rl final bound relation 2 UL}
        & \quad + 
        \frac{1}{\numagents} \sum_{v=1}^\numagents \sum_{k=1}^K \sum_{h=1}^H \frac{6 \sqrt{2 \logterm}}{\sqrt{n^{k}_h(s^{k,v}_h,a^{k,v}_h)\vee 1}} + \frac{1}{\numagents} \sum_{v=1}^\numagents \sum_{k=1}^K \sum_{h=1}^H \frac{6 \sqrt{2 \logterm \VAR_{p_h(\cdot \mid s^{k,v}_h,a^{k,v}_h)}(V^{\pi^k}_{h+1}) }}{\sqrt{n^{k}_h(s^{k,v}_h,a^{k,v}_h)}}.
    \end{align}
    
    We now bound each of the three sums in \cref{eq: rl final bound relation 2 UL}. 
    We bound the first sum in \cref{eq: rl final bound relation 2 UL} via standard analysis as follows:
    \begin{align}
        \nonumber
        \sum_{v=1}^\numagents \sum_{k=1}^K \sum_{h=1}^H \frac{1}{n^{k}_h( s^{k,v}_h,a^{k,v}_h)\vee 1} 
        & =
        \sum_{h=1}^H \sum_{s \in \calS} \sum_{a \in \calA} \sum_{k=1}^K \frac{\sum_{v=1}^\numagents \indevent{s^{k,v}_h=s,a^{k,v}_h=a}}{n^{k}_h(s,a)\vee 1}
        \\
        \nonumber
        & = 
        \sum_{h=1}^H \sum_{s \in \calS} \sum_{a \in \calA} \sum_{k=1}^K \indevent{n^{k}_h(s,a) \ge \numagents} \frac{\sum_{v=1}^\numagents \indevent{s^{k,v}_h=s,a^{k,v}_h=a}}{n^{k}_h(s,a)\vee 1}
        \\
        \nonumber
        & \qquad + 
        \sum_{h=1}^H \sum_{s \in \calS} \sum_{a \in \calA} \sum_{k=1}^K \indevent{n^{k}_h(s,a) < \numagents} \frac{\sum_{v=1}^\numagents \indevent{s^{k,v}_h=s,a^{k,v}_h=a}}{n^{k}_h(s,a)\vee 1}
        \\
        \nonumber
        & \le
        2 H S A \logterm +  \sum_{h=1}^H \sum_{s \in \calS} \sum_{a \in \calA} \sum_{k=1}^K \indevent{n^{k}_h(s,a) < \numagents} \frac{\sum_{v=1}^\numagents \indevent{s^{k,v}_h=s,a^{k,v}_h=a}}{n^{k}_h(s,a)\vee 1}
        \\
        \label{eq:bound-harmonic-sum}
        & \le
        2 H S A \logterm + 2 \numagents H S A
        \le
        4 \numagents H S A \logterm,
    \end{align}
    where the first inequality is by \cref{lem:sum-1/n}.
    
    The second sum in~\cref{eq: rl final bound relation 2 UL} is bounded as follows,
    \begin{align*}
        \sum_{v=1}^\numagents \sum_{k=1}^K \sum_{h=1}^H \frac{1}{\sqrt{n^{k}_h(s^{k,v}_h,a^{k,v}_h)\vee 1}}
        & \le
        \sum_{v=1}^\numagents \sum_{k=1}^K \sum_{h=1}^H \frac{\indevent{n^{k}_h(s^{k,v}_h,a^{k,v}_h) \ge \numagents}}{\sqrt{n^{k}_h(s^{k,v}_h,a^{k,v}_h)\vee 1}} + 2 \numagents H S A
        \\
        & \le
        \sqrt{\sum_{v=1}^\numagents \sum_{k=1}^K \sum_{h=1}^H 1} \sqrt{\sum_{v=1}^\numagents \sum_{k=1}^K \sum_{h=1}^H \frac{\indevent{n^{k}_h(s^{k,v}_h,a^{k,v}_h) \ge \numagents}}{n^{k}_h( s^{k,v}_h,a^{k,v}_h)\vee 1}} + 2 \numagents H S A
        \\
        & \le
        \sqrt{K H \numagents} \sqrt{2 H S A \logterm} + 24 H S A \logterm
        =
        \sqrt{2 \numagents H^2 S A K \logterm} + 2 \numagents H S A,
    \end{align*}
    where the first inequality is similar to \cref{eq:bound-harmonic-sum}, the second is by Cauchy–Schwarz, and the third is by \cref{lem:sum-1/n}.
    
    The third sum in \cref{eq: rl final bound relation 2 UL} is bounded by applying the Cauchy–Schwarz inequality as follows,
    \begin{align*}
        \sum_{v=1}^\numagents \sum_{k=1}^K \sum_{h=1}^H & \frac{\sqrt{\VAR_{p_h(\cdot \mid s^{k,v}_h,a^{k,v}_h)}(V^{\pi^k}_{h+1}) }}{\sqrt{n^{k}_h(s^{k,v}_h,a^{k,v}_h)}}
        \le
        \sum_{v=1}^\numagents \sum_{k=1}^K \sum_{h=1}^H \indevent{n^{k}_h(s^{k,v}_h,a^{k,v}_h) \ge \numagents} \frac{\sqrt{ \VAR_{p_h(\cdot \mid s^{k,v}_h,a^{k,v}_h)}(V^{\pi^k}_{h+1}) }}{\sqrt{n^{k}_h(s^{k,v}_h,a^{k,v}_h)}} + 2 \numagents H^2 S A
        \\
        & \le
        \sqrt{\sum_{v=1}^\numagents \sum_{k=1}^K \sum_{h=1}^H \VAR_{p_h(\cdot \mid s^{k,v}_h,a^{k,v}_h)}(V^{\pi^k}_{h+1}) } \sqrt{\sum_{v=1}^\numagents \sum_{k=1}^K \sum_{h=1}^H \frac{\indevent{n^{k}_h(s^{k,v}_h,a^{k,v}_h) \ge \numagents}}{n^{k}_h(s^{k,v}_h,a^{k,v}_h)} } + 2 \numagents H^2 S A
        \\
        & \le
        \sqrt{\sum_{v=1}^\numagents \sum_{k=1}^K \sum_{h=1}^H \VAR_{p_h(\cdot \mid s^{k,v}_h,a^{k,v}_h)}(V^{\pi^k}_{h+1}) } \sqrt{2 H S A \logterm} + 2 \numagents H^2 S A
        \\
        & \le
        \sqrt{2 H S A \logterm} \sqrt{\sum_{v=1}^\numagents \sum_{k=1}^K \sum_{h=1}^H \bbE \left[ \VAR_{p_h(\cdot \mid s^{k,v}_h,a^{k,v}_h)}(V^{\pi^k}_{h+1}) \mid \filt^k \right] + 4 H^3 \logterm } + 2 \numagents H^2 S A
        \\
        & \le
       \sqrt{2 H S A \logterm} \sqrt{\sum_{v=1}^\numagents \sum_{k=1}^K \sum_{h=1}^H \bbE \left[ \VAR_{p_h(\cdot \mid s^{k,v}_h,a^{k,v}_h)}(V^{\pi^k}_{h+1}) \mid \filt^k \right] } + 5 \numagents H^2 S A \logterm
        \\
        & \overset{(*)}{=}
        \sqrt{2 H S A \logterm} \sqrt{\sum_{v=1}^\numagents \sum_{k=1}^K \bbE \left[ \left( V^{\pi^k}_1(s^{k,v}_1) - \sum_{h=1}^H c_h(s^{k,v}_h,a^{k,v}_h) \right)^2  \mid \filt^k  \right] } + 5 \numagents H^2 S A \logterm
        \\
        & \le
        \sqrt{2 H S A \logterm} \sqrt{\sum_{v=1}^\numagents \sum_{k=1}^K H^2} + 5 \numagents H^2 S A \logterm
       \le
        \sqrt{2  \numagents H^3 S A K \logterm} + 5 \numagents H^2 S A \logterm.
    \end{align*}
    where the first inequality is similar to \cref{eq:bound-harmonic-sum}, the third inequality is by \cref{lem:sum-1/n}, the forth is by event $E^{\VAR}$, and $(*)$ is by the law of total variance \citep[Lemma B.14]{cohen2021minimax}.
\end{proof}

\subsection{Auxiliary lemmas}

\begin{lemma}
    \label{lem:sum-1/n}
    It holds that
    \[
        \sum_{h=1}^H \sum_{s \in \calS} \sum_{a \in \calA} \sum_{k=1}^K \indevent{n^{k}_h(s,a) \ge \numagents} \frac{\sum_{v=1}^\numagents \indevent{s^{k,v}_h=s,a^{k,v}_h=a}}{n^{k}_h(s,a)\vee 1}
        \le
        2 H S A \log (K \numagents).
    \]
\end{lemma}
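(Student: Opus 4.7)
The plan is to view the numerator as a one-step increment of $n^k_h(s,a)$ and then apply a standard integral-style bound on the resulting telescoping sum, in the spirit of the classical $\sum_k \Delta N_k / N_k \lesssim \log N$ estimate.

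First, fix $(h,s,a)$ and observe that in the fresh-randomness model,
\[
    n^{k+1}_h(s,a) - n^k_h(s,a)
    =
    \sum_{v=1}^{\numagents} \indevent{s^{k,v}_h = s,\, a^{k,v}_h = a}
    \le \numagents,
\]
since each agent contributes at most one sample per episode. Hence, whenever $n^k_h(s,a) \ge \numagents$, we have the crucial doubling-type bound $n^{k+1}_h(s,a) \le 2\, n^k_h(s,a)$. The summand in the lemma therefore satisfies
\[
    \indevent{n^k_h(s,a) \ge \numagents}\, \frac{n^{k+1}_h(s,a) - n^k_h(s,a)}{n^k_h(s,a) \vee 1}
    \le
    \indevent{n^k_h(s,a) \ge \numagents}\left(\frac{n^{k+1}_h(s,a)}{n^k_h(s,a)} - 1 \right).
\]

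Second, I would use the elementary inequality $x - 1 \le 2 \log x$ valid for all $x \in [1,2]$ (easily checked by comparing derivatives at the endpoints), applied to $x = n^{k+1}_h(s,a)/n^k_h(s,a) \in [1,2]$ under the indicator. This yields
\[
    \indevent{n^k_h(s,a) \ge \numagents}\left(\frac{n^{k+1}_h(s,a)}{n^k_h(s,a)} - 1 \right)
    \le
    2 \log \frac{n^{k+1}_h(s,a)}{n^k_h(s,a)},
\]
whenever the indicator is active. Summing over $k=1,\ldots,K$ causes the logarithms to telescope:
\[
    \sum_{k=1}^K \indevent{n^k_h(s,a) \ge \numagents} \frac{n^{k+1}_h(s,a) - n^k_h(s,a)}{n^k_h(s,a) \vee 1}
    \le
    2 \log \frac{n^{K+1}_h(s,a)}{\numagents}
    \le 2 \log K,
\]
since $n^{K+1}_h(s,a) \le \numagents K$ and the first episode $k$ with $n^k_h(s,a) \ge \numagents$ gives a denominator $\ge \numagents$ in the telescoping start point.

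Finally, summing over the $HSA$ triples $(h,s,a)$ gives $2 H S A \log K \le 2 H S A \log (K \numagents)$, as required. I do not anticipate any genuine obstacle here: the only technical point is verifying that the increment $n^{k+1}_h(s,a) - n^k_h(s,a) \le \numagents$ is no larger than $n^k_h(s,a)$ under the indicator, which is precisely why the event $\{n^k_h(s,a) \ge \numagents\}$ was introduced, and correctly handling the telescoping starting index.
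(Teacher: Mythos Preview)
Your proof is correct and takes essentially the same approach as the paper: the paper simply invokes \citet[Lemma B.18]{rosenberg2020near}, which is precisely the telescoping $\sum_k \Delta N_k / N_k \lesssim \log N$ estimate you wrote out by hand. Your explicit use of $x-1 \le 2\log x$ on $[1,2]$ under the doubling condition $n^{k+1}_h(s,a) \le 2 n^k_h(s,a)$ is exactly the content of that cited lemma, and your telescoping bound $2\log(n^{K+1}_h(s,a)/\numagents) \le 2\log K \le 2\log(K\numagents)$ matches the paper's conclusion.
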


\begin{proof}
    By \citet[Lemma B.18]{rosenberg2020near}, we have that
    \begin{align*}
        \sum_{h=1}^H \sum_{s \in \calS} \sum_{a \in \calA} \sum_{k=1}^K \indevent{n^{k}_h(s,a) \ge \numagents} \frac{\sum_{v=1}^\numagents \indevent{s^{k,v}_h=s,a^{k,v}_h=a}}{n^{k}_h(s,a)\vee 1}
        & \le
        \sum_{h=1}^H \sum_{s \in \calS} \sum_{a \in \calA} 2 \log \left( \sum_{k=1}^K \sum_{v=1}^\numagents \indevent{s^{k,v}_h=s,a^{k,v}_h=a} \right)
        \\
        & \le
        2 H S A \log (K \numagents).
        \qedhere
    \end{align*}
\end{proof}

\newpage

\section{The \texttt{coop-ULCAE} algorithm for stochastic MDPs with non-fresh randomness}
\label{appendix:non-fresh stochastic}

\begin{algorithm}[t]
    \caption{\textsc{Cooperative Upper Lower Confidence action elimination (coop-ULCAE)}} 
    \label{alg:coop-ulcae}
    \begin{algorithmic}[1]
        \STATE {\bf input:} state space $\calS$, action space $\calA$, horizon $H$, confidence parameter $\delta$, number of episodes $K$, number of agents $\numagents$, exploration parameter $\epsilon > 0$.

        \STATE {\bf initialize:} $n^1_h(s,a)=0,n^1_h(s,a,s')=0,C^1_h(s,a)=0,\calA^0_h(s) = \calA \  \forall (s,a,s',h) \in \calS \times \calA \times \calS \times [H]$.

        \FOR{$k=1,\dots,K$}
                
            \STATE set $\hat p^{k}_h(s' \mid s,a) \gets \frac{n^{k}_h(s,a,s')}{n^{k}_h(s,a)\vee 1} , \hat c^{k}_h(s,a) \gets \frac{C^{k}_h(s,a)}{n^{k}_h(s,a)\vee 1} \ \forall (s,a,s',h) \in \calS \times \calA \times \calS \times [H]$.
            
            \STATE compute $\{ \underl{\pi}^k_h(s) \}_{s,h}$ via \textsc{Optimistic-Pessimistic Value Iteration} (\cref{alg: optimistic pessimistic value iteration}).
            
            \STATE set $\calA^k_h(s) \gets \calA^{k-1}_h(s)$ for every $s,h$.
            
            \STATE remove sub-optimal actions for every $s,h$: if $\exists a,a' \in \calA^k_h(s)$ s.t. $\underl{Q}_{h}^{k}(s,a)>\overl{Q}_{h}^{k}(s,a')$, then $\calA^k_h(s) \gets \calA^k_h(s)\backslash\{a\}$.
            
            \STATE set $I^{k}_h(s,a,s')=0,I^{k}_h(s,a)=0,IC^{k}_h(s,a)=0 \  \forall (s,a,s',h) \in \calS \times \calA \times \calS \times [H]$.
    
            \FOR{$v=1,\dots,\numagents$}
            
                \STATE sample $h_v \in [H]$ uniformly at random.
                
                \STATE set $
                    \pi^{k,v}
                    =
                    \begin{cases}
                        \underl{\pi}^{k}  & \text{with probability } 1 - \epsilon
                        \\
                        \pi^{k,h_v}        & \text{with probability } \epsilon
                \end{cases}
                $, where:
                $
                    \pi^{k,h'}_h(a \mid s)
                    =
                    \begin{cases}
                        \underl{\pi}_{h}^{k}(a \mid s)  & h \ne h'
                        \\
                        \frac{1}{\calA^k_h(s)}        & h = h'.
                \end{cases}
                $
            
                \STATE observe initial state $s^{k,v}_1$.
            
                \FOR{$h=1,\dots,H$}
        
                    \STATE pick action $a^{k,v}_h \sim \pi^{k,v}_h(\cdot \mid s^{k,v}_h)$, suffer cost $C^{k,v}_h$ and observe next state $s^{k,v}_{h+1}$.
                    
                    \STATE update $I^{k}_h(s^{k,v}_h,a^{k,v}_h)\gets 1 , I^{k}_h(s^{k,v}_h,a^{k,v}_h,s^{k,v}_{h+1})\gets 1 , IC^{k}_h(s^{k,v}_h,a^{k,v}_h) \gets C^{k,v}_h$.
        
                \ENDFOR
            \ENDFOR
            
            \STATE set $n^{k+1}_h(s,a) \gets n^{k}_h(s,a) + I^{k}_h(s,a),n^{k+1}_h(s,a,s') \gets n^{k}_h(s,a,s') + I^{k}_h(s,a,s') \ \forall (s,a,s',h)$.
            
            \STATE set $C^{k+1}_h(s,a) \gets C^{k}_h(s,a) + IC^{k}_h(s,a) \ \forall (s,a,h)$.
        \ENDFOR
    \end{algorithmic}
\end{algorithm}

For the setting of stochastic MDPs with non-fresh randomness we propose the Cooperative Upper Lower Confidence Action Elimination algorithm (\verb|coop-ULCAE|; see \cref{alg:coop-ulcae}).
Recall that if all the agents play the optimistic policy (like \verb|coop-ULCVI|), the regret will not improve since the randomness is non-fresh.
Thus, we want each agent to diverge from the trajectory of the optimistic policy at some point.
To that end, at some step each agent takes a random action.
At the other steps it follows the optimistic policy to make sure that its regret does not increase.
Finally, since all actions have probability to be explored, we eliminate sub-optimal actions to avoid unnecessary over exploration.

\begin{theorem}
    \label{thm:reg-coop-ulcae}
    With probability $1 - \delta$, setting $\epsilon = \min \{ \frac{H A}{\numagents} , \frac{1}{\sqrt{\numagents}} \}$, the individual regret of each agent of \verb|coop-ULCAE| is
    \begin{align*}
    \regret = 
        O\Biggl(
        	\sqrt{H^{5}SK} \log\frac{\numagents HSAK}{\delta} 
        	 + & \sqrt{\frac{H^{7}SAK}{\sqrt{m}}} \log\frac{\numagents HSAK}{\delta} 
        	+ \sqrt{\frac{H^{8}SAK}{\numagents}}\log\frac{\numagents HSAK}{\delta} 
        	\\
        	& + H^{5}S^{2}A \log^2\frac{\numagents HSAK}{\delta} 
        	+ \frac{H^{6}S^{2}A^{2}}{\sqrt{\numagents}} \log^2 \frac{\numagents HSAK}{\delta}
    	\Biggr).
    \end{align*}
\end{theorem}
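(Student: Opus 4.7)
The plan is to build on the proof sketch already given for Theorem~\ref{thm-paper:reg-coop-ulcae}, following the template of the \texttt{coop-ULCVI} proof (Appendix~\ref{appendix:fresh-stochastic}) and supplying two new ingredients: a lower bound on $W_h^k(s,a)$ that exploits the cooperative exploration of the active set, and a recursive handling of the exploration episodes so that the single-step deviation does not destroy the telescoping. I would first define a good event by combining the ULCVI concentration events of Lemma~\ref{lem:ULCVI-good-event} (now instantiated with the non-fresh counter $n_h^k(s,a) = \sum_{j<k} \indevent{\exists v:\ s^{j,v}_h = s,\ a^{j,v}_h = a}$) with an additional Freedman-type concentration bound
\begin{equation*}
    n_h^k(s,a) \gtrsim \sum_{j=1}^{k-1} W_h^j(s,a) - \logterm.
\end{equation*}
On this event, $\underl{Q}^k \le Q^\star \le \overl{Q}^k$ holds as in Lemma~\ref{lemma: optimism ucbvi-UL}, and because $\overl{Q}^k_h(s,\pi^\star_h(s)) \ge Q^\star_h(s,\pi^\star_h(s)) \ge \min_a \underl{Q}^k_h(s,a)$, the elimination rule never removes $\pi^\star_h(s)$ from $\calA^k_h(s)$; every active action is therefore ``safe'' to explore.

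Next, I decompose the regret of agent $v$ as $R_{\texttt{OP}}^v + R_{\texttt{EXP}}^v$. For $R_{\texttt{OP}}^v$, the Bellman telescoping of Lemma~\ref{lemma: key recursion bound UL} reduces the regret along the optimistic trajectory to a sum of confidence radii, which after using the good event scales (up to $\mathrm{poly}(H)$ factors absorbed via the law of total variance, as in the \texttt{coop-ULCVI} analysis) like $\sum_{k,h,s,a} q_h^{\underl{\pi}^k}(s,a)/\sqrt{n_h^k(s,a)\vee 1}$. The novel step is to lower bound $W_h^k(s,a)$: for every $a \in \calA^k_h(s)$, the probability that at least one agent plays $\pi^{k,h}$ and selects $a$ at step $h$ is at least $1 - (1-\epsilon/(HA))^m \gtrsim \min\{1,\, m\epsilon/(HA)\}$, and conditional on that, that agent reaches $s$ at time $h$ with probability exactly $q_h^{\underl{\pi}^k}(s)$ because $\pi^{k,h}$ and $\underl{\pi}^k$ coincide on steps $1,\dots,h-1$. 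Combining this with the bound on $n_h^k$, Cauchy--Schwarz, and the log-telescoping $\sum_k q_h^{\underl{\pi}^k}(s) / \sum_{j<k} q_h^{\underl{\pi}^j}(s) \lesssim \log K$ from Lemma~B.18 of \citet{rosenberg2020near}, this yields $R_{\texttt{OP}}^v \lesssim \sqrt{\mathrm{poly}(H)\cdot SAK / (m\epsilon)} + \sqrt{\mathrm{poly}(H)\cdot SK}$, where the second term arises from the variance-reduction step and is independent of $A$.

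For $R_{\texttt{EXP}}^v$ I would run the same Bellman telescoping along the trajectory of $\pi^{k,h_v}$; the deviation at the single step $h_v$ contributes an additional penalty equal to $\overl{Q}^k_{h_v}(s,a) - \underl{Q}^k_{h_v}(s,a)$ for the chosen active action, which is bounded by the optimistic/pessimistic gap and therefore recursively controlled by $R_{\texttt{OP}}^v$. The residual confidence-radius sum along $\pi^{k,h_v}$ cannot benefit from the cooperative lower bound on $W_h^k$, because after the deviation the agent visits states that the other agents typically do not; instead I bound $n_h^k(s,a) \gtrsim \sum_{j \in \calK_{\texttt{EXP}}^v,\ j<k} q_h^{\pi^{j,h_v}}(s,a)$ and use $|\calK_{\texttt{EXP}}^v| \lesssim \epsilon K$ (by a Chernoff bound) to get $R_{\texttt{EXP}}^v \lesssim \sqrt{\mathrm{poly}(H)\cdot SAK\epsilon} + R_{\texttt{OP}}^v$. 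Summing the two bounds and optimizing $\epsilon = \min\{HA/m,\, 1/\sqrt{m}\}$ produces the $\sqrt{H^7 SAK/\sqrt{m}}$ and $\sqrt{H^8 SAK/m}$ terms in the theorem, while the $A$-free term $\sqrt{H^5 SK}$ is inherited from the variance step of $R_{\texttt{OP}}^v$.

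The main obstacle is the cooperative lower bound on $W_h^k(s,a)$, which must hold simultaneously for all $(s,a,h,k)$ with $a$ active even as $\calA^k_h(s)$ shrinks and $m\epsilon/(HA)$ ranges from $o(1)$ to $\Theta(1)$; the linearization $1-(1-x)^m \gtrsim \min\{1, mx\}$ must be used carefully so that it can be invoked uniformly across regimes. A secondary difficulty is that the random step $h_v$ and random active action create correlations across agents through the non-fresh randomness, so the Freedman-type concentration for $n_h^k$ and for $|\calK_{\texttt{EXP}}^v|$ must be applied at the episode level (to the Bernoulli $\indevent{\exists v:\ s^{k,v}_h = s,\ a^{k,v}_h = a}$) rather than naively to per-agent samples; this mismatch between ``cooperation count'' and ``total samples'' is precisely what forces the sub-optimal $1/\sqrt[4]{m}$ scaling in the final bound.
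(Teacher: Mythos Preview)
Your proposal is essentially the same approach as the paper's proof: the same regret decomposition into optimistic and exploration episodes, the same cooperative lower bound $n_h^k(s,a)\gtrsim \frac{m\epsilon}{HA}\sum_{j<k}q_h^{\underl{\pi}^j}(s)$ for active actions (the paper establishes this directly as event $E^{n1}$ rather than going through a generic $W_h^k$ concentration), the same per-agent lower bound $n_h^k(s,a)\gtrsim \sum_{j<k}q_h^{\pi^{j,v}}(s,a)$ for the exploration residual (event $E^{n2}$), and the same recursive handling of the single-step deviation via the active-arm inequality $\overl{Q}_h^k(s,a)-\underl{Q}_h^k(s,a')\le (\overl{Q}-\underl{Q})(s,a)+(\overl{Q}-\underl{Q})(s,a')$.

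One point to correct: the $A$-free term $\sqrt{H^5SK}$ does \emph{not} arise as a separate contribution from the variance-reduction step of $R_{\texttt{OP}}^v$. In the paper's analysis $R_{\texttt{OP}}^v$ contributes only $\sqrt{H^6SAK/(m\epsilon)}$ (plus lower-order terms), and the $\sqrt{H^5SK}$ term appears only after substituting $\epsilon=HA/m$ (the regime $m\ge H^2A^2$) into this expression; in the other regime $\epsilon=1/\sqrt{m}$ the same expression becomes $\sqrt{H^6SAK/\sqrt{m}}$, which is dominated by the $\sqrt{H^7SAK/\sqrt{m}}$ term coming from $\sqrt{H^7SAK\epsilon}$ in $R_{\texttt{EXP}}^v$. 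So both leading terms in the theorem come from the trade-off $\sqrt{SAK/(m\epsilon)}+\sqrt{SAK\epsilon}$ across the two regimes of $\epsilon$, not from two separate mechanisms.
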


\subsection{The good event, optimism and pessimism}

Define the following events (for $\logterm = 3 \log \frac{6 S A H K \numagents}{\delta}$): 
\begin{align*}
    E^c(k) 
    & = 
    \left\{ \forall (s,a,h):\ |\hat{c}^{k}_h(s,a) -c_h(s,a)|  \leq \sqrt{ \frac{2 \logterm }{n^{k}_h(s,a)\vee 1}} \right\}
    \\
    E^p(k) 
    & = 
    \left\{ \forall (s,a,s',h):\ |p_h (s'|s,a) - \hat{p}^{k}_h (s'|s,a)| \le \sqrt{\frac{2 p_h(s'|s,a)\logterm}{n^{k}_h(s,a)\vee 1}} + \frac{2 \logterm}{n^{k}_h(s,a) \vee 1} \right\} 
    \\
    E^{pv1}(k)
    & =
    \left\{ \forall (s,a,h):\ | \left((\hat{p}^{k}_h(\cdot | s,a)-p_h(\cdot | s,a) \right) \cdot V_{h+1}^\star | \leq \sqrt{\frac{2\VAR_{p_h(\cdot \mid  s,a)}(V^\star_{h+1}) \logterm }{n^{k}_h(s,a)\vee 1}} + \frac{5 H \logterm }{n^{k}_h(s,a)\vee 1} \right\}
    \\
    E^{pv2}(k)
    & =
    \left\{ \forall (s,a,h):\ | \sqrt{\VAR_{p_h(\cdot \mid  s,a )}(V_{h+1}^\star)} -  \sqrt{\VAR_{\hat{p}_h^{k}(\cdot \mid  s,a )}(V_{h+1}^\star)} | \leq \sqrt{\frac{12 H^2 \logterm}{n^{k}_h(s,a)\vee 1}} \right\}
\end{align*}

The basic good event, which is the intersection of the above events, is the one used in \citet{efroni2021confidence}. 
The following lemma establishes that the good event holds with high probability. 
The proof is supplied in \citet[Lemma 13]{efroni2021confidence} by applying standard concentration results.

\begin{lemma}[The First Good Event]
    \label{lemma: the first good event RL UL-non-fresh}
    Let $\bbG_1 =\cap_{k = 1}^K E^c(k) \cap_{k = 1}^K E^p(k) \cap_{k = 1}^K E^{pv1}(k) \cap_{k = 1}^K E^{pv2}(k)$ be the basic good event. 
    It holds that $\Pr(\bbG_1)\geq 1-\delta/2$.
\end{lemma}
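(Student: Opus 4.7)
The plan is to derive each of the four events in turn by applying standard concentration inequalities to the data-generating process, and then to union-bound over $k \in [K]$, $(s,a,h) \in \calS \times \calA \times [H]$, and the four events. The subtlety to handle at the outset is that under non-fresh randomness several agents in the same episode can visit the same $(s,a,h)$, but the algorithm records only one sample per such episode (by construction of the indicator-based update rule, which sets $I^{k}_h(s,a)$ to $1$ rather than incrementing by the number of agents). Hence the counter $n^k_h(s,a)$ really does count distinct i.i.d.\ samples of the cost $c_h(s,a)$ and of the next state from $p_h(\cdot\mid s,a)$. Formally, conditionally on $\filt^k$, whether $(s,a,h)$ is visited in episode $k$ is determined by the agents' policies and the outcomes of previous episodes, but the fresh realization of $C^k_h(s,a)$ and $S^k_h(s,a)$ is drawn independently of $\filt^k$, so we have a martingale difference structure.

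With this observation, each event is standard. For $E^c(k)$ I would apply Hoeffding's inequality (or a time-uniform Azuma-type bound) to the bounded sequence $\{C^{j}_h(s,a) - c_h(s,a)\}_{j \le k}$ restricted to episodes in which the counter increments. For $E^p(k)$ I would use empirical Bernstein applied coordinate-wise to the indicator $\ind\{s' \text{ sampled}\}$, yielding the $\sqrt{p_h(s'\mid s,a)\logterm/n}$ plus $\logterm/n$ form. For $E^{pv1}(k)$, since $V^\star_{h+1}$ is a fixed deterministic function of the (unknown) MDP and does not depend on the data, a Bernstein inequality for bounded martingale differences applied to $(\hat p_h^k - p_h)(\cdot\mid s,a) \cdot V^\star_{h+1}$ yields exactly the stated form. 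For $E^{pv2}(k)$ I would invoke the standard concentration of empirical variance around true variance for a $[0,H]$-bounded function (e.g.\ as in Lemma 6 of Azar et al.\ 2017, or Lemma 12 of Efroni et al.\ 2021), which gives the $\sqrt{H^2 \logterm/n}$ bound on the difference of square roots.

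Finally, a union bound over the four events, over all $(s,a,s',h)$, and over all $k \in [K]$, with each individual failure probability set to $\delta/(C \cdot S^2AHK)$ for a suitable constant $C$, produces the total failure probability $\delta/2$. The choice of $\logterm = 3\log\tfrac{6SAHK\numagents}{\delta}$ already absorbs these logarithmic factors (with room for an extra $\log \numagents$ that is not strictly needed here but is convenient for later events referencing the agent index). The main obstacle, and the only place non-fresh randomness matters, is the opening bookkeeping step of verifying that the increments to $n^k_h$, $\hat c^k_h$, and $\hat p^k_h$ correspond to genuinely fresh i.i.d.\ samples; once this is settled the proof is a verbatim transcription of the single-agent argument of Efroni et al.\ 2021, Lemma 13, which the excerpt itself cites.
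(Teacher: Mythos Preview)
Your proposal is correct and matches the paper's approach: the paper itself does not give a self-contained proof but simply cites \citet[Lemma 13]{efroni2021confidence} and says the result follows from standard concentration bounds. Your writeup is in fact more detailed than the paper's, and your opening observation---that the indicator-based update $I^k_h(s,a)\gets 1$ ensures $n^k_h(s,a)$ counts one fresh draw of $(C^k_h(s,a),S^k_h(s,a))$ per episode regardless of how many agents visit $(s,a,h)$---is exactly the bookkeeping needed to reduce to the single-agent argument.
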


Under the first good event, we can prove that the value is optimistic using standard techniques (similar to \citet[Lemma 14]{efroni2021confidence}).

\begin{lemma}[Upper Value Function is Pessimistic, Lower Value Function is Optimistic] 
    \label{lemma: optimism ucbvi-non-fresh}
    Conditioned on the first good event $\bbG_1$, it holds that $\underl{V}^k_h(s) \leq  V^\star_h(s) \leq V^{\underl{\pi}^k}_h(s) \leq \overl{V}^k_h(s)$ and that $\underl{Q}^k_h(s,a) \leq  Q^\star_h(s,a) \leq Q^{\underl{\pi}^k}_h(s,a) \leq \overl{Q}^k_h(s,a)$ for every $k=1,\dots,K$, $s \in \calS$, $a \in \calA$ and $h=1,\dots,H$.
    Moreover, $\pi^\star_h(s) \in \calA^k_h(s)$ for every $k=1,\dots,K$, $s \in \calS$ and $h=1,\dots,H$.
\end{lemma}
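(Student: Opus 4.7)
The plan is to prove the lemma by backward induction on $h$, with the base case $h=H+1$ trivial since $\underl V^k_{H+1}=V^\star_{H+1}=V^{\underl \pi^k}_{H+1}=\overl V^k_{H+1}=0$. For the inductive step, I would assume all four inequalities hold at time $h+1$ (pointwise in $s$) and derive them at time $h$. The overall structure mirrors the single-agent analysis of \citet{efroni2021confidence}, with the key observation that since the empirical transition/cost estimates are built from pooled samples of all $m$ agents, the \emph{same} concentration event $\bbG_1$ (just with a $\numagents$-enlarged log factor via $\logterm$) gives exactly the same Bellman-error inequalities used in the single-agent case, so the argument transfers directly.

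First I would establish the optimistic bound $\underl Q^k_h(s,a)\le Q^\star_h(s,a)$. Writing $Q^\star_h(s,a)-\underl Q^k_h(s,a) = (c_h-\hat c^k_h)(s,a) + b^k_h(s,a) + (p_h-\hat p^k_h)(\cdot\mid s,a)\cdot V^\star_{h+1} + \bbE_{\hat p^k_h(\cdot\mid s,a)}[V^\star_{h+1}-\underl V^k_{h+1}]$, the last summand is nonnegative by the inductive hypothesis, and the first two are controlled by $E^c(k)$ and the cost part $b^k_h(s,a;c)$ of the bonus. The transition-term $(p_h-\hat p^k_h)\cdot V^\star_{h+1}$ is bounded via $E^{pv1}$ by $\sqrt{2\VAR_{p_h(\cdot\mid s,a)}(V^\star_{h+1})\logterm/(n^k_h(s,a)\vee 1)}+5H\logterm/(n^k_h(s,a)\vee 1)$; then $E^{pv2}$ lets me replace the variance under $p_h$ by the variance under $\hat p^k_h$ up to an additive $\sqrt{12H^2\logterm/(n^k_h(s,a)\vee 1)}$. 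Finally the variance $\VAR_{\hat p^k_h}(V^\star_{h+1})$ is related to $\VAR_{\hat p^k_h}(\underl V^k_{h+1})$ by the triangle inequality $\sqrt{\VAR_{\hat p}(X)}\le \sqrt{\VAR_{\hat p}(Y)}+\sqrt{\bbE_{\hat p}(X-Y)^2}$ applied to $X=V^\star_{h+1}$, $Y=\underl V^k_{h+1}$, and the residual $\bbE_{\hat p^k_h}[(V^\star_{h+1}-\underl V^k_{h+1})^2]\le H\cdot \bbE_{\hat p^k_h}[\overl V^k_{h+1}-\underl V^k_{h+1}]$ (using the inductive hypothesis to sandwich $V^\star$ between $\underl V$ and $\overl V$, and $V^\star\le H$). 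This last piece is precisely absorbed by the $\tfrac{1}{16H}\bbE_{\hat p^k_h}[\overl V^k_{h+1}-\underl V^k_{h+1}]$ term in $b^k_h(s,a;p)$, while the $44H^2 S\logterm/(n^k_h(s,a)\vee 1)$ term in the bonus absorbs the lower-order $5H\logterm/n$ and $\sqrt{12H^2\logterm/n}$ slack (together with the standard $\hat p$-vs-$p$ slack from $E^p$).

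The pessimistic bound $Q^{\underl\pi^k}_h(s,a)\le \overl Q^k_h(s,a)$ is proved symmetrically, writing $\overl Q^k_h(s,a)-Q^{\underl\pi^k}_h(s,a)=(\hat c^k_h-c_h)(s,a)+b^k_h(s,a)+(\hat p^k_h-p_h)(\cdot\mid s,a)\cdot V^{\underl\pi^k}_{h+1}+\bbE_{\hat p^k_h(\cdot\mid s,a)}[\overl V^k_{h+1}-V^{\underl\pi^k}_{h+1}]$; the inductive hypothesis makes the last summand nonnegative, and the transition slack on $V^{\underl\pi^k}_{h+1}$ is controlled by the same argument as above (one uses $V^{\underl\pi^k}_{h+1}\le \overl V^k_{h+1}$ from the inductive hypothesis to reduce its variance to the $\underl V^k_{h+1}$ variance present in the bonus). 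The $Q$-to-$V$ passage is then immediate: $\underl V^k_h(s)=\max\{\underl Q^k_h(s,\underl\pi^k_h(s)),0\}\le \max\{Q^\star_h(s,\underl\pi^k_h(s)),0\}\le V^\star_h(s)$ since $\underl\pi^k_h(s)=\argmin_a \underl Q^k_h(s,a)$ only decreases the $Q$-value relative to an arbitrary action, and similarly $V^{\underl\pi^k}_h(s)=Q^{\underl\pi^k}_h(s,\underl\pi^k_h(s))\le \overl Q^k_h(s,\underl\pi^k_h(s))$, which is $\le H$ hence equals $\overl V^k_h(s)$ after clipping.

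For the action-elimination claim $\pi^\star_h(s)\in\calA^k_h(s)$, I would argue by induction on $k$. Initially $\calA^0_h(s)=\calA$, so it holds. For the inductive step, suppose $\pi^\star_h(s)\in\calA^{k-1}_h(s)$; an action $a$ is removed from $\calA^k_h(s)$ only if there exists $a'\in\calA^k_h(s)$ with $\underl Q^k_h(s,a)>\overl Q^k_h(s,a')$. If one sets $a=\pi^\star_h(s)$, then by the already-proved bracketing $\underl Q^k_h(s,\pi^\star_h(s))\le Q^\star_h(s,\pi^\star_h(s))=V^\star_h(s)\le Q^\star_h(s,a')\le \overl Q^k_h(s,a')$ for every action $a'$, which contradicts the elimination condition; hence $\pi^\star_h(s)$ survives. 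The main obstacle throughout is the usual variance-swap bookkeeping that ensures every error term generated along the way is dominated by one of the three summands of $b^k_h(s,a;p)$; I expect this to be fully routine given the carefully chosen bonus, but it is the one place where the constants ($44H^2S$, $\tfrac1{16H}$) are actually used.
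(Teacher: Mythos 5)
Your proposal takes the route the paper intends: the paper gives no explicit proof of this lemma and instead defers the bracketing $\underl{Q}^k_h \le Q^\star_h \le Q^{\underl{\pi}^k}_h \le \overl{Q}^k_h$ to the standard backward-induction argument of \citet[Lemma 14]{efroni2021confidence}, with exactly the bookkeeping you describe --- $E^c(k)$ for the cost term, $E^{pv1}$ and $E^{pv2}$ for the $(\hat{p}^k_h - p_h)\cdot V^\star_{h+1}$ term, the standard-deviation triangle inequality to swap $\VAR_{\hat p^k_h}(V^\star_{h+1})$ for $\VAR_{\hat p^k_h}(\underl{V}^k_{h+1})$, and the $\tfrac{1}{16H}\bbE_{\hat p^k_h}[\overl{V}^k_{h+1}-\underl{V}^k_{h+1}]$ and $44H^2S\logterm/(n^k_h(s,a)\vee 1)$ bonus summands absorbing the residuals. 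Your induction-on-$k$ argument for $\pi^\star_h(s)\in\calA^k_h(s)$ is likewise the intended (implicit) one, since $\underl{Q}^k_h(s,\pi^\star_h(s))\le V^\star_h(s)\le Q^\star_h(s,a')\le\overl{Q}^k_h(s,a')$ for every $a'$ rules out elimination of the optimal action. One local slip in your $Q$-to-$V$ passage: the chain $\underl{V}^k_h(s)\le\max\{Q^\star_h(s,\underl{\pi}^k_h(s)),0\}\le V^\star_h(s)$ fails at its second step, because under cost minimization $V^\star_h(s)=\min_a Q^\star_h(s,a)\le Q^\star_h(s,\underl{\pi}^k_h(s))$, i.e.\ the inequality points the other way. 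The fix is the one your parenthetical already gestures at: use the argmin property directly, $\underl{Q}^k_h(s,\underl{\pi}^k_h(s))=\min_a\underl{Q}^k_h(s,a)\le\underl{Q}^k_h(s,\pi^\star_h(s))\le Q^\star_h(s,\pi^\star_h(s))=V^\star_h(s)$, after which the argument goes through unchanged. Similarly, for the upper bound the clean statement is $V^{\underl{\pi}^k}_h(s)\le\min\{\overl{Q}^k_h(s,\underl{\pi}^k_h(s)),H\}=\overl{V}^k_h(s)$, using $V^{\underl{\pi}^k}_h(s)\le H$ rather than your claim that the quantity $\overl{Q}^k_h$ itself is at most $H$. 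With these two one-line corrections the proof is complete and matches the paper's (referenced) argument.
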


Finally, we define the following events that are more specific to our algorithmic action elimination framework:
\begin{align*}
    E^{n1} &= 
     \left\{\forall (k,h,s)\in[K]\times[H]\times\mathcal{S} \ \forall a\in \mathcal{A}_h^k(s):
            n_{h}^{k}(s,a) \ge \frac{m\epsilon}{4HA}\sum_{j=1}^{k-1}q_{h}^{\underl{\pi}^{j}}(s) - \log\frac{6HSA}{\delta} 
        \right\}
    \\
    E^{n2} &= 
     \left\{ \forall (k,h,s,a,v)\in[K]\times[H]\times\mathcal{S}\times\mathcal{A}\times [m]:
            n_{h}^{k}(s,a) \ge \frac{1}{2}\sum_{j=1}^{k-1}q_{h}^{\pi^{j,v}}(s,a) - \log\frac{6mHSA}{\delta} 
        \right\}
    \\
    E^{\epsilon} & = 
     \left\{ \forall (h',v) \in  [H] \times [\numagents]:
            \sum_{k=1}^{K}\mathbb{I} \{\pi^{k,v}=\pi^{k,h'}\} 
            \le
            \frac{\epsilon}{H}K + \sqrt{K\log\frac{6\numagents H}{\delta}}
        \right\}
\end{align*}

\begin{lemma}[The Second Good Event]
    \label{lemma:good-event-AE}
    Let $\bbG_2 =E^{n1} \cap E^{n2}\cap E^{\epsilon}$ be the second good event. 
    It holds that $\Pr(\bbG_2)\geq 1-\delta/2$.
\end{lemma}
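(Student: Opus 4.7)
The plan is to establish each of the three sub-events $E^\epsilon$, $E^{n2}$, $E^{n1}$ with probability at least $1-\delta/6$ and combine via a union bound. For $E^\epsilon$, fix $(h',v)$ and observe that the indicators $\{\indevent{\pi^{k,v}=\pi^{k,h'}}\}_{k\in[K]}$ are i.i.d.\ Bernoulli with mean $\epsilon/H$, since the algorithm independently flips the exploration coin and samples $h_v$ uniformly on $[H]$ in every episode. A Hoeffding bound plus a union bound over the $H\numagents$ pairs yields the claim. For $E^{n2}$, fix $(h,s,a,v)$ and consider the per-agent indicators $Y_j^v = \indevent{s^{j,v}_h = s,\, a^{j,v}_h = a}$, which are Bernoulli with $\bbE[Y_j^v\mid \filt^j] = q_h^{\pi^{j,v}}(s,a)$. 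A standard anytime multiplicative Chernoff / Freedman-type martingale inequality gives $\sum_{j<k} Y_j^v \ge \tfrac{1}{2}\sum_{j<k} q_h^{\pi^{j,v}}(s,a) - O(\log(6\numagents HSA/\delta))$ uniformly in $k$ with high probability. Since $n_h^k(s,a)\ge \sum_{j<k} Y_j^v$, the claim follows after a union bound over $(h,s,a,v)$.

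The interesting piece is $E^{n1}$, which requires lower-bounding the cooperative visit probability $\Pr[\exists v:\, s^{j,v}_h=s,\, a^{j,v}_h=a \mid \filt^j]$ for every active action $a\in\calA^k_h(s) \subseteq \calA^j_h(s)$ (the inclusion holds because action elimination is monotone in $j$). The combinatorial core of the argument is the following: let $N$ denote the number of agents that in episode $j$ choose the exploration policy $\pi^{j,h}$; then $N\sim \text{Bin}(\numagents,\epsilon/H)$ and is independent of the non-fresh transition seeds $S^j$. Conditional on $N$ and $S^j$, all $N$ of these agents follow the deterministic policy $\underl{\pi}^j$ through step $h-1$ and therefore \emph{share} their state at time $h$; that common state equals $s$ with probability exactly $q_h^{\underl{\pi}^j}(s)$ over $S^j$. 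Conditional on arrival at $s$, each of the $N$ exploring agents \emph{independently} picks a uniform action from $\calA^j_h(s)$, so the probability that none of them picks $a$ is $(1 - 1/|\calA^j_h(s)|)^N \le (1-1/A)^N$. Averaging over $N$ using the binomial MGF gives
\[
 \Pr[\exists v:\, s^{j,v}_h=s,\, a^{j,v}_h=a \mid \filt^j] \;\ge\; q_h^{\underl{\pi}^j}(s)\bigl(1-(1-\epsilon/(HA))^\numagents\bigr) \;\ge\; \frac{\numagents\epsilon}{2HA}\,q_h^{\underl{\pi}^j}(s),
\]
where the last inequality uses $1-(1-x)^m \ge mx/2$ for $mx \le 1$, valid since the algorithmic choice $\epsilon \le HA/\numagents$ forces $\numagents\epsilon/(HA)\le 1$.

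Plugging this conditional lower bound into the same Freedman-type martingale tool used for $E^{n2}$ (applied to the indicators $Z_j=\indevent{\exists v:\, s^{j,v}_h=s,\,a^{j,v}_h=a}$) yields, uniformly in $k$, the bound $n_h^k(s,a) \ge \tfrac{1}{2}\sum_{j<k}\Pr[\exists v\cdots \mid \filt^j] - O(\log(6HSA/\delta)) \ge \tfrac{\numagents\epsilon}{4HA}\sum_{j<k}q_h^{\underl{\pi}^j}(s) - O(\log(6HSA/\delta))$, and a union bound over $(h,s,a)$ closes the argument. The main obstacle is the combinatorial step inside $E^{n1}$: correctly decoupling the non-fresh randomness to recognize that exploring agents \emph{share} their trajectory up to the exploration step $h$ (so the relevant state probability is $q_h^{\underl{\pi}^j}(s)$, not $q_h^{\underl{\pi}^j}(s,a)$) but \emph{independently} randomize their action at $h$, which is exactly what lets the $\numagents$ agents collectively cover every active action and is the mechanism underlying the $\sqrt{1/\sqrt m}$ gain in \cref{thm-paper:reg-coop-ulcae}.
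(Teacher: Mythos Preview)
Your proposal is correct and takes essentially the same approach as the paper. The only cosmetic difference is that the paper implements the uniform action choice at the exploration step via a random permutation (taking the first active arm in $\sigma^{j,v}$, so that the sufficient event $\{\sigma^{j,v}(a)=1\}$ has probability exactly $1/A$), whereas you analyze the uniform distribution on $\calA^j_h(s)$ directly and then compute $\bbE_N[(1-1/A)^N]=(1-\epsilon/(HA))^m$ via the binomial MGF; both routes land on the identical lower bound $q_h^{\underl{\pi}^j}(s)\bigl(1-(1-\epsilon/(HA))^m\bigr)\ge \tfrac{m\epsilon}{2HA}\,q_h^{\underl{\pi}^j}(s)$ and then invoke the same anytime multiplicative concentration (the paper cites \citet[Lemma~F.4]{dann2017unifying}).
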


As a direct consequence, we get that the good event $\bbG$ which is the intersection of $\bbG_1$ and $\bbG_2$ holds with probability $1 - \delta$.

\begin{lemma}[The Good Event]
    \label{lem:good-event-stochastic-non-fresh}
    Let $\bbG_1$ be the first good event defined in \cref{lemma: the first good event RL UL-non-fresh}, and $\bbG_2$ be the second good event defined in \cref{lemma:good-event-AE}.
    Then, the good event $\bbG = \bbG_1 \cap \bbG_2$ holds with probability $1 - \delta$.
\end{lemma}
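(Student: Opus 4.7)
The statement is an immediate corollary of the two preceding lemmas, so my plan is essentially to invoke a union bound. Specifically, \cref{lemma: the first good event RL UL-non-fresh} states that the first good event $\bbG_1$, which captures the concentration of the empirical costs $\hat c^k$, transitions $\hat p^k$, the Bernstein-type concentration of $(\hat p^k - p) \cdot V^\star$, and the variance approximation, holds with probability at least $1 - \delta/2$. Similarly, \cref{lemma:good-event-AE} states that the algorithm-specific event $\bbG_2 = E^{n1} \cap E^{n2} \cap E^\epsilon$ (which controls the counters $n_h^k(s,a)$ via the induced exploration probability from the agents, and the number of exploration episodes per agent) holds with probability at least $1 - \delta/2$.

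Since $\bbG = \bbG_1 \cap \bbG_2$, a union bound gives
\[
    \Pr(\bbG^c) \le \Pr(\bbG_1^c) + \Pr(\bbG_2^c) \le \delta/2 + \delta/2 = \delta,
\]
and hence $\Pr(\bbG) \ge 1 - \delta$ as claimed. There is no real obstacle here; the work has already been front-loaded into establishing the two constituent events. The only implicit requirement is that the constants inside the logarithmic term $\logterm = 3 \log(6 S A H K \numagents / \delta)$ (used in $\bbG_1$) and the $\log(6 m H S A / \delta)$ and $\log(6 \numagents H / \delta)$ terms inside $\bbG_2$ are set so that each of the two good events separately holds with probability at least $1 - \delta/2$ after summing all the concentration failures. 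The proofs of \cref{lemma: the first good event RL UL-non-fresh} (via \citet[Lemma 13]{efroni2021confidence}) and of \cref{lemma:good-event-AE} already fold these union bounds over $(k,h,s,a,s',v)$ into the logarithmic constants, so no further accounting is needed at this step.
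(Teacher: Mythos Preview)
Your proposal is correct and matches the paper's approach exactly: the paper presents this lemma as a direct consequence of the two preceding lemmas, and your union bound argument is precisely what is needed (and slightly more explicit than what the paper writes).
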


\begin{proof}[Proof of \cref{lemma:good-event-AE}]
    We show that each of the events $\neg E^{n1}, \neg E^{n2}, \neg E^{\epsilon}$ occur with probability at most $\delta / 6$. Then, by a union bound we obtain the statement.
    
    \textbf{$\Pr[\neg E^{n1}] \le \delta/6$:}
    Without loss of generality, assume that in each episode, each agent uniformly randomizes a permutation over all actions, $\sigma^{k,v}$, and in case of exploration takes the first active arm in the permutation $\sigma^{k,v}$: $\arg\min_{a\in\mathcal{A}_{h}^{k}(s)}\sigma^{k,v}(a)$. 
    For any $a\in \mathcal{A}_h^k(s)$,
    \begin{align}
        \nonumber
        n_{h}^{k}(s,a) 
        & =
        \sum_{j=1}^{k-1} \mathbb{I}\{\exists v : s_{h}^{j,v}=s,a_{h}^{j,v}=a\}
        \\
        \nonumber
        & \geq 
        \sum_{j=1}^{k-1} \mathbb{I}\{\exists v : s_{h}^{j,v}=s,a_{h}^{j,v}=a,h_{v}=h,\pi^{j,v}=\pi^{j,h_{v}},\sigma^{j,v}(a) = 1\}
        \\
        \nonumber
        & \underset{(*)}{\geq}
        \sum_{j=1}^{k-1} \mathbb{I}\{\exists v : s_{h}^{j,v}=s,h_{v}=h,\pi^{j,v}=\pi^{j,h_{v}},\sigma_{s}^{j,v}(a)=1\}
        \\
        \label{eq:n_geq_optimmistic_visit}
        & \underset{(**)}{=}
        \sum_{j=1}^{k-1} \mathbb{I}\{s_{h}^{j,\underl{\pi}^{j}}=s\} \mathbb{I}\{\exists v : h_{v}=h,\pi^{j,v}=\pi^{j,h_{v}},\sigma_{s}^{j,v}(a)=1\},
    \end{align}
    For $(*)$, recall that $a\in \mathcal{A}_h^k(s)$, which implies that $a\in \mathcal{A}_h^j(s)$. Therefore, if $h_{v}=h,\pi^{j,v}=\pi^{j,h_{v}},\sigma^{j,v}(a) = 1$ (that is, the agent explores $h$, and $a$ is the first action in the permutation), then $a_h^k = a$. $(**)$ is because each agent that randomize $h_v = h$ follows the (deterministic) optimistic until horizon $h$.
    Since $h_v$, $\sigma^{j,v}$ and the event $\{\pi^{j,v}=\pi^{j,h_{v}}\}$ are randomized independently,
    \begin{align*}
        \mathbb{E} \biggl[ \mathbb{I}\{s_{h}^{j,\underl{\pi}^{j}}=s\} & \mathbb{I}\{\exists v : h_{v}=h,\pi^{j,v}=\pi^{j,h_{v}},\sigma_{s}^{j,v}(a)=1\} \mid \filt^{j} \biggr] =
        \\
        & = 
        q_h^{\underl{\pi}^j}(s) \Pr \left[ \exists v : h_{v}=h,\pi^{j,v}=\pi^{j,h_{v}},\sigma_{s}^{j,v}(a)=1 \right]
        \\
        & = 
        q_h^{\underl{\pi}^j}(s) \left[ 1-\Pr \left[ \forall v : h_{v}\ne h\vee\pi^{j,v}\ne\pi^{j,h_{v}}\vee\sigma_{s}^{j,v}(a)\ne1 \right] \right]
        \\
        & =
        q_h^{\underl{\pi}^j}(s) \left[ 1- \left( \Pr[h_{1}\ne h\vee\pi^{j,1}\ne\pi^{j,h_{1}}\vee\sigma_{s}^{j,1}(a)\ne1] \right)^{m} \right]
        \\
        & =
        q_h^{\underl{\pi}^j}(s) \left[ 1- \left( 1-\Pr[h_{1}=h,\pi^{j,1}=\pi^{j,h_{1}},\sigma_{s}^{j,1}(a)=1] \right)^{m} \right]
        \\
        & =
        q_h^{\underl{\pi}^j}(s) \left[ 1- \left( 1-\frac{\epsilon}{HA} \right)^{m} \right]
        \\
        & =
        q_h^{\underl{\pi}^j}(s) \left[ 1- \left(  \left( 1-\frac{\epsilon}{HA} \right)^{\frac{HA}{\epsilon}} \right)^{m\frac{\epsilon}{HA}}  \right]
        \\
        \tag{$( 1-x^{-1})^x \leq e$}
        & \geq q_h^{\underl{\pi}^j}(s) \left[ 1-e^{-\frac{m\epsilon}{HA}} \right]
        \\
        \tag{\ensuremath{e^{-x}\leq1-x+\frac{x^{2}}{2}}}
        & \geq  
        q_h^{\underl{\pi}^j}(s) \left[ \frac{m\epsilon}{HA}-\frac{1}{2} \left( \frac{m\epsilon}{HA} \right)^{2} \right]
        \\
        \tag{\ensuremath{\epsilon\leq\frac{HA}{m}}}      
        & \geq  
        q_h^{\underl{\pi}^j}(s) \frac{m\epsilon}{2HA}.
    \end{align*}
    By \cite{dann2017unifying}[Lemma F.4] and \cref{eq:n_geq_optimmistic_visit} we have,
    $
        \Pr\left[\exists k: n_{h}^{k}(s,a) < \frac{m\epsilon}{4HA}\sum_{j=1}^{k-1}q_{h}^{\underl{\pi}^{j}}(s) - \log\frac{6HSA}{\delta} \right]
        \leq 
        \frac{\delta}{6HSA}.
    $
    By taking the union bound over all $h\in[H],s\in\mathcal{S}$ and $a\in\mathcal{A}$, we get $\Pr(\neg E^{n1})\leq \delta/6$.
    
    \textbf{$\Pr[\neg E^{n2}] \le \delta/6$:} 
    For any $v\in[m]$,
    \begin{align*}
        n_{h}^{k}(s,a) 
        =
        \sum_{j=1}^{k-1} \mathbb{I}\{\exists v'\in[m]:s_{h}^{j,v'}=s, a_{h}^{j,v'}=a\}
        \geq
        \sum_{j=1}^{k-1} \mathbb{I}\{s_{h}^{j,v}=s, a_{h}^{j,v}=a\}.
    \end{align*}
    Again, by \cite{dann2017unifying}[Lemma F.4], we get 
    $
        \Pr\left[\exists k\in[K]:n_{h}^{k}(s,a)<\frac{1}{2}\sum_{j=1}^{k-1}q_{h}^{\pi^{j,v}}(s,a)-\log\frac{6mHSA}{\delta}\right] 
        \leq
        \frac{\delta}{6mHSA}.
    $
    Taking the union bound we get $\Pr[\neg E^{n2}] \leq \delta/6$.
    
    \textbf{$\Pr[\neg E^{\epsilon}] \le \delta/6$:} 
    Directly from Hoeffding's inequality and a union bound.
\end{proof}

\subsection{Proof of Theorem~\ref{thm:reg-coop-ulcae}}

\begin{proof}[Proof of Theorem~\ref{thm:reg-coop-ulcae}]
    By \cref{lem:good-event-stochastic-non-fresh}, the good event holds with probability $1 - \delta$.
    We now analyze the regret under the assumption that the good event holds.
    We start by decomposing the regret according to the policy played by agent $v$:
    \begin{align*}
        \regret
        & =
        \sum_{k=1}^{K} V^{\pi^{k,v}}_1(s_{1}^{k,v}) - V^{\star}_1(s_{1}^{k,v})
        \\
        & =
        \sum_{k=1}^K \indevent{\pi^{k,v} = \underl{\pi}^k} \left( V^{\underl{\pi}^k}_1(s_{1}^{k,v}) - V^{\star}_1(s_{1}^{k,v}) \right)
        +
        \sum_{k=1}^K \sum_{h'=1}^H \indevent{\pi^{k,v} = \pi^{k,h'}} \left( V^{\pi^{k,h'}}_1(s_{1}^{k,v}) - V^{\star}_1(s_{1}^{k,v}) \right)
        \\
        & \le
        \sum_{k=1}^K V^{\underl{\pi}^k}_1(s_{1}^{k,v}) - V^{\star}_1(s_{1}^{k,v})
        +
        \sum_{k=1}^K \sum_{h'=1}^H \indevent{\pi^{k,v} = \pi^{k,h'}} \left( V^{\pi^{k,h'}}_1(s_{1}^{k,v}) - V^{\star}_1(s_{1}^{k,v}) \right).
    \end{align*}
    
    For the first term we use \cref{lem:reg-optimistic-pi}, then \cref{lem:sum-1-n-optimistic-policy} and then \cref{lem:sum-variance-under-opitmistic-policy}:
    \begin{align*}
        \sum_{k=1}^K V^{\underl{\pi}^k}_1(s_{1}^{k,v}) - V^{\star}_1(s_{1}^{k,v})
        & \lesssim
        H \sum_{k=1}^K \sum_{h=1}^H \bbEk \left[ \frac{\sqrt{\logterm \VAR_{p_h(\cdot \mid s^{k}_h,a^{k}_h)}(V^{\underl{\pi}^k}_{h+1})}}{\sqrt{n^{k}_h(s^{k}_h,a^{k}_h)\vee 1}} + \frac{H^2 S \logterm}{n^{k}_h(s^{k}_h,a^{k}_h)\vee 1} \mid \underl{\pi}^k \right]
        \\
        & \lesssim
        H \sqrt{\logterm} \sum_{k=1}^K \sum_{h=1}^H \bbEk \left[ \frac{\sqrt{ \VAR_{p_h(\cdot \mid s^{k}_h,a^{k}_h)}(V^{\underl{\pi}^k}_{h+1})}}{\sqrt{n^{k}_h(s^{k}_h,a^{k}_h)\vee 1}} \mid \underl{\pi}^k \right] + \frac{H^5 S^2 A^2 \logterm^2}{\numagents \epsilon}
        \\
        & \lesssim
        \sqrt{\frac{H^6 S A K \logterm^2}{\numagents \epsilon}} + \frac{H^5 S^2 A^2 \logterm^2}{\numagents \epsilon}.
    \end{align*}
    
    For the second term we use \cref{lem:reg-pi-k-h}, then \cref{lem:sum-1-n-pi-k-h,lem:sum-1-n-optimistic-policy} and then \cref{lem:sum-1-sqrt-n-pi-k-h,lem:sum-1-sqrt-n-pi-optim}:
    \begin{align*}
        \sum_{k=1}^K \sum_{h'=1}^H & \indevent{\pi^{k,v} = \pi^{k,h'}} \left( V^{\pi^{k,h'}}_1(s_{1}^{k,v}) - V^{\star}_1(s_{1}^{k,v}) \right)
        \lesssim
        \\
        & \lesssim
        H \sum_{k=1}^K \sum_{h'=1}^H \sum_{h=1}^H \indevent{\pi^{k,v} = \pi^{k,h'}} \bbEk \left[ \frac{\sqrt{\logterm \VAR_{p_h(\cdot \mid s^{k}_h,a^{k}_h)}(V^{\underl{\pi}^k}_{h+1})}}{\sqrt{n^{k}_h(s^{k}_h,a^{k}_h)\vee 1}} + \frac{H^2 S \logterm}{n^{k}_h(s^{k}_h,a^{k}_h)\vee 1} \mid \pi^{k,h'} \right]
        \\
        & \qquad +
        H^2 \sum_{k=1}^K \sum_{h'=1}^H \sum_{h=1}^H \indevent{\pi^{k,v} = \pi^{k,h'}} \bbEk \left[ \frac{\sqrt{\logterm \VAR_{p_h(\cdot \mid s^{k}_h,a^{k}_h)}(V^{\underl{\pi}^k}_{h+1})}}{\sqrt{n^{k}_h(s^{k}_h,a^{k}_h)\vee 1}} + \frac{H^2 S \logterm}{n^{k}_h(s^{k}_h,a^{k}_h)\vee 1} \mid \underl{\pi}^k \right]
        \\
        & \lesssim
        H \sqrt{\logterm} \sum_{k=1}^K \sum_{h'=1}^H \sum_{h=1}^H \indevent{\pi^{k,v} = \pi^{k,h'}} \bbEk \left[ \frac{\sqrt{ \VAR_{p_h(\cdot \mid s^{k}_h,a^{k}_h)}(V^{\underl{\pi}^k}_{h+1})}}{\sqrt{n^{k}_h(s^{k}_h,a^{k}_h)\vee 1}} \mid \pi^{k,h'} \right] + H^5 S^2 A \logterm^2 
        \\
        & \qquad +
        H^2 \sqrt{\logterm} \sum_{k=1}^K \sum_{h'=1}^H \sum_{h=1}^H \indevent{\pi^{k,v} = \pi^{k,h'}} \bbEk \left[ \frac{\sqrt{ \VAR_{p_h(\cdot \mid s^{k}_h,a^{k}_h)}(V^{\underl{\pi}^k}_{h+1})}}{\sqrt{n^{k}_h(s^{k}_h,a^{k}_h)\vee 1}} \mid \underl{\pi}^k \right] + \frac{H^6 S^2 A^2 \logterm^2}{\numagents \epsilon}
        \\
        & \lesssim
        \sqrt{H^7 S A K \epsilon \logterm^2} 
        + \logterm \sqrt{H^7 S A \logterm^3 }K^{1/4} + H^5 S^2 A \logterm^2
        +
        \sqrt{\frac{H^{8} S A K \logterm}{\numagents}} 
        + \sqrt{\frac{H^{9} S A \logterm^{2}}{\numagents \epsilon}} K^{1/4} 
        + \frac{H^6 S^2 A^2 \logterm^2}{\numagents \epsilon}.
    \end{align*}
    Setting $\epsilon=\min\left\{ \frac{HA}{m},\frac{1}{\sqrt{m}}\right\} $, we get:
    \begin{align*}
        \regret
        & \lesssim
        \sqrt{\frac{H^6 S A K \logterm^2}{\numagents \epsilon}} + \frac{H^6 S^2 A^2 \logterm^2}{\numagents \epsilon} + \sqrt{H^7 S A K \epsilon \logterm^2} 
        + \logterm \sqrt{H^7 S A \logterm^3 }K^{1/4} 
        \\
        & \qquad + 
        H^5 S^2 A \logterm^2
        +
        \sqrt{\frac{H^{8} S A K \logterm}{\numagents}} 
        + \sqrt{\frac{H^{9} S A \logterm^{2}}{\numagents \epsilon}} K^{1/4}
        \\
        & \lesssim
        \sqrt{H^5 S K \logterm^2} + \sqrt{\frac{H^7 S A K \logterm^2}{\sqrt{\numagents}}} + \sqrt{\frac{H^{8} S A K \logterm}{\numagents}} + H^5 S^2 A \logterm^2 + \frac{H^6 S^2 A^2 \logterm^2}{\sqrt{\numagents}}
        \\
        & \qquad +
        \sqrt{H^7 S A \logterm^3 }K^{1/4} + \sqrt{H^8 S \logterm^2 }K^{1/4} + \sqrt{\frac{H^{9} S A \logterm^{2}}{\sqrt{\numagents}}} K^{1/4}
        \\
        & \lesssim
        \sqrt{H^5 S K \logterm^2} + \sqrt{\frac{H^7 S A K \logterm^2}{\sqrt{\numagents}}} + \sqrt{\frac{H^{8} S A K \logterm}{\numagents}} + H^5 S^2 A \logterm^2 + \frac{H^6 S^2 A^2 \logterm^2}{\sqrt{\numagents}},
    \end{align*}
    where the last inequality follows because the $K^{1/4}$ terms are dominant only when $K$ is small, and in these cases the constant terms are larger.
\end{proof}

\subsection{Bounds on the cumulative bonuses}

\begin{lemma}
    \label{lem:sum-variance-under-opitmistic-policy}
    Under the good event, if $\frac{\numagents \epsilon}{HA} \leq 1$, 
    \[
        \sum_{k=1}^K \sum_{h=1}^H \bbEk \left[ \frac{\sqrt{ \VAR_{p_h(\cdot \mid s^{k}_h,a^{k}_h)}(V^{\underl{\pi}^k}_{h+1})}}{\sqrt{n^{k}_h(s^{k}_h,a^{k}_h)\vee 1}} \mid \underl{\pi}^k \right] \lesssim \sqrt{\frac{H^4 S A K \logterm}{\numagents \epsilon}} + \frac{H^3 S A^2 \logterm}{\numagents \epsilon}.
    \]
\end{lemma}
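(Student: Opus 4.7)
The plan is to write the expectation as a sum weighted by occupancy measures and then apply Cauchy--Schwarz after splitting state-horizon pairs into a well-explored regime and a poorly-explored regime. Concretely, by definition of $\bbEk[\cdot\mid\underl{\pi}^k]$,
\[
    T := \sum_{k,h}\bbEk\!\left[\frac{\sqrt{\VAR_{p_h(\cdot\mid s^k_h,a^k_h)}(V^{\underl{\pi}^k}_{h+1})}}{\sqrt{n^k_h(s^k_h,a^k_h)\vee 1}}\;\Big|\;\underl{\pi}^k\right]
    =\sum_{k,h,s,a} q_h^{\underl{\pi}^k}(s,a)\,\frac{\sqrt{\VAR_{p_h(\cdot\mid s,a)}(V^{\underl{\pi}^k}_{h+1})}}{\sqrt{n^k_h(s,a)\vee 1}}.
\]
I would threshold each triple $(k,h,s)$ by whether $\sum_{j<k}q_h^{\underl{\pi}^j}(s)\ge \tau:=8HA\logterm/(m\epsilon)$; call this the \emph{good} regime. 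On the good regime, event $E^{n1}$ from \cref{lemma:good-event-AE} gives $n_h^k(s,a)\gtrsim \tfrac{m\epsilon}{HA}\sum_{j<k}q_h^{\underl{\pi}^j}(s)$ (the additive $\logterm$ is absorbed by the threshold), which is the usable lower bound driving the main term.

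For the good part I would apply Cauchy--Schwarz as
\[
    T_{\text{good}}\le \sqrt{\sum_{k,h,s,a}q_h^{\underl{\pi}^k}(s,a)\,\VAR_{p_h(\cdot\mid s,a)}(V^{\underl{\pi}^k}_{h+1})}\cdot\sqrt{\sum_{k,h,s,a}\frac{q_h^{\underl{\pi}^k}(s,a)}{n_h^k(s,a)\vee 1}}.
\]
The first factor is at most $\sqrt{KH^2}$ by the law of total variance (the expected cumulative one-step variance along any trajectory is bounded by the variance of the total return, which lies in $[0,H]$). For the second factor, the $E^{n1}$ lower bound does not depend on $a$, so after summing over $a$ it becomes $\sum_{k,h,s}\frac{q_h^{\underl{\pi}^k}(s)\cdot HA/(m\epsilon)}{\sum_{j<k}q_h^{\underl{\pi}^j}(s)}$; the inner sum over $k$ is $O(\log K)$ by the standard telescoping estimate \citep[Lemma B.18]{rosenberg2020near}, yielding $O(H^2 SA\logterm/(m\epsilon))$. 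Multiplying the two square roots produces exactly the leading term $\sqrt{H^4SAK\logterm/(m\epsilon)}$.

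For the bad regime, I would not use Cauchy--Schwarz but rather the trivial bounds $n\vee 1\ge 1$ and $\sqrt{\VAR}\le H$. Since $\sum_{j<k}q_h^{\underl{\pi}^j}(s)<\tau$ and each $q_h^{\underl{\pi}^k}(s)\le 1$, the sum $\sum_{k\text{ bad}}q_h^{\underl{\pi}^k}(s)$ for any fixed $(h,s)$ is at most $\tau+1=O(HA\logterm/(m\epsilon))$. Then
\[
    T_{\text{bad}}\le H\!\!\sum_{k\text{ bad},h,s,a}\!\!q_h^{\underl{\pi}^k}(s,a)\le HA\sum_{h,s}\sum_{k\text{ bad}}q_h^{\underl{\pi}^k}(s)\lesssim \frac{H^3SA^2\logterm}{m\epsilon},
\]
which is exactly the lower-order term in the statement. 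Combining $T\le T_{\text{good}}+T_{\text{bad}}$ finishes the proof.

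The main obstacle is handling the fact that the lower bound on $n_h^k(s,a)$ from $E^{n1}$ is in terms of $q_h^{\underl{\pi}^j}(s)$ rather than $q_h^{\underl{\pi}^j}(s,a)$: this asymmetry forces one to sum over $a$ before invoking the telescoping identity, which is the very step that introduces the factor $A$ (and implicitly relies on the assumption $m\epsilon/(HA)\le 1$ so that the inequality $1-(1-\epsilon/(HA))^m\gtrsim m\epsilon/(HA)$ underlying $E^{n1}$ is sharp). Everything else is essentially bookkeeping around the Cauchy--Schwarz split and the law of total variance.
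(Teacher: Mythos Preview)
Your proposal is correct and follows essentially the same route as the paper's proof: rewrite the expectation via occupancy measures, split according to whether $\sum_{j<k}q_h^{\underl{\pi}^j}(s)$ exceeds a threshold of order $HA\logterm/(m\epsilon)$, handle the under-explored part with the crude bound $\sqrt{\VAR}\le H$, and on the well-explored part apply Cauchy--Schwarz together with the law of total variance and the telescoping estimate \citep[Lemma B.18]{rosenberg2020near} after invoking $E^{n1}$. The only cosmetic difference is that in your bad-regime step the inequality $\sum_{a}q_h^{\underl{\pi}^k}(s,a)\le A\,q_h^{\underl{\pi}^k}(s)$ is looser than needed (equality holds without the factor $A$), but this still lands on the stated lower-order term $H^3SA^2\logterm/(m\epsilon)$, matching the paper.
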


\begin{proof}
    By the event $E^{n1}$, we have:
    \begin{align*}
        \sum_{k=1}^{K}\sum_{h=1}^{H} & \bbEk \left[ \frac{\sqrt{\VAR_{p_{h}(\cdot \mid  s_{h}^{k},a_{h}^{k})} (V_{h+1}^{\underl{\pi}^{k}})}} {\sqrt{n_{h}^{k}(s_{h}^{k},a_{h}^{k})\vee1}} \mid \underl{\pi}^{k} \right]
        =
        \sum_{k=1}^{K}\sum_{h=1}^{H}\sum_{s\in\calS}\sum_{a\in\calA}\frac{q_{h}^{\underl{\pi}^{k}}(s,a) \sqrt{\VAR_{p_{h}(\cdot \mid  s,a)} (V_{h+1}^{\underl{\pi}^{k}})}}{\sqrt{n_{h}^{k}(s,a)\vee1}}
        \\
        & \leq \sum_{s\in\calS}\sum_{a\in\calA}\sum_{h=1}^{H}\sum_{k=1}^{K} \frac{q_{h}^{\underl{\pi}^{k}}(s,a) \sqrt{\VAR_{p_{h}(\cdot \mid  s,a)} (V_{h+1}^{\underl{\pi}^{k}})}}{\sqrt{( \frac{\numagents \epsilon}{4HA}\sum_{j=1}^{k-1} q_{h}^{\underl{\pi}^{j}}(s)-\logterm)\vee1}}
        \\
        & \leq \sum_{s\in\calS}\sum_{a\in\calA}\sum_{h=1}^{H}\sum_{k: \frac{\numagents \epsilon}{8HA}\sum_{j=1}^{k-1} q_{h}^{\underl{\pi}^{j}}(s)\leq \logterm}^{K} q_{h}^{\underl{\pi}^{k}}(s,a) 
        \underbrace{\sqrt{\VAR_{p_{h}(\cdot \mid  s,a)} (V_{h+1}^{\underl{\pi}^{k}})}}
        _{\leq H}
        \\
        & \quad +\sum_{s\in\calS}\sum_{a\in\calA}\sum_{h=1}^{H}\sum_{k: \frac{\numagents \epsilon}{8HA}\sum_{j=1}^{k-1} q_{h}^{\underl{\pi}^{j}}(s)>\logterm}^{K} \frac{q_{h}^{\underl{\pi}^{k}}(s,a) \sqrt{\VAR_{p_{h}(\cdot \mid  s,a)} (V_{h+1}^{\underl{\pi}^{k}})}}{\sqrt{( \frac{\numagents \epsilon}{4HA}\sum_{j=1}^{k-1} q_{h}^{\underl{\pi}^{j}}(s)-\logterm)\vee1}}
        \\
        & \lesssim \frac{H^{3}A^{2}S\logterm}{\numagents \epsilon}+\sum_{s\in\calS}\sum_{a\in\calA} \sum_{h=1}^{H} \sum_{k=1}^{K} \frac{q_{h}^{\underl{\pi}^{k}}(s,a) \sqrt{\VAR_{p_{h}(\cdot \mid  s,a)} (V_{h+1}^{\underl{\pi}^{k}})}}{\sqrt{( \frac{\numagents \epsilon}{HA} \sum_{j=1}^{k-1} q_{h}^{\underl{\pi}^{j}}(s))\vee1}}
        \\
        & \leq \frac{H^{3}A^{2}S\logterm}{\numagents \epsilon} + \sqrt{ \sum_{s\in\calS}\sum_{a\in\calA}\sum_{h=1}^{H} \sum_{k=1}^{K} q_{h}^{\underl{\pi}^{k}}(s,a) \VAR_{p_{h}(\cdot \mid  s,a)}(V_{h+1}^{\underl{\pi}^{k}}) }
        \sqrt{ \sum_{s\in\calS}\sum_{a\in\calA}\sum_{h=1}^{H}\sum_{k=1}^{K} \frac{q_{h}^{\underl{\pi}^{k}}(s,a) }{( \frac{\numagents \epsilon}{HA}\sum_{j=1}^{k-1} q_{h}^{\underl{\pi}^{j}}(s))\vee1} }
        \\
        & = \frac{H^{3}A^{2}S\logterm}{\numagents \epsilon} + \sqrt{ \sum_{s\in\calS}\sum_{a\in\calA}\sum_{h=1}^{H} \sum_{k=1}^{K} q_{h}^{\underl{\pi}^{k}}(s,a) \VAR_{p_{h}(\cdot \mid  s,a)}(V_{h+1}^{\underl{\pi}^{k}}) }
        \sqrt{ \sum_{s\in\calS}\sum_{h=1}^{H}\sum_{k=1}^{K} \frac{q_{h}^{\underl{\pi}^{k}}(s) }{( \frac{\numagents \epsilon}{HA}\sum_{j=1}^{k-1} q_{h}^{\underl{\pi}^{j}}(s))\vee1} }, 
    \end{align*}
    where the last inequality is Cauchy–Schwarz inequality. Using the law of total variance \citep[Lemma B.14]{cohen2021minimax},
    \begin{align}
        \nonumber
    	  \sum_{h=1}^{H}\sum_{s\in\calS}\sum_{a\in\calA} & q_{h}^{\underl{\pi}^{k}}(s,a) \VAR_{p_{h}(\cdot\mid s,a)}(V_{h+1}^{\underl{\pi}^{k}})
    % 	  \\
    	  =\bbE \left[\sum_{h=1}^H\VAR_{p_{h}(\cdot\mid s,a)}(V_{h+1}^{\underl{\pi}^{k}})\mid \underl{\pi}^{k} \right]
    	  \\
    	  & =\bbE\left[\left(V_{1}^{\underl{\pi}^{k}}(s_{1}^{k})-\sum_{h}c_{h}(s_{h}^{k},a_{h}^{k})\right)^{2} \mid \underl{\pi}^{k}\right] 
    	  \leq H^{2}.
    	  \label{eq:law of total variance}
    \end{align}
    Using \citet{rosenberg2020near}[Lemma B.18], since $\frac{\numagents \epsilon}{HA} \leq 1$,
    \begin{align}
    	  \sum_{s\in\calS}\sum_{h=1}^{H}\sum_{k=1}^{K} \frac{q_{h}^{\underl{\pi}^{k}}(s)} {(\frac{\numagents \epsilon}{HA} \sum_{j=1}^{k-1} q_{h}^{\underl{\pi}^{j}}(s))\vee1}
    % 	  \\
    	  & =\frac{HA}{\numagents \epsilon}\sum_{s\in\calS}\sum_{h=1}^{H}\sum_{k=1}^{K} \frac{\frac{\numagents \epsilon}{HA} q_{h}^{\underl{\pi}^{k}}(s)} {(\frac{\numagents \epsilon}{HA} \sum_{j=1}^{k-1} q_{h}^{\underl{\pi}^{j}}(s))\vee1} 
    % 	  \\
    	  \lesssim\frac{H^2 S A\logterm}{\numagents \epsilon}.
    	  \label{eq:q/sumq=log}
    \end{align}
    Combining the last three inequalities completes the proof.
\end{proof}

\begin{lemma}
    \label{lem:sum-1-n-optimistic-policy}
    Under the good event, $\sum_{k=1}^K \sum_{h=1}^H \bbEk \left[ \frac{1}{n^{k}_h(s^k_h,a^k_h) \vee 1} \mid \underl{\pi}^k \right] \lesssim \frac{H^2 S A^2 \logterm}{\numagents \epsilon}$.
\end{lemma}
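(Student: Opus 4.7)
The plan is to expand the conditional expectation under $\underl{\pi}^k$, apply the event $E^{n1}$ from the good event to lower bound $n_h^k$ by an occupancy sum over past optimistic policies, and then split the $k$-sum into a ``burn-in'' regime that contributes a constant per state (in terms of cumulative mass) and a ``steady-state'' regime controlled by the standard ``sum of $q^k/\sum_{j<k} q^j$'' argument.

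First, since $\underl{\pi}^k$ is deterministic, the inner expectation equals
$$
\bbEk\Bigl[\tfrac{1}{n_h^k(s_h^k,a_h^k)\vee 1}\mid \underl{\pi}^k\Bigr]
\;=\;
\sum_{s \in \calS} \frac{q_h^{\underl{\pi}^k}(s)}{n_h^k(s,\underl{\pi}^k_h(s))\vee 1}.
$$
Next, I would observe that the greedy action $\underl{\pi}_h^k(s) = \argmin_a \underl{Q}_h^k(s,a)$ is never eliminated: if it were, then there would exist $a' \in \calA_h^k(s)$ with $\underl{Q}^k_h(s,\underl{\pi}^k_h(s)) > \overl{Q}^k_h(s,a') \geq \underl{Q}^k_h(s,a')$, contradicting the argmin property. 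Hence $\underl{\pi}^k_h(s) \in \calA_h^k(s)$, so the event $E^{n1}$ applies and yields
$$
n_h^k(s,\underl{\pi}_h^k(s)) \;\geq\; \tfrac{m\epsilon}{4HA}\sum_{j=1}^{k-1} q_h^{\underl{\pi}^j}(s) - \logterm.
$$

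The main step is then a split identical in spirit to the one used in \cref{lem:sum-variance-under-opitmistic-policy}. For each $(h,s)$, call $k$ ``burn-in'' if $\tfrac{m\epsilon}{8HA}\sum_{j<k} q_h^{\underl{\pi}^j}(s) \leq \logterm$ and ``steady-state'' otherwise. In the burn-in regime I bound $\tfrac{1}{n_h^k\vee 1} \leq 1$ and use $\sum_{k\in \text{burn-in}} q_h^{\underl{\pi}^k}(s) \leq \tfrac{8HA\logterm}{m\epsilon} + 1$ (the telescoping growth of the cumulative sum), giving a burn-in contribution $\lesssim \tfrac{H^2 S A\logterm}{m\epsilon}$ after summing over $(h,s)$. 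In the steady-state regime, the $-\logterm$ in $E^{n1}$ can be absorbed into a constant factor so that $n_h^k \gtrsim \tfrac{m\epsilon}{HA}\sum_{j<k} q_h^{\underl{\pi}^j}(s)$, and therefore
$$
\sum_{h,s}\sum_{k\in\text{steady}} \frac{q_h^{\underl{\pi}^k}(s)}{n_h^k(s,\underl{\pi}_h^k(s))\vee 1} \;\lesssim\; \frac{HA}{m\epsilon}\sum_{h,s}\sum_{k=1}^K \frac{q_h^{\underl{\pi}^k}(s)}{\sum_{j<k} q_h^{\underl{\pi}^j}(s)} \;\lesssim\; \frac{H^2 S A \logterm}{m\epsilon}
$$
by \citep[Lemma B.18]{rosenberg2020near}. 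Combining the two regimes yields the claimed bound (the stated $A^2$ is a convenient loose form that subsumes this tighter estimate).

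The main obstacle is really just verifying that the active-set event $E^{n1}$ can be invoked for the greedy action of $\underl{\pi}^k$ (rather than only for explored active actions); once that is settled, the remaining steps are a deterministic-policy simplification of the argument already used for \cref{lem:sum-variance-under-opitmistic-policy}, so there are no fresh concentration issues to handle.
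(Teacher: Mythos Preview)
Your plan coincides with the paper's proof: expand the expectation as $\sum_{k,h,s,a} q_h^{\underl\pi^k}(s,a)/(n_h^k(s,a)\vee 1)$, invoke $E^{n1}$ to replace $n_h^k$ by $\tfrac{m\epsilon}{4HA}\sum_{j<k} q_h^{\underl\pi^j}(s)-\logterm$, split into burn-in versus steady-state, and finish with \citep[Lemma B.18]{rosenberg2020near}. Your burn-in estimate is even one factor of $A$ sharper than the paper's: the paper keeps the outer sum over $a$ and bounds each $(h,s,a)$ triple separately, which is how the $A^2$ appears.

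There is one gap in your added justification that $\underl\pi_h^k(s)\in\calA_h^k(s)$. Your contradiction uses the episode-$k$ values $\underl Q_h^k,\overl Q_h^k$, so it only rules out elimination \emph{at the step performed in episode $k$}. But the active sets are nested: $\underl\pi_h^k(s)$ could have been removed at some earlier episode $j<k$ by the criterion $\underl Q_h^j(s,\underl\pi_h^k(s))>\overl Q_h^j(s,a')$ for some $a'\in\calA_h^j(s)$. Under the good event this gives $Q^\star_h(s,\underl\pi_h^k(s))>Q^\star_h(s,a')$, i.e.\ strict suboptimality in $Q^\star$, which does \emph{not} contradict $\underl\pi_h^k(s)$ being the minimizer of the optimistic $\underl Q_h^k$ (there is no lower bound on $\underl Q_h^k(s,\cdot)$). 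The paper applies $E^{n1}$ without addressing this point; the cleanest fix that makes both arguments go through verbatim is to take the optimistic policy greedy over the active set, $\underl\pi_h^k(s)\in\argmin_{a\in\calA_h^{k-1}(s)}\underl Q_h^k(s,a)$, which preserves optimism because $\pi^\star_h(s)\in\calA_h^{k-1}(s)$ under the good event, and then $E^{n1}$ applies directly.
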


\begin{proof}
    By the event $E^{n1}$, we have:
    \begin{align*}
         \sum_{k=1}^{K} & \sum_{h=1}^{H} \bbEk \left[ \frac{1}{n_{h}^{k}(s_{h}^{k},a_{h}^{k})\vee1}\mid\underl{\pi}^{k} \right] 
        =\sumhsak\frac{q_{h}^{\underl{\pi}^{k}}(s,a)}{n_{h}^{k}(s,a)\vee1}
        \leq\sumhsak \frac{q_{h}^{\underl{\pi}^{k}}(s,a)}{ \left( \frac{\numagents\epsilon}{4HA} \sum_{j=1}^{k - 1}q_{h}^{\underl{\pi}^{j}}(s) - \logterm \right) \vee1}
        \\
        & \leq\sumhsa 
        \sum_{ k:\frac{m\epsilon}{8HA}\sum_{j=1}^{k-1}q_{h}^{\underl{\pi}^{j}}(s)\leq\logterm } 
        q_{h}^{\underl{\pi}^{k}}(s,a)
        +\sumhsa
        \sum_{ k:\frac{m\epsilon}{8HA}\sum_{j=1}^{k-1}q_{h}^{\underl{\pi}^{j}}(s)>\logterm }
        \frac{q_{h}^{\underl{\pi}^{k}}(s,a)}{ \left( \frac{\numagents\epsilon}{4HA} \sum_{j=1}^{k - 1}q_{h}^{\underl{\pi}^{j}}(s) - \logterm \right) \vee1}
        \\
        & \lesssim\frac{H^{2}SA^{2}\logterm}{\numagents\epsilon} + \frac{HA}{\numagents\epsilon}\sumhsak \frac{q_{h}^{\underl{\pi}^{k}}(s,a)}{ \left(  \sum_{j=1}^{k-1}q_{h}^{\underl{\pi}^{j}}(s) \right) \vee1} 
        \lesssim\frac{H^{2}SA^{2}\logterm}{\numagents\epsilon},
    \end{align*}
    where the last inequality follows from \citet{rosenberg2020near}[Lemma B.18].
\end{proof}

\begin{lemma}
    \label{lem:sum-1-n-pi-k-h}
    Let $h' \in [H]$.
    Under the good event, $\sum_{k=1}^K \sum_{h=1}^H \indevent{\pi^{k,v}=\pi^{k,h'}} \bbEk \left[ \frac{1}{n^{k}_h(s^k_h,a^k_h) \vee 1} \mid \pi^{k,h'} \right] \lesssim H S A \logterm$.
\end{lemma}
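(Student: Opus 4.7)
The plan is to reduce the restricted sum (over the random episodes where $\pi^{k,v} = \pi^{k,h'}$) to a full sum over agent $v$'s own trajectory probabilities, so that we can invoke the event $E^{n2}$ from \cref{lemma:good-event-AE}. First I would expand the conditional expectation and push the indicator inside: on the event $\{\pi^{k,v} = \pi^{k,h'}\}$ we have $q_h^{\pi^{k,h'}}(s,a) = q_h^{\pi^{k,v}}(s,a)$, so
\[
    \sum_{k=1}^K \indevent{\pi^{k,v}=\pi^{k,h'}} \bbEk\!\left[\frac{1}{n_h^k(s_h^k,a_h^k)\vee 1} \,\Big|\, \pi^{k,h'}\right]
    = \sum_{s,a} \sum_{k=1}^K \indevent{\pi^{k,v}=\pi^{k,h'}} \frac{q_h^{\pi^{k,v}}(s,a)}{n_h^k(s,a)\vee 1}
    \le \sum_{s,a} \sum_{k=1}^K \frac{q_h^{\pi^{k,v}}(s,a)}{n_h^k(s,a)\vee 1}.
\]

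Next I would apply $E^{n2}$ to lower bound $n_h^k(s,a) \ge \frac{1}{2}\sum_{j=1}^{k-1} q_h^{\pi^{j,v}}(s,a) - \logterm$, and split the inner sum over $k$ into two regimes according to whether $\frac{1}{4}\sum_{j<k} q_h^{\pi^{j,v}}(s,a)$ is below or above $\logterm$. In the ``small'' regime the denominator is at least $1$ and the numerator mass is bounded by $O(\logterm)$ per $(s,a)$, contributing $O(\logterm)$. In the ``large'' regime the denominator is at least a constant times $\sum_{j<k} q_h^{\pi^{j,v}}(s,a)$, so by \citet[Lemma B.18]{rosenberg2020near} the telescoping-style sum $\sum_k q_h^{\pi^{k,v}}(s,a) / \sum_{j<k} q_h^{\pi^{j,v}}(s,a)$ is also $O(\logterm)$.

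Summing over all $(h,s,a) \in [H]\times \calS \times \calA$ and combining the two regimes yields the bound $O(H S A \logterm)$ as claimed. The main subtlety to check is that the lower bound from $E^{n2}$ is allowed to use the (random) policies $\pi^{j,v}$, which is fine because we have already dropped the indicator and are working unconditionally over all prior rounds of agent $v$; the indicator was only used to convert the occupancy measure at time $k$ from $\pi^{k,h'}$ to $\pi^{k,v}$. Everything else is standard harmonic-sum bookkeeping in the spirit of \cref{lem:sum-1-n-optimistic-policy}, but simpler because the exploration factor $\numagents \epsilon / (HA)$ is absent here (we use a single-agent lower bound on $n_h^k$), which is precisely why this lemma loses a factor of $\numagents \epsilon / (H A)$ compared to \cref{lem:sum-1-n-optimistic-policy}.
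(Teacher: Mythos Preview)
Your proposal is correct and uses the same core ingredients as the paper: the event $E^{n2}$ to lower-bound $n_h^k(s,a)$ in terms of agent $v$'s own occupancy-measure history, a split into small/large regimes, and \citet[Lemma B.18]{rosenberg2020near} for the harmonic sum.

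The one difference in bookkeeping is worth noting. You convert $q_h^{\pi^{k,h'}}$ to $q_h^{\pi^{k,v}}$ on the event $\{\pi^{k,v}=\pi^{k,h'}\}$ and then \emph{drop} the indicator, working with the full sequence $\{q_h^{\pi^{k,v}}(s,a)\}_{k}$ in both numerator and denominator. The paper instead \emph{keeps} the indicator throughout and further restricts the denominator from $\sum_{j<k} q_h^{\pi^{j,v}}(s,a)$ to the subsequence $\sum_{j<k}\indevent{\pi^{j,v}=\pi^{j,h'}} q_h^{\pi^{j,h'}}(s,a)$, applying Lemma~B.18 to that subsequence. Both routes yield $O(HSA\logterm)$; yours is a bit cleaner here since the extra restriction buys nothing for this particular lemma, while the paper's version mirrors the structure it needs for the companion $1/\sqrt{n}$ bound (\cref{lem:sum-1-sqrt-n-pi-k-h}), where pairing the indicator in numerator and denominator lets one exploit $E^{\epsilon}$ to control the effective number of terms by $O(\epsilon K)$.
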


\begin{proof}
    By the event $E^{n2}$, we have:
    \begin{align*}
          \sum_{k=1}^{K}  \mathbb{I}\{\pi^{k,v} & =\pi^{k,h'}\} \sum_{h=1}^{H} \bbEk \left[ \frac{1}{n_{h}^{k}(s_{h}^{k},a_{h}^{k})\vee1}\mid\pi^{k,h'} \right] 
          =
    	   \sumhsak \mathbb{I}\{\pi^{k,v}=\pi^{k,h'}\}\frac{q_{h}^{\pi^{k,h'}}(s,a)}{n_{h}^{k}(s,a)\vee1}
    	  \\
    	  & \leq \sumhsak \mathbb{I}\{\pi^{k,v}=\pi^{k,h'}\}\frac{q_{h}^{\pi^{k,h'}}(s,a)}{ \left( \frac{1}{2} \sum_{j=1}^{k-1}q_{h}^{\pi^{j,v}}(s,a) - \logterm \right) \vee1}
    	  \\
    	  & \leq \sumhsak\frac{ \mathbb{I}\{\pi^{k,v}=\pi^{k,h'}\}q_{h}^{\pi^{k,h'}}(s,a)}{ \left( \frac{1}{2} \sum_{j=1}^{k-1} \mathbb{I}\{\pi^{j,v}=\pi^{j,h'}\}q_{h}^{\pi^{j,h'}}(s,a) - \logterm \right) \vee1} 
    	  \\
      	  & \leq \sumhsa \sum_{k:\frac{1}{4}\sum_{j=1}^{k-1}\mathbb{I}\{\pi^{j,v}=\pi^{j,h'}\}q_{h}^{\pi^{j,h'}}(s,a)\leq\logterm}  \mathbb{I}\{\pi^{k,v}=\pi^{k,h'}\}q_{h}^{\pi^{k,h'}}(s,a)
    	  \\
    	  & \qquad + \sumhsa
    	  \sum_{k:\frac{1}{4}\sum_{j=1}^{k-1}\mathbb{I}\{\pi^{j,v}=\pi^{j,h'}\}q_{h}^{\pi^{j,h'}}(s,a)>\logterm} 
    	  \frac{ \mathbb{I}\{\pi^{k,v}=\pi^{k,h'}\}q_{h}^{\pi^{k,h'}}(s,a)}{ \left( \frac{1}{2} \sum_{j=1}^{k-1} \mathbb{I}\{\pi^{j,v}=\pi^{j,h'}\}q_{h}^{\pi^{j,h'}}(s,a) - \logterm \right) \vee1} 
    	  \\
    	  & \lesssim HSA\logterm + \sumhsak\frac{ \mathbb{I}\{\pi^{k,v}=\pi^{k,h'}\}q_{h}^{\pi^{k,h'}}(s,a)}{ \left(  \sum_{j=1}^{k-1} \mathbb{I}\{\pi^{j,v}=\pi^{j,h'}\}q_{h}^{\pi^{j,h'}}(s,a) \right) \vee1}
    	  \lesssim HSA\logterm,
    \end{align*}
    where the last inequality follows \cite{rosenberg2020near}[Lemma B.18].
\end{proof}

\begin{lemma}
    \label{lem:sum-1-sqrt-n-pi-k-h}
    Let $h' \in [H]$.
    Under the good event, 
    \[
    \sum_{k=1}^K \sum_{h=1}^H \indevent{\pi^{k,v}=\pi^{k,h'}} \bbEk \left[ \frac{\sqrt{ \VAR_{p_h(\cdot \mid s^{k}_h,a^{k}_h)}(V^{\underl{\pi}^k}_{h+1})}} {\sqrt{n^{k}_h(s^k_h,a^k_h) \vee 1}} \mid \pi^{k,h'} \right] \lesssim \sqrt{H^3 S A K \epsilon \logterm} 
    + \logterm \sqrt{H^3 S A }K^{1/4} + H S A \logterm.
    \]
\end{lemma}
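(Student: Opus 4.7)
The plan is to mirror the structure of the proof of \cref{lem:sum-variance-under-opitmistic-policy}, but with $E^{n2}$ in place of $E^{n1}$ (since now the trajectory is sampled from the exploration policy $\pi^{k,h'}$ rather than from $\underl{\pi}^k$), and with the extra ingredient $E^{\epsilon}$ to control how often $\pi^{k,v}=\pi^{k,h'}$ actually fires.

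First I would expand the conditional expectation, writing the left-hand side as
\[
\sum_{k,h,s,a}\indevent{\pi^{k,v}=\pi^{k,h'}}\frac{q^{\pi^{k,h'}}_h(s,a)\sqrt{\VAR_{p_h(\cdot \mid s,a)}(V^{\underl{\pi}^k}_{h+1})}}{\sqrt{n^k_h(s,a)\vee 1}}.
\]
Applying $E^{n2}$ with the chosen agent $v$ gives $n^k_h(s,a)\geq \tfrac12\sum_{j<k}q^{\pi^{j,v}}_h(s,a)-\logterm$, and dropping the nonnegative contributions from episodes where $\pi^{j,v}\neq \pi^{j,h'}$ replaces this by the tighter useful lower bound $\tfrac12\sum_{j<k}\indevent{\pi^{j,v}=\pi^{j,h'}}q^{\pi^{j,h'}}_h(s,a)-\logterm$.

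Next I would split each $(h,s,a)$-column of the outer sum according to whether $\tfrac14\sum_{j<k}\indevent{\pi^{j,v}=\pi^{j,h'}}q^{\pi^{j,h'}}_h(s,a)\leq\logterm$ (warm-up) or not (steady state). In the warm-up regime the denominator is bounded below by $1$; since the cumulative stays below $4\logterm$ and each increment is at most one, the sum of $\indevent{\pi^{k,v}=\pi^{k,h'}}q^{\pi^{k,h'}}_h(s,a)$ terms is $O(\logterm)$, and bounding $\sqrt{\VAR}\leq H$ yields, after summation over $(h,s,a)$, the constant $HSA\logterm$ term (up to an absorbed $H$ factor). In the steady-state regime the denominator is at least $\tfrac12\sqrt{\sum_{j<k}\indevent{\pi^{j,v}=\pi^{j,h'}}q^{\pi^{j,h'}}_h(s,a)}$, after which I would apply Cauchy--Schwarz on $(k,h,s,a)$, yielding the product of a harmonic-sum factor and a variance factor. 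The harmonic factor is bounded by $\sqrt{HSA\logterm}$ via the same $\sum_k q^k/\sum_{j<k}q^j\lesssim\logterm$ estimate (\citet{rosenberg2020near}, Lemma~B.18) that was used in \cref{lem:sum-1-n-pi-k-h}.

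The main obstacle is the variance factor $\sqrt{\sum_k\indevent{\pi^{k,v}=\pi^{k,h'}}\sum_{h,s,a}q^{\pi^{k,h'}}_h(s,a)\VAR_{p_h(\cdot\mid s,a)}(V^{\underl{\pi}^k}_{h+1})}$: one cannot directly apply the law of total variance, because the occupancy is of $\pi^{k,h'}$ while the value function is of the (different) policy $\underl{\pi}^k$. I would use the crude per-episode bound $\sum_{h,s,a}q^{\pi^{k,h'}}_h(s,a)\VAR_{p_h(\cdot\mid s,a)}(V^{\underl{\pi}^k}_{h+1})\lesssim H^3$ (this can be tightened to $O(H^2)$ by observing that $V^{\underl{\pi}^k}_{h+1}=V^{\pi^{k,h'}}_{h+1}$ for $h\geq h'$ so partial LTV applies on both the $h<h'$ and $h\geq h'$ halves, but the crude bound already matches the target), and then combine with $E^{\epsilon}$, which gives $\sum_k\indevent{\pi^{k,v}=\pi^{k,h'}}\lesssim \epsilon K/H+\sqrt{K\logterm}$. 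This yields a variance factor of order $H\sqrt{\epsilon K}+H^{3/2}\logterm^{1/4}K^{1/4}$, and multiplying by $\sqrt{HSA\logterm}$ produces the two leading terms $\sqrt{H^3 SAK\epsilon\logterm}$ and $\logterm\sqrt{H^3 SA}\,K^{1/4}$ of the claim, completing the bound.
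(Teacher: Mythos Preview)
Your proposal is correct and follows essentially the same skeleton as the paper: expand the expectation in terms of $q^{\pi^{k,h'}}_h(s,a)$, lower bound $n^k_h(s,a)$ via $E^{n2}$ restricted to the exploration episodes of agent $v$, split into a warm-up and a steady-state regime, apply Cauchy--Schwarz, bound the harmonic factor by $\sqrt{HSA\logterm}$ via \citet{rosenberg2020near}[Lemma B.18], and control the number of exploration episodes via $E^{\epsilon}$.

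The one place where the paper is simpler: it bounds $\sqrt{\VAR_{p_h(\cdot\mid s,a)}(V^{\underl{\pi}^k}_{h+1})}\le H$ \emph{before} doing anything else. This reduces the problem immediately to bounding $\sum_{k,h,s,a}\indevent{\pi^{k,v}=\pi^{k,h'}} q^{\pi^{k,h'}}_h(s,a)/\sqrt{n^k_h(s,a)\vee 1}$, so the Cauchy--Schwarz step produces the clean second factor $\sqrt{\sum_{k,h,s,a}\indevent{\ldots}q^{\pi^{k,h'}}_h(s,a)}=\sqrt{H\sum_k\indevent{\ldots}}$, and the ``mismatched variance'' obstacle you carefully analyze never arises. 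Your route---keeping the variance inside Cauchy--Schwarz and then bounding the per-episode sum by $H^3$ (or $H^2$ via your partial-LTV argument)---is valid and lands on the same leading term $\sqrt{H^3SAK\epsilon\logterm}$, but it is more work for no gain. A small arithmetic slip: with the crude $H^3$ bound your second term is actually $H^2\sqrt{SA}\,\logterm^{3/4}K^{1/4}$, i.e.\ $\sqrt{H}$ larger than the stated $\logterm\sqrt{H^3SA}\,K^{1/4}$; this is harmless since the $K^{1/4}$ term is lower order (and in fact the paper's own final display has an analogous slip).
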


\begin{proof}
    First we bound $\sqrt{ \VAR_{p_h(\cdot \mid s^{k}_h,a^{k}_h)}(V^{\underl{\pi}^k}_{h+1})}$ by $H$. Now, under the good event,
    \begin{align*}
    	  \sum_{k=1}^{K} \mathbb{I}&\{\pi^{k,v}=\pi^{k,h'}\}\sum_{h=1}^{H} \bbEk \left[ \frac{1}{\sqrt{n_{h}^{k}(s_{h}^{k},a_{h}^{k})\vee1}}\mid\pi^{k,h'} \right] 
    % 	  \\
    	  =\sumhsak \mathbb{I}\{\pi^{k,v}=\pi^{k,h'}\}\frac{q_{h}^{\pi^{k,h'}}(s,a)}{\sqrt{n_{h}^{k}(s,a)\vee1}}
    	  \\
    	  & \leq\sumhsak \mathbb{I}\{\pi^{k,v}=\pi^{k,h'}\}\frac{q_{h}^{\pi^{k,h'}}(s,a)}{\sqrt{\left(\frac{1}{2}\sum_{j=1}^{k-1} q_{h}^{\pi^{j,v}}(s,a) - \logterm \right)\vee1}}
    	  \\
    	  & \leq\sumhsak \frac{ \mathbb{I}\{\pi^{k,v}=\pi^{k,h'}\} q_{h}^{\pi^{k,h'}}(s,a)}{\sqrt{\left(\frac{1}{2}\sum_{j=1}^{k-1} \mathbb{I}\{\pi^{j,v}=\pi^{j,h'}\} q_{h}^{\pi^{j,h'}}(s,a) - \logterm \right)\vee1}} 
    	  \\
    	  & \lesssim HSA \logterm + \sumhsak \frac{ \mathbb{I}\{\pi^{k,v}=\pi^{k,h'}\} q_{h}^{\pi^{k,h'}}(s,a)}{\sqrt{\left(\sum_{j=1}^{k-1} \mathbb{I}\{\pi^{j,v}=\pi^{j,h'}\} q_{h}^{\pi^{j,h'}}(s,a)\right)\vee1}}
    	  \\
    	  & \leq \sqrt{\sumhsak \frac{\mathbb{I}\{ \pi^{k,v}=\pi^{k,h'} \} q_{h}^{\pi^{k,h'}}(s,a)} {\left(\sum_{j=1}^{k-1}\mathbb{I}\{\pi^{j,v}=\pi^{j,h'}\}q_{h}^{\pi^{j,h'}}(s,a)\right)\vee1}}
    	  \sqrt{\sumhsak\mathbb{I} \{\pi^{k,v}=\pi^{k,h'}\} q_{h}^{\pi^{k,h'}}(s,a) }
    	  \\
    	  & \qquad +
    	  HSA\logterm,
    \end{align*}
    where the first inequality is by $E^{n2}$, the second inequality is done by breaking the sum to $k$s such that $\frac{1}{4}\sum_{j=1}^{k-1} q_{h}^{\pi^{j,v}}(s,a)\leq \logterm$ and the rest of the $k$s, as done in the proof of \cref{lem:sum-1-n-pi-k-h} for example, and the last is Cauchy–Schwarz inequality. 
    By \citet{rosenberg2020near}[Lemma B.18],
    \begin{align*}
        \sumhsak \frac{\mathbb{I}\{ \pi^{k,v}=\pi^{k,h'} \} q_{h}^{\pi^{k,h'}}(s,a)} {\left(\sum_{j=1}^{k-1}\mathbb{I}\{\pi^{j,v}=\pi^{j,h'}\}q_{h}^{\pi^{j,h'}}(s,a)\right)\vee1}
        \lesssim 
        HSA\logterm.
    \end{align*}
    By the good event $E^{\epsilon}$,
    \begin{align*}
        \sumhsak\mathbb{I} \{\pi^{k,v}=\pi^{k,h'}\} q_{h}^{\pi^{k,h'}}(s,a) 
        & =
        H \sum_{k=1}^K \mathbb{I} \{\pi^{k,v}=\pi^{k,h'}\}
        \leq K\epsilon + \sqrt{K\logterm}.
        \qedhere
    \end{align*}
\end{proof}

\begin{lemma}
    \label{lem:sum-1-sqrt-n-pi-optim}
    Under the good event, 
    \[
    \sum_{k=1}^K \sum_{h'=1}^H \sum_{h=1}^H \indevent{\pi^{k,v}=\pi^{k,h'}} \bbEk \left[ \frac{\sqrt{ \VAR_{p_h(\cdot \mid s^{k}_h,a^{k}_h)}(V^{\underl{\pi}^k}_{h+1})}} {\sqrt{n^{k}_h(s^k_h,a^k_h) \vee 1}} \mid \underl{\pi}^{k} \right] 
    \lesssim 
    \sqrt{\frac{H^{4} S A K \logterm}{\numagents}} 
    + \sqrt{\frac{H^{5} S A \logterm^{2}}{\numagents \epsilon}} K^{1/4} 
    + \frac{H^{3} A^{2} S \logterm}{\numagents \epsilon}.
    \]
\end{lemma}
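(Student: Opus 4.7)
The plan is to expand the inner conditional expectation into an occupancy-measure sum, lower-bound the visit counter $n_h^k(s,a)$ via the good event $E^{n1}$, and then apply Cauchy--Schwarz together with the law of total variance, following the template of \cref{lem:sum-variance-under-opitmistic-policy}. The indicator $\indevent{\pi^{k,v}=\pi^{k,h'}}$ depends only on agent $v$'s internal randomization and is therefore independent of the trajectory distribution under $\underl{\pi}^k$, so the term under the sum equals
\[
\indevent{\pi^{k,v}=\pi^{k,h'}}\sum_{s,a}\frac{q_{h}^{\underl{\pi}^k}(s,a)\sqrt{\VAR_{p_h(\cdot \mid s,a)}(V^{\underl{\pi}^k}_{h+1})}}{\sqrt{n_h^k(s,a)\vee 1}}.
\]
Because $\underl{\pi}^k_h(s)\in\argmin_a \underl{Q}_h^k(s,a)$, optimism--pessimism (\cref{lemma: optimism ucbvi-non-fresh}) gives $\underl{Q}_h^k(s,\underl{\pi}^k_h(s))\le\overl{Q}_h^k(s,a')$ for every $a'$, so $\underl{\pi}^k_h(s)$ is never eliminated and $E^{n1}$ applies on the support of $q_{h}^{\underl{\pi}^k}$: $n_h^k(s,a)\ge \frac{\numagents\epsilon}{4HA}\sum_{j<k}q_{h}^{\underl{\pi}^j}(s)-\logterm$.

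Next, I split the sum over $k$ into a ``small'' regime in which $\frac{\numagents\epsilon}{8HA}\sum_{j<k}q_h^{\underl{\pi}^j}(s)\le\logterm$ and a ``large'' regime otherwise, mirroring the proof of \cref{lem:sum-variance-under-opitmistic-policy}. In the small regime I use $\sqrt{\VAR}\le H$ together with $\sum_{k\text{ small}}q_h^{\underl{\pi}^k}(s)\lesssim HA\logterm/(\numagents\epsilon)$ (which follows directly from the definition of smallness plus $q\le 1$); summing over $(h,s)$ and using $\sum_{h'}\indevent{\pi^{k,v}=\pi^{k,h'}}\le 1$ yields a contribution of $\lesssim H^3 SA\logterm/(\numagents\epsilon)$, which is absorbed by the last target term. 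In the large regime $n_h^k(s,a)\gtrsim \frac{\numagents\epsilon}{HA}\sum_{j<k}q_h^{\underl{\pi}^j}(s)$, and Cauchy--Schwarz (pairing $\sqrt{q\cdot\VAR}$ with $\sqrt{q/\sum q}$) splits the remainder as $\sqrt{HA/(\numagents\epsilon)}\cdot\sqrt{X_1\cdot X_2}$ with
\begin{align*}
X_1 &= \sum_{k,h',h,s,a}\indevent{\pi^{k,v}=\pi^{k,h'}}\,q_h^{\underl{\pi}^k}(s,a)\,\VAR_{p_h(\cdot\mid s,a)}(V^{\underl{\pi}^k}_{h+1}),\\
X_2 &= \sum_{k,h',h,s,a}\indevent{\pi^{k,v}=\pi^{k,h'}}\,\frac{q_h^{\underl{\pi}^k}(s,a)}{\sum_{j<k}q_h^{\underl{\pi}^j}(s)}.
\end{align*}

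The law of total variance (as in the display $\bbE[(V^{\pi}_1(s_1)-\sum_h c_h(s_h,a_h))^2]\le H^2$ appearing in the proof of \cref{lem:sum-variance-under-opitmistic-policy}) gives $\sum_{h,s,a}q_h^{\underl{\pi}^k}(s,a)\,\VAR\le H^2$, so $X_1\le H^2\sum_{k,h'}\indevent{\pi^{k,v}=\pi^{k,h'}}\lesssim H^2(\epsilon K+H\sqrt{K\logterm})$ by the event $E^\epsilon$. For $X_2$ I use $\sum_{h'}\indevent{\pi^{k,v}=\pi^{k,h'}}\le 1$ and $\sum_a q_h^{\underl{\pi}^k}(s,a)=q_h^{\underl{\pi}^k}(s)$ to reduce to $\sum_{k,h,s}q_h^{\underl{\pi}^k}(s)/\sum_{j<k}q_h^{\underl{\pi}^j}(s)\lesssim HS\logterm$ via \citet{rosenberg2020near}[Lemma B.18]. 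Multiplying the three factors and separating the $\sqrt{\epsilon K}$ part from the $\sqrt{H\sqrt{K\logterm}}$ part inside $\sqrt{X_1}$ yields exactly $\sqrt{H^4 SAK\logterm/\numagents}+K^{1/4}\sqrt{H^5 SA\logterm^2/(\numagents\epsilon)}$, which together with the small-regime bound gives the stated inequality. The main delicate point I expect is routing the $h'$-sum: it must be consumed via $E^\epsilon$ inside $X_1$ (to gain the factor $\epsilon K$ in place of $HK$) and only upper bounded by $1$ inside $X_2$; splitting it the other way would cost an extra $\sqrt{H}$ factor in the final bound.
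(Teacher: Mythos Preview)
Your proposal is correct and follows essentially the same route as the paper: expand the expectation via occupancy measures, use $E^{n1}$ (after noting $\underl{\pi}^k_h(s)\in\calA^k_h(s)$) to lower-bound $n_h^k$, split into small/large regimes, apply Cauchy--Schwarz in the large regime, and bound the two factors via the law of total variance plus $E^{\epsilon}$ and via \citet{rosenberg2020near}[Lemma B.18] respectively. The only cosmetic difference is that you pull $\sqrt{HA/(m\epsilon)}$ outside before Cauchy--Schwarz, whereas the paper keeps the factor $\frac{m\epsilon}{HA}$ in the denominator of the second Cauchy--Schwarz factor; the arithmetic is identical, and your remark about routing the $h'$-sum (consume via $E^{\epsilon}$ in $X_1$, bound by $1$ in $X_2$) exactly matches what the paper does.
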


\begin{proof}
    By the event $E^{n1}$, we have:
    \begin{align*}
        \sum_{k=1}^{K} \sum_{h'=1}^{H} \sum_{h=1}^{H} & \indevent{\pi^{k,v}=\pi^{k,h'}} \bbEk  \left[ \frac{\sqrt{\VAR_{p_{h}(\cdot \mid  s_{h}^{k},a_{h}^{k})}(V_{h+1}^{\underl{\pi}^{k}})}} {\sqrt{n_{h}^{k}(s_{h}^{k},a_{h}^{k} )\vee1}} \mid \underl{\pi}^{k}\right] =
        \\
        & = \sum_{k=1}^{K} \sum_{h'=1}^{H} \sum_{h=1}^{H} \sum_{s\in\calS} \sum_{a\in\calA} \indevent{\pi^{k,v}=\pi^{k,h'}}\frac{ q_{h}^{\underl{\pi}^{k}}(s,a)\sqrt{\VAR_{p_{h}(\cdot \mid  s,a)}(V_{h+1}^{\underl{\pi}^{k}})}} {\sqrt{n_{h}^{k}(s_{h}^{k},a_{h}^{k} )\vee1}}
        \\
        & \leq \sum_{k=1}^{K} \sum_{h'=1}^{H} \sum_{h=1}^{H} \sum_{s\in\calS} \sum_{a\in\calA} \indevent{\pi^{k,v}=\pi^{k,h'}}\frac{ q_{h}^{\underl{\pi}^{k}}(s,a)\sqrt{\VAR_{p_{h}(\cdot \mid  s,a)}(V_{h+1}^{\underl{\pi}^{k}})}} {\sqrt{(\frac{\numagents\epsilon}{4HA} \sum_{j=1}^{k-1} q_{h}^{\underl{\pi}^{j}}(s) - \logterm )\vee1}}
        \\
        & \lesssim
        \frac{H^{3}A^{2}S\logterm}{\numagents\epsilon}+ \sum_{k=1}^{K} \sum_{h'=1}^{H} \sum_{h=1}^{H} \sum_{s\in\calS} \sum_{a\in\calA} \indevent{\pi^{k,v}=\pi^{k,h'}}\frac{ q_{h}^{\underl{\pi}^{k}}(s,a)\sqrt{\VAR_{p_{h}(\cdot \mid  s,a)}(V_{h+1}^{\underl{\pi}^{k}})}} {\sqrt{(\frac{\numagents\epsilon}{HA} \sum_{j=1}^{k-1} q_{h}^{\underl{\pi}^{j}}(s) )\vee1}}
        \\
        & \leq \sqrt{ \sum_{k,h',h,s,a} \indevent{\pi^{k,v}=\pi^{k,h'}} q_{h}^{\underl{\pi}^{k}}(s,a)\VAR_{p_{h}(\cdot \mid  s,a)}(V_{h+1}^{\underl{\pi}^{k}})}\sqrt{ \sum_{k,h',h,s,a}\frac{ \indevent{\pi^{k,v}=\pi^{k,h'}} q_{h}^{\underl{\pi}^{k}}(s,a)} {(\frac{\numagents\epsilon}{HA} \sum_{j=1}^{k-1} q_{h}^{\underl{\pi}^{j}}(s) )\vee1}}
        \\
        & \qquad + 
        \frac{H^{3}A^{2}S\logterm}{\numagents\epsilon}
        \\
        & =
        \sqrt{ \sum_{k,h'} \indevent{\pi^{k,v}=\pi^{k,h'}} \sum_{h,s,a} q_{h}^{\underl{\pi}^{k}}(s,a)\VAR_{p_{h}(\cdot \mid  s,a)}(V_{h+1}^{\underl{\pi}^{k}})}\sqrt{ \sum_{k,h',h,s}\frac{ \indevent{\pi^{k,v}=\pi^{k,h'}} q_{h}^{\underl{\pi}^{k}}(s)} {(\frac{\numagents\epsilon}{HA} \sum_{j=1}^{k-1} q_{h}^{\underl{\pi}^{j}}(s) )\vee1}}
        \\
        & \qquad +
        \frac{H^{3}A^{2}S\logterm}{\numagents\epsilon},
    \end{align*}
    where the last inequality is by Cauchy-Schwarz. 
    By \cref{eq:law of total variance}
    and the good event $E^\epsilon$, 
    \begin{align*}
    	  \sum_{k=1}^{K} \sum_{h'=1}^{H} \indevent{\pi^{k,v}=\pi^{k,h'}} \sum_{h=1}^{H} \sum_{s\in\calS} \sum_{a\in\calA} q_{h}^{\underl{\pi}^{k}}(s,a)\VAR_{p_{h}(\cdot \mid  s,a)}(V_{h+1}^{\underl{\pi}^{k}}) 
    	  & \leq H^{2} \sum_{k=1}^{K} \sum_{h'=1}^{H} \indevent{\pi^{k,v}=\pi^{k,h'}}
    	  \\
    	  & \leq H^{2}K\epsilon + H^{3}\sqrt{K\log\frac{\numagents H}{\delta'}}.
    \end{align*}
    For last,
    \begin{align*}
    	 \sum_{k=1}^{K} \sum_{h=1}^{H} \sum_{s\in\calS} \sum_{h'=1}^{H}\frac{ \indevent{\pi^{k,v}=\pi^{k,h'}} q_{h}^{\underl{\pi}^{k}}(s)} {(\frac{\numagents\epsilon}{HA} \sum_{j=1}^{k-1} q_{h}^{\underl{\pi}^{j}}(s) )\vee1} 
    	 & \leq 
    	 \sum_{k=1}^{K} \sum_{h=1}^{H} \sum_{s\in\calS}\frac{ q_{h}^{\underl{\pi}^{k}}(s)} {(\frac{\numagents\epsilon}{HA} \sum_{j=1}^{k-1} q_{h}^{\underl{\pi}^{j}}(s) )\vee1}
	     \lesssim 
    	 \frac{H^{2}SA\logterm}{\numagents\epsilon}, 
    \end{align*}
    where the first inequality is since $ \sum_{h'=1}^{H} \indevent{\pi^{k,v}=\pi^{k,h'}}\leq1$
    and the last is as in \cref{eq:q/sumq=log}. Combining the last three inequalities
    completes the proof.
\end{proof}

\subsection{Bounding the regret of $\pi^{k,h'}$ and the optimistic policy}

\begin{lemma}
    \label{lem:reg-pi-k-h}
    Under the good event, for every $k \in [K]$ it holds that:
    \begin{align*}
        V_{1}^{\pi^{k,h'}}(s_{1}^{k}) - V_{1}^{\star}(s_{1}^{k})
        & \lesssim
        H \sum_{h=1}^H \bbEk \left[ \frac{\sqrt{\logterm \VAR_{p_h(\cdot \mid s^{k}_h,a^{k}_h)}(V^{\underl{\pi}^k}_{h+1})}}{\sqrt{n^{k}_h(s^{k}_h,a^{k}_h)\vee 1}} + \frac{H^2 S \logterm}{n^{k}_h(s^{k}_h,a^{k}_h)\vee 1} \mid \pi^{k,h'} \right]
        \\
        & \qquad +
        H^2 \sum_{h=1}^H \bbEk \left[ \frac{\sqrt{\logterm \VAR_{p_h(\cdot \mid s^{k}_h,a^{k}_h)}(V^{\underl{\pi}^k}_{h+1})}}{\sqrt{n^{k}_h(s^{k}_h,a^{k}_h)\vee 1}} + \frac{H^2 S \logterm}{n^{k}_h(s^{k}_h,a^{k}_h)\vee 1} \mid \underl{\pi}^k \right].
    \end{align*}
\end{lemma}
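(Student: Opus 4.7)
The plan is to decompose the regret along the three phases of $\pi^{k,h'}$: steps $h<h'$ where it matches $\underl{\pi}^k$, the single exploration step at $h=h'$, and steps $h>h'$ where it again matches $\underl{\pi}^k$. First I would apply optimism (\cref{lemma: optimism ucbvi-non-fresh}) to get $V^\star_1(s_1^k)\ge \underl{V}^k_1(s_1^k)$, reducing the task to bounding $V^{\pi^{k,h'}}_1(s_1^k) - \underl{V}^k_1(s_1^k)$. A per-episode Bellman recursion in expectation, analogous to \cref{lemma: key recursion bound UL} but taken along the trajectory of $\pi^{k,h'}$ rather than summed over $k$, unrolls the first $h'-1$ steps (on which the policy equals $\underl{\pi}^k$) and leaves a terminal term $\bbEk[V^{\pi^{k,h'}}_{h'}(s_{h'}^k) - \underl{V}^k_{h'}(s_{h'}^k) \mid \pi^{k,h'}]$. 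The standard Bernstein-style transition estimate (via the variance events $E^{pv1}, E^{pv2}$) converts each per-step bonus $b^k_h$ into the target $\text{radius}_h$, at the price of the $H^2 S\logterm/n$ slack already present in it and a $(1+1/H)^H = O(1)$ accumulation of propagated $\bbE[\overl{V} - \underl{V}]$ terms.

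To handle the exploration step I would write
\[
V^{\pi^{k,h'}}_{h'}(s) - \underl{V}^k_{h'}(s) = \bigl[V^{\pi^{k,h'}}_{h'}(s) - V^{\underl{\pi}^k}_{h'}(s)\bigr] + \bigl[V^{\underl{\pi}^k}_{h'}(s) - \underl{V}^k_{h'}(s)\bigr],
\]
where pessimism bounds the second bracket by $\overl{V}^k_{h'}(s) - \underl{V}^k_{h'}(s)$. Because $\pi^{k,h'}$ follows $\underl{\pi}^k$ after $h'$, the first bracket equals $\tfrac{1}{|\calA^k_{h'}(s)|}\sum_{a\in\calA^k_{h'}(s)}\bigl(Q^{\underl{\pi}^k}_{h'}(s,a)-Q^{\underl{\pi}^k}_{h'}(s,\underl{\pi}^k_{h'}(s))\bigr)$. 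The decisive ingredient — and the whole purpose of action-elimination — is that any active $a$ satisfies $\underl{Q}^k_{h'}(s,a) \le \overl{Q}^k_{h'}(s,\underl{\pi}^k_{h'}(s)) = \overl{V}^k_{h'}(s)$. Combined with $Q^{\underl{\pi}^k}_{h'}(s,a) \le \overl{Q}^k_{h'}(s,a)$ from pessimism, this yields per active $a$
\[
Q^{\underl{\pi}^k}_{h'}(s,a)-Q^{\underl{\pi}^k}_{h'}(s,\underl{\pi}^k_{h'}(s)) \le (\overl{Q}^k_{h'}-\underl{Q}^k_{h'})(s,a) + (\overl{V}^k_{h'}(s)-\underl{V}^k_{h'}(s)).
\]
Averaging over $a$ under the uniform active-action distribution, the first summand becomes a per-step bonus $\lesssim b^k_{h'}(s_{h'}^k, a_{h'}^k)$ on the $\pi^{k,h'}$-trajectory, folding into the first sum of the claimed bound.

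The residual gap $\overl{V}^k_{h'}(s_{h'}^k) - \underl{V}^k_{h'}(s_{h'}^k)$ appears twice in expectation; since the marginal of $s_{h'}^k$ under $\pi^{k,h'}$ coincides with that under $\underl{\pi}^k$ (the policies agree up to $h'-1$), I would expand it via a per-episode analogue of \cref{lemma: key recursion bound UL} started at step $h'$ along $\underl{\pi}^k$, yielding $\sum_{h=h'}^H \bbEk[\text{radius}_h \mid \underl{\pi}^k]$. The prefactors $H$ and $H^2$ in the final bound come from (i) absorbing each $\sqrt{\logterm/n}$ cost-bonus inside $b^k_h$ into $\sqrt{\VAR_{p_h}(V^{\underl{\pi}^k}_{h+1})\logterm/n}$ at a cost of one factor of $H$ (via AM–GM together with the variance conversion supplied by $E^{pv1},E^{pv2}$), and (ii) the $\tfrac{1}{16H}\bbE[\overl{V} - \underl{V}]$ contribution inside $b^k_h$ donating an additional $H$ when unrolled through the recursion, explaining the extra $H^2$ weight on the $\underl{\pi}^k$ trajectory. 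The main technical obstacle will be the joint bookkeeping of the propagated $\bbE[\overl{V}^k_{h'+1}-\underl{V}^k_{h'+1}]$ term produced at the exploration step — whose state at $h'+1$ diverges from $\underl{\pi}^k$'s because of the random action — together with the variance conversion; the resolution is that $\pi^{k,h'}$ mirrors $\underl{\pi}^k$ from $h'+1$ onward, so any Bellman-like expansion after $h'$ reduces to a sum of radii along $\underl{\pi}^k$ started from the (differently distributed) state $s_{h'+1}^k$.
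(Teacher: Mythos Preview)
Your proposal captures the central action-elimination idea correctly --- that any active $a$ at step $h'$ satisfies $\underl{Q}^k_{h'}(s,a) \le \overl{Q}^k_{h'}(s,\underl{\pi}^k_{h'}(s))$, so the deviation cost is at most $(\overl{Q}^k-\underl{Q}^k)(s,a) + (\overl{V}^k_{h'}-\underl{V}^k_{h'})(s)$ --- and you correctly use $q^{\pi^{k,h'}}_{h'}(s) = q^{\underl{\pi}^k}_{h'}(s)$ to route the second residual into an $\underl{\pi}^k$-trajectory expansion. This is precisely \cref{lemma:active-arms} together with the key observation behind \cref{lemma:rec-exp}.

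Where your plan diverges is the initial decomposition and the source of the $H$, $H^2$ prefactors. The paper does not recurse on $V^{\pi^{k,h'}}_h - \underl{V}^k_h$; it first applies the performance-difference lemma (\cref{lem:reg-for-episode-k-pi-k-h}) to obtain
\[
V_1^{\pi^{k,h'}} - V_1^\star \le \sum_{h=1}^H \bbEk\bigl[(\overl{Q}^k_h-\underl{Q}^k_h)(s_h^k,a_h^k)\mid \pi^{k,h'}\bigr] + \bbEk\bigl[(\overl{Q}^k_{h'}-\underl{Q}^k_{h'})(s_{h'}^k,\underl{\pi}^k_{h'}(s_{h'}^k))\mid \pi^{k,h'}\bigr],
\]
and then unrolls each of the $H$ gap terms via \cref{lemma:rec-optimistic,lemma:rec-non-optimistic}. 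Every single term $A_h$ in the first sum unrolls into $\sum_{h''\ge h} r_{h''}$ (the $(1+\tfrac{1}{4H})^H$ factor stays bounded), so the \emph{double sum} $\sum_h A_h \lesssim H\sum_{h''} r_{h''}$ is where the $H$ prefactor comes from --- not from absorbing the cost bonus into the variance term. Likewise, each $A_h$ with $h\le h'-1$, while unrolling, passes through step $h'-1$ and spawns via \cref{lemma:rec-non-optimistic} a copy of the penalty term; there are up to $H$ such copies, each then unrolled under $\underl{\pi}^k$ by \cref{lemma:rec-exp,lemma:rec-optimistic-policy}. Your claim that the $\tfrac{1}{16H}\bbE[\overl{V}-\underl{V}]$ slack ``donates an additional $H$'' is backwards: that $\tfrac{1}{16H}$ is precisely what keeps the recursion's multiplicative blow-up a constant rather than an $H$.

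Your direct recursion on $V^{\pi^{k,h'}}_h - \underl{V}^k_h$ for $h<h'$ could be made to work, but it is asymmetric in a way the paper avoids: $V^{\pi^{k,h'}}_h$ uses the true transitions $p$ while $\underl{V}^k_h$ uses $\hat p^k$, so you would pick up a cross-term $(\bbE_{p_h} - \bbE_{\hat p^k_h})[\underl{V}^k_{h+1}]$ that needs separate control (via $E^{pv1}$ plus $|\underl{V}^k_{h+1} - V^\star_{h+1}|\le \overl{V}^k_{h+1}-\underl{V}^k_{h+1}$). The paper's symmetric $(\overl{Q}^k-\underl{Q}^k)$ recursion sidesteps this entirely since both sides are computed with $\hat p^k$.
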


\begin{proof}
    Apply \cref{lem:reg-for-episode-k-pi-k-h} for every $k \in [K]$ and then apply \cref{lemma:rec-optimistic,lemma:rec-non-optimistic,lemma:rec-exp,lemma:rec-optimistic-policy} iteratively.
\end{proof}

\begin{lemma}
    \label{lem:reg-optimistic-pi}
    Under the good event, for every $k \in [K]$ it holds that:
    \begin{align*}
        V_{1}^{\underl{\pi}^{k}}(s_{1}^{k}) - V_{1}^{\star}(s_{1}^{k})
        \lesssim
        H \sum_{h=1}^H \bbEk \left[ \frac{\sqrt{\logterm \VAR_{p_h(\cdot \mid s^{k}_h,a^{k}_h)}(V^{\underl{\pi}^k}_{h+1})}}{\sqrt{n^{k}_h(s^{k}_h,a^{k}_h)\vee 1}} + \frac{H^2 S \logterm}{n^{k}_h(s^{k}_h,a^{k}_h)\vee 1} \mid \underl{\pi}^k \right].
    \end{align*}
\end{lemma}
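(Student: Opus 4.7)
The plan is to reduce the on-policy sub-optimality to the gap between the optimistic and pessimistic value functions, derive a one-step contraction for that gap, unroll it along the trajectory of $\underl{\pi}^k$, and finally convert the empirical-variance bonus into the on-policy variance of $V^{\underl{\pi}^k}$ appearing in the statement. First, I would invoke \cref{lemma: optimism ucbvi-non-fresh} (valid under $\bbG_1$) to replace the left-hand side: since $\underl{V}^{k}_1(s_1^k) \le V_1^\star(s_1^k) \le V_1^{\underl{\pi}^k}(s_1^k) \le \overl{V}^{k}_1(s_1^k)$, the target quantity is upper bounded by $\overl{V}^{k}_1(s_1^k)-\underl{V}^{k}_1(s_1^k)$. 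This is a purely deterministic gap that can be attacked by Bellman-style recursion.

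Next, following exactly the same manipulations used in the proof of \cref{lemma: key recursion bound UL} but for a single episode, I would establish the one-step recursion
\begin{align*}
    \overl{V}^{k}_h(s) - \underl{V}^{k}_h(s)
    & \le
    \overl{Q}^{k}_h(s,\underl{\pi}^k_h(s)) - \underl{Q}^{k}_h(s,\underl{\pi}^k_h(s))
    \\
    & =
    2 b^k_h(s,\underl{\pi}^k_h(s)) + \bbE_{\hat p^{k}_h(\cdot\mid s,\underl{\pi}^k_h(s))}[\overl{V}^{k}_{h+1} - \underl{V}^{k}_{h+1}],
\end{align*}
using the \emph{max} in the definition of $\underl{V}^k_h$ to justify the first inequality. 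Then I would replace $\hat p^k_h$ by $p_h$ using \citet[Lemma B.13]{cohen2021minimax} (which holds under $E^p(k)$) at the price of an additive $O(H^2 S \logterm/(n^k_h(s,a)\vee 1))$ and a multiplicative $(1+1/(4H))$ factor, exactly as in \cref{eq: central theorem UL RL relation 1}. Iterating over $h=1,\dots,H$ under the law of the trajectory of $\underl{\pi}^k$, and bounding the resulting multiplicative blow-up $(1+1/(4H))^H \le e$, gives
\[
    \overl{V}^{k}_1(s_1^k) - \underl{V}^{k}_1(s_1^k)
    \lesssim
    \sum_{h=1}^H \bbEk\Bigl[ b^k_h(s^k_h,a^k_h;c) + b^k_h(s^k_h,a^k_h;p) + \tfrac{H^2 S \logterm}{n^k_h(s^k_h,a^k_h)\vee 1} \ \Big|\ \underl{\pi}^k \Bigr].
\]

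The hardest step is the last one: substituting the bonus definition and converting the empirical variance $\VAR_{\hat p^k_h}(\underl{V}^{k}_{h+1})$ appearing inside $b^k_h(\cdot,\cdot;p)$ into the on-policy variance $\VAR_{p_h}(V^{\underl{\pi}^k}_{h+1})$ that appears in the statement. I would do this in two moves. First, use event $E^{pv2}(k)$ (which is stated for $V^\star$, but which, combined with the optimism sandwich $|V^\star_{h+1}-\underl{V}^k_{h+1}| \le \overl{V}^k_{h+1} - \underl{V}^k_{h+1}$ and sub-additivity $\sqrt{\VAR(f+g)} \le \sqrt{\VAR(f)} + \sqrt{\VAR(g)}$) to pass from $\sqrt{\VAR_{\hat p^k_h}(\underl{V}^k_{h+1})}$ to $\sqrt{\VAR_{p_h}(V^{\underl{\pi}^k}_{h+1})}$ plus an error of order $\sqrt{H^2\logterm/n} + \sqrt{H\,\bbE_{p_h}[\overl{V}^k_{h+1}-\underl{V}^k_{h+1}]}$, using $|V^{\underl{\pi}^k}_{h+1}-\underl{V}^k_{h+1}| \le H$. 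Second, the residual cross term $\sqrt{\logterm\, H\,\bbE_{p_h}[\overl{V}^k_{h+1}-\underl{V}^k_{h+1}]/(n^k_h\vee 1)}$ is handled by AM--GM, which produces an additive $O(H^3\logterm/(n^k_h\vee 1))$ (absorbed into the $H^2 S\logterm/n$ term) and a multiplicative piece $O(\bbE_{p_h}[\overl{V}^k_{h+1}-\underl{V}^k_{h+1}]/H)$ that is folded back into the recursion, exactly like the $\frac{1}{16H}\bbE_{\hat p^k_h}[\overl{V}^{k}_{h+1}-\underl{V}^{k}_{h+1}]$ piece of $b^k_h(\cdot,\cdot;p)$ was.

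The main obstacle is bookkeeping: keeping track of where the extra factor $H$ on the outside of the claimed bound originates from, and making sure that every lower-order term produced by the variance swap and the $\hat p^k_h\to p_h$ change of measure can be dominated either by the $H^2 S \logterm /(n^k_h\vee 1)$ summand (after absorbing the multiplicative $H$) or by the self-bounding recursion. Once all cross terms are properly absorbed, carrying the overall $H$ factor to the front yields exactly the bound stated in \cref{lem:reg-optimistic-pi}.
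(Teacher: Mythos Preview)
Your approach is correct, but it differs from the paper's route in a way that explains your confusion about the outer factor $H$. The paper does \emph{not} start from $\overl{V}^{k}_1-\underl{V}^{k}_1$; instead it first applies the performance-difference decomposition (\cref{lem:reg-for-episode-k-optimistic-pi}) to get
\[
    V_{1}^{\underl{\pi}^{k}}(s_{1}^{k}) - V_{1}^{\star}(s_{1}^{k})
    \le
    \sum_{h=1}^H \bbEk\bigl[\overl{Q}_{h}^{k}(s_{h}^{k},a_{h}^{k}) - \underl{Q}_{h}^{k}(s_{h}^{k},a_{h}^{k}) \mid \underl{\pi}^{k}\bigr],
\]
and \emph{then} applies the one-step recursion (\cref{lemma:rec-optimistic-policy}, which already packages the variance swap via \citet[Lemma~B.6]{cohen2021minimax}) to each summand separately. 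Unrolling the $h$-th summand from step $h$ to $H$ and summing over $h$ double-counts every bonus term at most $H$ times, which is precisely where the leading $H$ comes from. Your single-recursion route through $\overl{V}^{k}_1-\underl{V}^{k}_1$ is legitimate and in fact yields the sharper bound \emph{without} the extra $H$; that stronger inequality obviously implies the lemma. The reason the paper takes the seemingly wasteful double-sum route is uniformity: the same scaffolding (\cref{lem:reg-for-episode-k-pi-k-h} plus \cref{lemma:rec-optimistic,lemma:rec-non-optimistic,lemma:rec-exp,lemma:rec-optimistic-policy}) handles the exploration policies $\pi^{k,h'}$, where the deviation at step $h'$ breaks the clean $\overl{V}-\underl{V}$ recursion and the performance-difference decomposition is genuinely needed. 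Your variance-swap argument is fine in spirit and is exactly what the cited black-box lemma does; if you want to match the paper, simply invoke that lemma rather than reproving it.
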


\begin{proof}
    Similar to the proof of \cref{lem:reg-pi-k-h}.
\end{proof}

\begin{lemma}
    \label{lem:reg-for-episode-k-pi-k-h}
    Let $h' \in [H]$.
    Under the good event, for every $k \in [K]$ it holds that:
    \begin{align*}
        V_{1}^{\pi^{k,h'}}(s_{1}^{k}) - V_{1}^{\star}(s_{1}^{k})
        & \le
        \sum_{h=1}^H \bbEk \left[\overl{Q}_{h}^{k}(s_{h}^{k},a_{h}^{k}) - \underl{Q}_{h}^{k}(s_{h}^{k},a_{h}^{k})\mid\pi^{k,h'}\right] 
        \\
        & \qquad + \bbEk \left[\overl{Q}_{h'}^{k}(s_{h'}^{k},\underl{\pi}^{k}_{h'}(s_{h'}^{k})) - \underl{Q}_{h'}^{k}(s_{h'}^{k},\underl{\pi}^{k}_{h'}(s_{h'}^{k}))\mid\pi^{k,h'}\right].
    \end{align*}
\end{lemma}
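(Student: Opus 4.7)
The plan is to combine optimism with a Bellman telescope, and then decompose each per-step residual into a ``model error'' (absorbed by the optimistic/pessimistic $Q$-function gap) and an ``action deviation'' (zero except at step $h'$, where action elimination handles it). First, by the optimism half of \cref{lemma: optimism ucbvi-non-fresh}, $\underl{V}^k_1(s_1^k) \le V^\star_1(s_1^k)$, so it suffices to bound $V^{\pi^{k,h'}}_1(s_1^k) - \underl{V}^k_1(s_1^k)$. Applying the standard telescoping identity along the trajectory of $\pi^{k,h'}$ turns this into $\bbEk\bigl[\sum_{h=1}^H \bigl(c_h(s_h^k, a_h^k) + \bbE_{p_h(\cdot\mid s_h^k, a_h^k)}[\underl{V}^k_{h+1}] - \underl{V}^k_h(s_h^k)\bigr) \mid \pi^{k,h'}\bigr]$, and since $\underl{V}^k_h(s) = \underl{Q}^k_h(s, \underl{\pi}^k_h(s))$ each summand splits as
\begin{align*}
\underbrace{c_h(s_h^k, a_h^k) + \bbE_{p_h}[\underl{V}^k_{h+1}] - \underl{Q}^k_h(s_h^k, a_h^k)}_{\text{model error}} \;+\; \underbrace{\underl{Q}^k_h(s_h^k, a_h^k) - \underl{Q}^k_h(s_h^k, \underl{\pi}^k_h(s_h^k))}_{\text{action deviation}}.
\end{align*}

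For the model error I would establish the pointwise inequality $c_h(s, a) + \bbE_{p_h(\cdot\mid s, a)}[\underl{V}^k_{h+1}] \le \overl{Q}^k_h(s, a)$ under the good event. The trick is to first upper-bound $\underl{V}^k_{h+1}$ by $\overl{V}^k_{h+1}$ inside the expectation, which reduces the inequality to $(c_h - \hat{c}^k_h)(s, a) + (p_h - \hat{p}^k_h)(\cdot\mid s, a)\cdot \overl{V}^k_{h+1} \le b^k_h(s, a)$; this is precisely the bonus-concentration bound already used to prove pessimism in \cref{lemma: optimism ucbvi-non-fresh}, guaranteed under $E^c \cap E^p \cap E^{pv1} \cap E^{pv2}$. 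This immediately bounds the model error by $\overl{Q}^k_h(s_h^k, a_h^k) - \underl{Q}^k_h(s_h^k, a_h^k)$, producing the first sum in the statement. For the action deviation I would use that $\pi^{k,h'}_h = \underl{\pi}^k_h$ for every $h \ne h'$, so $a_h^k = \underl{\pi}^k_h(s_h^k)$ and the deviation vanishes on all those steps. At $h = h'$ the action $a_{h'}^k$ is uniform on $\calA^k_{h'}(s_{h'}^k)$, and the elimination invariant $\underl{Q}^k_{h'}(s_{h'}^k, a) \le \overl{Q}^k_{h'}(s_{h'}^k, \underl{\pi}^k_{h'}(s_{h'}^k))$ for every $a \in \calA^k_{h'}(s_{h'}^k)$ (noting that $\underl{\pi}^k_{h'}(s_{h'}^k)$ is itself never eliminated because it is the $\argmin$ of $\underl{Q}^k_{h'}(s_{h'}^k, \cdot)$) bounds its expectation by $\overl{Q}^k_{h'}(s_{h'}^k, \underl{\pi}^k_{h'}(s_{h'}^k)) - \underl{Q}^k_{h'}(s_{h'}^k, \underl{\pi}^k_{h'}(s_{h'}^k))$, which is exactly the extra term appearing in the statement.

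The main obstacle is justifying the pessimism-style inequality at the arbitrary action $a_h^k$ played by $\pi^{k,h'}$ rather than only at the greedy $\underl{\pi}^k_h(s_h^k)$, because $\overl{Q}^k_h$ was designed so that $\overl{V}^k_h(s) = \overl{Q}^k_h(s, \underl{\pi}^k_h(s))$ controls only the greedy value and the natural inductive bound $V^{\underl{\pi}^k}_h \le \overl{V}^k_h$ does not literally apply to $V^{\pi^{k,h'}}$ for $h \le h'$. The monotonicity $\underl{V}^k_{h+1} \le \overl{V}^k_{h+1}$ cleanly sidesteps this: it shows that the inequality we need is weaker than, and reduces to, the pointwise bonus bound on $(p_h - \hat{p}^k_h) \cdot \overl{V}^k_{h+1}$, which already holds at every $(s, a)$ under the good event, so no new high-probability event is required.
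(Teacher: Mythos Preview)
Your proof is correct and ultimately reaches the same intermediate quantity as the paper, namely $\sum_h \bbEk[\overl{Q}^k_h(s_h^k, a_h^k) - \underl{Q}^k_h(s_h^k, \underl{\pi}^k_h(s_h^k)) \mid \pi^{k,h'}]$, after which both proofs handle the $h = h'$ step identically via the elimination invariant (what the paper packages as \cref{lemma:active-arms}). The only difference is the path to that intermediate bound: the paper applies the performance-difference lemma $V^{\pi^{k,h'}}_1 - V^\star_1 = \sum_h \bbEk[Q^\star_h(s_h^k, a_h^k) - V^\star_h(s_h^k) \mid \pi^{k,h'}]$ and then replaces $Q^\star_h \le \overl{Q}^k_h$, $V^\star_h \ge \underl{V}^k_h$ in one step, whereas you use optimism at $h=1$, telescope along $\underl{V}^k$, and then bound the model error. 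Your route is slightly longer but entirely valid; in fact, your model-error step admits a shorter justification than the bonus-concentration reduction you sketch: since \cref{lemma: optimism ucbvi-non-fresh} already gives $\underl{V}^k_{h+1} \le V^\star_{h+1}$ and $Q^\star_h(s,a) \le \overl{Q}^k_h(s,a)$ pointwise for every $(s,a)$, the chain $c_h(s,a) + \bbE_{p_h}[\underl{V}^k_{h+1}] \le c_h(s,a) + \bbE_{p_h}[V^\star_{h+1}] = Q^\star_h(s,a) \le \overl{Q}^k_h(s,a)$ is immediate. This also dissolves the ``obstacle'' you raise at the end: the pointwise $Q$-function bounds in \cref{lemma: optimism ucbvi-non-fresh} hold at every action, not just the greedy one, so no extra argument is needed for the non-greedy $a_{h'}^k$.
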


\begin{proof}
    It holds that:
    \begin{align*}
    	  V_{1}^{\pi^{k,h'}}(s_{1}^{k}) - V_{1}^{\star}(s_{1}^{k})
    	  & =V_{1}^{\pi^{k,h'}}(s_{1}^{k}) - V_{1}^{\star}(s_{1}^{k})
    	  =\sum_{h} \bbEk \left[\langle Q_{h}^{\star}(s_{h}^{k},\cdot),\pi^{k,h'}(\cdot\mid s_{h}^{k}) - \pi_{h}^{\star}(\cdot\mid s_{h}^{k})\rangle\mid\pi^{k,h'}\right]
    	  \\
    	  & =\sum_{h} \bbEk \left[Q_{h}^{\star}(s_{h}^{k},a_{h}^{k}) - V_{h}^{\star}(s_{h}^{k})\mid\pi^{k,h'}\right]
    	  \leq\sum_{h} \bbEk \left[\overl{Q}_{h}^{k}(s_{h}^{k},a_{h}^{k}) - \underl{V}_{h}^{k}(s_{h}^{k})\mid\pi^{k,h'}\right]
    	  \\
    	  & =\sum_{h} \bbEk \left[\overl{Q}_{h}^{k}(s_{h}^{k},a_{h}^{k}) - \underl{Q}_{h}^{k}(s_{h}^{k},\underl{\pi}^{k}_h(s_{h}^{k}))\mid\pi^{k,h'}\right]
    	  \\
    	  & =\sum_{h\ne h'} \bbEk \left[\overl{Q}_{h}^{k}(s_{h}^{k},a_{h}^{k}) - \underl{Q}_{h}^{k}(s_{h}^{k},\underl{\pi}^{k}_h(s_{h}^{k}))\mid\pi^{k,h'}\right] 
    	  \\
    	  & \qquad + 
    	  \bbEk \left[\overl{Q}_{h'}^{k}(s_{h'}^{k},a_{h'}^{k}) - \underl{Q}_{h'}^{k}(s_{h'}^{k},\underl{\pi}^{k}_{h'}(s_{h'}^{k}))\mid\pi^{k,h'}\right]
    	  \\
    	  & =\sum_{h\ne h'} \bbEk \left[\overl{Q}_{h}^{k}(s_{h}^{k},a_{h}^{k}) - \underl{Q}_{h}^{k}(s_{h}^{k},a_{h}^{k})\mid\pi^{k,h'}\right] 
    	  \\
    	  & \qquad + 
    	  \bbEk \left[\overl{Q}_{h'}^{k}(s_{h'}^{k},a_{h'}^{k}) - \underl{Q}_{h'}^{k}(s_{h'}^{k},\underl{\pi}^{k}_{h'}(s_{h'}^{k}))\mid\pi^{k,h'}\right] 
    	  \\
    	  & \leq \sum_{h} \bbEk \left[\overl{Q}_{h}^{k}(s_{h}^{k},a_{h}^{k}) - \underl{Q}_{h}^{k}(s_{h}^{k},a_{h}^{k})\mid\pi^{k,h'}\right] 
    	  \\
    	  & \qquad +
    	  \bbEk \left[\overl{Q}_{h'}^{k}(s_{h'}^{k},\underl{\pi}^{k}_{h'}(s_{h'}^{k})) - \underl{Q}_{h'}^{k}(s_{h'}^{k},\underl{\pi}^{k}_{h'}(s_{h'}^{k}))\mid\pi^{k,h'}\right].      
    \end{align*}
    The first inequality is by \cref{lemma: optimism ucbvi-non-fresh}, the last equality is since $\pi^{k,h'} = \underl{\pi}^{k}$ for any $h\ne h'$ and the last inequality is by \Cref{lemma:active-arms}.
\end{proof}

\begin{lemma}
    \label{lem:reg-for-episode-k-optimistic-pi}
    Under the good event, for every $k \in [K]$ it holds that:
    \[
        V_{1}^{\underl{\pi}^{k}}(s_{1}^{k}) - V_{1}^{\star}(s_{1}^{k})
        \le
        \sum_{h=1}^H \bbEk \left[\overl{Q}_{h}^{k}(s_{h}^{k},a_{h}^{k}) - \underl{Q}_{h}^{k}(s_{h}^{k},a_{h}^{k})\mid \underl{\pi}^{k}\right].
    \]
\end{lemma}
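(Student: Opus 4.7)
The plan is to mirror the proof of Lemma~\ref{lem:reg-for-episode-k-pi-k-h} (which handles the exploratory policy $\pi^{k,h'}$), but in the simpler setting where the agent follows the optimistic policy $\underl{\pi}^k$ throughout the entire episode. Because there is no special step at which the policy deviates from $\underl{\pi}^k$, the additive correction term appearing in the $\pi^{k,h'}$ bound (the extra $\overl{Q}_{h'}^k - \underl{Q}_{h'}^k$ summand) will vanish here.

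First, I would apply the standard performance difference lemma to write
\begin{align*}
    V_{1}^{\underl{\pi}^{k}}(s_{1}^{k}) - V_{1}^{\star}(s_{1}^{k})
    &= \sum_{h=1}^{H} \bbEk \left[ \langle Q_{h}^{\star}(s_{h}^{k},\cdot),\, \underl{\pi}^{k}_{h}(\cdot\mid s_{h}^{k}) - \pi_{h}^{\star}(\cdot\mid s_{h}^{k}) \rangle \mid \underl{\pi}^{k} \right] \\
    &= \sum_{h=1}^{H} \bbEk \left[ Q_{h}^{\star}(s_{h}^{k},a_{h}^{k}) - V_{h}^{\star}(s_{h}^{k}) \mid \underl{\pi}^{k} \right],
\end{align*}
where in the second equality I used that when rolling out $\underl{\pi}^{k}$ the action $a_{h}^{k}$ is exactly $\underl{\pi}^{k}_{h}(s_{h}^{k})$ (since $\underl{\pi}^k$ is deterministic).

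Next, I would invoke \cref{lemma: optimism ucbvi-non-fresh}, which guarantees under the good event that $Q_{h}^{\star} \le \overl{Q}_{h}^{k}$ (pessimism of the upper estimate) and $V_{h}^{\star} \ge \underl{V}_{h}^{k}$ (optimism of the lower estimate). Plugging these in gives
\[
    V_{1}^{\underl{\pi}^{k}}(s_{1}^{k}) - V_{1}^{\star}(s_{1}^{k}) \;\le\; \sum_{h=1}^{H} \bbEk \left[ \overl{Q}_{h}^{k}(s_{h}^{k},a_{h}^{k}) - \underl{V}_{h}^{k}(s_{h}^{k}) \mid \underl{\pi}^{k} \right].
\]

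Finally, I would use the definition of the optimistic policy together with the definition of $\underl{V}_h^k$ (line 10 of \cref{alg: optimistic pessimistic value iteration}): since $\underl{\pi}^{k}_{h}(s) \in \argmin_{a} \underl{Q}_{h}^{k}(s,a)$, we have $\underl{V}_{h}^{k}(s_{h}^{k}) \ge \underl{Q}_{h}^{k}(s_{h}^{k}, \underl{\pi}^{k}_{h}(s_{h}^{k}))$, and on the trajectory of $\underl{\pi}^{k}$ the executed action equals $\underl{\pi}^{k}_{h}(s_{h}^{k})$, so $\underl{V}_{h}^{k}(s_{h}^{k}) \ge \underl{Q}_{h}^{k}(s_{h}^{k}, a_{h}^{k})$. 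Substituting yields exactly the claimed bound. There is no real obstacle here -- the only thing to be careful about is the identification $a_h^k = \underl{\pi}^{k}_h(s_h^k)$ under the conditional expectation with respect to $\underl{\pi}^k$, which eliminates the need for the extra deviation term that appears in \cref{lem:reg-for-episode-k-pi-k-h}.
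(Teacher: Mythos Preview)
Your proposal is correct and follows exactly the approach the paper intends: it reproduces the proof of \cref{lem:reg-for-episode-k-pi-k-h} specialized to the case where the agent plays $\underl{\pi}^k$ at every step, so that $a_h^k = \underl{\pi}^k_h(s_h^k)$ for all $h$ and the extra deviation term disappears. Your use of $\underl{V}_h^k(s) \ge \underl{Q}_h^k(s,\underl{\pi}^k_h(s))$ (inequality rather than equality, due to the clipping at $0$) is even slightly more careful than the paper's write-up of \cref{lem:reg-for-episode-k-pi-k-h}.
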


\begin{proof}
    Similar to the proof of \cref{lem:reg-for-episode-k-pi-k-h}.
\end{proof}

\subsection{Auxiliary lemmas}

\begin{lemma}           
    \label{lemma:active-arms}
    If $a,a'\in \calA^k_h(s)$ then,
    \begin{align*}
        \overl{Q}_{h}^{k}(s,a) - \underl{Q}_{h}^{k}(s_{h}^{k},a')
        \leq
        \overl{Q}_{h}^{k}(s,a) - \underl{Q}_{h}^{k}(s,a) + \overl{Q}_{h}^{k}(s,a') - \underl{Q}_{h}^{k}(s,a').
    \end{align*}    
\end{lemma}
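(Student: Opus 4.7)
The plan is to reduce the stated inequality to a single consequence of the elimination rule, namely that any two surviving actions $a,a' \in \calA^k_h(s)$ satisfy $\underl{Q}_h^k(s,a) \le \overl{Q}_h^k(s,a')$. The elimination step removes $a$ from $\calA^k_h(s)$ whenever some $a'' \in \calA^k_h(s)$ has $\underl{Q}_h^k(s,a) > \overl{Q}_h^k(s,a'')$, so the hypothesis that $a$ has not been eliminated while $a'$ is still active gives exactly this inequality (taking $a'' = a'$). A minor subtlety I would briefly address is that elimination is iterative, but if both $a$ and $a'$ remain in the final active set, then in particular the pair $(a, a')$ never triggered the elimination criterion, which is all we need.

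Given this inequality, the lemma is pure algebra. First I would add and subtract $\underl{Q}_h^k(s,a)$ on the left-hand side to write
\begin{align*}
    \overl{Q}_h^k(s,a) - \underl{Q}_h^k(s,a')
    =
    \bigl[\overl{Q}_h^k(s,a) - \underl{Q}_h^k(s,a)\bigr]
    + \bigl[\underl{Q}_h^k(s,a) - \underl{Q}_h^k(s,a')\bigr].
\end{align*}
Then I would bound the second bracket using $\underl{Q}_h^k(s,a) \le \overl{Q}_h^k(s,a')$, which yields
\begin{align*}
    \underl{Q}_h^k(s,a) - \underl{Q}_h^k(s,a')
    \le
    \overl{Q}_h^k(s,a') - \underl{Q}_h^k(s,a').
\end{align*}
Adding the two bounds gives exactly the claimed inequality (reading the $s_h^k$ in the statement as $s$, which I believe is a typo).

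There is no real obstacle here — the proof is a one-line manipulation once the elimination invariant is stated precisely. The only thing worth flagging is that the invariant must be verified against the \emph{final} active set of round $k$, not any intermediate one, which is automatic because the condition $a, a' \in \calA^k_h(s)$ in the lemma refers to membership after all eliminations have been performed in episode $k$.
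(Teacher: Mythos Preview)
Your proposal is correct and essentially identical to the paper's proof: both use the elimination invariant $\underl{Q}_h^k(s,a) \le \overl{Q}_h^k(s,a')$ for surviving actions and then perform the same add-and-subtract manipulation. Your observation that $s_h^k$ is a typo for $s$ is also accurate.
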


\begin{proof}
    Since $a,a'\in \calA^k_h(s)$, we have that $\underl{Q}_{h}^{k}(s,a) \le \overl{Q}_{h}^{k}(s,a')$.
    Thus:
    \begin{align*}
        \overl{Q}_{h}^{k}(s,a) - \underl{Q}_{h}^{k}(s,a')
        & =
        \overl{Q}_{h}^{k}(s,a) - \underl{Q}_{h}^{k}(s,a) 
        + \overl{Q}_{h}^{k}(s,a') - \underl{Q}_{h}^{k}(s,a')
        +\underbrace{ \underl{Q}_{h}^{k}(s,a) - \overl{Q}_{h}^{k}(s,a')}
        _{\leq 0}. \qedhere
    \end{align*}
\end{proof}

\begin{lemma}[Recursion with Optimistic Next-Action]
    \label{lemma:rec-optimistic}
    Let $h \ne h' - 1$. Under the good event, for every $k \in [K]$ it holds that:
    \begin{align*}
        \bbEk \left[\overl{Q}_{h}^{k}(s_{h}^{k},a_{h}^{k}) - \underl{Q}_{h}(s_{h}^{k},a_{h}^{k})\mid\pi^{k,h'}\right]
        & \le
        \bbEk \left[ \frac{8 \sqrt{\logterm \VAR_{p_h(\cdot \mid s^{k}_h,a^{k}_h)}(V^{\underl{\pi}^k}_{h+1})}}{\sqrt{n^{k}_h(s^{k}_h,a^{k}_h)\vee 1}} + \frac{118 H^2 S \logterm}{n^{k}_h(s^{k}_h,a^{k}_h)\vee 1} \mid \pi^{k,h'} \right]
        \\
        & \qquad +
        \left( 1 + \frac{1}{4H} \right) \bbEk \Big[\overl{Q}_{h+1}^{k}(s_{h+1}^{k},a_{h+1}^{k}) - \underl{Q}_{h+1}^{k}(s_{h+1}^{k},a_{h+1}^{k})\mid\pi^{k,h'}\Big].
    \end{align*}
\end{lemma}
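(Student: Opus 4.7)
The plan is to mimic the proof of \cref{lemma: key recursion bound UL}, which establishes essentially the same kind of recursion but under $\underl{\pi}^k$; the only genuinely new ingredient will be the use of the hypothesis $h \ne h' - 1$ at the very last step, when the expectation under $\pi^{k,h'}$ is taken. First I would expand, using the definitions of $\overl{Q}_h^k, \underl{Q}_h^k$,
\[
\overl{Q}_h^k(s,a) - \underl{Q}_h^k(s,a) = 2 b_h^k(s,a;c) + 2 b_h^k(s,a;p) + \bbE_{\hat p_h^k(\cdot \mid s,a)}\bigl[\overl V_{h+1}^k - \underl V_{h+1}^k\bigr],
\]
substitute the definition of $b_h^k(s,a;p)$ (which itself contains a $\tfrac{1}{16H} \bbE_{\hat p_h^k}[\overl V_{h+1}^k - \underl V_{h+1}^k]$ term), and collect the coefficient of this quantity as $1 + \tfrac{1}{8H}$.

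Next I would pass from $\hat p_h^k$ to $p_h$ via \citet[Lemma B.13]{cohen2021minimax} (valid under $E^p(k)$, exactly as in the proof of \cref{lemma: key recursion bound UL}) with $\alpha = 8H$. This picks up an additive $O(H^2 S \logterm / n_h^k(s,a))$ error and a further $\tfrac{1}{8H}$-type multiplicative correction, so that the total factor in front of $\bbE_{p_h}[\overl V_{h+1}^k - \underl V_{h+1}^k]$ is at most $1 + \tfrac{1}{4H}$. In parallel I would convert $\sqrt{\VAR_{\hat p_h^k}(\underl V_{h+1}^k)}$ into $\sqrt{\VAR_{p_h}(V_{h+1}^{\underl\pi^k})}$ by the triangle inequality for the semi-norm $\sqrt{\VAR}$, using the decomposition $\underl V = V^{\underl\pi^k} - (V^{\underl\pi^k} - \underl V)$ with $0 \le V^{\underl\pi^k} - \underl V \le \overl V - \underl V$ (by \cref{lemma: optimism ucbvi-non-fresh}) together with event $E^{pv2}(k)$. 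Any $\sqrt{\bbE_{p_h}[\overl V - \underl V] \logterm / n}$-type term arising from this conversion is absorbed via AM-GM into the $\tfrac{1}{4H} \bbE_{p_h}[\overl V - \underl V]$ slack and into the $H^2 S \logterm/n$ term; the cost bonus $\sqrt{2\logterm/n}$ is similarly absorbed (e.g.\ compared against the variance term when $\VAR \ge 1$, and into the constant term otherwise). The constant $118$ in the statement is simply the sum of all the resulting $H^2 S \logterm$ coefficients.

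At this point the inequality, pointwise in $(s,a)$, reads
\[
\overl{Q}_h^k(s,a) - \underl{Q}_h^k(s,a) \le \frac{8\sqrt{\logterm\,\VAR_{p_h(\cdot\mid s,a)}(V_{h+1}^{\underl\pi^k})}}{\sqrt{n_h^k(s,a)\vee 1}} + \frac{118 H^2 S \logterm}{n_h^k(s,a) \vee 1} + \Bigl(1 + \tfrac{1}{4H}\Bigr)\bbE_{p_h(\cdot\mid s,a)}[\overl V_{h+1}^k - \underl V_{h+1}^k].
\]
The step where the hypothesis $h \ne h' - 1$ enters is when I take $\bbEk[\cdot \mid \pi^{k,h'}]$: at time $h+1 \ne h'$, the policy $\pi^{k,h'}$ coincides with $\underl\pi^k$, so $a_{h+1}^k = \underl\pi_{h+1}^k(s_{h+1}^k)$ deterministically, and therefore
\[
\overl V_{h+1}^k(s_{h+1}^k) - \underl V_{h+1}^k(s_{h+1}^k) = \overl Q_{h+1}^k(s_{h+1}^k, \underl\pi_{h+1}^k(s_{h+1}^k)) - \underl Q_{h+1}^k(s_{h+1}^k, \underl\pi_{h+1}^k(s_{h+1}^k)) = \overl Q_{h+1}^k(s_{h+1}^k, a_{h+1}^k) - \underl Q_{h+1}^k(s_{h+1}^k, a_{h+1}^k).
\]
Since $s_{h+1}^k \sim p_h(\cdot \mid s_h^k, a_h^k)$, the tower property converts $\bbEk[\bbE_{p_h(\cdot\mid s_h^k,a_h^k)}[\overl V_{h+1}^k - \underl V_{h+1}^k] \mid \pi^{k,h'}]$ exactly into $\bbEk[\overl Q_{h+1}^k(s_{h+1}^k,a_{h+1}^k) - \underl Q_{h+1}^k(s_{h+1}^k,a_{h+1}^k) \mid \pi^{k,h'}]$, yielding the claim. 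The main obstacle is purely bookkeeping: tracking constants through the empirical-to-true transition conversion and the variance conversion so that the multiplicative factor does not exceed $1 + \tfrac{1}{4H}$ and the additive error consolidates into a single $118 H^2 S \logterm / n$ term; apart from the crucial last move using $h+1 \ne h'$, the calculation is a faithful analogue of the single-agent UCBVI recursion.
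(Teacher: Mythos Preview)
Your proposal is correct and follows essentially the same route as the paper: expand $\overl Q - \underl Q$ via the bonus definitions, switch from $\hat p$ to $p$ using \citet[Lemma~B.13]{cohen2021minimax}, convert the empirical variance of $\underl V$ to $\VAR_{p_h}(V^{\underl\pi^k}_{h+1})$ (the paper packages this as \citet[Lemma~B.6]{cohen2021minimax}), and finally use $h+1\ne h'$ to identify $a_{h+1}^k=\underl\pi^k_{h+1}(s_{h+1}^k)$. One tiny correction: in your last display the ``$=$'' should be ``$\le$'' because $\overl V$ and $\underl V$ are clipped to $[0,H]$; this only goes in the direction you need.
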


\begin{proof}
    By definition of the optimistic and pessimistic $Q$-functions, we have:
    \begin{align*}
        \bbEk [ & \overl{Q}_{h}^{k}(s_{h}^{k},a_{h}^{k}) - \underl{Q}_{h}(s_{h}^{k},a_{h}^{k})\mid\pi^{k,h'}]
        =
        \\
        & =
        \bbEk [ 2b^k_h(s^{k}_h,a^{k}_h ; c) + 2 b^k_h(s^{k}_h,a^{k}_h ; p) \mid \pi^{k,h'} ]
        \\
        & \qquad
        + \bbEk \left[ \bbE_{\hat{p}^{k}_h(\cdot \mid  s^{k}_h,a^{k}_h)}[  \overl{V}^k_{h+1} - \underl{V}^k_{h+1}] \mid \pi^{k,h'} \right]
        \\
        & \le
        \bbEk [ 2b^k_h(s^{k}_h,a^{k}_h ; c) + 2 b^k_h(s^{k}_h,a^{k}_h ; p) \mid \pi^{k,h'} ]
        \\
        & \qquad +
        \bbEk \left[ \frac{18 H^2 S \logterm}{n^{k}_h(s^{k}_h,a^{k}_h)\vee 1}+ \left( 1 + \frac{1}{16H} \right) \bbE_{p_h(\cdot \mid  s^{k}_h,a^{k}_h)} [ \overl{V}^k_{h+1} - \underl{V}^k_{h+1}] \mid \pi^{k,h'} \right]
        \\
        & =
        \bbEk \left[ 2b^k_h(s^{k}_h,a^{k}_h ; c) + 2 b^k_h(s^{k}_h,a^{k}_h ; p) + \frac{18 H^2 S \logterm}{n^{k}_h(s^{k}_h,a^{k}_h)\vee 1} \mid \pi^{k,h'} \right]
        \\
        & \qquad +
        \left( 1 + \frac{1}{16H} \right) \bbEk \left[ \bbE_{p_h(\cdot \mid  s^{k}_h,a^{k}_h)} [ \overl{V}^k_{h+1}(s^{k}_{h+1}) - \underl{V}^k_{h+1}(s^{k}_{h+1})] \mid \pi^{k,h'} \right]
        \\
        & \le
        \bbEk \left[ \frac{8 \sqrt{\logterm \VAR_{p_h(\cdot \mid s^{k}_h,a^{k}_h)}(V^{\underl{\pi}^k}_{h+1})}}{\sqrt{n^{k}_h(s^{k}_h,a^{k}_h)\vee 1}} + \frac{118 H^2 S \logterm}{n^{k}_h(s^{k}_h,a^{k}_h)\vee 1} \mid \pi^{k,h'} \right]
        \\
        & \qquad +
        \left( 1 + \frac{1}{4H} \right) \bbEk \left[ \bbE_{p_h(\cdot \mid  s^{k}_h,a^{k}_h)} [ \overl{V}^k_{h+1} - \underl{V}^k_{h+1}] \mid \pi^{k,h'} \right]
    \end{align*}
    where the first inequality is by \citet[Lemma B.13]{cohen2021minimax}, and the second one is by \citet[Lemma B.6]{cohen2021minimax}.
    Let $\overl{\pi}_h^k(s) = \arg\min_a \overl{Q}_h^k(s,a)$. For the last term we have:
    \begin{align*}
       \bbEk \Bigl[ \bbE_{p_h(\cdot \mid  s^{k}_h,a^{k}_h)} & [ \overl{V}^k_{h+1} - \underl{V}^k_{h+1}] \mid \pi^{k,h'} \Bigr]
        =
        \\
        & =
        \sum_{h,s,a,s'}q_{h}^{\pi^{k,h'}}(s,a)p_{h}(s'\mid s,a)\left(\overl{V}_{h+1}^{k}(s') - \underl{V}_{h+1}^{k}(s')\right)
        \\
        & =
        \sum_{h,s}q_{h+1}^{\pi^{k,h'}}(s)\left(\overl{V}_{h+1}^{k}(s) - \underl{V}_{h+1}^{k}(s)\right)
        \\
        & =
        \bbEk \left[\overl{V}_{h+1}^{k}(s_{h+1}^{k}) - \underl{V}_{h+1}^{k}(s_{h+1}^{k})\mid\pi^{k,h'}\right]
        \\
        & =
        \bbEk \Big[\overl{Q}_{h+1}^{k}(s_{h+1}^{k},\overl{\pi}_{h+1}^{k}(s_{h+1}^{k})) - \underl{Q}_{h+1}^{k}(s_{h+1}^{k},\underl{\pi}_{h+1}^{k}(s_{h+1}^{k})) \mid \pi^{k,h'}\Big] 
        \\
        & \leq
        \bbEk \Big[\overl{Q}_{h+1}^{k}(s_{h+1}^{k},a_{h+1}^{k}) - \underl{Q}_{h+1}^{k}(s_{h+1}^{k},\underl{\pi}_{h+1}^{k}(s_{h+1}^{k})) \mid \pi^{k,h'}\Big]
        \\
        & =
        \bbEk \Big[\overl{Q}_{h+1}^{k}(s_{h+1}^{k},a_{h+1}^{k}) - \underl{Q}_{h+1}^{k}(s_{h+1}^{k},a_{h+1}^{k})\mid\pi^{k,h'}\Big],
    \end{align*}
    where the last equality follows because $h \ne h' - 1$.
\end{proof}

\begin{lemma}[Recursion with Non-Optimistic Next-Action]
    \label{lemma:rec-non-optimistic}
    Let $h' \in [H]$. Under the good event, for every $k \in [K]$ it holds that:
    \begin{align*}
    	  \bbEk \Bigl[ \overl{Q}_{h'-1}^{k}(s_{h'-1}^{k},a_{h'-1}^{k}) & - \underl{Q}_{h'-1}(s_{h'-1}^{k},a_{h'-1}^{k}) \mid \pi^{k,h'}\Bigr] 
    	  \le 
    	  \\
    	  & \le 
    	  \bbEk \left[ \frac{8 \sqrt{\logterm \VAR_{p_{h'-1}(\cdot \mid s^{k}_{h'-1},a^{k}_{h'-1})}(V^{\underl{\pi}^k}_{h'})}}{\sqrt{n^{k}_{h'-1}(s^{k}_{h'-1},a^{k}_{h'-1})\vee 1}} + \frac{118 H^2 S \logterm}{n^{k}_{h'-1}(s^{k}_{h'-1},a^{k}_{h'-1})\vee 1} \mid \pi^{k,h'} \right]
    	  \\
    	  & \qquad +
    	  \left( 1 + \frac{1}{4H} \right) \bbEk \Big[\overl{Q}_{h'}^{k}(s_{h'}^{k},a_{h'}^{k}) - \underl{Q}_{h'}^{k}(s_{h'}^{k},a_{h'}^{k}) \mid \pi^{k,h'}\Big]
    	  \\
    	  & \qquad +
    	  \left( 1 + \frac{1}{4H} \right) \bbEk \Big[\overl{Q}_{h'}^{k}(s_{h'}^{k},\underl{\pi}_{h'}^{k}(s_{h'}^{k})) - \underl{Q}_{h'}^{k}(s_{h'}^{k},\underl{\pi}_{h'}^{k}(s_{h'}^{k})) \mid \pi^{k,h'}\Big].
    \end{align*}
\end{lemma}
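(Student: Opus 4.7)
The approach mirrors the proof of \cref{lemma:rec-optimistic}, the only real novelty being how the recursion closes at the non-optimistic time step $h'$. I would start by expanding
\[
  \overl{Q}^k_{h'-1}(s,a) - \underl{Q}^k_{h'-1}(s,a) = 2 b^k_{h'-1}(s,a;c) + 2 b^k_{h'-1}(s,a;p) + \bbE_{\hat p^k_{h'-1}(\cdot \mid s,a)}[\overl{V}^k_{h'} - \underl{V}^k_{h'}],
\]
converting from $\hat p^k$ to $p$ via \citet[Lemma B.13]{cohen2021minimax} (losing, under $\bbG$, an additive $O(H^2 S \logterm / (n^k_{h'-1}(s,a)\vee 1))$ term and inflating the residual by $(1+1/16H)$), and bounding the transition bonus by $\sqrt{\VAR_{p_{h'-1}}(V^{\underl{\pi}^k}_{h'}) \logterm / n}$ plus $O(H^2 S \logterm/n)$ via \citet[Lemma B.6]{cohen2021minimax}. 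This is exactly the calculation from \cref{lemma:rec-optimistic} and yields, after taking $\bbEk[\cdot \mid \pi^{k,h'}]$, the claimed bonus line plus a residual of the form $(1+\frac{1}{4H}) \bbEk[\overl{V}^k_{h'}(s^k_{h'}) - \underl{V}^k_{h'}(s^k_{h'}) \mid \pi^{k,h'}]$.

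The new step is rewriting this residual. As in \cref{lemma:rec-optimistic}, use $\overl{V}^k_{h'}(s) \le \overl{Q}^k_{h'}(s, a^k_{h'})$ (from the argmin characterization of $\overl{V}$) together with $\underl{V}^k_{h'}(s) = \underl{Q}^k_{h'}(s, \underl{\pi}^k_{h'}(s))$ to obtain
\[
  \overl{V}^k_{h'}(s^k_{h'}) - \underl{V}^k_{h'}(s^k_{h'}) \le \overl{Q}^k_{h'}(s^k_{h'}, a^k_{h'}) - \underl{Q}^k_{h'}(s^k_{h'}, \underl{\pi}^k_{h'}(s^k_{h'})).
\]
The final ``collapse'' to a single same-action gap that worked for $h \neq h'-1$ now fails because at step $h'$ the policy $\pi^{k,h'}$ plays a uniform action from $\calA^k_{h'}(s^k_{h'})$ rather than $\underl{\pi}^k_{h'}(s^k_{h'})$. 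I would instead invoke \cref{lemma:active-arms} with $a = a^k_{h'}$ and $a' = \underl{\pi}^k_{h'}(s^k_{h'})$ to split the mixed gap into the sum of the two same-action $Q$-gaps appearing in the statement, carrying through the $(1+\frac{1}{4H})$ factor.

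The main obstacle to discharge is the hypothesis of \cref{lemma:active-arms}: both $a^k_{h'}$ and $\underl{\pi}^k_{h'}(s^k_{h'})$ must lie in $\calA^k_{h'}(s^k_{h'})$. Membership of $a^k_{h'}$ is immediate from the algorithm, since under $\pi^{k,h'}$ agents at time $h'$ sample uniformly over the active set. Membership of $\underl{\pi}^k_{h'}(s^k_{h'})$ follows from \cref{lemma: optimism ucbvi-non-fresh}: if it had been eliminated there would exist $a' \in \calA^k_{h'}(s^k_{h'})$ with $\underl{Q}^k_{h'}(s, \underl{\pi}^k_{h'}(s)) > \overl{Q}^k_{h'}(s, a')$, contradicting $\underl{Q}^k_{h'}(s, \underl{\pi}^k_{h'}(s)) = \min_b \underl{Q}^k_{h'}(s,b) \le \underl{Q}^k_{h'}(s,a') \le \overl{Q}^k_{h'}(s,a')$ under $\bbG_1$. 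The remainder is bookkeeping of concentration terms and slack factors, which parallels \cref{lemma:rec-optimistic}.
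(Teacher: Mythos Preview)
Your proposal is correct and follows essentially the same route as the paper: carry over the recursion from \cref{lemma:rec-optimistic} to obtain the bonus line plus the residual $\bbEk[\overl{Q}^k_{h'}(s^k_{h'},a^k_{h'}) - \underl{Q}^k_{h'}(s^k_{h'},\underl{\pi}^k_{h'}(s^k_{h'})) \mid \pi^{k,h'}]$, then split via \cref{lemma:active-arms}. Your explicit verification that both $a^k_{h'}$ and $\underl{\pi}^k_{h'}(s^k_{h'})$ lie in $\calA^k_{h'}(s^k_{h'})$ is a detail the paper leaves implicit, so your write-up is in fact slightly more careful.
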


\begin{proof}
    Similarly to \cref{lemma:rec-optimistic}, we have,
    \begin{align*}
        \bbEk & \left[ \overl{Q}_{h'-1}^{k}(s_{h'-1}^{k},a_{h'-1}^{k}) - \underl{Q}_{h'-1}(s_{h'-1}^{k},a_{h'-1}^{k}) \mid \pi^{k,h'}\right] 
        \le
        \\
        & \qquad \le
        \bbEk \left[ \frac{8 \sqrt{\logterm \VAR_{p_{h'-1}(\cdot \mid s^{k}_{h'-1},a^{k}_{h'-1})}(V^{\underl{\pi}^k}_{h'})}}{\sqrt{n^{k}_{h'-1}(s^{k}_{h'-1},a^{k}_{h'-1})\vee 1}} + \frac{118 H^2 S \logterm}{n^{k}_{h'-1}(s^{k}_{h'-1},a^{k}_{h'-1})\vee 1} \mid \pi^{k,h'} \right]
        \\
        & \qquad \qquad +
        \left( 1 + \frac{1}{4H} \right) \bbEk \left[ \overl{Q}^k_{h'}(s^{k}_{h'},a^{k}_{h'}) - \underl{V}^k_{h'}(s^{k}_{h'}) \mid \pi^{k,h'} \right].
    \end{align*}
    Now, by \cref{lemma:active-arms},
    \begin{align*}
        \bbEk \Big[\overl{Q}_{h'}^{k}(s_{h'}^{k},a_{h'}^{k}) - \underl{V}_{h'}^{k}(s_{h'}^{k}) \mid \pi^{k,h'}\Big]
        & =
        \bbEk \Big[\overl{Q}_{h'}^{k}(s_{h'}^{k},a_{h'}^{k}) - \underl{Q}_{h'}^{k}(s_{h'}^{k},\underl{\pi}_{h'}^{k}(s_{h'}^{k})) \mid \pi^{k,h'}\Big]
        \\
        & \leq
        \bbEk \Big[\overl{Q}_{h'}^{k}(s_{h'}^{k},a_{h'}^{k}) - \underl{Q}_{h'}^{k}(s_{h'}^{k},a_{h'}^{k}) \mid \pi^{k,h'}\Big]
        \\
        & \qquad +
        \bbEk \Big[\overl{Q}_{h'}^{k}(s_{h'}^{k},\underl{\pi}_{h'}^{k}(s_{h'}^{k})) - \underl{Q}_{h'}^{k}(s_{h'}^{k},\underl{\pi}_{h'}^{k}(s_{h'}^{k})) \mid \pi^{k,h'} \Big]. \qedhere
    \end{align*}
\end{proof}

\begin{lemma}[Exploration Penalty Term Recursion]
    \label{lemma:rec-exp}
    Let $h' \in [H]$. Under the good event, for every $k \in [K]$ it holds that:
    \begin{align*}
         \bbEk \Bigl[ \overl Q_{h'}^{k}(s_{h'}^{k},\underl{\pi}_{h'}^{k}(s_{h'}^{k})) & - \underl Q_{h'}(s_{h'}^{k},\underl{\pi}_{h'}^{k}(s_{h'}^{k})) \mid \pi^{k,h'} \Bigr] 
        \le
        \\
        & \le
        \bbEk \left[ \frac{8 \sqrt{\logterm \VAR_{p_{h'}(\cdot \mid s^{k}_{h'},a^{k}_{h'})}(V^{\underl{\pi}^k}_{h'+1})}}{\sqrt{n^{k}_{h'}(s^{k}_{h'},a^{k}_{h'})\vee 1}} + \frac{118 H^2 S \logterm}{n^{k}_{h'}(s^{k}_{h'},a^{k}_{h'})\vee 1} \mid \underl{\pi}^{k} \right]  
        \\
        & \qquad + 
        \left( 1 + \frac{1}{4H} \right) \bbEk \left[ \overl Q_{h'+1}^{k}(s_{h'+1}^{k},a_{h'+1}^{k}) - \underl Q_{h'+1}^{k}(s_{h'+1}^{k},a_{h'+1}^{k}) \mid \underl{\pi}^{k} \right].
    \end{align*}
\end{lemma}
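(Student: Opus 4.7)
The key structural observation is that the policies $\pi^{k,h'}$ and $\underl{\pi}^{k}$ coincide at every time step $h<h'$, so the trajectory $(s_{1}^{k},\dots,s_{h'}^{k})$ has exactly the same distribution under both. Since the integrand on the left-hand side, $\overl Q_{h'}^{k}(s_{h'}^{k},\underl{\pi}_{h'}^{k}(s_{h'}^{k})) - \underl Q_{h'}^{k}(s_{h'}^{k},\underl{\pi}_{h'}^{k}(s_{h'}^{k}))$, depends on $s_{h'}^{k}$ alone, and since under the deterministic optimistic policy $\underl{\pi}^{k}$ we have $a_{h'}^{k}=\underl{\pi}_{h'}^{k}(s_{h'}^{k})$, my very first step is to rewrite
\[
    \bbEk \left[ \overl Q_{h'}^{k}(s_{h'}^{k},\underl{\pi}_{h'}^{k}(s_{h'}^{k})) - \underl Q_{h'}^{k}(s_{h'}^{k},\underl{\pi}_{h'}^{k}(s_{h'}^{k})) \mid \pi^{k,h'} \right] = \bbEk \left[ \overl Q_{h'}^{k}(s_{h'}^{k},a_{h'}^{k}) - \underl Q_{h'}^{k}(s_{h'}^{k},a_{h'}^{k}) \mid \underl{\pi}^{k} \right].
\]
This is the only exploration-specific step of the argument; once the conditioning has been switched to $\underl{\pi}^{k}$, the remaining horizon is free of deviations and everything reduces to a standard optimistic value-iteration recursion.

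From here the argument mirrors the proof of \cref{lemma:rec-optimistic} with $h=h'$ and with all conditional expectations taken under $\underl{\pi}^{k}$ rather than $\pi^{k,h'}$. I would expand the Q-difference using the definitions of $\overl Q^{k}$ and $\underl Q^{k}$, invoke \citet[Lemma B.13]{cohen2021minimax} to trade the empirical transition $\hat p_{h'}^{k}$ for the true $p_{h'}$ at the cost of an additive $O(H^{2}S\logterm/(n_{h'}^{k}\vee 1))$ term and a $(1+1/(16H))$ multiplicative factor on the next-step value gap, and then bound the combined bonus $2b_{h'}^{k}(s_{h'}^{k},a_{h'}^{k};c)+2b_{h'}^{k}(s_{h'}^{k},a_{h'}^{k};p)$ via \citet[Lemma B.6]{cohen2021minimax} to recover the variance-based term $8\sqrt{\logterm\,\VAR_{p_{h'}(\cdot\mid s_{h'}^{k},a_{h'}^{k})}(V_{h'+1}^{\underl{\pi}^{k}})/(n_{h'}^{k}\vee 1)}$ and absorb the remaining lower-order contributions into the $118\,H^{2}S\logterm/(n_{h'}^{k}\vee 1)$ slack in the statement.

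To close the recursion, I would use the tower property to rewrite the leftover residual $\bbE_{p_{h'}(\cdot\mid s_{h'}^{k},a_{h'}^{k})}[\overl V_{h'+1}^{k}-\underl V_{h'+1}^{k}]$ as $\bbEk[\overl V_{h'+1}^{k}(s_{h'+1}^{k})-\underl V_{h'+1}^{k}(s_{h'+1}^{k})\mid\underl{\pi}^{k}]$. Because $\underl{\pi}^{k}$ is deterministic and \cref{alg: optimistic pessimistic value iteration} computes both $\overl V^{k}$ and $\underl V^{k}$ along the greedy policy $\underl{\pi}^{k}$, under $\underl{\pi}^{k}$ we have $a_{h'+1}^{k}=\underl{\pi}_{h'+1}^{k}(s_{h'+1}^{k})$ and hence $\overl V_{h'+1}^{k}(s_{h'+1}^{k})-\underl V_{h'+1}^{k}(s_{h'+1}^{k}) = \overl Q_{h'+1}^{k}(s_{h'+1}^{k},a_{h'+1}^{k})-\underl Q_{h'+1}^{k}(s_{h'+1}^{k},a_{h'+1}^{k})$, which is precisely the next-step term in the claim. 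The main subtlety is the initial conditioning swap, which is legitimate only because the integrand is evaluated at $\underl{\pi}_{h'}^{k}(s_{h'}^{k})$ rather than at the random exploratory action $a_{h'}^{k}$; I expect no further obstacles beyond routine bookkeeping of constants.
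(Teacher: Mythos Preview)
Your proposal is correct and matches the paper's argument in substance. The only difference is the order of operations: you perform the conditioning swap from $\pi^{k,h'}$ to $\underl{\pi}^{k}$ as the very first step and then run the standard $\overl Q-\underl Q$ expansion under $\underl{\pi}^{k}$, whereas the paper first expands $\overl Q_{h'}^{k}-\underl Q_{h'}^{k}$ (still under $\pi^{k,h'}$, with everything evaluated at the action $\underl{\pi}_{h'}^{k}(s_{h'}^{k})$) and only afterwards uses $q_{h'}^{\pi^{k,h'}}(s)=q_{h'}^{\underl{\pi}^{k}}(s)$ to swap the conditioning on each resulting term. Both orderings rely on exactly the same observation---that the integrand depends only on $s_{h'}^{k}$ and the two policies agree before step $h'$---so this is a purely cosmetic difference. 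One minor remark: in your final step, $\overl V_{h'+1}^{k}-\underl V_{h'+1}^{k}$ is in general only \emph{at most} $\overl Q_{h'+1}^{k}-\underl Q_{h'+1}^{k}$ (because of the clipping to $[0,H]$ in \cref{alg: optimistic pessimistic value iteration}), not equal to it; the paper handles this with an explicit inequality, but the direction is the one you need.
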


\begin{proof}
    Again, similar to \cref{lemma:rec-optimistic},
    \begin{align*}
        \bbEk \Bigl[ \overl Q_{h'}^{k}(s_{h'}^{k},\underl{\pi}_{h'}^{k}(s_{h'}^{k})) & - \underl Q_{h'}(s_{h'}^{k},\underl{\pi}_{h'}^{k}(s_{h'}^{k})) \mid \pi^{k,h'} \Bigr] \le
        \\
        & \le
        \bbEk \left[ \frac{8 \sqrt{\logterm \VAR_{p_{h'}(\cdot \mid s^{k}_{h'},\underl{\pi}_{h'}^{k}(s_{h'}^{k}))}(V^{\underl{\pi}^k}_{h'+1})}}{\sqrt{n^{k}_{h'}(s^{k}_{h'},\underl{\pi}_{h'}^{k}(s_{h'}^{k}))\vee 1}} + \frac{118 H^2 S \logterm}{n^{k}_{h'}(s^{k}_{h'},\underl{\pi}_{h'}^{k}(s_{h'}^{k}))\vee 1} \mid \pi^{k,h'} \right]
        \\
        & \qquad +
        \left( 1 + \frac{1}{4H} \right) \bbEk \left[ \bbE_{p_{h'}(\cdot \mid  s^{k}_{h'},\underl{\pi}_{h'}^{k}(s_{h'}^{k}))} [ \overl{V}^k_{h+1} - \underl{V}^k_{h+1}] \mid \pi^{k,h'} \right]
    \end{align*}
    Note that $q_{h'}^{\pi^{k,h'}}(s) = q_{h'}^{\underl{\pi}^{k}}(s)$ since until step $h'$ the policies are the same, i.e., $\pi^{h',k}_h = \underl{\pi}^{k}_h$ for all $h<h'$.  Hence, denoting the first term above by $\bbEk [ x(s^{k}_{h'},\underl{\pi}_{h'}^{k}(s_{h'}^{k})) \mid \pi^{k,h'} ]$, we can write
    \begin{align*}
        \bbEk [ x(s^{k}_{h'},\underl{\pi}_{h'}^{k}(s_{h'}^{k})) \mid \pi^{k,h'} ]
        & =
        \sum_{s} q_{h'}^{\pi^{k,h'}}(s) x(s,\underl{\pi}_{h'}^{k}(s))
        = 
        \sum_{s} q_{h'}^{\underl{\pi}^{k}}(s) x(s,\underl{\pi}_{h'}^{k}(s))
        \\
        & =
        \sum_{s,a} q_{h'}^{\underl{\pi}^{k}}(s,a) x(s,a)
        = 
        \bbEk [ x(s^k_{h'},a^k_{h'}) \mid \underl{\pi}^{k} ],
    \end{align*}
    where the third equality is since $\underl{\pi}^{k}$ is deterministic. In a similar way, the second term can bounded by,
    \begin{align*}
    	  \bbEk \Bigl[ \bbE_{p_{h'}(\cdot \mid  s^{k}_{h'},\underl{\pi}_{h'}^{k}(s_{h'}^{k}))} & [ \overl{V}^k_{h+1} - \underl{V}^k_{h+1}] \mid \pi^{k,h'} \Bigr] =
    	  \\
    	  & =\sum_{s,s'}q_{h'}^{\pi^{k,h'}}(s)p_{h'}(\cdot \mid  s_{h}^{k},\underl{\pi}_{h'}^{k}(s_{h'}^{k})) \left(\overl V_{h'+1}^{k}(s') - \underl V_{h'+1}^{k}(s')\right)
    	  \\
    	  & =\sum_{s,s'}q_{h'}^{\underl{\pi}^{k}}(s)p_{h'}(\cdot \mid  s_{h}^{k},\underl{\pi}_{h'}^{k}(s_{h'}^{k})) \left(\overl V_{h'+1}^{k}(s') - \underl V_{h'+1}^{k}(s')\right) 
    	  \\
    	  & =\sum_{s}q_{h'+1}^{\underl{\pi}^{k}}(s) \left(\overl V_{h'+1}^{k}(s') - \underl V_{h'+1}^{k}(s')\right)
    	  \\
    	  & = 
    	  \bbEk \left[ \overl V_{h'+1}^{k}(s_{h'+1}^{k}) - \underl V_{h'+1}^{k}(s_{h'+1}^{k}) \mid \underl{\pi}^{k} \right] 
    	  \\
    	  & \leq 
    	  \bbEk \left[ \overl Q_{h'+1}^{k}(s_{h'+1}^{k},a_{h'+1}^{k}) - \underl V_{h'+1}^{k}(s_{h'+1}^{k}) \mid \underl{\pi}^{k} \right] 
    	  \\
    	  & = 
    	  \bbEk \left[ \overl Q_{h'+1}^{k}(s_{h'+1}^{k},a_{h'+1}^{k}) - \underl Q_{h'+1}^{k}(s_{h'+1}^{k},a_{h'+1}^{k}) \mid \underl{\pi}^{k} \right]. \qedhere
    \end{align*}
\end{proof}

\begin{lemma}[Recursion Optimistic Policy]
    \label{lemma:rec-optimistic-policy}
    Let $h \in [H]$. Under the good event, for every $k \in [K]$ it holds that:
    \begin{align*}
       \bbEk \left[\overl{Q}_{h}^{k}(s_{h}^{k},a_{h}^{k}) - \underl{Q}_{h}(s_{h}^{k},a_{h}^{k})\mid\underl{\pi}^k\right]
       & \le
       \bbEk \left[ \frac{8 \sqrt{\logterm \VAR_{p_{h}(\cdot \mid s^{k}_{h},a^{k}_{h})}(V^{\underl{\pi}^k}_{h+1})}}{\sqrt{n^{k}_h(s^{k}_{h},a^{k}_{h})\vee 1}} + \frac{118 H^2 S \logterm}{n^{k}_h(s^{k}_h,a^{k}_h)\vee 1} \mid \underl{\pi}^k \right]
       \\
       & \qquad +
       \left( 1 + \frac{1}{4H} \right) \bbEk \Big[\overl{Q}_{h+1}^{k}(s_{h+1}^{k},a_{h+1}^{k}) - \underl{Q}_{h+1}^{k}(s_{h+1}^{k},a_{h+1}^{k}) \mid \underl{\pi}^k \Big].
    \end{align*}
\end{lemma}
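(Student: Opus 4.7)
The plan is to replicate the argument of \cref{lemma:rec-optimistic} almost verbatim, exploiting the fact that when all agents follow the deterministic optimistic policy $\underl{\pi}^k$ throughout the episode, the next action satisfies $a^k_{h+1} = \underl{\pi}^k_{h+1}(s^k_{h+1})$ for every $h$. This removes the side condition $h \ne h' - 1$ that was needed there to handle the exploration step, making the recursion cleaner and valid for every $h \in [H]$. First I would expand
\[
\overl{Q}^k_h(s^k_h, a^k_h) - \underl{Q}^k_h(s^k_h, a^k_h) = 2 b^k_h(s^k_h, a^k_h; c) + 2 b^k_h(s^k_h, a^k_h; p) + \bbE_{\hat p^k_h(\cdot \mid s^k_h, a^k_h)}[\overl{V}^k_{h+1} - \underl{V}^k_{h+1}]
\]
directly from the definitions in \Cref{alg: optimistic pessimistic value iteration}.

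Next I would swap the empirical transition $\hat p^k_h$ for the true transition $p_h$ via \citet[Lemma B.13]{cohen2021minimax}, which under the event $\cap_k E^p(k)$ introduces an additive error of $\tfrac{8 H^2 S \logterm}{n^k_h(s^k_h, a^k_h)\vee 1}$ and inflates the resulting $p_h$-expectation by a factor $(1+\tfrac{1}{16H})$. I would then absorb the transition bonus $2 b^k_h(\cdot;p)$ via \citet[Lemma B.6]{cohen2021minimax} (and bound the cost bonus $2 b^k_h(\cdot;c)$ trivially), obtaining the variance term $\sqrt{\logterm\VAR_{p_h}(V^{\underl{\pi}^k}_{h+1})/(n^k_h\vee 1)}$ and the $H^2 S \logterm/(n^k_h\vee 1)$ term, with a second multiplicative $(1+\tfrac{1}{16H})$ factor that combines with the previous one to yield the $(1+\tfrac{1}{4H})$ factor in the claim. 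Numerically, the constants $8$ and $118$ match those obtained in \cref{lemma:rec-optimistic}.

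Finally I would propagate the state-expectation one step forward using that the next state is drawn from $p_h(\cdot \mid s^k_h, a^k_h)$, so
\[
\bbEk \bigl[\bbE_{p_h(\cdot \mid s^k_h, a^k_h)}[\overl{V}^k_{h+1} - \underl{V}^k_{h+1}] \bigm| \underl{\pi}^k \bigr] = \bbEk \bigl[\overl{V}^k_{h+1}(s^k_{h+1}) - \underl{V}^k_{h+1}(s^k_{h+1}) \bigm| \underl{\pi}^k \bigr].
\]
By the construction in \Cref{alg: optimistic pessimistic value iteration}, both $\underl{V}^k_{h+1}$ and $\overl{V}^k_{h+1}$ are evaluated at the greedy action $\underl{\pi}^k_{h+1}(s^k_{h+1})$; under the policy $\underl{\pi}^k$ this is precisely $a^k_{h+1}$. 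Since clipping to $[0,H]$ only decreases the gap, we have $\overl{V}^k_{h+1}(s^k_{h+1}) - \underl{V}^k_{h+1}(s^k_{h+1}) \le \overl{Q}^k_{h+1}(s^k_{h+1}, a^k_{h+1}) - \underl{Q}^k_{h+1}(s^k_{h+1}, a^k_{h+1})$, which yields the stated recursion.

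I do not anticipate a real obstacle: this lemma is strictly simpler than \cref{lemma:rec-optimistic}, because the determinism of $\underl{\pi}^k$ guarantees $a^k_{h+1} = \underl{\pi}^k_{h+1}(s^k_{h+1})$ at every step, so the two-case analysis (optimistic next-action vs.\ exploration step) and the appeal to \cref{lemma:active-arms} used in \cref{lemma:rec-non-optimistic,lemma:rec-exp} disappear entirely. The proof is therefore essentially a pointwise re-execution of the first half of the proof of \cref{lemma:rec-optimistic}, with the final conversion step replaced by the simpler equality $\underl{\pi}^k_{h+1}(s^k_{h+1}) = a^k_{h+1}$.
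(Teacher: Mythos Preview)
Your proposal is correct and takes essentially the same approach as the paper: the paper's proof is simply ``Similar to the proof of \cref{lemma:rec-exp}'', which itself reduces to the computation in \cref{lemma:rec-optimistic}, and you have correctly identified that under the deterministic optimistic policy $\underl{\pi}^k$ the next action is always the greedy one, so the $h\neq h'-1$ side condition becomes vacuous. One minor quibble: the $(1+\tfrac{1}{4H})$ factor arises in a single application of \citet[Lemma~B.13]{cohen2021minimax} with $\alpha=4H$ (not from combining two $(1+\tfrac{1}{16H})$ factors), with the additional $\tfrac{1}{8H}\bbE_{p_h}[\overl{V}^k_{h+1}-\underl{V}^k_{h+1}]$ piece coming from the $\tfrac{1}{16H}$ term inside the definition of $b^k_h(\cdot;p)$ --- but this does not affect the correctness of your argument or the final constants.
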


\begin{proof}
    Similar to the proof of \cref{lemma:rec-exp}.
\end{proof}

\newpage

\section{The \texttt{coop-O-REPS} algorithm for adversarial MDPs with fresh randomness and known $p$}
\label{appendix:adversarial-fresh-known}

\begin{algorithm}[t]
    \caption{\textsc{Cooperative O-REPS (coop-O-REPS)}} 
    \label{alg:coop-o-reps}
    \begin{algorithmic}[1]
        \STATE {\bf input:} state space $\calS$, action space $\calA$, horizon $H$, transition function $p$, number of episodes $K$, number of agents $\numagents$, exploration parameter $\gamma$, learning rate $\eta$.

        \STATE {\bf initialize:} $\pi^1_h(a \mid s) = \nicefrac{1}{A} , q^1_h(s,a) = q^{\pi^1}_h(s,a) \ \forall (s,a,h) \in \calS \times \calA \times [H]$. 

        \FOR{$k=1,\dots,K$}
    
            \FOR{$v=1,\dots,\numagents$}
            
                \STATE observe initial state $s^{k,v}_1$.
            
                \FOR{$h=1,\dots,H$}
        
                    \STATE pick action $a^{k,v}_h \sim \pi^k_h(\cdot \mid s^{k,v}_h)$.
                    
                    \STATE suffer and observe cost $c^k_h(s^{k,v}_h,a^{k,v}_h)$.
                    
                    \STATE observe next state $s^{k,v}_{h+1} \sim p_h(\cdot \mid s^{k,v}_h,a^{k,v}_h)$.
        
                \ENDFOR
                \ENDFOR
                
                \STATE compute $W^k_h(s,a) = \Pr [\exists v: \  s^{k,v}_h = s,a^{k,v}_h = a \mid \pi^k] = 1 - \left( 1 - q^k_h(s,a) \right)^\numagents \  \forall (s,a,h) \in \calS \times \calA \times [H]$.
                
                \STATE compute $\hat c^k_h(s,a) = \frac{c^k_h(s,a) \indevent{\exists v: \  s^{k,v}_h = s,a^{k,v}_h = a}}{W^k_h(s,a) + \gamma} \  \forall (s,a,h) \in \calS \times \calA \times [H]$.
                
                \STATE compute $q^{k+1} = \argmin_{q \in \ocset} \eta \langle q , \hat c^k \rangle + \KL{q}{q^k}$.
                
                \STATE compute $\pi^{k+1}_h(a \mid s) = \frac{q^k_h(s,a)}{\sum_{a' \in \calA} q^k_h(s,a')} \  \forall (s,a,h) \in \calS \times \calA \times [H]$.
        \ENDFOR
    \end{algorithmic}
\end{algorithm}

For the setting of adversarial MDPs with fresh randomness and known transitions we propose the Cooperative O-REPS algorithm (\verb|coop-O-REPS|; see \cref{alg:coop-o-reps}).
The idea is simple: all the agents run the same O-REPS algorithm, but the estimated costs are updated based on the trajectories of all of them.
Since the randomness is fresh in this setting, we expect the agents to observe $\numagents$ times more information.
Next, we prove the following optimal regret bound for \verb|coop-O-REPS|.

Similarly to \citet{zimin2013online}, We use the notations $\ocset$ and $\KL{\cdot}{\cdot}$ for the set of occupancy measures of the MDP $\calM$ and the KL-divergence between occupancy measures, respectively.

\begin{theorem}
    \label{thm:reg-coop-o-reps}
    With probability $1 - \delta$, setting $\eta = \gamma = \sqrt{\frac{\log \frac{HSA}{\delta}}{\left( 1 + \frac{S A}{\numagents} \right) K}}$, the individual regret of each agent of \verb|coop-O-REPS| is
    \[
        \regret
        =
        O \left( H \sqrt{K \log \frac{H S A}{\delta}} + \sqrt{\frac{H^2 S A K}{\numagents} \log \frac{H S A}{\delta}} + \frac{H S A}{\numagents} \log \frac{H S A}{\delta} \right).
    \]
\end{theorem}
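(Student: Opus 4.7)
The plan is to follow the standard Online Mirror Descent (OMD) analysis for the O-REPS update \eqref{eq:o-reps-update-rule} on the occupancy polytope $\ocset$ with KL regularizer, and sharpen the variance estimate using the multi-agent inequality \eqref{eq:q/W leq 1/m + q}. Writing $q^\star = q^{\pi^\star}$, decompose
\[
    \regret
    =
    \underbrace{\sum_k \langle q^\star, c^k - \hat c^k\rangle}_{\text{BIAS}^\star}
    + \underbrace{\sum_k \langle q^{\pi^k} - q^\star, \hat c^k\rangle}_{\text{OMD}}
    + \underbrace{\sum_k \langle q^{\pi^k}, \hat c^k - c^k\rangle}_{\text{BIAS}}.
\]
A standard OMD argument bounds the middle term by $\text{OMD} \leq H\log(HSA)/\eta + \eta\sum_{k,h,s,a} q^{\pi^k}_h(s,a)\hat c^k_h(s,a)^2$, where the first summand comes from the initial KL divergence from a uniform $q^1$ and the second is the ``stability''. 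The two bias terms are handled by a Neu-style argument: since $\gamma>0$ appears in the denominator of $\hat c^k$, each coordinate of $\hat c^k$ is bounded by $1/\gamma$, so Freedman's inequality applied to the martingale differences $\langle q^\star, \hat c^k\rangle - \bbEk\langle q^\star, \hat c^k\rangle$ (and similarly for $q^{\pi^k}$) delivers $\text{BIAS}^\star + \text{BIAS} = \wt O(\log(HSA/\delta)/\gamma)$ with high probability.

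The heart of the argument is the stability bound. Inequality \eqref{eq:q/W leq 1/m + q} is a short calculation from the closed form $W^k_h(s,a) = 1 - (1 - q^{\pi^k}_h(s,a))^m$. Combined with $\hat c^k_h(s,a) \leq \indevent{\exists v:\ s^{k,v}_h=s,a^{k,v}_h=a}/(W^k_h(s,a)+\gamma)$, multiplying both sides of \eqref{eq:q/W leq 1/m + q} by $\hat c^k_h(s,a)/(W^k_h(s,a)+\gamma)$ yields
\[
    q^{\pi^k}_h(s,a)\hat c^k_h(s,a)^2
    \leq
    \Bigl(\tfrac{1}{m} + q^{\pi^k}_h(s,a)\Bigr)\hat c^k_h(s,a).
\]
Taking conditional expectation and using $\bbEk[\hat c^k_h(s,a)] \leq c^k_h(s,a) \leq 1$ together with $\sum_{h,s,a} q^{\pi^k}_h(s,a) c^k_h(s,a) \leq H$ per episode, the sum in $(k,h,s,a)$ is bounded in expectation by $HK(1+SA/m)$. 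A Freedman-type high-probability upgrade, again using the $1/\gamma$ uniform bound on $\hat c^k_h(s,a)$, delivers stability $\wt O(\eta HK(1+SA/m))$.

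Combining all contributions yields $\regret = \wt O(H/\eta + \eta HK(1+SA/m) + 1/\gamma)$, and the prescribed $\eta = \gamma = \sqrt{\log(HSA/\delta)/((1+SA/m)K)}$ balances the terms to the claimed $\wt O(H\sqrt K + H\sqrt{SAK/m})$ bound, with the $HSA\log(HSA/\delta)/m$ summand appearing as the lower-order residue of the concentration steps. The main obstacle I anticipate is not the OMD machinery itself but the careful threading of \eqref{eq:q/W leq 1/m + q} through the high-probability concentration of the stability and bias terms while keeping the final dependence on $H$ linear; in particular, the conditional-second-moment bound $\bbEk[\hat c^k_h(s,a)^2] \lesssim 1/(W^k_h(s,a)+\gamma)$ must be multiplied by $q^{\pi^k}_h(s,a)$ and resummed using the same inequality, which is the single place where the cooperation benefit materializes.
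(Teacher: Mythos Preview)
Your approach is essentially identical to the paper's: the same three-term decomposition into the OMD term plus two bias terms, the same stability bound sharpened via \eqref{eq:q/W leq 1/m + q}, and the same Neu-style implicit-exploration concentration for the $q^\star$ term. Two minor slips worth fixing: (i) the signs of BIAS$^\star$ and BIAS are swapped---as written the three pieces do not sum to $\regret$ (it should be $\langle q^\star,\hat c^k-c^k\rangle$ and $\langle q^{\pi^k},c^k-\hat c^k\rangle$); and (ii) the $q^{\pi^k}$ bias term is not purely a martingale, since the estimator bias $\langle q^{\pi^k}, c^k-\bbEk[\hat c^k]\rangle$ contributes an additional $\gamma HK(1+SA/m)$ that the paper isolates explicitly using \eqref{eq:q/W leq 1/m + q} again---your final combination $H/\eta+\eta HK(1+SA/m)+1/\gamma$ omits it, though with $\eta=\gamma$ it is absorbed into the stability term and the stated bound is unaffected.
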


\subsection{The good event}

Define the following events: 
\begin{align*}
    E^c 
    & = 
    \left\{ \sum_{k=1}^K \langle \bbEk [ \hat c^k ] - \hat c^k , q^k \rangle \le 4 H \sqrt{K \log \frac{3}{\delta}} \right\}
    \\
    E^{\hat c}
    & =
    \left\{ \sum_{k=1}^K \sum_{h=1}^H \sum_{s \in \calS} \sum_{a \in \calA} \left( \frac{1}{\numagents} + q^k_h(s,a) \right) \left( \hat c^k_h(s,a) - 2 c^k_h(s,a) \right) \le \frac{10 H S A \log \frac{3 H S A}{\delta}}{\numagents \gamma} + \frac{10 H \log \frac{3 H S A}{\delta}}{ \gamma} \right\}
    \\
    E^\star
    & =
    \left\{ \sum_{k=1}^K \langle \hat c^k - c^k , q^{\pi^\star} \rangle \le \frac{H  \log \frac{3 H S A}{\delta}}{\gamma} \right\}
\end{align*}

The good event is the intersection of the above events. 
The following lemma establishes that the good event holds with high probability. 

\begin{lemma}[The Good Event]
    \label{lemma:good-event-adversarial-known-p-fresh}
    Let $\bbG =E^c \cap E^{\hat c} \cap E^\star$ be the good event. 
    It holds that $\Pr [ \bbG ] \geq 1-\delta$.
\end{lemma}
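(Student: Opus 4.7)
The plan is to establish each of the three events with probability at least $1-\delta/3$ and then union bound.

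For $E^c$, I would first show deterministically that $\langle \hat c^k, q^k \rangle \le 2H$ for every $k$. This combines the inequality $q^k_h(s,a) \le (1/m + q^k_h(s,a))\, W^k_h(s,a)$ (the multi-agent bound used in the stability argument of \cref{thm-paper:reg-coop-o-reps}) with the elementary observation that at each step $h$ at most $m$ of the indicators $\indevent{\exists v\colon s^{k,v}_h = s, a^{k,v}_h = a}$ are nonzero and $c^k_h(s,a) \le 1$. Consequently $\xi^k := \bbEk[\langle \hat c^k, q^k\rangle] - \langle \hat c^k, q^k\rangle$ is a martingale-difference sequence with $|\xi^k| \le 2H$, so Azuma--Hoeffding yields $\sum_k \xi^k \le 4H\sqrt{K\log(3/\delta)}$ with probability at least $1-\delta/3$, which is $E^c$.

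For $E^{\hat c}$ and $E^\star$, the plan is to use a Neu-style exponential-moment bound for the $\gamma$-biased estimator: for any $\alpha \in [0,\gamma]$,
\[
\bbEk\!\left[\exp\!\left(\alpha (\hat c^k_h(s,a) - 2 c^k_h(s,a))\right)\right] \le 1,
\qquad
\bbEk\!\left[\exp\!\left(\alpha (\hat c^k_h(s,a) - c^k_h(s,a))\right)\right] \le 1,
\]
both following from $\mathrm{e}^y \le 1+y+y^2$ for $|y|\le 1$ combined with $\alpha \hat c^k_h(s,a) \le \alpha/\gamma \le 1$ and $\bbEk[\hat c^k_h(s,a)] = c^k_h(s,a) W^k_h(s,a)/(W^k_h(s,a)+\gamma) \le c^k_h(s,a)$. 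For the $1/m$-part of $E^{\hat c}$ take $\alpha=\gamma$ and iterate this bound across $k$ to obtain a per-$(h,s,a)$ MGF bound; Markov and a union bound over the $HSA$ triples then give $\sum_k(\hat c^k_h(s,a) - 2c^k_h(s,a)) \le \log(6HSA/\delta)/\gamma$ uniformly, which after weighting by $1/m$ and summing produces the $HSA/(m\gamma)$ term. For the $q^k$-part, take the predictable weights $\alpha^k_h(s,a) = \gamma q^k_h(s,a)/H \in [0,\gamma]$ and apply the joint (multi-index) version of the same MGF bound across $(k,h,s,a)$; Markov then yields $\sum_{k,h,s,a} q^k_h(s,a)(\hat c^k_h(s,a) - 2c^k_h(s,a)) \le H\log(6/\delta)/\gamma$. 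For $E^\star$, apply the joint MGF bound to the second inequality with the \emph{deterministic} weights $\alpha_h(s,a) = \gamma q^{\pi^\star}_h(s,a)/H \le \gamma$ (deterministic because the adversary is oblivious), giving $\sum_k \langle \hat c^k - c^k, q^{\pi^\star}\rangle \le H\log(3/\delta)/\gamma \le H\log(3HSA/\delta)/\gamma$.

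The main obstacle is verifying the joint multi-index Neu-type MGF inequality in the cooperative setting: unlike in single-agent adversarial bandits, the indicators $X^k_h(s,a)$ across different $(h,s,a)$ within the same episode are dependent through the shared MDP trajectory. Because the randomness is \emph{fresh}, however, the $m$ agents' trajectories are conditionally independent given the common policy $\pi^k$, so the product-of-exponentials argument of Neu (2015), as adapted to single-agent MDPs in Jin et al.~(2020), extends by conditioning successively on $\filt^k_h$, applying the per-step inequality separately to each agent, and then multiplying across agents and time-steps before taking the outer expectation.
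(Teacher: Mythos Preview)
Your handling of $E^c$ is correct and is exactly what the paper does. The gap is in the ``joint (multi-index) MGF'' step you propose for $E^\star$ and for the $q^k$-weighted part of $E^{\hat c}$.

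The successive-conditioning argument you sketch does not go through. The denominator $W^k_h(s,a)+\gamma$ in $\hat c^k_h(s,a)$ is the \emph{unconditional} visitation probability, so once you condition on $\filt^k_{h-1}$ you get
\[
\bbE\bigl[\hat c^k_h(s,a)\,\big|\,\filt^k_{h-1}\bigr]
=\frac{c^k_h(s,a)\,\Pr\bigl[\exists v\colon s^{k,v}_h{=}s,\,a^{k,v}_h{=}a\,\big|\,\filt^k_{h-1}\bigr]}{W^k_h(s,a)+\gamma},
\]
and the conditional probability in the numerator can easily exceed $W^k_h(s,a)+\gamma$. Thus the per-step MGF factor need not be $\le 1$, and the tower argument collapses. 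Note also that Jin et al.\ (2020) do \emph{not} prove a joint bound across $(h,s,a)$: their Lemma~14 is a per-$(h,s,a)$ Neu supermartingale followed by a union bound over the $HSA$ triples --- precisely the technique you already use for the $1/m$-part of $E^{\hat c}$.

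For $E^\star$ this is easy to fix: apply that same per-$(h,s,a)$ bound and then take the weighted sum with the deterministic $q^{\pi^\star}_h(s,a)$; since $\sum_{h,s,a}q^{\pi^\star}_h(s,a)=H$ you recover exactly the $H\log(3HSA/\delta)/\gamma$ bound stated in $E^\star$, and this is what the paper cites Jin et al.\ for. For the $q^k$-weighted part of $E^{\hat c}$, however, the per-$(h,s,a)$ route is too loose (summing $HSA$ copies of a $\log(HSA/\delta)/\gamma$ bound gives $HSA/\gamma$, not $H/\gamma$), and your joint MGF is unjustified. The paper avoids this entirely: it treats $Y_k=\sum_{h,s,a}(\tfrac{1}{m}+q^k_h(s,a))\hat c^k_h(s,a)$ as a single nonnegative random variable per episode, observes the deterministic bound $Y_k\le (HSA/m+H)/\gamma$ together with $\bbEk[\hat c^k_h(s,a)]\le c^k_h(s,a)$, and applies a one-sided multiplicative Freedman-type inequality (the cited Lemma~E.2 of Cohen et al.) to obtain $\sum_k Y_k\le 2\sum_k\bbEk[Y_k]+O\bigl(\tfrac{HSA/m+H}{\gamma}\log\tfrac{HSA}{\delta}\bigr)$, which rearranges to $E^{\hat c}$. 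This aggregate-variable route sidesteps the within-episode dependence among the $\hat c^k_h(s,a)$ altogether.
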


\begin{proof}
     We show that each of the events $\neg E^c, \neg E^{\hat c}, \neg E^\star$ occur with probability at most $\delta / 3$. Then, by a union bound we obtain the statement.
    Notice that:
    \begin{align*}
        \langle \hat c^k , q^k \rangle
        & =
        \sum_{h=1}^H \sum_{s \in \calS} \sum_{a \in \calA} q^k_h(s,a) \hat c^k_h(s,a)
        \le
        \sum_{h=1}^H \sum_{s \in \calS} \sum_{a \in \calA} q^k_h(s,a) \frac{\indevent{\exists v : s^{k,v}_h = s , a^{k,v}_h = a}}{W^k_h(s,a) + \gamma}
        \\
        & \le
        \sum_{h=1}^H \sum_{s \in \calS} \sum_{a \in \calA} \frac{q^k_h(s,a)}{1 - \left( 1 - q^k_h(s,a) \right)^\numagents} \indevent{\exists v : s^{k,v}_h = s , a^{k,v}_h = a}
        \\
        & \le
        \sum_{h=1}^H \sum_{s \in \calS} \sum_{a \in \calA} \left( \frac{1}{\numagents} + q^k_h(s,a) \right) \indevent{\exists v : s^{k,v}_h = s , a^{k,v}_h = a}
        \\
        & \le
        \sum_{h=1}^H \sum_{s \in \calS} \sum_{a \in \calA} q^k_h(s,a) + \frac{1}{\numagents} \sum_{h=1}^H \sum_{s \in \calS} \sum_{a \in \calA} \indevent{\exists v : s^{k,v}_h = s , a^{k,v}_h = a}
        \le
        H + \frac{1}{\numagents} \cdot H \numagents = 2H,
    \end{align*}
    where the third inequality is by \cref{lemma:linear-approx}, and the last inequality follows because for each step $h$ the agents visit at most $\numagents$ state-action pairs.
    Thus, event $E^c$ holds by Azuma inequality.
    
    Event $E^{\hat c}$ holds by \citet[Lemma E.2]{cohen2021minimax} since  $\sum_{h,s,a} \left( \frac{1}{\numagents} + q^k_h(s,a) \right) \hat c^k_h(s,a) \le \frac{1}{\gamma} \left( \frac{HSA}{\numagents} + H \right)$ and $\bbEk[\hat c^k_h(s,a)] \le c^k_h(s,a)$.
    Event $E^\star$ holds by \citet[Lemma 14]{jin2020learning}.
\end{proof}

\subsection{Proof of Theorem~\ref{thm:reg-coop-o-reps}}

\begin{proof}[Proof of Theorem~\ref{thm:reg-coop-o-reps}]
    By \cref{lemma:good-event-adversarial-known-p-fresh}, the good event holds with probability $1 - \delta$.
    We now analyze the regret under the assumption that the good event holds.
    We start by decomposing the regret as follows:
    \begin{align*}
        \regret
        & =
        \sum_{k=1}^K V^{k,\pi^k}_1(s_{1}^{k,v}) - V^{k,\pi^\star}_1(s_{1}^{k,v})
        =
        \sum_{k=1}^K \langle c^k , q^k - q^{\pi^\star} \rangle
        \\
        & =
        \underbrace{\sum_{k=1}^K \langle c^k - \hat c^k , q^k \rangle}_{(A)} + \underbrace{\sum_{k=1}^K \langle \hat c^k , q^k - q^{\pi^\star} \rangle}_{(B)} + \underbrace{\sum_{k=1}^K \langle \hat c^k - c^k , q^{\pi^\star} \rangle}_{(C)}.
    \end{align*}
    
    Term $(A)$ can be further decomposed as:
    \begin{align*}
        (A)
        =
        \sum_{k=1}^K \langle c^k - \hat c^k , q^k \rangle
        =
        \sum_{k=1}^K \langle c^k - \bbEk [ \hat c^k ] , q^k \rangle + \sum_{k=1}^K \langle \bbEk [ \hat c^k ] - \hat c^k , q^k \rangle.
    \end{align*}
    The second term is bounded by $4 H \sqrt{K \log \frac{3}{\delta}}$ by the good event $E^c$, and for the first term:
    \begin{align*}
        \sum_{k=1}^K \langle c^k - \bbEk [ \hat c^k ] , q^k \rangle
        & =
        \sum_{k=1}^K \sum_{h=1}^H \sum_{s \in \calS} \sum_{a \in \calA} q^k_h(s,a) c^k_h(s,a) \left( 1 - \frac{\bbEk [\indevent{\exists v: \  s^{k,v}_h = s,a^{k,v}_h = a}]}{W^k_h(s,a) + \gamma} \right)
        \\
        & =
        \sum_{k=1}^K \sum_{h=1}^H \sum_{s \in \calS} \sum_{a \in \calA} q^k_h(s,a) c^k_h(s,a) \left( 1 - \frac{W^k_h(s,a)}{W^k_h(s,a) + \gamma} \right)
        \le
        \gamma \sum_{k=1}^K \sum_{h=1}^H \sum_{s \in \calS} \sum_{a \in \calA} \frac{q^k_h(s,a)}{W^k_h(s,a) + \gamma}
        \\
        & \le
        \gamma \sum_{k=1}^K \sum_{h=1}^H \sum_{s \in \calS} \sum_{a \in \calA} \frac{q^k_h(s,a)}{1 - \left( 1 - q^k_h(s,a) \right)^\numagents}
        \le
        \gamma \sum_{k,h,s,a} \left( \frac{1}{\numagents} + q^k_h(s,a) \right)
        =
        \gamma H K \left( 1 + \frac{S A}{\numagents} \right),
    \end{align*}
    where the last inequality is by \cref{lemma:linear-approx}.
    
    Term $(B)$ is bounded by OMD (see, e.g., \citet{zimin2013online}) as follows:
    \begin{align*}
        (B)
        & =
        \sum_{k=1}^K \langle \hat c^k , q^k - q^{\pi^\star} \rangle
        \le
        \frac{H \log (H S A)}{\eta} + \eta \sum_{k=1}^K \sum_{h=1}^H \sum_{s \in \calS} \sum_{a \in \calA} q^k_h(s,a) \hat c^k_h(s,a)^2
        \\
        & \le
        \frac{H \log (H S A)}{\eta} + \eta \sum_{k=1}^K \sum_{h=1}^H \sum_{s \in \calS} \sum_{a \in \calA} q^k_h(s,a) \frac{\hat c^k_h(s,a)}{W^k_h(s,a) + \gamma}
        \\
        & \le
        \frac{H \log (H S A)}{\eta} + \eta \sum_{k=1}^K \sum_{h=1}^H \sum_{s \in \calS} \sum_{a \in \calA} \hat c^k_h(s,a) \frac{q^k_h(s,a)}{1 - \left( 1 - q^k_h(s,a) \right)^\numagents}
        \\
        & \le
        \frac{H \log (H S A)}{\eta} + \eta \sum_{k=1}^K \sum_{h=1}^H \sum_{s \in \calS} \sum_{a \in \calA} \left( \frac{1}{\numagents} + q^k_h(s,a) \right) \hat c^k_h(s,a)
        \\
        & \le
        \frac{H \log (H S A)}{\eta} + 2 \eta \sum_{k=1}^K \sum_{h=1}^H \sum_{s \in \calS} \sum_{a \in \calA} \left( \frac{1}{\numagents} + q^k_h(s,a) \right) c^k_h(s,a) + \frac{10 \eta H S A \log \frac{3 HSA}{\delta}}{\numagents \gamma} + \frac{10 \eta H \log \frac{3 HSA}{\delta}}{\gamma}
        \\
        & \lesssim
        \frac{H \log \frac{H S A}{\delta}}{\eta} + \frac{\eta H S A K}{\numagents} + \eta H K + \frac{\eta H S A \log \frac{3 HSA}{\delta}}{\numagents \gamma} + \frac{\eta H \log \frac{3 HSA}{\delta}}{\gamma},
    \end{align*}
    where the forth inequality is by \cref{lemma:linear-approx}, and the fifth inequality is by the good event $E^{\hat c}$.
    
    Term $(C)$ is bounded by $\frac{H \log \frac{3 H}{\delta}}{\gamma}$ by the good event $E^\star$.
    Putting the three terms together gives the final regret bound when setting $\eta = \gamma = \sqrt{\frac{\log \frac{HSA}{\delta}}{\left( 1 + \frac{S A}{\numagents} \right) K}}$.
\end{proof}

\subsection{Auxiliary lemmas}

\begin{lemma}   \label{lemma:linear-approx}
    Let $x\in(0,1)$. Then, $\frac{x}{1-(1-x)^{m}}\leq\frac{1}{m}+x$.
\end{lemma}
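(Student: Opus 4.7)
The plan is to reduce the claim to a clean exponential inequality by clearing denominators and then applying two elementary bounds. Since $x \in (0,1)$, the denominator $1-(1-x)^m$ is strictly positive, so I can multiply both sides of the target inequality by $m\bigl(1-(1-x)^m\bigr)$ without flipping the sign. After expanding $(1+mx)(1-(1-x)^m)$ and cancelling the common $mx$ term, the desired inequality $\frac{x}{1-(1-x)^m} \leq \frac{1}{m}+x$ becomes equivalent to
\[
    (1-x)^m (1+mx) \leq 1.
\]

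To prove this reduced form, I would invoke the two standard bounds $1-x \leq e^{-x}$ (valid for all real $x$) and $1+mx \leq e^{mx}$ (valid for all real $mx$). Raising the first to the $m$-th power gives $(1-x)^m \leq e^{-mx}$, and multiplying by $1+mx \leq e^{mx}$ yields
\[
    (1-x)^m(1+mx) \leq e^{-mx} \cdot e^{mx} = 1,
\]
which is exactly what we need.

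There is essentially no obstacle here; the only thing to double-check is that the rearrangement preserves the direction of the inequality, which it does because $x\in(0,1)$ ensures $1-(1-x)^m > 0$. An equivalent phrasing, if one prefers to avoid exponentials, is to take logarithms of $(1-x)^m(1+mx) \leq 1$ and use $\ln(1-x) \leq -x$ together with $\ln(1+mx) \leq mx$ to obtain $m\ln(1-x) + \ln(1+mx) \leq -mx + mx = 0$. Either route gives the lemma in a few lines.
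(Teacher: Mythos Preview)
Your proof is correct. Both your argument and the paper's reduce the claim to the same key inequality $(1-x)^m(1+mx)\le 1$; the only difference is in how that inequality is established. The paper applies AM--GM to the $m+1$ nonnegative numbers $(1-x),\dots,(1-x),(1+mx)$, whose arithmetic mean is exactly $1$, while you use the exponential bounds $1-x\le e^{-x}$ and $1+mx\le e^{mx}$. Both routes are one-line and equally elementary; your version has the mild advantage of not requiring $m$ to be an integer.
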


\begin{proof}
    Using AM-GM inequaility,
    \begin{align*}
        & \left((1-x)^{m}(1+xm)\right)^{\frac{1}{m+1}}  \leq    \frac{m(1-x)+1+xm}{m+1}=1\\
        \Longrightarrow & (1-x)^{m}                     \leq    \frac{1}{1+xm}\\
        \Longrightarrow & 1-(1-x)^{m}                   \geq    \frac{xm}{1+xm}\\
        \Longrightarrow & \frac{x}{1-(1-x)^{m}}         \leq    \frac{1}{m}+x.
    \end{align*}
\end{proof}

\newpage

\section{The \texttt{coop-UOB-REPS} algorithm for adversarial MDPs with fresh randomness and unknown $p$}
\label{appendix:adversarial-fresh-unknown}

\begin{algorithm}[t]
    \caption{\textsc{Cooperative UOB-REPS (coop-UOB-REPS)}} 
    \label{alg:coop-uob-reps}
    \begin{algorithmic}[1]
        \STATE {\bf input:} state space $\calS$, action space $\calA$, horizon $H$, confidence parameter $\delta$, number of episodes $K$, number of agents $\numagents$, exploration parameter $\gamma$, learning rate $\eta$.

        \STATE {\bf initialize:} $n^1_h(s,a)=0,n^1_h(s,a,s')=0,\pi^1_h(a \mid s) = \nicefrac{1}{A} , q^1_h(s,a,s') = \nicefrac{1}{S^2 A} \ \forall (s,a,s',h) \in \calS \times \calA \times \calS \times [H]$. 

        \FOR{$k=1,\dots,K$}
        
            \STATE set $n^{k+1}_h(s,a) \gets n^{k}_h(s,a),n^{k+1}_h(s,a,s') \gets n^{k}_h(s,a,s') \ \forall (s,a,s',h) \in \calS \times \calA \times \calS \times [H]$.
    
            \FOR{$v=1,\dots,\numagents$}
            
                \STATE observe initial state $s^{k,v}_1$.
            
                \FOR{$h=1,\dots,H$} 
        
                    \STATE pick action $a^{k,v}_h \sim \pi^k_h(\cdot \mid s^{k,v}_h)$.
                    
                    \STATE suffer and observe cost $c^k_h(s^{k,v}_h,a^{k,v}_h)$.
                    
                    \STATE observe next state $s^{k,v}_{h+1} \sim p_h(\cdot \mid s^{k,v}_h,a^{k,v}_h)$.
                    
                    \STATE update $n^{k+1}_h(s^{k,v}_h,a^{k,v}_h) \gets n^{k+1}_h(s^{k,v}_h,a^{k,v}_h) + 1 , n^{k+1}_h(s^{k,v}_h,a^{k,v}_h,s^{k,v}_{h+1})\gets n^{k+1}_h(s^{k,v}_h,a^{k,v}_h,s^{k,v}_{h+1}) + 1$.
        
                \ENDFOR
                \ENDFOR
                
                \STATE set $\hat p^{k+1}_h(s' \mid s,a) \gets \frac{n^{k+1}_h(s,a,s')}{n^{k+1}_h(s,a)\vee 1}  \ \forall (s,a,s',h) \in \calS \times \calA \times \calS \times [H]$.
                
                \STATE compute confidence set for $\epsilon^{k+1}_h(s' \mid s,a) = 4\sqrt{ \frac{\hat{p}_{h}^{k+1}(s'| s,a) \ln\frac{HSAK}{4\delta}}{n_{h}^{k+1}(s,a) \vee 1}} + 10 \frac{\ln\frac{HSAK}{4\delta}}{n_{h}^{k+1}(s,a) \vee 1}$:
                \[
                    \mathcal{P}^{k+1}
                    =
                    \left\{ p' \mid \forall (s,a,s',h) : |\hat p^{k+1}_h(s' \mid s,a) - p'_h(s' \mid s,a)| \le \epsilon^{k+1}_h(s' \mid s,a) \right\}.
                \]
                
                \STATE compute $u^k_h(s) = \max_{p' \in \mathcal{P}^{k}} q^{p',\pi^k}_h(s) = \max_{p' \in \mathcal{P}^{k}} \Pr[s_h=s \mid \pi^k,p'] \ \forall s \in \calS$.
                
                \STATE compute $U^k_h(s,a)  = 1 - \left( 1 - u^k_h(s,a) \right)^\numagents \  \forall (s,a,h) \in \calS \times \calA \times [H]$.
                
                \STATE compute $\hat c^k_h(s,a) = \frac{c^k_h(s,a) \indevent{\exists v: \  s^{k,v}_h = s,a^{k,v}_h = a}}{U^k_h(s,a) + \gamma} \  \forall (s,a,h) \in \calS \times \calA \times [H]$.
                
                \STATE compute $q^{k+1} = \argmin_{q \in \Delta(\calM,k+1)} \eta \langle q , \hat c^k \rangle + \KL{q}{q^k}$.
                
                \STATE compute $\pi^{k+1}_h(a \mid s) = \frac{q^{k+1}_h(s,a)}{\sum_{a' \in \calA} q^{k+1}_h(s,a')} \  \forall (s,a,h) \in \calS \times \calA \times [H]$, where $q^{k+1}_h(s,a) = \sum_{s' \in \calS} q^{k+1}_h(s,a,s')$.
        \ENDFOR
    \end{algorithmic}
\end{algorithm}

For the setting of adversarial MDPs with fresh randomness and unknown transitions we propose the Cooperative UOB-REPS algorithm (\verb|coop-UOB-REPS|; see \cref{alg:coop-uob-reps}).
The idea is simple: all the agents run the same UOB-REPS algorithm, but the estimated costs and transitions are updated based on the trajectories of all of them.
Since the randomness is fresh in this setting, we expect the agents to observe $\numagents$ times more information.
Next, we prove the following regret bound for \verb|coop-UOB-REPS|.
Note that this bound is optimal up to a $\sqrt{HS}$ factor.
Removing this extra $\sqrt{HS}$ is an open-problem even for adversarial MDPs with a single agent.

Similarly to \citet{rosenberg2019online}, We use the notation $\ocsetk$ for the set of occupancy measures whose induced transition function is within the confidence set $\mathcal{P}^k$.

\begin{theorem}
    \label{thm:reg-coop-uob-reps}
    With probability $1 - \delta$, setting $\eta = \gamma = \sqrt{\frac{\log \frac{mKHSA}{\delta}}{\left( 1 + \frac{S A}{\numagents} \right) K}}$, the individual regret of each agent of \verb|coop-UOB-REPS| is
    \[
        \regret
        =
        O \left( H \sqrt{K \log \frac{\numagents K H S A}{\delta}} + \sqrt{\frac{H^4 S^2 A K}{\numagents} \log^3  \frac{\numagents K  H S A}{\delta}} + H^3 S^3 A \log^3 \frac{\numagents K  H S A}{\delta} \right).
    \]
\end{theorem}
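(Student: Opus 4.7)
The plan is to follow the regret decomposition from the proof of \cref{thm:reg-coop-o-reps}, with two additional considerations stemming from the unknown dynamics: (i) we must ensure that the true transition $p$ lies in the confidence set $\mathcal{P}^k$ under a good event, so that $q^{\pi^\star} \in \ocsetk$ for all $k$ and the optimistic occupancy $u^k_h(s)$ upper-bounds $q^{\pi^k}_h(s)$, which in turn yields $U^k_h(s,a) \geq W^k_h(s,a)$; and (ii) the OMD updates now operate over the optimistic set $\ocsetk$ rather than the true set $\ocset$, introducing a transition-estimation error that must be controlled.

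I would begin by defining a good event that augments the one in \cref{lemma:good-event-adversarial-known-p-fresh} with (a) the standard Bernstein-type concentration of empirical transitions ensuring $p \in \mathcal{P}^k$ for all $k$ and (b) the cooperative visitation-count lower bound $n^k_h(s,a) \gtrsim \sum_{j<k} \sum_v q^{\pi^j}_h(s,a)$ (up to logarithmic correction), which is the critical source of the $1/\sqrt{m}$ improvement in the unknown-dynamics term. Standard union bounds give probability at least $1-\delta$.

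I would then introduce the OMD iterates $\{q^k\}$ as an intermediary and decompose
\begin{align*}
    \regret
    &=
    \underbrace{\sum_{k=1}^K \langle c^k, q^{\pi^k} - q^k \rangle}_{\textnormal{ERROR}}
    + \underbrace{\sum_{k=1}^K \langle c^k - \hat c^k, q^k \rangle}_{(A)}
    + \underbrace{\sum_{k=1}^K \langle \hat c^k, q^k - q^{\pi^\star} \rangle}_{(B)}
    + \underbrace{\sum_{k=1}^K \langle \hat c^k - c^k, q^{\pi^\star} \rangle}_{(C)}.
\end{align*}
Terms $(A)$, $(B)$, and $(C)$ are handled exactly as in the proof of \cref{thm:reg-coop-o-reps}, with $W^k_h(s,a)$ replaced by $U^k_h(s,a)$. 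The key estimator property $q^k_h(s,a)/U^k_h(s,a) \leq 1/m + q^k_h(s,a)$ still holds via \cref{lemma:linear-approx}: since $q^k \in \ocsetk$ corresponds to some $p' \in \mathcal{P}^k$, the definition of $u^k_h$ as a maximum over $\mathcal{P}^k$ forces $u^k_h(s) \geq q^k_h(s)$, hence $U^k_h(s,a) \geq 1 - (1-q^k_h(s,a))^m$. Optimizing $\eta = \gamma$ as before yields $\wt O(H\sqrt{K} + \sqrt{H^2 S A K / m})$ for $(A)+(B)+(C)$, together with the bonus of optimism that $q^{\pi^\star} \in \ocsetk$ preserves the OMD comparator bound.

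The main obstacle is the \textnormal{ERROR} term, which did not appear under known dynamics and which contributes the extra $\sqrt{H^2 S}$ factor in the regret. Following the standard UOB-REPS telescoping argument \citep{jin2020learning}, one writes $|q^{\pi^k}_h(s,a) - q^k_h(s,a)|$ as an accumulation of transition confidence radii $\epsilon^k_{h'}(s'|s',a') \lesssim \sqrt{p_{h'}(s'|s',a')/n^k_{h'}(s',a')} + 1/n^k_{h'}(s',a')$ along the trajectory of $\pi^k$. The crucial cooperative point is that $n^k_h(s,a)$ concentrates around $m\sum_{j<k} q^{\pi^j}_h(s,a)$, so per-sample accuracy is boosted by a factor of $m$. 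Applying Cauchy--Schwarz, the harmonic-sum estimate $\sum_k q^{\pi^k}_h(s,a)/\sqrt{\sum_{j<k} q^{\pi^j}_h(s,a)} \lesssim \sqrt{K}$ (as in \citet[Lemma B.18]{rosenberg2020near}), and the law of total variance to absorb the $p_{h'}(s'|s',a')$ factor, bounds the \textnormal{ERROR} term by $\wt O(\sqrt{H^4 S^2 A K / m})$. Summing with $(A)+(B)+(C)$ yields the claim.
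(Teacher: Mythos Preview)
Your decomposition and your treatment of the ERROR term and of the OMD terms $(B)$ and $(C)$ are essentially what the paper does. However, there is a real gap in your term $(A)$: it is \emph{not} handled ``exactly as in the proof of \cref{thm:reg-coop-o-reps} with $W^k$ replaced by $U^k$''. In the known-dynamics proof the bias of $\hat c^k$ is purely the $\gamma$-shift, giving $1 - W^k/(W^k+\gamma) \le \gamma/(W^k+\gamma)$. Here the expectation of the indicator is $W^{\pi^k}_h(s,a)$ (computed under the \emph{true} $p$), while the normalizer is $U^k_h(s,a)+\gamma$ (computed from the \emph{optimistic} $u^k$), so the bias becomes
\[
    1 - \frac{W^{\pi^k}_h(s,a)}{U^k_h(s,a)+\gamma}
    \;=\;
    \frac{U^k_h(s,a) - W^{\pi^k}_h(s,a) + \gamma}{U^k_h(s,a)+\gamma},
\]
and the extra numerator $U^k_h - W^{\pi^k}_h = (1-q^{\pi^k}_h(s,a))^m - (1-u^k_h(s,a))^m$ must be controlled separately. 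Your inequality $q^k/U^k \le 1/m + q^k$ only handles the $\gamma$ part.

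The paper deals with this in two steps you are missing: first, a convexity argument plus the elementary bound $m x(1-x)^m \le \log m$ reduces the $m$-th power difference to a constant times $u^k_h(s,a) - q^{\pi^k}_h(s,a)$; second, one must bound $\sum_{k,h,s,a}\bigl(u^k_h(s,a) - q^{\pi^k}_h(s,a)\bigr)$, where the sum is over \emph{all} $(s,a)$ unweighted by any occupancy. This is harder than your ERROR term (which is weighted by $q^{\pi^k}$) and requires the second-order Jin-style decomposition of \citet[Lemma~4]{jin2020learning}; that analysis is exactly where the $H^3 S^3 A$ lower-order term in the theorem comes from. Without this piece your bound on $(A)$ is incomplete, and the stated final bound does not follow.
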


\subsection{The good event}

Denote $\epsilon^k_h( s' \mid s,a) = \sqrt{\frac{2 \hat p^k_h(s'|s,a) \log \frac{30 K H S A}{\delta}}{n^{k}_h(s,a)\vee 1}} + \frac{2 \log \frac{30 K H S A}{\delta}}{n^{k}_h(s,a) \vee 1}$,  $\epsilon^k_h(s,a) = \sum_{s'\in\calS} \epsilon^k_h( s' \mid  s,a)$,  $\tilde\epsilon^k_h( s' \mid s,a) = 8\sqrt{\frac{ p_h(s'|s,a) \log \frac{30 K H S A}{\delta}}{n^{k}_h(s,a)\vee 1}} + \frac{100 \log \frac{30 K H S A}{\delta}}{n^{k}_h(s,a) \vee 1}$, and $\tilde\epsilon^k_h(s,a) = \sum_{s'}\tilde\epsilon^k_h(s'\mid s,a)$.
Define the following events: 
\begin{align*}
    E^p 
    & = 
    \left\{ \forall (k,s,a,s',h):\ |p_h (s'|s,a) - \hat{p}^{k}_h (s'|s,a)| \le \epsilon^k_h( s' \mid s,a) \right\}
    \\
    E^{on1}
    & =
    \left\{ \forall v \in [\numagents] : \  \sum_{k,h,s,a} \Big( q^{\pi^k}_h(s,a) - \indevent{s^{k,v}_h=s,a^{k,v}_h=a} \Big) \min \{ 2 , \epsilon^k_h(s,a) \} \le 10 \sqrt{ K \log \frac{30 K H S A \numagents}{\delta}} \right\}
    \\
    E^{on2}
    & =
    \left\{ \forall v\in[\numagents]:\ \sum_{k,h,s,a}  q_{h}^{\pi^{k}}(s,a) \tilde\epsilon_{h}^{k}(s,a) \le 2\sum_{k,h,s,a} \indevent{s_{h}^{k,v}=s,a_{h}^{k,v}=a} \tilde\epsilon_{h}^{k}(s,a)+100 H S \log^2 \frac{30 K H S A \numagents}{\delta} \right\} 
    \\
    E^{on3}
    & =
    \left\{ \forall v\in[\numagents]:\ \sum_{k,s,a,h}  \frac{q_{h}^{\pi^{k}}(s,a)}{n_h^k(s,a)}  \le 2\sum_{k,s,a,h} \frac{ \indevent{s_{h}^{k,v}=s,a_{h}^{k,v}=a}}{n_h^k(s,a)} + H \log \frac{m}{\delta} \right\} 
    \\
    E^c 
    & = 
    \left\{ \sum_{k=1}^K \langle \bbEk [ \hat c^k ] - \hat c^k , q^k \rangle \le 4 H \sqrt{K \log \frac{6}{\delta}} \right\}
    \\
    E^{\hat c}
    & =
    \left\{ \sum_{k=1}^K \sum_{h=1}^H \sum_{s \in \calS} \sum_{a \in \calA} \left( \frac{1}{\numagents} + q^k_h(s,a) \right) \left( \hat c^k_h(s,a) - c^k_h(s,a) \right) \le \frac{H S A \log \frac{3 H S A}{\delta}}{\gamma} \right\}
    \\
    E^\star
    & =
    \left\{ \sum_{k=1}^K \langle \hat c^k - c^k , q^{\pi^\star} \rangle \le \frac{H \log \frac{3 H S A}{\delta}}{\gamma} \right\}
\end{align*}

The good event is the intersection of the above events. 
The following lemma establishes that the good event holds with high probability. 

\begin{lemma}[The Good Event]
    \label{lemma:good-event-adversarial-unknown-p-fresh}
    Let $\bbG =E^p \cap E^{on1} \cap E^{on2} \cap E^{on3}  \cap E^c \cap E^{\hat c} \cap E^\star$ be the good event. 
    It holds that $\Pr [ \bbG ] \geq 1-\delta$.
\end{lemma}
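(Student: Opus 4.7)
The plan is to prove each of the seven constituent events in the definition of $\bbG$ individually at confidence level $\delta/7$ and then conclude by a union bound. The overall strategy mirrors the ``good event'' analysis of \cite{jin2020learning}, but adapted to the cooperative setting where every concentration inequality must be applied to a sum over both episodes $k$ and agents $v$, and where we must use the optimistic quantity $U_h^k$ as a surrogate for the inaccessible $W_h^k$.

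\textbf{Transition and trajectory events.} The event $E^p$ is a standard empirical Bernstein concentration \cite{maurer2009empirical} applied to each $(s,a,s',h)$ at each episode $k$; the $\log\frac{KHSA}{\delta}$ blowup in the definition of $\epsilon_h^k(s'\mid s,a)$ exactly accommodates the union bound over the $KHS^2A$ tuples. For the three ``on-policy'' events $E^{on1}, E^{on2}, E^{on3}$ we fix an agent $v\in[\numagents]$ and observe that, under fresh randomness, the trajectory of $v$ in episode $k$ is sampled i.i.d.\ from the occupancy $q^{\pi^k}$ given $\filt^k$, so $\bbEk[\indevent{s^{k,v}_h=s,a^{k,v}_h=a}] = q^{\pi^k}_h(s,a)$. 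Consequently, each of the three sums is a martingale in $k$. For $E^{on1}$ the per-step summand is uniformly bounded by $O(1)$ (after the truncation $\min\{2,\cdot\}$ and using $\sum_{s,a}q^{\pi^k}_h(s,a) = 1$), so Azuma's inequality plus a union bound over $v\in[\numagents]$ yields the claim. For $E^{on2}$ and $E^{on3}$ we invoke a multiplicative Freedman-type inequality (as used, e.g., in \cite{rosenberg2019online,jin2020learning}), which produces the factor-of-two on the right-hand side together with the additive $HS\log^2(\cdot)$ term absorbing the predictable quadratic variation of $\tilde\epsilon_h^k$ and of $1/n_h^k$.

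\textbf{Cost events.} For $E^c$ we apply Azuma's inequality to the martingale $\langle \bbEk[\hat c^k] - \hat c^k, q^k\rangle$. The key uniform bound $|\langle \hat c^k, q^k\rangle|\le 2H$ follows from $\hat c^k_h(s,a) \le \indevent{\exists v}/(U_h^k(s,a)+\gamma)$ together with the inequality
\begin{equation*}
    \frac{q_h^{\pi^k}(s,a)}{U_h^k(s,a) + \gamma} \;\le\; \frac{q_h^{\pi^k}(s,a)}{1-(1-q_h^{\pi^k}(s,a))^m} \;\le\; \frac{1}{\numagents} + q_h^{\pi^k}(s,a),
\end{equation*}
which uses $U_h^k \ge W_h^k$ (valid under $E^p$, since $u_h^k\ge q_h^{\pi^k}(s)$) and then \cref{lemma:linear-approx}. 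Combining this bound with the fact that the agents visit at most $\numagents$ state-action pairs per step gives $\langle \hat c^k, q^k\rangle \le 2H$ deterministically on $E^p$, so Azuma applies. The event $E^{\hat c}$ is proved by the Neu--style moment-generating-function argument \cite{neu2015explore}: the shift $1/\numagents + q^k_h(s,a)$ combined with the $\gamma$ in the denominator of $\hat c^k$ guarantees that $\exp(\gamma\,[\cdot])$ has controlled conditional expectation, which after telescoping yields the stated bound. Finally, $E^\star$ is the same argument applied with the fixed comparator $q^{\pi^\star}$ in place of $q^k$, which avoids the data-dependent issues entirely.

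\textbf{Main obstacle.} The delicate point is $E^{\hat c}$: unlike in \texttt{coop-O-REPS} where $W_h^k$ can be computed exactly, here we must plug $U_h^k$ into the estimator and simultaneously argue (i) that $U_h^k \ge W_h^k$ so the estimator remains (approximately) biased downward, and (ii) that the $1/\numagents + q_h^{\pi^k}$ upper bound on $q_h^{\pi^k}/(U_h^k+\gamma)$ still holds on $E^p$. Both require care because $U_h^k$ depends on $\hat p^k$, and we must ensure that the Neu-style potential function argument composes cleanly with the transition-confidence event. Once this is handled, the remaining steps are mechanical applications of the concentration inequalities cited above, and a final union bound over the seven events completes the proof.
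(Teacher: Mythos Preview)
Your approach is essentially the same as the paper's (the paper simply defers to \cref{lemma:good-event-adversarial-known-p-fresh}, \cref{lem:ULCVI-good-event}, \citet{jin2020learning}, and \citet[Lemma~E.2]{cohen2021minimax}). One correction, though: the ``main obstacle'' you flag for $E^c$ is not actually an obstacle. The inner product is $\langle \hat c^k, q^k\rangle$ with the \emph{OMD iterate} $q^k$, not $q^{\pi^k}$; your displayed inequality should read $q^k_h(s,a)$ throughout. Since $q^k \in \Delta(\calM,k)$, its induced transition lies in $\mathcal{P}^k$ by construction, so $u^k_h(s) \ge q^k_h(s)$ and hence $U^k_h(s,a) \ge 1-(1-q^k_h(s,a))^\numagents$ hold \emph{deterministically}, without conditioning on $E^p$. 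Thus the bound $\langle \hat c^k, q^k\rangle \le 2H$ is deterministic and Azuma applies directly, exactly as in the known-transition proof. The only place the interaction with $E^p$ matters is $E^\star$ (and the downward-bias needed for $E^{\hat c}$), where one needs $U^k_h \ge W^{\pi^k}_h$; but that is handled inside the standard implicit-exploration lemma of \citet[Lemma~14]{jin2020learning}, so it is not a new difficulty here.
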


\begin{proof}
    $E^{on2}$ and $E^{on3}$ follows \citet[Lemma E.2]{cohen2021minimax}. 
    The rest are similar to the proofs of \cref{lemma:good-event-adversarial-known-p-fresh,lem:ULCVI-good-event} and to proofs in \citet{jin2020learning}.
\end{proof}

\subsection{Proof of Theorem~\ref{thm:reg-coop-uob-reps}}

\begin{proof}[Proof of Theorem~\ref{thm:reg-coop-uob-reps}]
    By \cref{lemma:good-event-adversarial-unknown-p-fresh}, the good event holds with probability $1 - \delta$.
    We now analyze the regret under the assumption that the good event holds.
    We start by decomposing the regret as follows:
    \begin{align*}
        \regret
        & =
        \sum_{k=1}^K V^{k,\pi^k}_1(s_{1}^{k,v}) - V^{k,\pi^\star}_1(s_{1}^{k,v})
        =
        \sum_{k=1}^K \langle c^k , q^{\pi^k} - q^{\pi^\star} \rangle
        \\
        & =
        \underbrace{\sum_{k=1}^K \langle c^k , q^{\pi^k} - q^k \rangle}_{(A)} + \underbrace{\sum_{k=1}^K \langle c^k - \hat c^k , q^k \rangle}_{(B)} + \underbrace{\sum_{k=1}^K \langle \hat c^k , q^k - q^{\pi^\star} \rangle}_{(C)} + \underbrace{\sum_{k=1}^K \langle \hat c^k - c^k , q^{\pi^\star} \rangle}_{(D)}.
    \end{align*}
    
    Let $\logterm = \log \frac{KHSA\numagents}{\delta}$ be a logarithmic term.
    Term $(A)$ can be decomposed using the value difference lemma (see, e.g., \citet{shani2020optimistic}):
    \begin{align*}
        (A)
        & =
        \sum_{k=1}^K \langle c^k , q^{\pi^k} - q^k \rangle
        \le
        2H \sum_{k=1}^K \sumhsa q^{\pi^k}_h(s,a) \lVert p_h(\cdot \mid s,a) - \hat p^k_h(\cdot \mid s,a) \rVert_1
        \\
        & \le
        2H \sum_{k=1}^K \sumhsa q^{\pi^k}_h(s,a) \min \{ 2 , \epsilon^k_h(s,a) \}
        \\
        & \lesssim
        \frac{H}{\numagents} \sum_{k=1}^K \sumhsa \sum_{v=1}^\numagents \indevent{s^{k,v}_h=s,a^{k,v}_h=a} \epsilon^k_h(s,a) + H \sqrt{K \logterm}
        \\
        & \lesssim
        \frac{H \sqrt{S \logterm}}{\numagents} \sum_{k,h,s,a} \frac{\sum_{v=1}^\numagents \indevent{s^{k,v}_h=s,a^{k,v}_h=a} }{\sqrt{n^k_h(s,a) \vee 1}} + \frac{H S \logterm}{\numagents} \sum_{k,h,s,a} \frac{\sum_{v=1}^\numagents \indevent{s^{k,v}_h=s,a^{k,v}_h=a} }{n^k_h(s,a) \vee 1}
        +H\sqrt{K \logterm},
    \end{align*}
    where the second inequality is by event $E^p$, and the third inequality uses event $E^{on1}$ and Cauchy–Schwarz inequality.
    We now bound each of the two sums separately.
    For the second sum recall that $n^k_h(s,a) = \sum_{j=1}^{k-1} \sum_{v=1}^\numagents \indevent{s^{j,v}_h=s,a^{j,v}_h=a}$, thus we have:
    \begin{align}
    \nonumber
        \sum_{k,h,s,a} \frac{\sum_{v=1}^\numagents \indevent{s^{k,v}_h=s,a^{k,v}_h=a} }{n^k_h(s,a) \vee 1}
        & \le
        2 H S A \numagents + \sum_{h,s,a} \sum_{k : n^k_h(s,a) \ge \numagents} \frac{\sum_{v=1}^\numagents \indevent{s^{k,v}_h=s,a^{k,v}_h=a} }{n^k_h(s,a)}
        \\
        \nonumber
        & =
        2 H S A \numagents + \sum_{h,s,a} \sum_{k : n^k_h(s,a) \ge \numagents} \frac{\sum_{v=1}^\numagents \indevent{s^{k,v}_h=s,a^{k,v}_h=a} }{\sum_{j=1}^{k-1} \sum_{v=1}^\numagents \indevent{s^{j,v}_h=s,a^{j,v}_h=a}}
        \\
        & \le
        2 H S A \numagents + H S A \log (K \numagents),
        \label{eq:adv shrink 1/n}
    \end{align}
    where the last inequality is by \citet[Lemma B.18]{rosenberg2020near}.
    For the first term:
    \begin{align}
    \nonumber
        \sum_{k,h,s,a} \frac{\sum_{v=1}^\numagents \indevent{s^{k,v}_h=s,a^{k,v}_h=a} }{\sqrt{n^k_h(s,a) \vee 1}}
        & \le
        2 H S A \numagents + \sum_{h,s,a} \sum_{k : n^k_h(s,a) \ge \numagents} \frac{\sum_{v=1}^\numagents \indevent{s^{k,v}_h=s,a^{k,v}_h=a} }{\sqrt{n^k_h(s,a)}}
        \\
        \nonumber
        & =
        2 H S A \numagents + \sum_{h,s,a} \sum_{k : n^k_h(s,a) \ge \numagents} \frac{\sum_{v=1}^\numagents \indevent{s^{k,v}_h=s,a^{k,v}_h=a} }{\sqrt{n^{k+1}_h(s,a)}} \sqrt{\frac{n^{k+1}_h(s,a)}{n^k_h(s,a)}}
        \\
        \nonumber
        & \le
        2 H S A \numagents + \sum_{h,s,a} \sum_{k : n^k_h(s,a) \ge \numagents} \frac{\sum_{v=1}^\numagents \indevent{s^{k,v}_h=s,a^{k,v}_h=a} }{\sqrt{n^{k+1}_h(s,a)}} \sqrt{\frac{n^k_h(s,a) + \numagents}{n^k_h(s,a)}}
        \\
        \nonumber
        & \le
        2 H S A \numagents + 2 \sum_{h,s,a} \sum_{k : n^k_h(s,a) \ge \numagents} \frac{\sum_{v=1}^\numagents \indevent{s^{k,v}_h=s,a^{k,v}_h=a} }{\sqrt{n^{k+1}_h(s,a)}}
        \\
        \nonumber
        & =
        2 H S A \numagents + 2 \sum_{h,s,a} \sum_{k : n^k_h(s,a) \ge \numagents} \frac{\sum_{v=1}^\numagents \indevent{s^{k,v}_h=s,a^{k,v}_h=a} }{\sqrt{\sum_{j=1}^k \sum_{v=1}^\numagents \indevent{s^{j,v}_h=s,a^{j,v}_h=a}}}
        \\
        \nonumber
        & \le
        2 H S A \numagents + 4 \sumhsa \sqrt{\sum_{k=1}^K \sum_{v=1}^\numagents \indevent{s^{k,v}_h=s,a^{k,v}_h=a} }
        \\
        \nonumber
        & \le
        2 H S A \numagents + 4 \sum_{h=1}^H \sqrt{SA \sum_{k=1}^K \sum_{v=1}^\numagents \sum_{s \in \calS} \sum_{a \in \calA} \indevent{s^{k,v}_h=s,a^{k,v}_h=a}}
        \\
        & =
        2 H S A \numagents + 4 H \sqrt{S A K \numagents},
        \label{eq:adv shrink sqrt}
    \end{align}
    where the forth inequality is by \citet[Lemma 1]{streeter2010less}, and the last inequality is by Jensen's inequality.
    Putting these together we get that: $(A) \lesssim \sqrt{\frac{H^4 S^2 A K \logterm}{\numagents}} + H^2 S^2 A \logterm^2 +H\sqrt{K \logterm}$.
    
    Term $(B)$ can be further decomposed as:
    \begin{align*}
        (B)
        =
        \sum_{k=1}^K \langle c^k - \hat c^k , q^k \rangle
        =
        \sum_{k=1}^K \langle c^k - \bbEk [ \hat c^k ] , q^k \rangle + \sum_{k=1}^K \langle \bbEk [ \hat c^k ] - \hat c^k , q^k \rangle.
    \end{align*}
    The second term is bounded by $4 H \sqrt{K \log \frac{6}{\delta}}$ by the good event $E^c$. For the first term, let $W_h^{\pi^k}(s,a) = {\Pr [\exists v:  s^{k,v}_h = s,a^{k,v}_h = a \mid \pi^k]} = {1-(1-q_h^{\pi^k}(s,a))^\numagents}$:
    \begin{align*}
        \sum_{k=1}^K \langle c^k - \bbEk [ \hat c^k ] , q^k \rangle
        & =
        \sum_{k=1}^K \sum_{h=1}^H \sum_{s \in \calS} \sum_{a \in \calA} q^k_h(s,a) c^k_h(s,a) \left( 1 - \frac{\bbEk [\indevent{\exists v: \  s^{k,v}_h = s,a^{k,v}_h = a}]}{U^k_h(s,a) + \gamma} \right)
        \\
        & =
        \sum_{k,h,s,a} q^k_h(s,a) c^k_h(s,a) \left( 1 - \frac{W^{\pi^k}_h(s,a)}{U^k_h(s,a) + \gamma} \right)
        \le
        \sum_{k,h,s,a} \frac{q^k_h(s,a)}{W^{\pi^k}_h(s,a)} ( U^k_h(s,a) - W^{\pi^k}_h(s,a) + \gamma )
        \\
        & \le
        \sum_{k,h,s,a} \left( \frac{1}{\numagents} + q^k_h(s,a) \right) ( U^k_h(s,a) - W^{\pi^k}_h(s,a) + \gamma )
        \\
        & =
        \sum_{k,h,s,a} \left( \frac{1}{\numagents} + q^k_h(s,a) \right) \left( (1-q^{\pi^k}_h(s,a))^\numagents - (1-u^k_h(s,a))^\numagents \right) + \frac{\gamma H S A K}{\numagents} + \gamma H K
        \\
        & \le
        \sum_{k,h,s,a} \left( \frac{1}{\numagents} + q^k_h(s,a) \right) \numagents (1- q^{\pi^k}_h(s,a))^\numagents (u^k_h(s,a) - q^{\pi^k}_h(s,a)) + \frac{\gamma H S A K}{\numagents} + \gamma H K
        \\
        & \le
        \sum_{k,h,s,a} u^k_h(s,a) - q^{\pi^k}_h(s,a) + \frac{\gamma H S A K}{\numagents} + \gamma H K
        \\
        & \qquad + 
        \sum_{k,h,s,a} \numagents q^{k}_h(s,a) (1- q^{\pi^k}_h(s,a))^\numagents (u^k_h(s,a) - q^{\pi^k}_h(s,a))
        \\
        & \le
        3 \sum_{k,h,s,a} (u^k_h(s,a) - q^{\pi^k}_h(s,a)) \log \numagents + \frac{\gamma H S A K}{\numagents} + \gamma H K,
    \end{align*}
    where the second inequality is by \cref{lemma:linear-approx}, the third inequality is by convexity of the function $f(x) = (1 - x)^m$ for $x \in [0,1]$, and the last inequality follows because $m x (1-x)^m \le \log m$ for every $x \in [0,1]$ since if $1 - x \le 1 - \frac{2 \log m}{m}$ then $(1 - x)^m \le \frac{1}{m^2}$; otherwise $x \le \frac{2 \log m}{m}$.
    Finally, $\sum_{k,h,s,a} (u^k_h(s,a) - q^{\pi^k}_h(s,a))$ is bounded by \cref{lemma:Jin-final}.
    
    Term $(C)$ is bounded by OMD (see, e.g., \citet{rosenberg2019online}) as follows:
    \begin{align*}
        (C)
        & =
        \sum_{k=1}^K \langle \hat c^k , q^k - q^{\pi^\star} \rangle
        \le
        \frac{2 H \log (H S A)}{\eta} + \eta \sum_{k=1}^K \sum_{h=1}^H \sum_{s \in \calS} \sum_{a \in \calA} q^k_h(s,a) \hat c^k_h(s,a)^2
        \\
        & \le
        \frac{2 H \log (H S A)}{\eta} + \eta \sum_{k,h,s,a} q^k_h(s,a) \frac{\hat c^k_h(s,a)}{U^k_h(s,a) + \gamma}
        \le
        \frac{2 H \log (H S A)}{\eta} + \eta \sum_{k,h,s,a} \hat c^k_h(s,a) \frac{q^k_h(s,a)}{1 - \left( 1 - q^k_h(s,a) \right)^\numagents}
        \\
        & \le
        \frac{2 H \log (H S A)}{\eta} + \eta \sum_{k,h,s,a} \left( \frac{1}{\numagents} + q^k_h(s,a) \right) \hat c^k_h(s,a)
        \\
        & \le
        \frac{2 H \log (H S A)}{\eta} + \eta \sum_{k,h,s,a} \left( \frac{1}{\numagents} + q^k_h(s,a) \right) c^k_h(s,a) + \frac{\eta HSA \log \frac{3 HSA}{\delta}}{\gamma}
        \\
        & \le
        \frac{2 H \log \frac{H S A}{\delta}}{\eta} + \frac{\eta H S A K}{\numagents} + \eta H K + \frac{\eta HSA \log \frac{3 HSA}{\delta}}{\gamma},
    \end{align*}
    where the forth inequality is by \cref{lemma:linear-approx}, and the fifth inequality is by the good event $E^{\hat c}$.
    
    Term $(D)$ is bounded by $\frac{H \log \frac{3 H}{\delta}}{\gamma}$ by the good event $E^\star$.
    Putting the three terms together gives the final regret bound when setting $\eta = \gamma = 1 / \sqrt{\left( 1 + \frac{S A}{\numagents} \right) K}$.
\end{proof}

\subsection{Auxiliary Lemmas}

The following Lemma is by \citet[Lemma 8]{jin2020learning}, \citet[Lemma B.13]{cohen2021minimax}.
\begin{lemma}
    \label{lemma:confidence with p}
    Under the good event we have,
    \[
        \forall (k,s,a,s',h):
        \quad
        \ |p_h (s'|s,a) - \hat{p}^{k}_h (s'|s,a)|
        % \quad
        \le \tilde\epsilon^k_h( s' \mid  s,a).
    \]
\end{lemma}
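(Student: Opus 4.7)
The plan is to derive the true-probability confidence bound $\tilde\epsilon_h^k$ from the empirical-probability bound $\epsilon_h^k$ guaranteed by the good event $E^p$. This is a standard ``self-bounding'' argument where one first bounds $\hat p$ in terms of $p$ (using the empirical bound together with AM-GM) and then substitutes the result back into $\epsilon_h^k$.

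Fix an arbitrary tuple $(k,s,a,s',h)$ and abbreviate $p = p_h(s'|s,a)$, $\hat p = \hat p_h^k(s'|s,a)$, $n = n_h^k(s,a) \vee 1$, and $L = \log\frac{30KHSA}{\delta}$. The definition of $E^p$ combined with $\hat p_h^k(s'|s,a) \le 1$ implies
\[
    |p - \hat p| \le \sqrt{\tfrac{2\hat p L}{n}} + \tfrac{2L}{n}. \qquad (\star)
\]
The first step is to upper bound $\hat p$ by a quantity depending only on $p$, $L$, $n$. From $(\star)$ we have $\hat p \le p + \sqrt{2\hat p L/n} + 2L/n$, and the AM-GM inequality $\sqrt{2\hat p L/n} \le \hat p/2 + L/n$ gives $\hat p/2 \le p + 3L/n$, i.e.,
\[
    \hat p \le 2p + \tfrac{6L}{n}.
\]

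The second step is to substitute this bound back into the square-root term in $(\star)$. Using $\sqrt{a+b} \le \sqrt{a}+\sqrt{b}$,
\[
    \sqrt{\tfrac{2\hat p L}{n}} \le \sqrt{\tfrac{4pL}{n}} + \sqrt{\tfrac{12 L^2}{n^2}} \le 2\sqrt{\tfrac{pL}{n}} + \tfrac{4L}{n}.
\]
Plugging this into $(\star)$ yields
\[
    |p - \hat p| \le 2\sqrt{\tfrac{pL}{n}} + \tfrac{6L}{n} \le 8\sqrt{\tfrac{pL}{n}} + \tfrac{100L}{n} = \tilde\epsilon_h^k(s'|s,a),
\]
which is exactly the claimed bound. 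Since the tuple $(k,s,a,s',h)$ was arbitrary and no additional randomness was introduced, the statement holds on the good event $\bbG$.

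There is no serious obstacle here; the only delicate point is ensuring the self-bounding inversion gives a clean linear bound on $\hat p$ in terms of $p$ and $L/n$ (handled above via AM-GM), so that the final constants match the $8$ and $100$ declared in the definition of $\tilde\epsilon_h^k$. The chosen constants are slack, so no tight numerical tracking is required.
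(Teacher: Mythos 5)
Your proof is correct. Note that the paper does not actually prove this lemma: it simply defers to \citet[Lemma 8]{jin2020learning} and \citet[Lemma B.13]{cohen2021minimax}, which contain precisely the self-bounding inversion you carried out. Your argument makes that step explicit and elementary: from the good event $E^p$ you invert $|p-\hat p|\le\sqrt{2\hat pL/n}+2L/n$ via AM-GM to get $\hat p\le 2p+6L/n$, substitute back using $\sqrt{a+b}\le\sqrt a+\sqrt b$, and land on $|p-\hat p|\le 2\sqrt{pL/n}+6L/n$, which is dominated by $\tilde\epsilon^k_h(s'\mid s,a)=8\sqrt{pL/n}+100L/n$. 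All steps check out (the aside about $\hat p\le 1$ is unnecessary, since $E^p$ already gives $(\star)$ directly, but harmless), and the $n=0$ case is covered by the $n\vee 1$ convention. What your route buys is a self-contained derivation showing the paper's constants $8$ and $100$ are deliberately slack — your computation only needs $2$ and $6$ — whereas the citation route inherits the larger constants because the referenced lemmas are stated in greater generality; either way the lemma holds as stated.
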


The following Lemma is part of the proof of \citet[Lemma 4]{jin2020learning}. We provide the proof here for completeness.

\begin{lemma}
    \label{lemma:Jin}
    Let $q_{\tilde{h}}^{\pi^k}(\tilde{s},\tilde{a} \mid \tilde{s}' ; h)$ be the probability to visit $(\tilde{s},\tilde{a})$ in time $\tilde h$ given that we visited $\tilde{s}'$ in time $h$.
    Under the good event,
    \begin{align}
        \label{eq:Jin}
    	\sum_{k=1}^K & \sumhsa |u_{h}^{k}(s,a)  - q_{h}^{\pi^k}(s,a)| 
    	\lesssim
    	H\sum_{k=1}^{K}\sum_{h=1}^{H}\sum_{s \in \calS,a \in \calA}  \tilde{\epsilon}_{h}^{k}( s , a ) q_{h}^{\pi^k}(s,a)
    	\\
    	\nonumber
        & + HS \sum_{k=1}^{K} \sum_{1\leq h<\tilde{h}\leq H} \sum_{s \in \calS,a \in \calA,s' \in \calS} \sum_{\tilde{s} \in \calS,\tilde{a} \in \calA}  \tilde{\epsilon}_{h}^{k}(s' \mid  s,a) q_{h}^{\pi^k}(s,a)
    	  \min \left\{ 2 , \sum_{\tilde{s}' \in \calS} \tilde{\epsilon}_{\tilde{h}}^{k}(\tilde{s}' \mid \tilde{s},\tilde{a}) \right\} q_{\tilde{h}}^{\pi^k}(\tilde{s},\tilde{a} \mid s' ; h+1).
    \end{align}
\end{lemma}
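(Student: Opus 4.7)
By the good event $E^p$, the true transition $p$ lies in $\mathcal{P}^k$ for every $k$, so the definition of $u^k$ means there exists $\bar p^k\in\mathcal{P}^k$ with $u_h^k(s,a)=q_h^{\bar p^k,\pi^k}(s,a)\ge q_h^{\pi^k}(s,a)$; in particular the absolute values on the left-hand side can be dropped. The plan is then to apply the standard telescoping identity for occupancy measures \emph{twice}: once to separate the ``first-order'' single-transition error, and once more inside the conditional occupancy to produce the double-sum term.

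The first step would be to invoke the identity
\[
    q_h^{p_2,\pi}(s,a)-q_h^{p_1,\pi}(s,a)
    =
    \sum_{h'<h}\sum_{s',a',x}q_{h'}^{p_1,\pi}(s',a')\bigl(p_2-p_1\bigr)_{h'}(x\mid s',a')\,q_h^{p_2,\pi}(s,a\mid x;h'+1)
\]
with $p_1=p$ and $p_2=\bar p^k$, which produces $q^{\pi^k}_{h'}(s',a')$ on the outside and leaves a conditional occupancy $q_h^{\bar p^k,\pi^k}(s,a\mid x;h'+1)$ inside. Using Lemma~C.2 I would bound $|(\bar p^k-p)_{h'}(x\mid s',a')|\le\tilde\epsilon^k_{h'}(x\mid s',a')$.

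Next, I would split the inner conditional as
\[
    q_h^{\bar p^k,\pi^k}(s,a\mid x;h'+1)
    =
    q_h^{\pi^k}(s,a\mid x;h'+1)
    +\bigl(q_h^{\bar p^k,\pi^k}-q_h^{\pi^k}\bigr)(s,a\mid x;h'+1).
\]
For the first piece, summing over $(s,a)$ makes the conditional occupancy collapse to $1$, leaving $\sum_{h',s',a'}q^{\pi^k}_{h'}(s',a')\tilde\epsilon^k_{h'}(s',a')$; summing over $h$ contributes a factor of $H$, yielding exactly the first term of the lemma. For the second piece, I would apply the same telescoping identity once more, now starting at step $h'+1$ with conditioning state $x$, again placing $q^{\pi^k}$ on the outside and $q^{\bar p^k,\pi^k}$ inside. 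Since the outermost $q^{\bar p^k,\pi^k}(s,a\mid\tilde x;\tilde h+1)$ sums to $1$ over $(s,a)$, and since $\|\bar p^k-p\|_1\le 2$ gives the $\min\{2,\sum_{\tilde s'}\tilde\epsilon^k_{\tilde h}(\tilde s'\mid\tilde s,\tilde a)\}$ factor, the resulting expression matches the double-sum term up to the $HS$ prefactor (my derivation actually yields only $H$, so the stated $HS$ is a loose but valid upper bound).

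The main obstacle I anticipate is purely bookkeeping: keeping the two telescoping expansions aligned so that the ``outer'' occupancy in each step is the true $q^{\pi^k}$ (required to match the RHS), while the ``inner'' conditional occupancy is the $\bar p^k$-based one (which we then bound by $1$ after summing $s,a$). Getting the direction of the identity right, i.e.\ choosing $(p_1,p_2)$ correctly in each of the two applications so that the $\bar p^k$ sits on the side we can afford to discard, is the delicate step; once that is set up, bounding $|\bar p^k-p|$ pointwise by $\tilde\epsilon^k$ from $E^p$ and Lemma~C.2 and collecting the $\min\{2,\cdot\}$ factor from $\|\bar p^k-p\|_1\le 2$ is routine.
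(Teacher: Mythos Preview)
Your proposal has a genuine gap at the very first step. You claim there exists a single $\bar p^k\in\mathcal{P}^k$ with $u_h^k(s,a)=q_h^{\bar p^k,\pi^k}(s,a)$ for all $(s,a,h)$. This is false: by definition $u_h^k(s)=\max_{p'\in\mathcal{P}^k}q_h^{p',\pi^k}(s)$ is a \emph{separate} maximization for each pair $(s,h)$, and the maximizing transition generally depends on the target state and step. The paper makes this dependence explicit, writing $p^{k,s,h}$ for the transition that realizes $u_h^k(s)$.

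This is not a notational slip; it is exactly why your ``tighter'' factor $H$ fails and the stated $HS$ is genuinely required. After the second telescoping expansion, the innermost factor is the conditional occupancy $q_h^{p^{k,s,h},\pi^k}(s,a\mid \tilde x;\tilde h+1)$. You want to sum it over $(s,a)$ and obtain $1$, but since the transition $p^{k,s,h}$ itself varies with the outer summation variable $s$, these terms do not form a single probability measure over $(s,a)$ and the sum need not equal $1$. The paper handles this by instead bounding $q_h^{p^{k,s,h},\pi^k}(s,a\mid \tilde x;\tilde h+1)\le \pi_h^k(a\mid s)$, which holds for any transition; summing $\sum_a\pi_h^k(a\mid s)=1$ then leaves a free sum over $s\in\calS$, producing the extra factor $S$. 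With this correction, the rest of your two-stage telescoping plan coincides with the paper's argument.
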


\begin{proof}
    Let $q^{k,s,h}$ be the occupancy measure such that $q_{h}^{k,s,h}(s)=u_{h}^{k}(s)$,
    and let $p^{k,s,h}$ be the transition that corresponds to $q^{k,s,h}$.
    Let $\sigma_h(s)$ be the set of all trajectories that end in $s$ in time $h$, i.e., $\sigma_h(s) = \{ s_1,a_1,\dots,s_{h-1},a_{h-1}, s_h \}$ where $s_h = s$.
    We have:
    \begin{align*}
    	u_{h}^{k}(s,a)=q_{h}^{k,s,h}(s,a) 
    	& =
    	\pi_{h}^{k}(a \mid  s) \sum_{\sigma_h(s)} \prod_{h'=1}^{h - 1} \pi_{h'}^{k}(a_{h'} \mid  s_{h'})p_{h'}^{k,s,h}(s_{h'+1} \mid  s_{h'},a_{h'}) 
    	\\
    	q_{h}^{\pi^k}(s,a)                    
    	& =
    	\pi_{h}^{k}(a \mid  s) \sum_{\sigma_h(s)} \prod_{h'=1}^{h - 1} \pi_{h'}^{k}(a_{h'} \mid  s_{h'})p_{h'}(s_{h'+1} \mid  s_{h'},a_{h'}).
    \end{align*}
    Then,
    \[
    	|u_{h}^{k}(s,a) - q_{h}^{\pi^k}(s,a)|=\pi_{h}^{k}(a \mid  s) \sum_{\sigma_h(s)} \prod_{h'=1}^{h - 1} \pi_{h'}^{k}(a_{h'} \mid  s_{h'})\left| \prod_{h'=1}^{h - 1}p_{h'}^{k,s,h}(s_{h'+1} \mid  s_{h'},a_{h'}) -  \prod_{h'=1}^{h - 1}p_{h'}(s_{h'+1} \mid  s_{h'},a_{h'})\right|.
    \]
    We can rewrite the following term as,
    \begin{align*}
    	  \Bigg| \prod_{h'=1}^{h - 1}p^{k,s,h} & (s_{h'+1} \mid  s_{h'},a_{h'}) -  \prod_{h'=1}^{h - 1}p_{h'}(s_{h'+1} \mid  s_{h'},a_{h'}) \Bigg|
    	  \\
    	  & =\Bigg| \sum_{l=2}^{h - 1} \prod_{h'=1}^{l - 1}p_{h'}(s_{h'+1} \mid  s_{h'},a_{h'}) \prod_{h'=l}^{h - 1}p_{h'}^{k,s,h}(s_{h'+1} \mid  s_{h'},a_{h'})+ \prod_{h'=1}^{h - 1}p_{h'}^{k,s,h}(s_{h'+1} \mid  s_{h'},a_{h'})
    	  \\
    	  &  -  \prod_{h'=1}^{h - 1}p_{h'}(s_{h'+1} \mid  s_{h'},a_{h'}) -  \sum_{l=2}^{h - 1} \prod_{h'=1}^{l - 1}p_{h'}(s_{h'+1} \mid  s_{h'},a_{h'}) \prod_{h'=l}^{h - 1}p_{h'}^{k,s,h}(s_{h'+1} \mid  s_{h'},a_{h'})\Bigg|
    	  \\
    	  & =\Bigg| \sum_{l=1}^{h - 1} \prod_{h'=1}^{l - 1}p_{h'}(s_{h'+1} \mid  s_{h'},a_{h'}) \prod_{h'=l}^{h - 1}p_{h'}^{k,s,h}(s_{h'+1} \mid  s_{h'},a_{h'})
    	  \\
    	  &  -  \sum_{l=2}^{h} \prod_{h'=1}^{l - 1}p_{h'}(s_{h'+1} \mid  s_{h'},a_{h'}) \prod_{h'=l}^{h - 1}p_{h'}^{k,s,h}(s_{h'+1} \mid  s_{h'},a_{h'})\Bigg|
    	  \\
    	  & =\Bigg| \sum_{l=1}^{h - 1} \prod_{h'=1}^{l - 1}p_{h'}(s_{h'+1} \mid  s_{h'},a_{h'}) \prod_{h'=l}^{h - 1}p_{h'}^{k,s,h}(s_{h'+1} \mid  s_{h'},a_{h'})
    	  \\
    	  &  -  \sum_{l=1}^{h - 1} \prod_{h'=1}^{l}p_{h'}(s_{h'+1} \mid  s_{h'},a_{h'}) \prod_{h'=l+1}^{h - 1}p_{h'}^{k,s,h}(s_{h'+1} \mid  s_{h'},a_{h'})\Bigg|
    	  \\
    	  & = \sum_{l=1}^{h - 1}\left|p_{l}^{k,s,h}(s_{l+1} \mid  s_{l},a_{l}) - p_{l}(s_{l+1} \mid  s_{l},a_{l})\right| \prod_{h'=1}^{l - 1}p_{h}(s_{h'+1} \mid  s_{h'},a_{h'}) \prod_{h'=l+1}^{h - 1}p_{h'}^{k,s,h}(s_{h'+1} \mid  s_{h'},a_{h'}).
    \end{align*}
    Hence,
    \begin{align*}
    	  |u_{h}^{k}( & s,a) - q_{h}^{\pi^k}(s,a)|
    	  \\
    	  & \leq \pi_{h}^{k}(a \mid  s) \sum_{\sigma_h(s)} \prod_{h'=1}^{h - 1} \pi_{h'}^{k}(a_{h'} \mid  s_{h'}) \sum_{l=1}^{h - 1} \left|p_{l}^{k,s,h}(s_{l+1} \mid  s_{l},a_{l}) - p_{l}(s_{l+1} \mid  s_{l},a_{l})\right|
    	  \\
    	  & \qquad\qquad\qquad\qquad\qquad\qquad\qquad\qquad\qquad\qquad
    	  \cdot \prod_{h'=1}^{l - 1}p_{h'}(s_{h'+1} \mid  s_{h'},a_{h'}) \prod_{h'=l+1}^{h - 1}p_{h'}^{k,s,h}(s_{h'+1} \mid  s_{h'},a_{h'}) 
    	  \\
    	  & \leq  \sum_{l=1}^{h - 1} \sum_{\sigma_h(s)} \left|p_{l}^{k,s,h}(s_{l+1} \mid  s_{l},a_{l}) - p_{l}(s_{l+1} \mid  s_{l},a_{l})\right| \left( \pi_{l}^{k}(a_{l} \mid  s_{l}) \prod_{h'=1}^{l - 1} \pi_{h'}^{k}(a_{h'} \mid  s_{h'})p_{h'}(s_{h'+1} \mid  s_{h'},a_{h'}) \right) 
    	  \\
    	  & \qquad\qquad\qquad\qquad\qquad\qquad\qquad\qquad
    	  \cdot
    	   \left( \pi_{h}^{k}(a \mid  s) \prod_{h'=l+1}^{h - 1} \pi_{h'}^{k}(a_{h'} \mid  s_{h'})p_{h'}^{k,s,h}(s_{h'+1} \mid  s_{h'},a_{h'}) \right) 
    	  \\
    	  & = \sum_{l=1}^{h - 1} \sum_{s_{l} \in \calS,a_{l} \in \calA,s_{l+1} \in \calS} \left|p_{l}^{k,s,h}(s_{l+1} \mid  s_{l},a_{l}) - p_{l}(s_{l+1} \mid  s_{l},a_{l})\right|
    	  \\
    	  & \qquad\qquad\qquad\quad
    	  \cdot
    	  \left(  \sum_{\sigma_l(s_l)} \pi_{l}^{k}(a_{l} \mid  s_{l}) \prod_{h'=1}^{l - 1} \pi_{h'}^{k}(a_{h'} \mid  s_{h'})p_{h'}(s_{h'+1} \mid  s_{h'},a_{h'}) \right) 
    	  \\
    	  & \qquad\qquad\qquad\quad
    	  \cdot \left(  \sum_{a_{l+1} \in \calA} \sum_{\{s_{h''} \in \calS,a_{h''} \in \calA\}_{h''=l+2}^{h - 1}} \pi_{h}^{k}(a \mid  s) \prod_{h'=l+1}^{h - 1} \pi_{h'}^{k}(a_{h'} \mid  s_{h'})p_{h'}^{k,s,h}(s_{h'+1} \mid  s_{h'},a_{h'}) \right) 
    	  \\
    	  & = \sum_{l=1}^{h - 1} \sum_{s_{l} \in \calS,a_{l} \in \calA,s_{l+1} \in \calS} \left|p_{l}^{k,s,h}(s_{l+1} \mid  s_{l},a_{l}) - p_{l}(s_{l+1} \mid  s_{l},a_{l})\right| q_{l}^{\pi^k}(s_{l},a_{l})\cdot q_{h}^{k,s,h}(s,a \mid  s_{l+1}),
    \end{align*}
    where we ease notation and denote $q_{h}^{k,s,h}(s,a \mid  s_{l+1})=q_{h}^{k,s,h}(s,a \mid  s_{l+1} ; l+1)$.
    Similarly, we can show that,
    \begin{align*}
    	& |q_{h}^{k,s,h}(s,a \mid  s_{l+1}) - q_{h}^{\pi^k}(s,a \mid  s_{l+1})| 
    	\\
    	& \quad \lesssim
    	\sum_{h'=l+1}^{h - 1} \sum_{s_{h'} \in \calS,a_{h'} \in \calA,s_{h'+1} \in \calS} \left|p_{h'}^{k,s,h}(s_{h'+1} \mid  s_{h'},a_{h'}) - p_{h'}(s_{h'+1} \mid  s_{h'},a_{h'})\right| q_{h'}^{\pi^k}(s_{h'},a_{h'} \mid  s_{l+1}) q_{h'}^{k,s,h}(s,a \mid  s_{h'+1}) 
    	\\
        & \quad \leq \pi_{h}^{k}(a \mid  s) \sum_{h'=l+1}^{h - 1} \sum_{s_{h'} \in \calS,a_{h'} \in \calA,s_{h'+1} \in \calS} \left|p_{h'}^{k,s,h}(s_{h'+1} \mid  s_{h'},a_{h'}) - p_{h'}(s_{h'+1} \mid  s_{h'},a_{h'})\right| q_{h'}^{\pi^k}(s_{h'},a_{h'} \mid  s_{l+1}),
    \end{align*}
    where the last is since $q_{h'}^{k,s,h}(s,a \mid  s_{h'+1})\leq \pi_{h}^{k}(a \mid  s)$.
    Combining the last two,
    
    \begin{align*}
    	  \sum_{h,s,a,k} & |u_{h}^{k}(s,a) - q_{h}^{\pi^k}(s,a)|
    	  \\
    	  & \lesssim  \sum_{h,s,a,k} \sum_{l=1}^{h - 1} \sum_{s_{l} \in \calS,a_{l} \in \calA,s_{l+1} \in \calS} \left|p_{l}^{k,s,h}(s_{l+1} \mid  s_{l},a_{l}) - p_{l}(s_{l+1} \mid  s_{l},a_{l})\right| q_{l}^{\pi^k}(s_{l},a_{l})\cdot q_{h}^{k,s,h}(s,a \mid  s_{l+1})
    	  \\
    	  & \leq  \sum_{h,s,a,k} \sum_{l=1}^{h - 1} \sum_{s_{l} \in \calS,a_{l} \in \calA,s_{l+1} \in \calS} \tilde{\epsilon}_{l}^{k}(s_{l+1} \mid  s_{l},a_{l}) q_{l}^{\pi^k}(s_{l},a_{l})\cdot q_{h}^{\pi^k}(s,a \mid  s_{l+1})
    	  \\
    	  & \qquad + \sum_{h,s,a,k} \sum_{l=1}^{h - 1} \sum_{s_{l} \in \calS,a_{l} \in \calA,s_{l+1} \in \calS} \tilde{\epsilon}_{l}^{k}(s_{l+1} \mid  s_{l},a_{l}) q_{l}^{\pi^k}(s_{l},a_{l})\pi_{h}^{k}(a \mid  s)\\
    	  & \qquad\qquad \cdot 
    	  \left( \sum_{h'=l+1}^{h - 1} \sum_{s_{h'} \in \calS,a_{h'} \in \calA,s_{h'+1} \in \calS} \left|p_{h'}^{k,s,h}(s_{h'+1} \mid  s_{h'},a_{h'}) - p_{h'}(s_{h'+1} \mid  s_{h'},a_{h'})\right| q_{h'}^{\pi^k}(s_{h'},a_{h'} \mid  s_{l+1}) \right) 
    	  \\
    	  & \le \sum_{k,h} \sum_{l=1}^{h - 1} \sum_{s_{l} \in \calS,a_{l} \in \calA,s_{l+1} \in \calS} \tilde{\epsilon}_{l}^{k}(s_{l+1} \mid  s_{l},a_{l}) q_{l}^{\pi^k}(s_{l},a_{l})\cdot \left(  \sum_{s,a}q_{h}^{\pi^k}(s,a \mid  s_{l+1}) \right) 
    	  \\
    	  & \qquad + \sum_{h,s,k} \sum_{l=1}^{h - 1} \sum_{s_{l} \in \calS,a_{l} \in \calA,s_{l+1} \in \calS} \tilde{\epsilon}_{l}^{k}(s_{l+1} \mid  s_{l},a_{l}) q_{l}^{\pi^k}(s_{l},a_{l})\sum_{a} \pi_{h}^{k}(a \mid  s)\\
    	  & \qquad\qquad\qquad \cdot
    	  \left( \sum_{h'=l+1}^{h - 1} \sum_{s_{h'} \in \calS,a_{h'} \in \calA} \min \left\{ 2 , \sum_{s_{h'+1} \in \calS} \tilde{\epsilon}_{h'}^{k}(s_{h'+1} \mid  s_{h'},a_{h'}) \right\} q_{h'}^{\pi^k}(s_{h'},a_{h'} \mid  s_{l+1}) \right)
    	  \\
    	  & \leq  H \sum_{k=1}^{K} \sum_{1\leq  l\leq  H} \sum_{s_{l} \in \calS,a_{l} \in \calA,s_{l+1} \in \calS} \tilde{\epsilon}_{l}^{k}(s_{l+1} \mid  s_{l},a_{l}) q_{l}^{\pi^k}(s_{l},a_{l})
    	  \\
    	  &\qquad +HS \sum_{k=1}^{K} \sum_{1\leq  l<h'\leq  H} \sum_{s_{l} \in \calS,a_{l} \in \calA,s_{l+1} \in \calS} \sum_{s_{h'} \in \calS,a_{h'} \in \calA} \tilde{\epsilon}_{l}^{k}(s_{l+1} \mid  s_{l},a_{l}) q_{l}^{\pi^k}(s_{l},a_{l})
    	  \\
    	  & \qquad\qquad\qquad\qquad\qquad\qquad\qquad\qquad\qquad\qquad \cdot \min \left\{ 2 , \sum_{s_{h'+1} \in \calS} \tilde{\epsilon}_{h'}^{k}(s_{h'+1} \mid  s_{h'},a_{h'}) \right\} q_{h'}^{\pi^k}(s_{h'},a_{h'} \mid  s_{l+1})
    	  \\
    	  & =H \sum_{k=1}^{K} \sum_{h=1}^{H} \sum_{s \in \calS,a \in \calA,s' \in \calS} \tilde{\epsilon}_{h}^{k}(s' \mid  s,a) q_{h}^{\pi^k}(s,a)
    	  \\
    	  & \qquad +HS \sum_{k=1}^{K} \sum_{1\leq h<\tilde{h}\leq H} \sum_{s \in \calS,a \in \calA,s' \in \calS} \sum_{\tilde{s} \in \calS,\tilde{a} \in \calA}  \tilde{\epsilon}_{h}^{k}(s' \mid  s,a) q_{h}^{\pi^k}(s,a)
    	  \min \left\{ 2 , \sum_{\tilde{s}' \in \calS} \tilde{\epsilon}_{\tilde{h}}^{k}(\tilde{s}' \mid \tilde{s},\tilde{a}) \right\} q_{\tilde{h}}^{\pi^k}(\tilde{s},\tilde{a} \mid s' ; h+1),
    \end{align*}
    where the last inequality is by \cref{lemma:confidence with p} and since $p^{k,s,h}_l(\cdot | s,a) , p_l(\cdot| s,a)$ are probability distributions $\forall (s,a,l)$.
\end{proof}

\begin{lemma}
    \label{lemma:Jin-final}
    Under the good event,
    \[
        \sum_{h,s,a,k}|u_{h}^{k}(s,a) - q_{h}^{\pi^k}(s,a)|
        \lesssim \sqrt{\frac{ H^{4} S^{2} A K \logterm}{m}} + H^3 S^{3} A \logterm^{2}.
    \]
\end{lemma}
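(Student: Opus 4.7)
The plan starts from the bound supplied by \cref{lemma:Jin}, which expresses the LHS as the sum of two terms: a simple single-horizon term
$T_1 := H \sum_{k,h,s,a} \tilde\epsilon^k_h(s,a) q^{\pi^k}_h(s,a)$
and a compound double-horizon term
$T_2 := HS \sum_{k,h<\tilde h, s,a,s',\tilde s, \tilde a} \tilde\epsilon^k_h(s'\mid s,a) q^{\pi^k}_h(s,a) \min\{2, \tilde\epsilon^k_{\tilde h}(\tilde s, \tilde a)\} q^{\pi^k}_{\tilde h}(\tilde s, \tilde a \mid s'; h+1)$.
Bounding each of these will give the two terms in the claimed inequality.

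To bound $T_1$, I first apply Cauchy--Schwarz over $s'$ to reduce the definition $\tilde\epsilon^k_h(s,a) = \sum_{s'} \tilde\epsilon^k_h(s'\mid s,a)$ to the cleaner form $\tilde\epsilon^k_h(s,a) \lesssim \sqrt{S\logterm / (n^k_h(s,a)\vee 1)} + S\logterm/(n^k_h(s,a)\vee 1)$. The critical step is to pass from the expected occupancy $q^{\pi^k}_h(s,a)$ to the \emph{cooperative} realized counts: since $E^{on2}$ holds for every $v\in[m]$, summing the $E^{on2}$ inequality over all agents and dividing by $m$ yields $\sum_{k,h,s,a} q^{\pi^k}_h(s,a) \tilde\epsilon^k_h(s,a) \lesssim \frac{1}{m} \sum_{k,h,s,a,v} \indevent{s^{k,v}_h=s,\,a^{k,v}_h=a} \tilde\epsilon^k_h(s,a) + HS\logterm^2$. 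The cooperative sums on the RHS carry a factor of $m$ (rather than $\sqrt{m}$), so after dividing by $m$ we obtain the target $1/\sqrt{m}$ saving. Specifically, I invoke the bounds from \eqref{eq:adv shrink sqrt} and \eqref{eq:adv shrink 1/n}, namely $\sum_{k,h,s,a,v} \indevent{\cdots}/\sqrt{n^k_h(s,a)\vee 1} \lesssim H\sqrt{SAKm} + HSAm$ and $\sum_{k,h,s,a,v} \indevent{\cdots}/(n^k_h(s,a)\vee 1) \lesssim HSAm + HSA\log(Km)$, which after division by $m$ deliver $T_1 \lesssim \sqrt{H^4 S^2 AK \logterm /m} + H^2 S^2 A \logterm^2$.

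To bound $T_2$, I exploit two structural facts: first, $\sum_{\tilde s, \tilde a} q^{\pi^k}_{\tilde h}(\tilde s, \tilde a \mid s'; h+1) = 1$, so the inner factor is a \emph{conditional expected bonus} starting from $s'$ at time $h+1$; second, the law of total probability $p_h(s'\mid s,a) q^{\pi^k}_{\tilde h}(\tilde s, \tilde a \mid s'; h+1) = q^{\pi^k}_{\tilde h}(\tilde s, \tilde a, s' \mid s, a; h)$ links the two horizon segments. Combining these with the $\sqrt{p_h(s'\mid s,a)}$ structure of $\tilde\epsilon^k_h(s'\mid s,a)$ and Cauchy--Schwarz over the $s'$ index, $T_2$ can be rearranged so that its main contribution reduces to a sum of the same form as $T_1$ (with an extra factor of $H$ coming from the $\tilde h$ loop, absorbed into the lower-order $S$ factor). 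The same cooperative-count argument then gives $T_2 \lesssim \sqrt{H^4 S^2 AK\logterm /m} + H^3 S^3 A \logterm^2$.

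The main obstacle will be the $T_2$ analysis: a naive bound $\min\{2, \tilde\epsilon^k_{\tilde h}\} \le 2$ gives an extra $HS$ factor in the main term (a bound of $\sqrt{H^6 S^4 A K/m}$), which is too loose. The bound must leverage the fact that the conditional expectation $\sum_{\tilde s, \tilde a} \tilde\epsilon^k_{\tilde h}(\tilde s, \tilde a) q^{\pi^k}_{\tilde h}(\tilde s, \tilde a \mid s'; h+1)$ is itself small on average over $s'$, and one must carefully absorb the $\sqrt{p_h(s'\mid s,a)}$ factor into the conditional occupancy via Cauchy--Schwarz to keep the extra $HS$ only in the lower-order term. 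Putting the bounds on $T_1$ and $T_2$ together yields the stated inequality.
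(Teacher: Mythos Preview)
Your treatment of $T_1$ is correct and matches the paper's argument exactly: average the $E^{on2}$ inequality over agents, then invoke \eqref{eq:adv shrink sqrt} and \eqref{eq:adv shrink 1/n}.

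Your $T_2$ plan, however, contains a genuine gap. You frame $T_2$ as having a ``main contribution'' of the same form as $T_1$ (i.e.\ a $\sqrt{K/m}$ term) that must be tamed via the cooperative counts, with Cauchy--Schwarz over the $s'$ index absorbing the $\sqrt{p_h(s'\mid s,a)}$ factor. But if you actually carry this out---reduce the inner $(\tilde s,\tilde a)$ sum to a conditional bonus and then collapse to a $T_1$-type sum---you are left with the $HS$ prefactor multiplying that $T_1$-type sum, giving $HS\cdot\sqrt{H^4S^2AK\logterm/m}=\sqrt{H^6S^4AK\logterm/m}$, which is exactly the loose bound you yourself flagged as unacceptable. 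Cauchy--Schwarz over $s'$ alone does not remove this prefactor.

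The paper's route is different and simpler: $T_2$ is \emph{entirely} lower order, with no $\sqrt{K/m}$ term at all. The reason is that $T_2$ is a product of \emph{two} bonus factors, $\tilde\epsilon^k_h(s'\mid s,a)$ and $\min\{2,\tilde\epsilon^k_{\tilde h}(\tilde s,\tilde a)\}$, so it is quadratically small. After expanding each $\tilde\epsilon$ into its $\sqrt{p\logterm/n}$ and $\logterm/n$ parts, the dominant cross term is handled by Cauchy--Schwarz over the \emph{full} index set $(k,s,a,s',\tilde s,\tilde a,\tilde s')$ for each fixed $(h,\tilde h)$, splitting the product into two square roots; one factor collapses (via $\sum_{\tilde s'}p=1$, $\sum_{\tilde s,\tilde a}q(\cdot\mid s')=1$, $\sum_{s'}1=S$) to $S\sum_{k,s,a}q^{\pi^k}_h(s,a)/n^k_h(s,a)$, and the other collapses via the tower property $\sum_{s'}q^{\pi^k}_{h+1}(s')\,q^{\pi^k}_{\tilde h}(\tilde s,\tilde a\mid s';h{+}1)=q^{\pi^k}_{\tilde h}(\tilde s,\tilde a)$ to $\sum_{k,\tilde s,\tilde a}q^{\pi^k}_{\tilde h}(\tilde s,\tilde a)/n^k_{\tilde h}(\tilde s,\tilde a)$. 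Each such sum is $\lesssim HSA\logterm$ by $E^{on3}$ and \eqref{eq:adv shrink 1/n}---no $m$-saving is used or needed here---and the remaining pieces (where one factor carries $\logterm/n$) are handled by bounding $\min\{2,\cdot\}\le 2$ and reducing similarly. This gives $T_2\lesssim H^3S^3A\logterm^2$ directly.
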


\begin{proof}
    We first bound $\sum_{h,s,a,k}|u_{h}^{k}(s,a) - q_{h}^{\pi^k}(s,a)|$ using \Cref{lemma:Jin}. Now, for the first term in \cref{eq:Jin}:
    \begin{align*}
        \sum_{k,h,s,a} \tilde{\epsilon}_{h}^{k}(s,a) q_{h}^{\pi^k}(s,a)
        & =
        \frac{1}{m}\sum_{v=1}^{m} \sum_{k,h,s,a} \tilde{\epsilon}_{h}^{k}(s,a) q_{h}^{\pi^k}(s,a)
        \leq 
        \frac{2}{m} \sum_{v=1}^{m} \sum_{k,h,s,a} \indevent{s_{h}^{k,v}=s,a_{h}^{k,v}=a} \tilde{\epsilon}_{h}^{k}(s,a) + 18HS\logterm^2
        \\
        & \lesssim\frac{\sqrt{S\logterm}}{\numagents} \sum_{k,h,s,a}\frac{\sum_{v=1}^{\numagents} \indevent{s_{h}^{k,v}=s,a_{h}^{k,v}=a}}{\sqrt{n_{h}^{k}(s,a)\vee1}} 
        \\
        & \qquad + \frac{S\logterm}{\numagents}\sum_{k,h,s,a} \frac{\sum_{v=1}^{\numagents} \indevent{s_{h}^{k,v}=s,a_{h}^{k,v}=a}}{n_{h}^{k}(s,a)\vee1} + HS\logterm^2
        \\
        & \lesssim \sqrt{\frac{ H^{2} S^{2} A K \logterm}{m}} + H S^{2} A \logterm^{2},
\end{align*}
where the first inequality is by event $E^{on2}$ and the last inequality is by \cref{eq:adv shrink sqrt,eq:adv shrink 1/n}.
Plugging the definition of $\tilde\epsilon$, we break the second sum in  Eq. \eqref{eq:Jin} as follows:

\begin{align*}
	  \sum_{k=1}^{K}  \sum_{1 \leq  h<\tilde{h} \leq  H} & \sum_{s,a,s'} \sum_{\tilde{s},\tilde{a}} \tilde{\epsilon}_{h}^{k}(s' \mid s,a) q_{h}^{\pi^k}(s,a) \cdot \min \left\{ 2, \sum_{\tilde{s}'}\tilde{\epsilon}_{\tilde{h}}^{k}(\tilde{s}' \mid\tilde{s},\tilde{a}) \right\} q_{\tilde{h}}^{\pi^k}(\tilde{s},\tilde{a} \mid s';h+1)
	  \\
	  & \lesssim \sum_{k=1}^{K} \sum_{1 \leq  h<\tilde{h} \leq  H} \sum_{s,a,s'} \sum_{\tilde{s},\tilde{a},\tilde{s}'} \sqrt{ \frac{p_{h}^{k}(s'|s,a)\tau}{n_{h}^{k}(s,a) \vee 1}} q_{h}^{\pi^k}(s,a) \cdot  \sqrt{ \frac{p_{\tilde h}^{k}(\tilde{s}'|\tilde{s},\tilde{a})\tau}{n_{\tilde h}^{k}(\tilde{s},\tilde{a}) \vee 1}} q_{\tilde{h}}^{\pi^k}(\tilde{s},\tilde{a} \mid s';h+1)
	  \\
	  & \quad+ \sum_{k=1}^{K} \sum_{1 \leq  h<\tilde{h} \leq  H} \sum_{s,a,s'} \sum_{\tilde{s},\tilde{a}} \sqrt{ \frac{p_{h}^{k}(s'|s,a)\tau}{n_{h}^{k}(s,a) \vee 1}} q_{h}^{\pi^k}(s,a) \cdot  \min \left\{ 2, \frac{S \tau}{n^k_{\tilde h}(\tilde s, \tilde a) \vee 1} \right\} q_{\tilde{h}}^{\pi^k}(\tilde{s},\tilde{a} \mid s';h+1)
	  \\
	  & \quad+ \sum_{k=1}^{K} \sum_{1 \leq  h<\tilde{h} \leq  H} \sum_{s,a,s'} \sum_{\tilde{s},\tilde{a}} \frac{\tau}{n_{h}^{k}(s,a) \vee 1} q_{h}^{\pi^k}(s,a) \cdot  \min \left\{ 2, \sum_{\tilde{s}'}\tilde{\epsilon}_{\tilde{h}}^{k}(\tilde{s}' \mid\tilde{s},\tilde{a}) \right\} q_{\tilde{h}}^{\pi^k}(\tilde{s},\tilde{a} \mid s';h+1)
	  \\
	  & \lesssim\underbrace{
	   \sum_{k=1}^{K} \sum_{1 \leq  h<\tilde{h} \leq  H} \sum_{s,a,s'} \sum_{\tilde{s},\tilde{a},\tilde{s}'} \sqrt{ \frac{p_{h}^{k}(s'|s,a)\tau}{n_{h}^{k}(s,a) \vee 1}} q_{h}^{\pi^k}(s,a) \cdot  \sqrt{ \frac{p_{\tilde h}^{k}(\tilde{s}'|\tilde{s},\tilde{a})\tau}{n_{\tilde h}^{k}(\tilde{s},\tilde{a}) \vee 1}} q_{\tilde{h}}^{\pi^k}(\tilde{s},\tilde{a} \mid s';h+1)}
	  _{(i)} 
	  \\
	  & \quad+\underbrace{
	   \sum_{k=1}^{K} \sum_{1 \leq  h<\tilde{h} \leq  H} \sum_{s,a,s'} \sum_{\tilde{s},\tilde{a},\tilde{s}'} q_{h}^{\pi^k}(s,a) p_{h}^{k}(s'|s,a) \cdot  \frac{\tau}{n_{\tilde{h}}^{k}(\tilde{s},\tilde{a}) \vee 1} q_{\tilde{h}}^{\pi^k}(\tilde{s},\tilde{a} \mid s';h+1)}
	  _{(ii)}
	  \\
	  & \quad+\underbrace{
	  \sum_{k=1}^{K} \sum_{1 \leq  h<\tilde{h} \leq  H} \sum_{s,a,s'} \sum_{\tilde{s},\tilde{a}} \frac{\tau}{n_{h}^{k}(s,a) \vee 1} q_{h}^{\pi^k}(s,a) \cdot q_{\tilde{h}}^{\pi^k}(\tilde{s},\tilde{a} \mid s';h+1)}
	  _{(iii)},
\end{align*}
where the last inequality follows because $\sqrt{xy} \le x + y$ for every $x,y \ge 0$.
Term $(i)$ is bounded as follows:

\begin{align*}
	(i) 
	& =
	\tau \sum_{1 \leq  h<\tilde{h} \leq  H} \sum_{k=1}^{K} \sum_{s,a,s'} \sum_{\tilde{s},\tilde{a},\tilde{s}'} \sqrt{ \frac{q_{h}^{\pi^k}(s,a) q_{\tilde{h}}^{\pi^k}(\tilde{s},\tilde{a} \mid s';h+1) p_{h}^{k}(\tilde{s}'|\tilde{s},\tilde{a})}{n_{h}^{k}(s,a) \vee 1}} \cdot  \sqrt{ \frac{q_{h}^{\pi^k}(s,a) q_{\tilde{h}}^{\pi^k}(\tilde{s},\tilde{a} \mid s';h+1) p_{h}^{k}(s'|s,a)}{n_{\tilde{h}}^{k}(\tilde{s},\tilde{a}) \vee 1}}
	\\
	    &  \leq \tau \sum_{1 \leq  h<\tilde{h} \leq  H} \sqrt{ \sum_{k,s,a,s',\tilde{s},\tilde{a},\tilde{s}'} \frac{q_{h}^{\pi^k}(s,a) q_{\tilde{h}}^{\pi^k}(\tilde{s},\tilde{a} \mid s';h+1) p_{h}^{k}(\tilde{s}'|\tilde{s},\tilde{a})}{n_{h}^{k}(s,a) \vee 1}} 
	    \\
	    & \qquad\qquad\qquad\quad
	    \cdot  \sqrt{ \sum_{k,s,a,s',\tilde{s},\tilde{a},\tilde{s}'} \frac{q_{h}^{\pi^k}(s,a) q_{\tilde{h}}^{\pi^k}(\tilde{s},\tilde{a} \mid s';h+1) p_{h}^{k}(s'|s,a)}{n_{\tilde{h}}^{k}(\tilde{s},\tilde{a}) \vee 1}} 
	    \\
	    & =\tau \sum_{1 \leq  h<\tilde{h} \leq  H} \sqrt{S \sum_{k=1}^{K} \sum_{s,a} \frac{q_{h}^{\pi^k}(s,a)}{n_{h}^{k}(s,a) \vee 1}} \cdot  \sqrt{ \sum_{k=1}^{K} \sum_{s',\tilde{s},\tilde{a},\tilde{s}'} \frac{q_{h+1}^{\pi^k}(s') q_{\tilde{h}}^{\pi^k}(\tilde{s},\tilde{a} \mid s';h+1)}{n_{\tilde{h}}^{k}(\tilde{s},\tilde{a}) \vee 1}}
	    \\
	    &  \leq HS \logterm \sqrt{ \sum_{k=1}^{K} \sum_{s,a,h} \frac{q_{h}^{\pi^k}(s,a)}{n_{h}^{k}(s,a) \vee 1}} \cdot  \sqrt{ \sum_{k=1}^{K} \sum_{\tilde{s},\tilde{a},\tilde{h}} \frac{q_{\tilde{h}}^{\pi^k}(\tilde{s},\tilde{a})}{n_{\tilde{h}}^{k}(\tilde{s},\tilde{a}) \vee 1}}
	    \leq HS \logterm \sum_{k=1}^{K} \sum_{s,a,h} \frac{q_{h}^{\pi^k}(s,a)}{n_{h}^{k}(s,a) \vee 1}
	    \lesssim H^{2}S^{2}A\tau^{2},
\end{align*}
where the last inequality is by event $E^{on3}$ and \cref{eq:adv shrink 1/n}.
Term $(ii)$ is bounded as follows:
\begin{align*}
	(ii)
	& =
	S \sum_{k=1}^{K} \sum_{1 \leq  h<\tilde{h} \leq  H}  \sum_{\tilde{s},\tilde{a},s'}  \frac{\tau}{n_{\tilde{h}}^{k}(\tilde{s},\tilde{a}) \vee 1}
	q_{\tilde{h}}^{\pi^k}(\tilde{s},\tilde{a} \mid s';h+1)
	\sum_{s,a} q_{h}^{\pi^k}(s,a) p_{h}^{k}(s'|s,a)
	\\
	& =
	S \sum_{k=1}^{K} \sum_{1 \leq  h<\tilde{h} \leq  H}  \sum_{\tilde{s},\tilde{a}}  \frac{\tau}{n_{\tilde{h}}^{k}(\tilde{s},\tilde{a}) \vee 1}
	\sum_{s'} q_{\tilde{h}}^{\pi^k}(\tilde{s},\tilde{a} \mid s';h+1)
	q^{\pi^k}_{h+1}(s')
	\\
	& =
	HS\tau \sum_{k=1}^{K} \sum_{\tilde{h}}  \sum_{\tilde{s},\tilde{a}}  \frac{q_{\tilde{h}}^{\pi^k}(\tilde{s},\tilde{a})}{n_{\tilde{h}}^{k}(\tilde{s},\tilde{a}) \vee 1}
	\lesssim
	H^2 S^2 A \tau^2.
\end{align*}
Term $(iii)$ is bounded as follows:
\begin{align*}
    (iii)
    & \le
    \sum_{k=1}^{K} \sum_{1 \leq  h<\tilde{h} \leq  H} \sum_{s,a,s'} \frac{\tau}{n_{h}^{k}(s,a) \vee 1} q_{h}^{\pi^k}(s,a) \sum_{\tilde{s},\tilde{a}} q_{\tilde{h}}^{\pi^k}(\tilde{s},\tilde{a} \mid s';h+1)
    \\
    & \le
    H S \tau^2 \sum_{k=1}^{K} \sum_{s,a,h} \frac{q_{h}^{\pi^k}(s,a)}{n_{h}^{k}(s,a) \vee 1}
	\lesssim 
	H^{2}S^{2}A\tau^{2}. \qedhere
\end{align*}
\end{proof}

\newpage

\section{The \texttt{coop-nf-O-REPS} algorithm for adversarial MDPs with non-fresh randomness and known $p$}
\label{appendix:adversarial non-fresh adversarial}

\begin{algorithm}[t]
    \caption{\textsc{Cooperative O-REPS with non-fresh randomness (coop-nf-O-REPS)}} 
    \label{alg:coop-nf-o-reps}
    \begin{algorithmic}[1]
        \STATE {\bf input:} state space $\calS$, action space $\calA$, horizon $H$, transition function $p$, number of episodes $K$, number of agents $\numagents$, exploration parameter $\gamma$, learning rate $\eta$, confidence parameter $\delta$.

        \STATE {\bf initialize:} $\pi^1_h(a \mid s) = \nicefrac{1}{A} , q^1_h(s,a) = q^{\pi^1}_h(s,a) \ \forall (s,a,h) \in \calS \times \calA \times [H]$. 

        \FOR{$k=1,\dots,K$}
    
            \FOR{$v=1,\dots,\numagents$}
            
                \STATE observe initial state $s^{k,v}_1$.
            
                \FOR{$h=1,\dots,H$}
        
                    \STATE pick action $a^{k,v}_h \sim \pi^k_h(\cdot \mid s^{k,v}_h)$, suffer cost $c^k_h(s^{k,v}_h,a^{k,v}_h)$ and  observe next state $s^{k,v}_{h+1}$.
        
                \ENDFOR
                \ENDFOR
                
                \STATE For every $(s,a,h)$ compute $\wt W^k_h(s,a)$ -- the estimate of $W^k_h(s,a) = \Pr [\exists v: \  s^{k,v}_h=s,a^{k,v}=a \mid \pi^k]$ using $N = 10 \gamma^{-2} \log \frac{K H S A \numagents}{\delta}$ samples (\cref{alg:estimate-prob-non-fresh-o-reps}).
                
                \STATE compute $\hat c^k_h(s,a) = \frac{c^k_h(s,a) \indevent{\exists v: \  s^{k,v}_h = s,a^{k,v}_h = a}}{\wt W^k_h(s,a) + \gamma} \  \forall (s,a,h) \in \calS \times \calA \times [H]$.
                
                \STATE compute $q^{k+1} = \argmin_{q \in \ocset} \eta \langle q , \hat c^k \rangle + \KL{q}{q^k}$.
                
                \STATE compute $\pi^{k+1}_h(a \mid s) = \frac{q^k_h(s,a)}{\sum_{a' \in \calA} q^k_h(s,a')} \  \forall (s,a,h) \in \calS \times \calA \times [H]$.
        \ENDFOR
    \end{algorithmic}
\end{algorithm}

For the setting of adversarial MDPs with non-fresh randomness and known transitions we propose the Cooperative O-REPS with non-fresh randomness algorithm (\verb|coop-nf-O-REPS|; see \cref{alg:coop-nf-o-reps}).
The idea is similar to the \verb|coop-O-REPS| algorithm for fresh randomness, but the key difference is that the probability to reach some state-action pair that the algorithm uses (i.e., $W^k_h(s,a)$) must be computed differently in order to suit non-fresh randomness.
In fact, computing $W^k_h(s,a)$ becomes a difficult challenge once the randomness is non-fresh, and a naive computation takes exponential time.
Instead we propose to estimate $W^k_h(s,a)$ from samples.
That is, we simulate $N = 10 \gamma^{-2} \log \frac{K H S A \numagents}{\delta}$ i.i.d episodes in which all agents use policy $\pi^k$ and then estimate $W^k_h(s,a)$ by the fraction of episodes in which $(s,a)$ was reached in step $h$ by at least one of the $\numagents$ agents.
This way our algorithm keeps polynomial running time.

\begin{theorem}
    \label{thm:reg-coop-nf-o-reps}
    With probability $1 - \delta$, setting $\eta = \gamma = 1 / \sqrt{\left( 1 + \frac{S A}{\numagents} \right) K}$, the individual regret of each agent of \verb|coop-nf-O-REPS| is
    \[
        \regret
        =
        O \left( H \sqrt{S K \log \frac{H S A}{\delta}} + \sqrt{\frac{H^2 S A K}{\numagents} \log \frac{H S A}{\delta}} + \frac{HSA}{\numagents} \log \frac{HSA}{\delta} + HS \log \frac{HSA}{\delta} \right).
    \]
\end{theorem}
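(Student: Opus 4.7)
The plan is to follow the skeleton of the O-REPS analysis, adapted to the cooperative non-fresh setting, with the Monte Carlo estimate $\wt W^k_h(s,a)$ of $W^k_h(s,a) = \Pr[\exists v:\ s^{k,v}_h=s, a^{k,v}_h=a \mid \pi^k]$ treated as a controlled perturbation. First I would define a good event combining: (i) $|\wt W^k_h(s,a)-W^k_h(s,a)|\le\gamma/2$ for all $(k,h,s,a)$, which holds with probability at least $1-\delta/3$ by Hoeffding's inequality and a union bound over the $KHSA$ state-action-episode triples using $N = 10\gamma^{-2}\log(KHSA\numagents/\delta)$ independent rollouts; and (ii) standard concentration of the IX-style biased importance sampling estimator $\hat c^k$ around $c^k$ on $q^{\pi^k}$ and on $q^{\pi^\star}$, exactly as in the proof of \cref{thm-paper:reg-coop-o-reps} and as in \citet{neu2015explore}. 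On this event $\wt W^k_h(s,a)+\gamma \in [W^k_h(s,a)+\gamma/2,\ W^k_h(s,a)+2\gamma]$ so the standard OMD inequality yields
\begin{align*}
\sum_{k=1}^K \langle \hat c^k, q^{\pi^k}-q^{\pi^\star}\rangle
\le \frac{H\log(SA)}{\eta}+\eta\sum_{k,h,s,a} q^{\pi^k}_h(s,a)\,\hat c^k_h(s,a)^2,
\end{align*}
after which the concentration events convert the left-hand side into the true regret up to additive $O(\gamma H K (1+SA/\numagents))$ bias, just as for the fresh-randomness algorithm.

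The core obstacle is bounding the stability term after plugging in $\hat c^k_h(s,a)^2 \le 2\hat c^k_h(s,a)/(W^k_h(s,a)+\gamma)$, which reduces (in conditional expectation and using $c^k_h\le 1$) to controlling $\sum_{k,h,s,a} q^{\pi^k}_h(s,a)/W^k_h(s,a)$. Under non-fresh randomness the clean identity $W^k_h(s,a)=1-(1-q^{\pi^k}_h(s,a))^{\numagents}$ is lost, so I would proceed as outlined in the sketch. Let $M^k_h(s)$ be the random number of agents reaching $s$ at time $h$ under $\pi^k$; then $W^k_h(s,a)=\bbEk[1-(1-\pi^k_h(a\mid s))^{M^k_h(s)}]$. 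The first step is an elementary deterministic inequality $1-(1-x)^n\ge nx/(1+nx)$ for $x\in[0,1]$ and $n\in\bbN$ (to be isolated as an auxiliary lemma, corresponding to \cref{lemma:linear-approx} in the text), which yields $W^k_h(s,a)\ge \bbEk[M^k_h(s)\pi^k_h(a\mid s)/(1+M^k_h(s)\pi^k_h(a\mid s))]$.

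The main technical step is to lower-bound this expectation using only the mean $\bbEk[M^k_h(s)]=\numagents q^{\pi^k}_h(s)$. Writing $p_i=\Pr[M^k_h(s)=i]$ and setting $\alpha=\pi^k_h(a\mid s)$, this expectation is the objective of the linear program $\min \sum_{i=0}^{\numagents} p_i\, i\alpha/(1+i\alpha)$ subject to $\sum_i p_i=1$, $\sum_i i\,p_i=\numagents q^{\pi^k}_h(s)$, $p_i\ge 0$. Passing to the dual and exhibiting an explicit dual feasible pair should give the closed-form lower bound $W^k_h(s,a)\ge \numagents q^{\pi^k}_h(s)\pi^k_h(a\mid s)/(1+\numagents\pi^k_h(a\mid s))$. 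I expect this LP/duality argument to be the hardest part, since the objective is concave in the support points but the constraints are only on the first two moments, and care will be needed to verify optimality at the boundary values $\alpha\in\{0,1\}$. This bound gives $q^{\pi^k}_h(s,a)/W^k_h(s,a)\le 1/\numagents+\pi^k_h(a\mid s)$, hence $\sum_{s,a}q^{\pi^k}_h(s,a)/W^k_h(s,a)\le S+SA/\numagents$.

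Putting everything together, the stability term is bounded by $O(\eta HK(S+SA/\numagents))$ and, combined with the penalty $O(H\log(SA)/\eta)$ and the bias $O(\gamma HK(1+SA/\numagents))+O(H\log(HSA/\delta)/\gamma)$ coming from the estimator exactly as in \cref{thm-paper:reg-coop-o-reps}, choosing $\eta=\gamma=1/\sqrt{(1+SA/\numagents)K}$ balances the terms and yields $\regret=\wt O(H\sqrt{SK}+\sqrt{H^2SAK/\numagents})$. The Monte Carlo approximation error contributes only lower-order terms since, after the good event, the analysis is identical to the one in which $W^k_h(s,a)$ is known exactly up to a multiplicative constant.
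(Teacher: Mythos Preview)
Your proposal is correct and follows the paper's approach essentially step for step: the good event with the Monte Carlo bound $|\wt W-W|\le\gamma/2$, the OMD decomposition into penalty/stability/bias, and the LP/duality argument yielding $q^{\pi^k}_h(s)\pi^k_h(a\mid s)/W^k_h(s,a)\le 1/\numagents+\pi^k_h(a\mid s)$ are exactly what the paper does (the LP is solved directly via its dual, so the boundary worries you flag do not materialize). The only slip is that the estimator bias is not ``just as for the fresh-randomness algorithm'' but $O(\gamma HK(S+SA/\numagents))$, since here one sums $\pi^k_h(a\mid s)$ over $(s,a)$ to get $S$ rather than summing $q^{\pi^k}_h(s,a)$ to get $1$; this is harmless because the stability term already carries the same $(S+SA/\numagents)$ factor and $\eta=\gamma$.
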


\begin{algorithm}[t]
    \caption{\textsc{Estimate reachability probability for non-fresh randomness}} 
    \label{alg:estimate-prob-non-fresh-o-reps}
    \begin{algorithmic}[1]
        \STATE {\bf input:} state space $\calS$, action space $\calA$, transition function $p$, number of agents $\numagents$, policy $\pi$, number of samples $N$, state-action-step triplet to estimate $(\bar s,\bar a, \bar h)$.

        \STATE {\bf initialize} indicator for reaching $I(n) \gets 0$ for $n \in [N]$. 

        \FOR{$n=1,\dots,N$}
        
            \STATE initialize realized transitions $p^r_h(s' \mid s,a) = 0 \  \forall (s,a,s',h)$.
        
            \FOR{$h=1,\dots, \bar h$}
            
                \FOR{$(s,a) \in \calS \times \calA$}
                
                    \STATE sample $s' \sim p_h(\cdot \mid s,a)$ and set $p^r_h(s' \mid s,a) = 1$.
                
                \ENDFOR
            
            \ENDFOR
    
            \FOR{$v=1,\dots,\numagents$}
            
                \STATE observe initial state $s^{v}_1 = \sinit$.
            
                \FOR{$h=1,\dots,\bar h$}
        
                    \STATE pick action $a^{v}_h \sim \pi_h(\cdot \mid s^{v}_h)$ and  observe next state $s^{v}_{h+1} \sim p^r_h(\cdot \mid s,a)$.
        
                \ENDFOR
                
                \IF{$s^v_{\bar h} = \bar s , a^v_{\bar h} = \bar a$}
                
                    \STATE set $I(n) \gets 1$.
                    
                    \STATE \textbf{break}
                
                \ENDIF
                
            \ENDFOR
        \ENDFOR
        
        \STATE return $\frac{1}{N} \sum_{n=1}^N I(n)$.
    \end{algorithmic}
\end{algorithm}

\subsection{The good event}

Define the following events: 
\begin{align*}
    E^{app}
    & =
    \left\{ \forall (s,a,h,k) \in \calS \times \calA \times [H] \times [K]: \ |\wt W^k_h(s,a) - W^k_h(s,a) | \le \gamma/2 \right\}
    \\
    E^c 
    & = 
    \left\{ \sum_{k=1}^K \langle \bbEk [ \hat c^k ] - \hat c^k , q^k \rangle \le 4 H \sqrt{K \log \frac{6}{\delta}} \right\}
    \\
    E^{\hat c}
    & =
    \left\{ \sum_{k=1}^K \sum_{h=1}^H \sum_{s \in \calS} \sum_{a \in \calA} \left( \frac{1}{\numagents} + \pi^k_h(a \mid s) \right) \left( \hat c^k_h(s,a) - 2 c^k_h(s,a) \right) \le \frac{10 HSA \log \frac{6 HSA}{\delta}}{\numagents \gamma} + \frac{10 HS \log \frac{6 HSA}{\delta}}{ \gamma} \right\}
    \\
    E^\star
    & =
    \left\{ \sum_{k=1}^K \langle \hat c^k - c^k , q^{\pi^\star} \rangle \le \frac{2 H \log \frac{6 H S A}{\delta}}{\gamma} \right\}
\end{align*}

The good event is the intersection of the above events. 
The following lemma establishes that the good event holds with high probability. 

\begin{lemma}[The Good Event]
    \label{lemma:good-event-adversarial-known-p-non-fresh}
    Let $\bbG =E^{app} \cap E^c \cap E^{\hat c} \cap E^\star$ be the good event. 
    It holds that $\Pr [ \bbG ] \geq 1-\delta$.
\end{lemma}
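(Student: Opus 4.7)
The plan is to show each of the four events in $\bbG$ holds with probability at least $1 - \delta/4$, then conclude by a union bound. The four pieces require different concentration arguments, so I would treat them separately.

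The event $E^{app}$ is the easiest: for each fixed $(k,s,a,h)$ the quantity $\wt W^k_h(s,a)$ returned by \cref{alg:estimate-prob-non-fresh-o-reps} is the empirical mean of $N = 10\gamma^{-2}\log(KHSA\numagents/\delta)$ i.i.d.\ $\mathrm{Bern}(W^k_h(s,a))$ indicators (each indicator is generated by an independent run of $\numagents$ agents through a freshly-drawn realization of the non-fresh randomness). Hoeffding's inequality gives $\Pr[|\wt W^k_h(s,a) - W^k_h(s,a)| > \gamma/2] \le 2e^{-N\gamma^2/2}$, which is at most $\delta/(4KHSA)$ for the chosen $N$. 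Union bounding over all $KHSA$ indices yields $\Pr[\neg E^{app}] \le \delta/4$.

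For $E^c$ I would condition on $E^{app}$ (so that $\wt W^k_h(s,a) + \gamma \ge W^k_h(s,a)$) and analyze the martingale $\{\langle \bbEk[\hat c^k] - \hat c^k, q^k\rangle\}_k$ via a Bernstein/Freedman-style inequality. The almost-sure magnitude of each difference is polynomial, but to avoid paying an extra $\sqrt{S}$ factor one must control the conditional second moment: using $q^k_h(s,a)/W^k_h(s,a) \le 1/\numagents + \pi^k_h(a\mid s)$ (which is exactly the content of the LP-based \cref{lemma:linear-approx-non-fresh}), the step-$h$ contribution $Y_h = \sum_{s,a} q^k_h(s,a)\hat c^k_h(s,a)$ admits a deterministic bound of order $S$ and conditional mean of order $1$, so $\bbEk[Y_h^2] = \wt O(S)$ and hence $\bbEk[\langle \hat c^k, q^k\rangle^2] = \wt O(H^2 S)$. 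Freedman's inequality then delivers the stated concentration.

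The remaining events are one-sided importance-sampling concentration bounds. For $E^{\hat c}$ I would apply the multiplicative-form Bernstein inequality of \citet[Lemma E.2]{cohen2021minimax} to the non-negative weighted sum of the $\hat c^k_h(s,a)$'s, using $\bbEk[\hat c^k_h(s,a)] \le c^k_h(s,a)$ and the pointwise bound $\hat c^k_h(s,a) \le 1/\gamma$ together with the per-step weight bound $\sum_{s,a}(1/\numagents + \pi^k_h(a\mid s)) \le 1 + S/\numagents$; the shift by $2 c^k_h(s,a)$ inside the event is the standard device to absorb the variance term. For $E^\star$ the comparator $q^{\pi^\star}$ is deterministic, so the argument of \citet[Lemma 14]{jin2020learning} transfers verbatim, with the only adjustment being to absorb the $O(\gamma)$ bias incurred by replacing $W^k_h(s,a)$ with $\wt W^k_h(s,a)$ in the denominator, which is harmless under $E^{app}$. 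A union bound over the four events yields the claim. The main obstacle is the variance control for $E^c$: in the fresh-randomness setting one has the clean $\langle \hat c^k, q^k\rangle \le 2H$ a.s., whereas non-fresh randomness only gives a deterministic bound of order $HS$, so one genuinely needs the LP-based bound on $q^k/W^k$ together with Freedman (rather than Azuma) to obtain the $H\sqrt{SK}$ scaling that matches the lower bound.
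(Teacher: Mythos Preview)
Your approach is aligned with the paper's but considerably more careful. The paper's proof is one line: Hoeffding for $E^{app}$, and ``the other events are similar to \cref{lemma:good-event-adversarial-known-p-fresh}.'' Your treatment of $E^{app}$, $E^{\hat c}$ and $E^\star$ follows that same template, with the correct added remark that the latter two require $E^{app}$ first (so that $\wt W^k_h(s,a)+\gamma\ge W^k_h(s,a)$ and hence $\bbEk[\hat c^k_h(s,a)]\le c^k_h(s,a)$).

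Where you genuinely add something is $E^c$. The paper's deferral to the fresh case tacitly suggests Azuma with the almost-sure bound $\langle\hat c^k,q^k\rangle\le 2H$, but that bound does not transfer to non-fresh randomness: the fresh-case inequality $q^k_h(s,a)/W^k_h(s,a)\le \tfrac1\numagents+q^k_h(s,a)$ can fail here (take a deterministic policy and any $(s,h)$ with $q^k_h(s)\in(0,1-\tfrac1\numagents)$; then $W^k_h(s,\pi^k_h(s))=q^k_h(s)$ so the ratio equals $1$). The correct substitute from \cref{lemma:linear-approx-non-fresh} gives $\tfrac1\numagents+\pi^k_h(a\mid s)$, whose sum against the visitation indicators is only $O(S)$ per step, so the almost-sure bound is $O(HS)$. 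Your Freedman argument with conditional second moment $O(H^2S)$ is the right fix and yields a deviation of order $H\sqrt{SK\log(1/\delta)}+HS\log(1/\delta)$. One small inconsistency in your write-up: this does \emph{not} recover the literal ``$4H\sqrt{K\log(6/\delta)}$'' written in the definition of $E^c$ --- you pick up an extra $\sqrt{S}$ --- but this is immaterial for \cref{thm:reg-coop-nf-o-reps}, whose leading term is already $H\sqrt{SK}$. The $4H\sqrt{K}$ in the event definition appears to be a transcription carryover from the fresh setting, and your closing sentence correctly targets $H\sqrt{SK}$, so this is a wording slip rather than a gap in the argument. (A separate minor slip: your ``per-step weight bound'' for $E^{\hat c}$ should read $SA/\numagents+S$, not $1+S/\numagents$; this matches the right-hand side of $E^{\hat c}$ and does not affect the argument.)
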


\begin{proof}
    By Hoeffding inequality we have that $\Pr[\neg E^{app}] \le \delta/6$, and the other events are similar to \cref{lemma:good-event-adversarial-known-p-fresh}.
\end{proof}

\subsection{Proof of Theorem~\ref{thm:reg-coop-nf-o-reps}}

\begin{proof}[Proof of Theorem~\ref{thm:reg-coop-nf-o-reps}]
    By \cref{lemma:good-event-adversarial-known-p-non-fresh}, the good event holds with probability $1 - \delta$.
    We now analyze the regret under the assumption that the good event holds.
    We start by decomposing the regret as follows:
    \begin{align*}
        \regret
        & =
        \sum_{k=1}^K V^{k,\pi^k}_1(s_{1}^{k,v}) - V^{k,\pi^\star}_1(s_{1}^{k,v})
        =
        \sum_{k=1}^K \langle c^k , q^k - q^{\pi^\star} \rangle
        \\
        & =
        \underbrace{\sum_{k=1}^K \langle c^k - \hat c^k , q^k \rangle}_{(A)} + \underbrace{\sum_{k=1}^K \langle \hat c^k , q^k - q^{\pi^\star} \rangle}_{(B)} + \underbrace{\sum_{k=1}^K \langle \hat c^k - c^k , q^{\pi^\star} \rangle}_{(C)}.
    \end{align*}
    
    Term $(A)$ can be further decomposed as:
    \begin{align*}
        (A)
        =
        \sum_{k=1}^K \langle c^k - \hat c^k , q^k \rangle
        =
        \sum_{k=1}^K \langle c^k - \bbEk [ \hat c^k ] , q^k \rangle + \sum_{k=1}^K \langle \bbEk [ \hat c^k ] - \hat c^k , q^k \rangle.
    \end{align*}
    The second term is bounded by $4 H \sqrt{K \log \frac{6}{\delta}}$ by the good event $E^c$, and for the first term:
    \begin{align*}
        \sum_{k=1}^K \langle c^k - \bbEk [ \hat c^k ] , q^k \rangle
        & =
        \sum_{k=1}^K \sum_{h=1}^H \sum_{s \in \calS} \sum_{a \in \calA} q^k_h(s,a) c^k_h(s,a) \left( 1 - \frac{\bbEk [\indevent{\exists v: \  s^{k,v}_h = s,a^{k,v}_h = a}]}{\wt W^k_h(s,a) + \gamma} \right)
        \\
        & \le
        \sum_{k=1}^K \sum_{h=1}^H \sum_{s \in \calS} \sum_{a \in \calA} q^k_h(s,a) c^k_h(s,a) \left( 1 - \frac{\bbEk [\indevent{\exists v: \  s^{k,v}_h = s,a^{k,v}_h = a}]}{W^k_h(s,a) + \gamma/2} \right)
        \\
        & =
        \sum_{k,h,s,a} q^k_h(s,a) c^k_h(s,a) \left( 1 - \frac{W^k_h(s,a)}{W^k_h(s,a) + \gamma/2} \right)
        \le
        \gamma \sum_{k,h,s,a} \frac{q^k_h(s,a)}{W^k_h(s,a) + \gamma/2}
        \\
        & \le
        \gamma \sum_{k,h,s,a} \frac{q^k_h(s) \pi^k_h(a \mid s)}{W^k_h(s,a)}
        \le
        \gamma \sum_{kh,s,a} \sum_{a \in \calA} \left( \frac{1}{\numagents} + \pi^k_h(a \mid s) \right)
        =
        \frac{\gamma H S A K}{\numagents} + \gamma H S K,
    \end{align*}
    where the first inequality is by the event $E^{app}$, and the last inequality is by \cref{lemma:linear-approx-non-fresh}.
    
    Term $(B)$ is bounded by OMD (see, e.g., \citet{zimin2013online}) as follows:
    \begin{align*}
        (B)
        & =
        \sum_{k=1}^K \langle \hat c^k , q^k - q^{\pi^\star} \rangle
        \le
        \frac{H \log (H S A)}{\eta} + \eta \sum_{k=1}^K \sum_{h=1}^H \sum_{s \in \calS} \sum_{a \in \calA} q^k_h(s,a) \hat c^k_h(s,a)^2
        \\
        & \le
        \frac{H \log (H S A)}{\eta} + \eta \sum_{k=1}^K \sum_{h=1}^H \sum_{s \in \calS} \sum_{a \in \calA} q^k_h(s,a) \frac{\hat c^k_h(s,a)}{\wt W^k_h(s,a) + \gamma}
        \\
        & \le
        \frac{H \log (H S A)}{\eta} + \eta \sum_{k=1}^K \sum_{h=1}^H \sum_{s \in \calS} \sum_{a \in \calA} q^k_h(s,a) \frac{\hat c^k_h(s,a)}{W^k_h(s,a) + \gamma/2}
        \\
        & \le
        \frac{H \log (H S A)}{\eta} + \eta \sum_{k=1}^K \sum_{h=1}^H \sum_{s \in \calS} \sum_{a \in \calA} \frac{q^k_h(s) \pi^k_h(a \mid s)}{W^k_h(s,a)} \hat c^k_h(s,a)
        \\
        & \le
        \frac{H \log (H S A)}{\eta} + \eta \sum_{k=1}^K \sum_{h=1}^H \sum_{s \in \calS} \sum_{a \in \calA} \left( \frac{1}{\numagents} + \pi^k_h(a \mid s) \right) \hat c^k_h(s,a)
        \\
        & \le
        \frac{H \log (H S A)}{\eta} + 2 \eta \sum_{k=1}^K \sum_{h=1}^H \sum_{s \in \calS} \sum_{a \in \calA} \left( \frac{1}{\numagents} + \pi^k_h(a \mid s) \right) c^k_h(s,a) + \frac{10\eta HSA \log \frac{6 HSA}{\delta}}{\numagents \gamma} + \frac{10\eta HS \log \frac{6 HSA}{\delta}}{ \gamma}
        \\
        & \lesssim
        \frac{H \log \frac{H S A}{\delta}}{\eta} + \frac{\eta H S A K}{\numagents} + \eta H S K + \frac{\eta HSA \log \frac{6 HSA}{\delta}}{\numagents \gamma} + \frac{\eta HS \log \frac{6 HSA}{\delta}}{ \gamma},
    \end{align*}
    where the forth inequality is by \cref{lemma:linear-approx-non-fresh}, and the fifth inequality is by the good event $E^{\hat c}$.
    
    Term $(C)$ is bounded by $\frac{2 H \log \frac{6 H}{\delta}}{\gamma}$ by the good event $E^\star$.
    Putting the three terms together gives the final regret bound when setting $\eta = \gamma = \sqrt{\frac{\log \frac{HSA}{\delta}}{\left( 1 + \frac{A}{\numagents} \right) SK}}$.
\end{proof}

\subsection{Auxiliary lemmas}

\begin{lemma}   \label{lemma:linear-approx-non-fresh}
    Let $\pi$ be a policy and denote by $q^\pi_h(s)$ the probability to reach state $s$ in time $h$ when playing policy $\pi$.
    Assume that $\numagents$ agents use the same policy $\pi$ in an MDP $\calM$ with non-fresh randomness, and denote by $W_h(s,a)$ the probability that at least one agent to reaches $(s,a)$ in time $h$.
    Then, for every $(s,a,h) \in \calS \times \calA \times [H]$, it holds that: 
    \[
        \frac{q^\pi_h(s) \pi_h(a \mid s)}{W_h(s,a)} \le  \frac{1}{m} + \pi_h(a \mid s).
    \]
\end{lemma}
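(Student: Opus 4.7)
Fix $(s,a,h)$ and write $\pi \eqdef \pi_h(a\mid s)$ and $q \eqdef q^\pi_h(s)$ for brevity. Let $M$ denote the random number of agents that occupy state $s$ at step $h$ under the non-fresh randomness. Crucially, once we condition on $M$, the action each of those $M$ agents takes is drawn independently from $\pi_h(\cdot\mid s)$ (the action randomness is private to each agent and independent across them), so the conditional probability that \emph{none} of them selects $a$ equals $(1-\pi)^{M}$. Taking expectation gives the key identity
\begin{align*}
    W_h(s,a) \;=\; \bbE\!\left[\,1-(1-\pi)^{M}\,\right].
\end{align*}
Moreover, by linearity $\bbE[M]=m\, q$. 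The plan is therefore to lower-bound the right-hand side using only the value of $\bbE[M]$, via a one-dimensional optimization over distributions of $M$ supported on $\{0,1,\dots,m\}$ with mean $m q$.

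First I will apply the elementary inequality $1-(1-\pi)^{i}\ge \tfrac{i\pi}{1+i\pi}$ for every integer $i\ge 0$; this is a direct rearrangement of \cref{lemma:linear-approx} (applied with the role of $m$ there played by $i$). This replaces the binomial-in-the-exponent expression by the simpler concave function $f(i)\eqdef \tfrac{i\pi}{1+i\pi}$, giving
\begin{align*}
    W_h(s,a) \;\ge\; \bbE[f(M)] \;=\; \sum_{i=0}^{m} p_i\,\frac{i\pi}{1+i\pi},
\end{align*}
where $p_i=\Pr[M=i]$ satisfies $\sum_i p_i=1$ and $\sum_i i p_i=m q$.

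The main (mildly subtle) step is to minimize $\bbE[f(M)]$ over all distributions $(p_i)$ obeying these two linear constraints. Since $f$ is concave on $[0,m]$ with $f(0)=0$, concavity gives $f(i)\ge \tfrac{i}{m}f(m)+\bigl(1-\tfrac{i}{m}\bigr)f(0)=\tfrac{i}{m}f(m)$ for every $i\in\{0,\dots,m\}$. Summing against $p_i$ and using $\sum_i i p_i=m q$ yields
\begin{align*}
    \bbE[f(M)] \;\ge\; \frac{f(m)}{m}\sum_{i=0}^{m} i\, p_i \;=\; q\, f(m) \;=\; \frac{m q\,\pi}{1+m\pi}.
\end{align*}
(Equivalently, the extremal distribution in the corresponding LP sits on $\{0,m\}$ with $p_0=1-q,\,p_m=q$, matching what the dual calculation sketched in \cref{thm-paper:reg-coop-nf-o-reps} delivers.) Combining the two displays,
\begin{align*}
    \frac{q\,\pi}{W_h(s,a)} \;\le\; \frac{q\,\pi\,(1+m\pi)}{m q\,\pi} \;=\; \frac{1}{m}+\pi,
\end{align*}
which is exactly the claimed bound. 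The only obstacle worth flagging is that a naive Jensen bound on the concave $f$ goes the \emph{wrong} way; the proof must instead use concavity with $f(0)=0$ to bound $f(i)$ from below by its chord to $(m,f(m))$, which is the non-routine step.
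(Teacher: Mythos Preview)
Your proof is correct and follows the same overall architecture as the paper: introduce the random count $M$ of agents at $(s,h)$, write $W_h(s,a)=\bbE[1-(1-\pi)^M]$, apply \cref{lemma:linear-approx} pointwise to pass to $f(i)=\tfrac{i\pi}{1+i\pi}$, and then minimize $\bbE[f(M)]$ subject to $\bbE[M]=mq$. The only difference is in how that last minimization is handled. The paper writes it as a linear program and solves it via the dual (obtaining $x_1=0$, $x_2=\tfrac{\pi}{1+m\pi}$ and hence minimum value $\tfrac{mq\pi}{1+m\pi}$). You instead observe that $f$ is concave with $f(0)=0$, so $f(i)\ge \tfrac{i}{m}f(m)$ on $[0,m]$ by the chord inequality, and take expectations. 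Both yield the identical bound $\bbE[f(M)]\ge \tfrac{mq\pi}{1+m\pi}$, and indeed your chord argument is precisely what certifies optimality of the two-point distribution $(p_0,p_m)=(1-q,q)$ in the paper's LP. Your route is a touch more elementary (no duality machinery), while the paper's LP formulation makes the extremal structure explicit; neither buys anything the other does not.
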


\begin{proof}
    Let $M_{h}(s)$ be the number of agents that arrive at state $s$ in time $h$. 
    We have that,
    \begin{align}
        \nonumber
        W_{h}(s,a)
        & =
        \Pr [\exists v \in [\numagents]: \  s^v_h=s,a^v_h=a \mid \pi]
        =
        \bbE \left[1-(1-\pi_{h}(a\mid s))^{M_{h}(s)}\mid \pi \right]
        \\
        \label{eq:lem-linear-approx-non-fresh-1}
        & \ge
        \bbE \left[ \frac{\pi_h(a \mid s)}{\frac{1}{M_{h}(s)} + \pi_h(a \mid s)} \mid \pi \right]
        =
        \bbE \left[\frac{M_{h}(s)\pi_{h}(a\mid s)}{1+M_{h}(s)\pi_{h}(a\mid s)}\mid\pi \right],
    \end{align}
    where the inequality is by \Cref{lemma:linear-approx}.
    
    Notice that $\bbE [M_h(s) \mid \pi] = m q^\pi_h(s)$ by linearity of expectation.
    Therefore, \cref{eq:lem-linear-approx-non-fresh-1} is bounded from below by the value of the following optimization problem:
    \begin{align*}
    	   \min_{p_{0},...,p_{m}} & \sum_{i=0}^{m} p_{i}\frac{i\pi_{h}(a\mid s)}{1+i\pi_{h}(a\mid s)},
    	  \\
    	  s.t. \quad 
    	  & \sum_{i=0}^{m} p_{i}i = m q_{h}^\pi(s),
    	  \\
    	  & \sum_{i=0}^{m} p_{i} = 1,
    	  \\
    	  & p_{i}\geq0\quad\forall i\in[m],
    \end{align*}
    where $p_i$ represents $\Pr[M_h(s) = i]$.
    Since the coefficient of $p_i$ in the constrains and the objective are non-negative, we can substitute the equality constrains with $``\geq"$ constrains.
    We get the  following standard form Linear Programming:
    \begin{align*}
         \min_{p\in\mathbb{R}^{m+1}} & b^{T}p,\\
         s.t \quad 
         & A^{T}p \geq c,\\
         & p \geq 0,
    \end{align*}
    where,
    \begin{align*}
    	b=
    	\left(
    	\begin{array}{c}
        	0\\
        	\frac{\pi_{h}(a\mid s)}{1+\pi_{h}(a\mid s)}\\
        	\vdots\\
        	\frac{m\pi_{h}(a\mid s)}{1+m\pi_{h}(a\mid s)}
    	\end{array}
    	\right),
    	\quad A=
    	\left(
    	\begin{array}{cc}
        	1      & 0      \\
        	1      & 1      \\
        	1      & 2      \\
        	\vdots & \vdots \\
        	1      & m      
    	\end{array}
    	\right),
    	\quad c=
    	\left(1,mq_{h}^\pi(s)\right).
    \end{align*}
    
    The dual problem is,
    \begin{align*}
    	\max_{x_{1},x_{2}} 
        & (x_{1}+x_{2}mq_{h}^\pi(s))
        \\
        s.t \quad 
        & x_{1}\leq0,
        \\
        & x_{1}+x_{2}\leq\frac{\pi_{h}(a\mid s)}{1+\pi_{h}(a\mid s)},
        \\
        & x_{1}+2x_{2}\leq\frac{2\pi_{h}(a\mid s)}{1+2\pi_{h}(a\mid s)} ,
        \\
        & \vdots
        \\
        & x_{1}+mx_{2}\leq\frac{m\pi_{h}(a\mid s)}{1+m\pi_{h}(a\mid s)} ,
        \\
        & x_{1},x_{2}\geq0.
    \end{align*}
    From the first and the last constrains we have $x_1 = 0$ and the rest of the constrains are equivalent to $x_{2} \leq \frac{\pi_{h}(a\mid s)} {1+m\pi_{h}(a\mid s)}$.
    Hence the maximum value is $\frac{m\pi_{h}(a \mid  s)} {1+m\pi_{h}(a\mid s)} q_{h}^\pi(s)$, which completes the proof.
\end{proof}

\newpage

\section{The \texttt{coop-nf-UOB-REPS} algorithm for adversarial MDPs with non-fresh randomness and unknown $p$}
\label{appendix:adversarial-non-fresh-unknown}

\begin{algorithm}[t]
    \caption{\textsc{Cooperative UOB-REPS with non-fresh randomness (coop-nf-UOB-REPS)}} 
    \label{alg:coop-nf-uob-reps}
    \begin{algorithmic}[1]
        \STATE {\bf input:} state space $\calS$, action space $\calA$, horizon $H$, number of episodes $K$, number of agents $\numagents = \sqrt{K}$, exploration parameter $\gamma$, learning rate $\eta$, confidence parameter $\delta$.
        
        \STATE {\bf initialize:} $n^1_h(s,a)=0,n^1_h(s,a,s')=0,\pi^1_h(a \mid s) = \nicefrac{1}{A},q^1_h(s,a,s') = \nicefrac{1}{S^2 A} \  \forall (s,a,s',h) \in \calS \times \calA \times \calS \times [H]$.
        
        \STATE {\bf initialize:} define a mapping $\sigma: [H] \times \calA \times [K] \to [\numagents]$ such that $\sigma(h,a,k) \ne \sigma(h',a',k)$ whenever $h \ne h'$ or $a \ne a'$, and such that each agent is assigned by $\sigma$ exactly $H A \sqrt{K}$ times (i.e., $|\sigma^{-1}(v)| = H A \sqrt{K}$ for every $v \in [\numagents]$).

        \FOR{$k=1,\dots,K$}
    
            \STATE set $I^{k}_h(s,a,s')=0,I^{k}_h(s,a)=0 \  \forall (s,a,s',h) \in \calS \times \calA \times \calS \times [H]$.
            \FOR{$v=1,\dots,\numagents$}
            
                \STATE observe initial state $s^{k,v}_1$.
            
                \FOR{$h=1,\dots,H$}
                
                \IF{$\exists \tilde a \in \calA : \ \sigma(h, \tilde a,k) = v$}
                
                    \STATE pick action $a^{k,v}_h = \tilde a$.
                    
                \ELSE
                
                \STATE pick action $a^{k,v}_h \sim \pi^k_h(\cdot \mid s^{k,v}_h)$.
                    
                \ENDIF
        
                    \STATE suffer cost $c^k_h(s^{k,v}_h,a^{k,v}_h)$ and  observe next state $s^{k,v}_{h+1}$.
                    
                    \STATE update $I^{k}_h(s^{k,v}_h,a^{k,v}_h)\gets 1 , I^{k}_h(s^{k,v}_h,a^{k,v}_h,s^{k,v}_{h+1})\gets 1$.
        
                \ENDFOR
                
                \STATE set $n^{k+1}_h(s,a) \gets n^{k}_h(s,a) + I^{k}_h(s,a),n^{k+1}_h(s,a,s') \gets n^{k}_h(s,a,s') + I^{k}_h(s,a,s') \ \forall (s,a,s',h)$.
                
                \STATE set $\hat p^{k+1}_h(s' \mid s,a) \gets \frac{n^{k+1}_h(s,a,s')}{n^{k+1}_h(s,a)\vee 1}  \ \forall (s,a,s',h) \in \calS \times \calA \times \calS \times [H]$.
                
                \STATE compute confidence set for $\epsilon^{k+1}_h(s' \mid s,a) = 4\sqrt{ \frac{\hat{p}_{h}^{k+1}(s'| s,a) \ln\frac{HSAK}{4\delta}}{n_{h}^{k+1}(s,a) \vee 1}} + 10 \frac{\ln\frac{HSAK}{4\delta}}{n_{h}^{k+1}(s,a) \vee 1}$:
                \[
                    \mathcal{P}^{k+1}
                    =
                    \left\{ p' \mid \forall (s,a,s',h) : |\hat p^{k+1}_h(s' \mid s,a) - p'_h(s' \mid s,a)| \le \epsilon^{k+1}_h(s' \mid s,a) \right\}.
                \]
                
                \STATE compute $u^k_h(s) = \max_{p' \in \mathcal{P}^{k}} q^{p',\pi^k}_h(s) = \max_{p' \in \mathcal{P}^{k}} \Pr[s_h=s \mid \pi^k,p'] \ \forall s \in \calS$.
                
                \STATE compute $\hat c^k_h(s,a) = \frac{c^k_h(s,a) \indevent{s^{k,\sigma(h,a,k)}_h = s}}{u^k_h(s) + \gamma} \  \forall (s,a,h) \in \calS \times \calA \times [H]$.
                
                \STATE compute $q^{k+1} = \argmin_{q \in \Delta(\calM,k+1)} \eta \langle q , \hat c^k \rangle + \KL{q}{q^k}$.
                
                \STATE compute $\pi^{k+1}_h(a \mid s) = \frac{q^{k+1}_h(s,a)}{\sum_{a' \in \calA} q^{k+1}_h(s,a')} \  \forall (s,a,h) \in \calS \times \calA \times [H]$, where $q^{k+1}_h(s,a) = \sum_{s' \in \calS} q^{k+1}_h(s,a,s')$.
            \ENDFOR
        \ENDFOR
    \end{algorithmic}
\end{algorithm}

For the setting of adversarial MDPs with non-fresh randomness and unknown transitions we propose the Cooperative UOB-REPS with non-fresh randomness algorithm (\verb|coop-nf-UOB-REPS|; see \cref{alg:coop-nf-uob-reps}).
The idea is to combine the \verb|coop-nf-O-REPS| algorithm for known transitions with ideas from the \verb|coop-ULCAE| algorithm in order to handle unknown transitions under non-fresh randomness.
The main challenge is that, unlike the stochastic case, we cannot eliminate sub-optimal actions.
Thus, our method requires $\sqrt{K}$ agents to attain near-optimal regret as opposed to the stochastic case where only $H^2 A^2$ agents are required.

\begin{theorem}
    \label{thm:reg-coop-nf-uob-reps}
    Assume that \verb|coop-nf-UOB-REPS| is run with $m = \sqrt{K}$ agents.
    With probability $1 - \delta$, setting $\eta = \gamma = \sqrt{\frac{\log \frac{KHSA}{\delta}}{SK}}$, the individual regret of each agent of \verb|coop-nf-O-REPS| is
    \[
        \regret
        =
        O \left( H^2 S \sqrt{K \log \frac{K H S A}{\delta}} + H^3 S^3 \log^2 \frac{K H S A}{\delta} \right).
    \]
\end{theorem}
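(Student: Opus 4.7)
The plan is to split agent $v$'s regret into two parts: the episodes in which $v$ is assigned an exploration target (i.e., $v = \sigma(h,a,k)$ for some $(h,a)$) and the remaining episodes in which $v$ plays the shared O-REPS policy $\pi^k$. By construction of $\sigma$, each agent is assigned at most $HA\sqrt{K}$ exploration episodes, and in each such episode $v$ deviates from $\pi^k$ at a single time step only, so the extra cost is at most $H$ per episode. This contributes exploration overhead of at most $H \cdot HA\sqrt{K} \le H^2 S \sqrt{K}$ using $S \ge A$. It then remains to bound $\sum_{k=1}^K \langle c^k, q^{\pi^k} - q^{\pi^\star}\rangle$, the regret of the shared O-REPS policy.

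For this O-REPS regret, I would employ the same four-term decomposition $(A)+(B)+(C)+(D)$ as in the proof of \cref{thm:reg-coop-uob-reps}, where $(A)=\sum_k \langle c^k, q^{\pi^k}-q^k\rangle$ is the approximation error from using $q^k\in\Delta(\calM,k)$ instead of the true occupancy, $(B)$ and $(D)$ are importance-sampling bias terms on $q^k$ and $q^{\pi^\star}$ (controlled by the $\gamma$-shift and uniform concentration of $\hat c^k$), and $(C)=\sum_k \langle \hat c^k, q^k-q^{\pi^\star}\rangle$ is the OMD penalty plus stability. The novel element is the estimator itself: because agent $\sigma(h,a,k)$ follows $\pi^k$ up to step $h$ and then deterministically plays $a$, the event $\{s^{k,\sigma(h,a,k)}_h=s\}$ coincides with $\{s^{k,\sigma(h,a,k)}_h=s, a^{k,\sigma(h,a,k)}_h=a\}$, so $\bbEk[\indevent{s^{k,\sigma(h,a,k)}_h=s}] = q^{\pi^k}_h(s)$ and $\bbEk[\hat c^k_h(s,a)] = c^k_h(s,a)\, q^{\pi^k}_h(s)/(u^k_h(s)+\gamma)$, which is approximately $c^k_h(s,a)$ by optimism of $u^k_h$.

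Using $q^{\pi^k}_h(s,a)/u^k_h(s) \le \pi^k_h(a\mid s)$ (valid by optimism, analogously to the LP argument of \cref{thm-paper:reg-coop-nf-o-reps}), the OMD stability term in $(C)$ satisfies
\[
\eta\sum_{k,h,s,a}q^k_h(s,a)\hat c^k_h(s,a)^2 \;\le\; \eta\sum_{k,h,s,a}\frac{q^k_h(s,a)}{u^k_h(s)+\gamma}\hat c^k_h(s,a) \;\lesssim\; \eta\sum_{k,h,s,a}\pi^k_h(a\mid s)\hat c^k_h(s,a) \;\lesssim\; \eta HSK,
\]
after concentrating $\hat c^k_h(s,a)$ around $c^k_h(s,a)\le 1$ and using $\sum_a \pi^k_h(a\mid s)=1$. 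Together with the penalty $O(H\log(HSA)/\eta)$ and the $\gamma$-bias terms of order $\wt O(HS/\gamma)$, this yields $\wt O(H\sqrt{SK})$ after optimizing $\eta=\gamma=\Theta(\sqrt{1/(SK)})$. Term $(A)$ is handled by the Jin-style expansion of \cref{lemma:Jin,lemma:Jin-final}, but crucially with the cooperative lower bound $n^k_h(s,a)\gtrsim \sum_{j<k}q^{\pi^j}_h(s)$ replacing the weaker single-agent bound $\sum_{j<k}q^{\pi^j}_h(s,a)$. Combined with $\sum_a q^{\pi^k}_h(s,a)=q^{\pi^k}_h(s)$, Cauchy--Schwarz, and the log-shrinkage lemma \citet[Lemma B.18]{rosenberg2020near}, this gives $(A)\lesssim H^2 S\sqrt{K}$ and matches the exploration overhead, producing the final $\wt O(H^2 S\sqrt K)$ bound.

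The main obstacle will be controlling the coupled error propagation between the O-REPS iterates and the cooperative visit-count lower bound: the stability bound needs $u^k_h(s)$ to tightly track $q^{\pi^k}_h(s)$, while $(A)$ needs the cooperative improvement $W^k_h(s,a)\ge q^{\pi^k}_h(s)$ to translate into $n^k_h(s,a)\gtrsim \sum_{j<k}q^{\pi^j}_h(s)$ via a Freedman-type concentration that must hold uniformly over all $(s,a,h,k)$. Propagating the confidence-set slack through both analyses simultaneously, while ensuring that the exploration schedule $\sigma$ remains feasible (the assumption $m\ge\sqrt K$ enters precisely because the $HAK$ exploration targets must be spread across $m$ agents so that no single agent performs more than $\wt O(HA\sqrt K)$ exploratory episodes), is where the technical heart of the argument lies.
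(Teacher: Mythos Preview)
Your proposal is correct and follows the paper's approach. Two minor clarifications: the bound $q^k_h(s)/(u^k_h(s)+\gamma)\le 1$ used in the stability analysis of $(C)$ is pure optimism (the confidence set $\mathcal{P}^k$ contains both $p$ and the transition inducing $q^k$, so no LP argument is needed here), and the Jin-style expansion of $|u^k_h-q^{\pi^k}_h|$ actually enters in term $(B)$---to control $\sum_{k,h,s}(u^k_h(s)-q^{\pi^k}_h(s))$ arising from the estimator bias---while $(A)$ is handled by the direct value-difference lemma; both pieces indeed rest on the cooperative count bound $n^k_h(s,a)\gtrsim\sum_{j<k}q^{\pi^j}_h(s)$ exactly as you identify.
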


\subsection{The good event}

Denote  $\epsilon^k_h( s' \mid s,a) = \sqrt{\frac{2 \hat p^k_h(s'|s,a) \log \frac{30 K H S A}{\delta}}{n^{k}_h(s,a)\vee 1}} + \frac{2 \log \frac{30 K H S A}{\delta}}{n^{k}_h(s,a) \vee 1}$ and $\epsilon^k_h(s,a) = \sum_{s'\in\calS} \epsilon^k_h( s' \mid  s,a)$.
Define the following events: 
\begin{align*}
    E^p 
    & = 
    \left\{ \forall (k,s,a,s',h):\ |p_h (s'|s,a) - \hat{p}^{k}_h (s'|s,a)| \le \epsilon^k_h(s' \mid s,a) \right\}
    \\
    E^{on} &= 
     \left\{ \forall (k,h,s,a,v)\in[K]\times[H]\times\mathcal{S}\times\mathcal{A}:
            n_{h}^{k}(s,a) \ge \frac{1}{2}\sum_{j=1}^{k-1}q_{h}^{\pi^j}(s) - \log\frac{6mHSA}{\delta} 
        \right\}
    \\
    E^c 
    & = 
    \left\{ \sum_{k=1}^K \langle \bbEk [ \hat c^k \mid \pi^k ] - \hat c^k , q^k \rangle \le 4 H S \sqrt{K \log \frac{6}{\delta}} \right\}
    \\
    E^{\hat c}
    & =
    \left\{ \sum_{k=1}^K \sum_{h=1}^H \sum_{s \in \calS} \sum_{a \in \calA} \pi^k_h(a \mid s) \left( \hat c^k_h(s,a) - 2 c^k_h(s,a) \right) \le \frac{10 H S \log \frac{3 H S A}{\delta}}{\gamma} \right\}
    \\
    E^\star
    & =
    \left\{ \sum_{k=1}^K \langle \hat c^k - c^k , q^{\pi^\star} \rangle \le \frac{H \log \frac{3 H S A}{\delta}}{\gamma} \right\}
\end{align*}

The good event is the intersection of the above events. 
The following lemma establishes that the good event holds with high probability. 

\begin{lemma}[The Good Event]
    \label{lemma:good-event-adversarial-unknown-p-non-fresh}
    Let $\bbG =E^p \cap E^{on} \cap E^c \cap E^{\hat c} \cap E^\star$ be the good event. 
    It holds that $\Pr [ \bbG ] \geq 1-\delta$.
\end{lemma}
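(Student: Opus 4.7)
The plan is to establish each of the five events separately, each with probability at least $1-\delta/5$, and then conclude via a union bound. For three of them ($E^p$, $E^c$, $E^\star$) the arguments essentially mirror analogous events in \cref{lemma:good-event-adversarial-unknown-p-fresh} and \cref{lemma:good-event-adversarial-known-p-non-fresh}; I will only indicate where the new non-fresh exploration scheme changes the per-episode worst-case bound. The remaining two, $E^{on}$ and $E^{\hat c}$, exploit the specific structure of the deterministic exploration mapping $\sigma$ and require slightly different care.

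For $E^p$ I will apply the standard empirical-Bernstein plus union bound argument used throughout the optimistic MDP literature. The only subtlety under non-fresh randomness is that within a single episode the agents' transitions are perfectly correlated, but \emph{across} episodes they are conditionally independent given the history. Hence every episode $j$ in which some agent visits $(s,a,h)$ contributes exactly one Bernoulli sample $\indevent{S^j_h(s,a)=s'}$ to $\hat p^k_h(s'\mid s,a)$, and concentration proceeds as in the fresh case. For $E^{on}$, the crucial point is that agent $\sigma(h,a,j)$ follows $\pi^j$ up to time $h$ and then deterministically plays $a$, so she reaches $(s,a)$ with probability $q^{\pi^j}_h(s)$ regardless of $\pi^j_h(a\mid s)$. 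This gives $\bbE^j[\indevent{\exists v:\ s^{j,v}_h=s, a^{j,v}_h=a}]\ge q^{\pi^j}_h(s)$, and applying \citet{dann2017unifying}[Lemma F.4] to the resulting nonnegative martingale, uniformly over $k$, yields $n^k_h(s,a)\ge \frac{1}{2}\sum_{j<k}q^{\pi^j}_h(s)-\log(\cdot)$ with probability at least $1-\delta/(5HSAm)$; a union bound over $(h,s,a,v)$ closes $E^{on}$.

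For $E^c$, $E^{\hat c}$, and $E^\star$ I will use Azuma--Hoeffding and the multiplicative Bernstein bounds of \citet{jin2020learning}[Lemma 14] and \citet{cohen2021minimax}[Lemma E.2]. The new ingredient is a per-episode bound on the relevant quantities that uses the one-exploration-per-$(h,a)$ structure of $\sigma$ together with the optimistic inequality $u^k_h(s)\ge q^k_h(s)$ (valid on $E^p$, since then $p\in\calP^k$). Since $q^k_h(s,a)/u^k_h(s)\le\pi^k_h(a\mid s)$, and since for each $h,a$ exactly one exploring agent visits a unique state, the sum $\sum_{s,a}\pi^k_h(a\mid s)\indevent{s^{k,\sigma(h,a,k)}_h=s}$ telescopes to at most $A$ per horizon step. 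This gives $\langle \hat c^k,q^k\rangle\le HA\le HS$ (using $S\ge A$), $\sum_{h,s,a}\pi^k_h(a\mid s)\hat c^k_h(s,a)\le HS/\gamma$, and $\bbE^k[\sum_{h,s,a}q^{\pi^\star}_h(s,a)\hat c^k_h(s,a)]\le H/\gamma$. Combining these with the corresponding concentration inequality gives each of the three events.

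The main obstacle is the combinatorial bookkeeping in $E^{on}$: the lower bound must hold for every $(k,h,s,a)$ simultaneously, including action-state pairs that $\pi^j$ itself hardly ever visits via action $a$. The observation that $\sigma(h,a,j)$ deterministically forces the visit of $(s,a,h)$ whenever she reaches $s$ under $\pi^j$ --- independently of the current probability $\pi^j_h(a\mid s)$ --- is what decouples the bound on $n^k_h(s,a)$ from the policy's action mass, and it is the single key ingredient that later lets the regret analysis replace $q^{\pi^k}_h(s,a)/\sqrt{n_h^k(s,a)}$ by $\sqrt{S}\,q^{\pi^k}_h(s,a)/\sqrt{\sum_{j<k}q^{\pi^j}_h(s)}$ in the proof of \cref{thm-paper:reg-coop-nf-uob-reps}.
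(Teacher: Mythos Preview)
Your proposal is correct and follows the same route as the paper, which simply defers to \cref{lemma:good-event-adversarial-known-p-non-fresh,lem:good-event-stochastic-non-fresh} and \citet{jin2020learning}; you have correctly spelled out the two non-fresh ingredients that those references leave implicit (the $\sigma$-agent reaching $(s,a,h)$ with probability $q^{\pi^j}_h(s)$ for $E^{on}$, and the per-step bound $\sum_{s,a}\pi^k_h(a\mid s)\indevent{s^{k,\sigma(h,a,k)}_h=s}\le A$ for $E^c$ and $E^{\hat c}$). One small correction: the inequality $u^k_h(s)\ge q^k_h(s)$ holds \emph{unconditionally} because $q^k\in\Delta(\calM,k)$ by construction of the OMD update, whereas $E^p$ is what delivers $u^k_h(s)\ge q^{\pi^k}_h(s)$, the inequality you actually need for the optimistic bias underlying $E^\star$.
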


\begin{proof}
    Similar to the proofs of \cref{lemma:good-event-adversarial-known-p-non-fresh,lem:good-event-stochastic-non-fresh} and to proofs in \citet{jin2020learning}.
\end{proof}

\subsection{Proof of Theorem~\ref{thm:reg-coop-nf-uob-reps}}

\begin{proof}[Proof of Theorem~\ref{thm:reg-coop-nf-uob-reps}]
    By \cref{lemma:good-event-adversarial-unknown-p-non-fresh}, the good event holds with probability $1 - \delta$.
    We now analyze the regret under the assumption that the good event holds.
    Note that each agent plays the OMD policy $\pi^k$ in all except for $H A \sqrt{K}$ episodes.
    Thus, the regret is bounded by the regret of the policies $\{ \pi^k \}_{k=1}^K$ plus a $H^2 A \sqrt{K}$ term which is at most $H^2 S \sqrt{K}$.
    Next, we focus on bounding the regret of $\{ \pi^k \}_{k=1}^K$, starting with the following decomposition:
    \begin{align*}
        \sum_{k=1}^K V^{k,\pi^k}_1(s_{1}^{k,v}) - V^{k,\pi^\star}_1(s_{1}^{k,v})
        & =
        \sum_{k=1}^K \langle c^k , q^{\pi^k} - q^{\pi^\star} \rangle
        \\
        & =
        \underbrace{\sum_{k=1}^K \langle c^k , q^{\pi^k} - q^k \rangle}_{(A)} + \underbrace{\sum_{k=1}^K \langle c^k - \hat c^k , q^k \rangle}_{(B)} + \underbrace{\sum_{k=1}^K \langle \hat c^k , q^k - q^{\pi^\star} \rangle}_{(C)} + \underbrace{\sum_{k=1}^K \langle \hat c^k - c^k , q^{\pi^\star} \rangle}_{(D)}.
    \end{align*}
    
    Let $\logterm = \log \frac{KHSA\numagents}{\delta}$ be a logarithmic term.
    Term $(A)$ can be decomposed using the value difference lemma (see, e.g., \citet{shani2020optimistic}):
    \begin{align*}
        (A)
        & =
        \sum_{k=1}^K \langle c^k , q^{\pi^k} - q^k \rangle
        \le
        2H \sum_{k=1}^K \sumhsa q^{\pi^k}_h(s,a) \lVert p_h(\cdot \mid s,a) - \hat p^k_h(\cdot \mid s,a) \rVert_1
        \\
        & \lesssim
        H \sqrt{S \logterm} \sum_{k,h,s,a} \frac{q^{\pi^k}_h(s,a) }{\sqrt{n^k_h(s,a) \vee 1}} + H S \logterm \sum_{k,h,s,a} \frac{q^{\pi^k}_h(s,a) }{n^k_h(s,a) \vee 1},
    \end{align*}
    where the second inequality is by event $E^p$.
    We now bound each of the two sums separately using the event $E^{on}$.
    For the second sum we have:
    \begin{align*}
        \sum_{k,h,s,a} \frac{q^{\pi^k}_h(s,a) }{n^k_h(s,a) \vee 1}
        & \le
        \sum_{k,h,s,a} \frac{q^{\pi^k}_h(s,a) }{(\frac{1}{2}\sum_{j=1}^{k-1}q_{h}^{\pi^j}(s) - \log\frac{6mHSA}{\delta} ) \vee 1}
        =
        \sum_{k,h,s} \frac{q^{\pi^k}_h(s) \sum_a \pi^k_h(a \mid s) }{(\frac{1}{2}\sum_{j=1}^{k-1}q_{h}^{\pi^j}(s) - \log\frac{6mHSA}{\delta} ) \vee 1}
        \\
        & \le
        2 H S \logterm + 2 \sum_{h,s} \sum_{k : \sum_{j=1}^{k-1}q_{h}^{\pi^j}(s) \ge 2 \log\frac{6mHSA}{\delta}} \frac{q^{\pi^k}_h(s) }{\sum_{j=1}^{k-1}q_{h}^{\pi^j}(s)}
        \lesssim
        H S \logterm,
    \end{align*}
    where the last inequality is by \citet[Lemma B.18]{rosenberg2020near}.
    For the first term:
    \begin{align*}
        \sum_{k,h,s,a} \frac{q^{\pi^k}_h(s,a) }{\sqrt{n^k_h(s,a) \vee 1}}
        & \le
        \sum_{k,h,s,a} \frac{q^{\pi^k}_h(s,a) }{\sqrt{(\frac{1}{2}\sum_{j=1}^{k-1}q_{h}^{\pi^j}(s) - \log\frac{6mHSA}{\delta} ) \vee 1}}
        =
        \sum_{k,h,s} \frac{q^{\pi^k}_h(s) \sum_a \pi^k_h(a \mid s) }{\sqrt{(\frac{1}{2}\sum_{j=1}^{k-1}q_{h}^{\pi^j}(s) - \log\frac{6mHSA}{\delta} ) \vee 1}}
        \\
        & \le
        2 H S \logterm + 2 \sum_{h,s} \sum_{k : \sum_{j=1}^{k-1}q_{h}^{\pi^j}(s) \ge 2 \log\frac{6mHSA}{\delta}} \frac{q^{\pi^k}_h(s)  }{\sqrt{\sum_{j=1}^{k-1}q_{h}^{\pi^j}(s)}}
        \\
        & =
        2 H S \logterm + 2 \sum_{h,s} \sum_{k : \sum_{j=1}^{k-1}q_{h}^{\pi^j}(s) \ge 2 \log\frac{6mHSA}{\delta}} \frac{q^{\pi^k}_h(s)  }{\sqrt{\sum_{j=1}^{k}q_{h}^{\pi^j}(s)}} \sqrt{\frac{\sum_{j=1}^{k}q_{h}^{\pi^j}(s)}{\sum_{j=1}^{k-1}q_{h}^{\pi^j}(s)}}
        \\
        & =
        2 H S \logterm + 4 \sum_{h,s} \sum_{k : \sum_{j=1}^{k-1}q_{h}^{\pi^j}(s) \ge 2 \log\frac{6mHSA}{\delta}} \frac{q^{\pi^k}_h(s)  }{\sqrt{\sum_{j=1}^{k}q_{h}^{\pi^j}(s)}}
        \\
        & \le
        2 H S \logterm + 8 \sum_{h,s} \sqrt{\sum_{k=1}^K q_{h}^{\pi^k}(s) }
        \le
        2 H S \logterm + 8 \sqrt{H S \sum_{k=1}^K \sum_{h,s} q_{h}^{\pi^k}(s) }
        =
        2 H S \logterm + 8 H \sqrt{S K},
    \end{align*}
    where the third inequality is by \citet[Lemma 1]{streeter2010less}, and the last inequality is by Jensen's inequality.
    Putting these together we get that: $(A) \lesssim H^2 S \sqrt{K \logterm} + H^2 S^2 \logterm^2$.
    
    Term $(B)$ can be further decomposed as:
    \begin{align*}
        (B)
        =
        \sum_{k=1}^K \langle c^k - \hat c^k , q^k \rangle
        =
        \sum_{k=1}^K \langle c^k - \bbEk [ \hat c^k \mid \pi^k ] , q^k \rangle + \sum_{k=1}^K \langle \bbEk [ \hat c^k \mid \pi^k ] - \hat c^k , q^k \rangle.
    \end{align*}
    The second term is bounded by $4 H S\sqrt{K \log \frac{6}{\delta}}$ by the good event $E^c$, and for the first term:
    \begin{align*}
        \sum_{k=1}^K \langle c^k - \bbEk [ \hat c^k \mid \pi^k ] , q^k \rangle
        & =
        \sum_{k=1}^K \sum_{h=1}^H \sum_{s \in \calS} \sum_{a \in \calA} q^k_h(s,a) c^k_h(s,a) \left( 1 - \frac{\bbEk [\indevent{s^{k,\sigma(h,a,k)}_h = s} \mid \pi^k]}{u^k_h(s) + \gamma} \right)
        \\
        & =
        \sum_{k,h,s,a} q^k_h(s,a) c^k_h(s,a) \left( 1 - \frac{q^{\pi^k}_h(s)}{u^k_h(s) + \gamma} \right)
        \le`
        \sum_{k,h,s} q^k_h(s) \left( 1 - \frac{q^{\pi^k}_h(s)}{u^k_h(s) + \gamma} \right)
        \\
        & \le
        2 \sum_{k,h,s} \frac{q^k_h(s)}{u^k_h(s)} ( u^k_h(s) - q^{\pi^k}_h(s) + \gamma )
        \le
        2 \sum_{k,h,s} (u^k_h(s) - q^{\pi^k}_h(s))  + \gamma H S K,
    \end{align*}
    where the second equality is because agent $\sigma(h,a,k)$ plays policy $\pi^k$ until step $h$.
    Finally, $\sum_{k,h,s} (u^k_h(s) - q^k_h(s))$ is bounded by similarly to \cref{lemma:Jin-final}.
    
    Term $(C)$ is bounded by OMD (see, e.g., \citet{rosenberg2019online}) as follows:
    \begin{align*}
        (B)
        & =
        \sum_{k=1}^K \langle \hat c^k , q^k - q^{\pi^\star} \rangle
        \le
        \frac{2 H \log (H S A)}{\eta} + \eta \sum_{k=1}^K \sum_{h=1}^H \sum_{s \in \calS} \sum_{a \in \calA} q^k_h(s,a) \hat c^k_h(s,a)^2
        \\
        & \le
        \frac{2 H \log (H S A)}{\eta} + \eta \sum_{k=1}^K \sum_{h=1}^H \sum_{s \in \calS} \sum_{a \in \calA} q^k_h(s) \pi^k_h(a \mid s) \frac{\hat c^k_h(s,a)}{u^k_h(s) + \gamma}
        \\
        & \le
        \frac{2 H \log (H S A)}{\eta} + \eta \sum_{k=1}^K \sum_{h=1}^H \sum_{s \in \calS} \sum_{a \in \calA} \pi^k_h(a \mid s) \hat c^k_h(s,a)
        \\
        & \le
        \frac{2 H \log (H S A)}{\eta} + \eta \sum_{k=1}^K \sum_{h=1}^H \sum_{s \in \calS} \sum_{a \in \calA} \pi^k_h(a \mid s) c^k_h(s,a) + \frac{\eta HS \log \frac{3 HSA}{\delta}}{\gamma}
        \\
        & \le
        \frac{2 H \log (H S A)}{\eta} + \eta \sum_{k=1}^K \sum_{h=1}^H \sum_{s \in \calS} \sum_{a \in \calA} \pi^k_h(a \mid s) + \frac{\eta HS \log \frac{3 HSA}{\delta}}{\gamma}
        \\
        & =
        \frac{2 H \log (H S A)}{\eta} + \eta H S K + \frac{\eta HS \log \frac{3 HSA}{\delta}}{\gamma},
    \end{align*}
    where the forth inequality is by the good event $E^{\hat c}$.
    
    Term $(D)$ is bounded by $\frac{H \log \frac{3 H}{\delta}}{\gamma}$ by the good event $E^\star$.
    Putting the three terms together gives the final regret bound when setting $\eta = \gamma = \sqrt{\frac{\log \frac{KHSA}{\delta}}{SK}}$.
\end{proof}

\end{document}